\RequirePackage{fix-cm}

\documentclass[smallextended,natbib]{svjour3_arXiv}       

\smartqed  

\usepackage[a4paper,margin=1in]{geometry}    
\usepackage{graphicx}
\usepackage{url}

\makeatletter

\def\makeheadbox{}%

\newcommand{\TitleFont}{\fontsize{24}{30}\bfseries\selectfont}  
\newcommand{\SubtitleFont}{\fontsize{13}{16}\bfseries\selectfont} 
\newcommand{\AuthorFont}{\fontsize{11}{13}\selectfont}          
\newcommand{\DateFont}{\normalsize}                             

\def\@maketitle{%
  \newpage
  \normalfont
  \begin{center}
    {\TitleFont \@title \par}%
    \vskip 2em%
    \if!\@subtitle!\else
      {\SubtitleFont \@subtitle \par}%
      \vskip 1.5em%
    \fi
    {\AuthorFont
      \setbox0=\vbox{%
        \setcounter{auth}{1}\def\and{\stepcounter{auth}}%
        \hfuzz=2\textwidth\def\thanks##1{}\@author}%
      \setcounter{footnote}{0}%
      \global\value{inst}=\value{auth}%
      \setcounter{auth}{1}%
      \def\and{\unskip\nobreak\enskip{\boldmath$ \cdot $}\enskip\ignorespaces}%
      \@author \par
    }%
    \vskip 3.5em%
  \end{center}
  \noindent{\DateFont \@date}\par\vskip 2em%
}

\makeatother

\bibpunct{(}{)}{,}{a}{}{,}   
\let\cite\citep               

\usepackage[utf8]{inputenc}
\usepackage[T1]{fontenc}
\usepackage[ngerman,english]{babel}

\usepackage{comment}
\usepackage{enumerate}
\usepackage[shortlabels]{enumitem}
\setlist[itemize]{leftmargin=1.2em,labelsep=0.5em}
\setlist[enumerate]{leftmargin=2.4em,labelsep=0.5em}

\usepackage{amsmath}
\usepackage{mathtools}
\usepackage{amsfonts}
\usepackage{mathrsfs}
\usepackage{amssymb}
\usepackage{dsfont} 

\usepackage{lipsum}

\usepackage{aligned-overset}
\usepackage{nicefrac}

\usepackage{todonotes}

\usepackage{booktabs}
\usepackage{xcolor}

\definecolor{darkblue}{rgb}{0,0,.5}
\definecolor{darkorange}{rgb}{0.8, 0.4, 0}
\definecolor{okabeItoBlack}{RGB}{0,0,0}
\definecolor{okabeItoOrange}{RGB}{230,159,0}
\definecolor{okabeItoSkyBlue}{RGB}{86,180,233}
\definecolor{okabeItoBluishGreen}{RGB}{0,158,115}
\definecolor{okabeItoYellow}{RGB}{240,228,66}
\definecolor{okabeItoBlue}{RGB}{0,114,178}
\definecolor{okabeItoVermillion}{RGB}{213,94,0}
\definecolor{okabeItoReddishPurple}{RGB}{204,121,167}
\definecolor{grayETH}{rgb}{0.33, 0.33, 0.33}
\definecolor{blueETH}{rgb}{0.129, 0.361, 0.686}
\definecolor{purpleETH}{rgb}{0.506757, 0.111486, 0.381757}
\definecolor{greenETH}{rgb}{0.398406, 0.454183, 0.14741}
\definecolor{bronzeETH}{rgb}{0.5568627, 0.40392, 0.07450980}

\definecolor{tabblue}{RGB}{31,119,180}
\definecolor{tabred}{RGB}{214,39,40}

\definecolor{redETH}{rgb}{0.597173, 0.226148, 0.176678}

\newcommand{\markerBlue}[1]{\textcolor{tabblue}{#1}}
\newcommand{\markerOrange}[1]{\textcolor{orange}{#1}}

\newcommand{\gray}[1]{\textcolor{grayETH}{#1}}

\newcommand{\okSkyBlue}[1]{\textcolor{okabeItoSkyBlue}{#1}}
\newcommand{\okBluishGreen}[1]{\textcolor{okabeItoBluishGreen}{#1}}

\newcommand{\okBlue}[1]{\textcolor{okabeItoBlue}{#1}}
\newcommand{\okVermillion}[1]{\textcolor{okabeItoVermillion}{#1}}

\usepackage{subcaption}
\usepackage{bbm}
\usepackage{tcolorbox}
\usepackage{pifont}
\usepackage{stmaryrd}
\usepackage{algorithm}
\usepackage{algorithmic}
\usepackage{csquotes}
\usepackage{afterpage}
\usepackage{wrapfig}

\usepackage[
 colorlinks=true,
 linkcolor=darkblue, urlcolor =darkblue, citecolor=darkblue,
 raiselinks=true, bookmarks=true, bookmarksopenlevel=1, bookmarksopen=true,
 bookmarksnumbered=true, hyperindex=true, plainpages=false,
 pdfpagelabels=true, pdfstartview=FitH, pdfstartpage=1,
 pdfpagelayout=OneColumn ]{hyperref}
\usepackage{aliascnt}
\usepackage{cleveref}

\newcommand{\Assref}[1]{Assumption~\ref{#1}}
\newcommand{\nfr}{\nicefrac}

\def\RR{{\mathbb{R}}}
\def\PP{{\mathbb{P}}}
\def\NN{{\mathbb{N}}}

\def\SS{{\mathcal{S}}}
\def\AA{{\mathcal{A}}}
\def\VV{{\mathcal{V}}}

\DeclareMathOperator{\OOTildeSym}{\widetilde{\mathcal{O}}}
\DeclareMathOperator{\OOSym}{\mathcal{O}}
\newcommand{\OOTilde}[1]{{\OOTildeSym\!\left(#1\right)}}
\newcommand{\OO}[1]{{\OOSym\!\left(#1\right)}}

\newcommand{\card}[1]{\left|#1\right|}

\def\epsilon{{\varepsilon}}

\newcommand{\Exp}[1]{{ \mathbb{E}}\!\left[#1\right]}

\newcommand{\Expu}[2]{{\mathbb{E}}_{#1}\!\left[#2\right]}

\newcommand{\innerEmpty}{\,\cdot\,}

\DeclarePairedDelimiter{\paren}{(}{)}
\DeclarePairedDelimiter{\brac}{[}{]} 
\DeclarePairedDelimiter{\cbrac}{\{}{\}} 
\DeclarePairedDelimiter{\abs}{\lvert}{\rvert}
\DeclarePairedDelimiter{\norm}{\lVert}{\rVert}
\DeclarePairedDelimiterX{\IP}[2]{\langle}{\rangle}{#1, #2} 

\DeclarePairedDelimiter{\plus}{[}{]_{+}}

\newcommand{\pen}[1]{{\plus{#1}^2}}
\newcommand{\penGenNot}{{\mathcal{L}}}
\newcommand{\penGen}[1]{{\penGenNot\paren{#1}}}

\newcommand{\eqdef}{\coloneqq}
\newcommand{\define}{\coloneqq}
\newcommand{\defineRev}{\eqqcolon}

\DeclareMathOperator*{\argmax}{arg\,max}
\DeclareMathOperator*{\argmin}{arg\,min}

\DeclareMathOperator{\range}{range}     
\DeclareMathOperator{\Image}{Im}        

\DeclareMathOperator{\proj}{proj}

\DeclareMathOperator{\interior}{int}    

\usepackage{pdflscape}

\usepackage{tikz}
\usetikzlibrary{calc}

\newcommand{\imgwithtopcaption}[2]{
 \begin{tikzpicture}
  \node[inner sep=0] (img) {\includegraphics[width=\linewidth]{#1}};
  \node[
   anchor=north,
   font=\footnotesize\bfseries,
   fill=white,
   fill opacity=0.85,
   text opacity=1,
   rounded corners=1pt,
   inner sep=2pt
  ] at ($(img.north)+(0,-2pt)$) {#2};
 \end{tikzpicture}
}

\newif\ifshowchanges
\showchangesfalse 
\definecolor{changePurple}{RGB}{196,123,228}
\newcommand{\change}[1]{\ifshowchanges\textcolor{red}{#1}\else#1\fi}

\def\uu{{u}}

\def\xx{{\theta}}

\def\xxOpt{\xx^{*}}

\def\xxZero{\xx^{(0)}}
\def\xxK{\xx^{(k)}}

\def\xxN{\xx^{(N)}}

\def\xxKnext{\xx^{(k+1)}}
\def\xxNnext{\xx^{(N+1)}}

\def\gK{g^{(k)}}
\def\gHatK{\hat{g}^{(k)}}
\def\gKnext{g^{(k+1)}}
\def\gHatKnext{\hat{g}^{(k+1)}}

\def\xiK{\xi^{(k)}}

\def\rhoHat{{\hat{\rho}}}

\def\fTilde{{\tilde{f}}}

\def\lambdaHat{{\widehat{\lambda}}}
\def\nuOpt{{\nu^{*}}}

\def\epsLam{\epsilon_\lambda}
\def\piOpt{\pi^{\star}}

\def\optLabel{\mathsf{opt}}
\newcommand{\penOpt}[1]{P^\optLabel_{#1}}

\def\phi{{\varphi}}
\def\refPol{{\mathrm{ref}}}

\def\vv{{v}}
\def\ww{{w}}
\def\ss{{s}}

\def\NN{{\mathbb{N}}}
\def\MM{{\mathbb{M}}}
\def\HH{{\mathcal{H}}}

\def\setB{{\mathcal{I}}}

\def\setN{{[N]}}

\newcommand{\setm}[1]{[#1]}

\def\gHat{{\hat{g}}}

\def\phiKpen{{\phi^{(k)}_{\mathrm{pen}}}}
\newcommand{\phiK}[1]{{\phi^{(k)}_{#1}}}

\newcommand{\KL}[2]{{\klDiv\left(#1 \lVert #2 \right)}}

\def\XX{{\Theta}}
\def\YY{{\mathcal{Y}}}

\def\DDX{{\mathcal{D}_\XX}}
\def\UU{{\Lambda}}

\def\DDY{{\mathcal{D}_\YY}}
\def\DDUsq{{\mathcal{D}^2_\UU}}

\def\xxApp{{\theta}}

\def\xxAppHat{\hat{\xxApp}}

\def\xxAppOpt{\xxApp^{*}}

\def\xxAppZero{\xxApp^{(0)}}
\def\xxAppK{\xxApp^{(k)}}

\def\xxAppN{\xxApp^{(N)}}

\def\xxAppKnext{\xxApp^{(k+1)}}
\def\xxAppNnext{\xxApp^{(N+1)}}

\def\xxAppHatK{\xxAppHat^{(k)}}
\def\xxAppHatKnext{\xxAppHat^{(k+1)}}
\def\XXApp{{\Theta}}
\def\DDXAppsq{{\mathcal{D}^2_\XXApp}}
\def\DDXApp{{\mathcal{D}_\XXApp}}
\def\UUApp{{\Lambda}}
\def\DDUApp{{\mathcal{D}_\UUApp}}
\def\DDUAppsq{{\mathcal{D}^2_\UUApp}}

\def\yy{{\tilde{\theta}}}

\def\lamAppLipSq{{\mu^2_c}}

\def\Fopt{{F^*}}
\def\dOpt{{d^*}}
\def\ConstrViolZero{{\mathcal{V}_0}}

\def\Tinner{{T_{\mathrm{in}}}}

\newcommand{\TinnerUp}[1]{T_{\mathrm{in}}^{\mathrm{#1}}}
\def\Ttotal{{T_{\mathrm{tot}}}}
\newcommand{\TtotalUp}[1]{T_{\mathrm{tot}}^{\mathrm{#1}}}
\def\nonSmoothLabel{{\mathrm{n}\text{-}sm}}
\def\smoothLabel{{sm}}

\def\Oracle{{\mathcal{A}}}
\def\epsTilde{{\tilde{\epsilon}}}
\def\epsin{{\epsilon_{\mathrm{in}}}}

\DeclareMathOperator{\BothPPPM}{IPPPM}

\newcommand{\algo}{PGP}

\def\thetaHat{{\hat{\theta}}}
\def\thetaK{{\theta^{(k)}}}

\def\thetaN{{\theta^{(N)}}}

\def\gApprox{{\hat{g}}}
\def\Fmax{{F_{\mathsf{UB}}}}
\def\RegMax{{\mathcal{R}_{\mathsf{UB}}}}

\def\aa{{a}}
\def\ss{{s}}
\def\rew{{r}}

\def\Reg{{\mathcal{R}}}

\def\rMax{{\mathcal{R}_{\max}}}
\def\cMax{{c_{\max}}}
\def\gradPsi{{\ell_{\psi}}}
\def\jacPsi{{L_{\psi}}}
\def\DDLam{{\mathcal{D}_\Lambda}}
\def\DDLamSq{{\mathcal{D}^2_\Lambda}}
\def\lamLip{{\mu_\lambda}}
\def\lamLipSq{{\mu^2_\lambda}}
\def\wVec{{\textbf{w}}}
\DeclareMathOperator{\Obs}{Obs}
\def\Tmax{{T_{\max}}}
\newcommand*\Dd{\mathrm{d}}
\newcommand*\dt{\: \mathrm{d}t}

\def\muLipSq{{\mu^2_c}}

\makeatletter
\newaliascnt{assumption}{theorem}
\global\@namedef{assumptionname}{Assumption}
\global\@namedef{assumption}{\@spthm{assumption}{\assumptionname}{\bfseries}{\itshape}}
\global\@namedef{endassumption}{\@endtheorem}
\makeatother
\crefname{assumption}{Assumption}{Assumptions}
\Crefname{assumption}{Assumption}{Assumptions}

\newcommand{\figext}{.pdf}

\begin{document}

\title{\Large\centering Global Optimality for Constrained Exploration \\ via Penalty
 Regularization}

\titlerunning{\normalsize Global Optimality for Constrained Exploration \\ via Penalty
 Regularization} 

\author{\large \centering
 Florian Wolf\textsuperscript{1} \and
 Ilyas Fatkhullin\textsuperscript{2,3} \and Niao He\textsuperscript{2,3}}

\authorrunning{F. Wolf, et al.}

\institute{
 Florian Wolf:~\url{fwolf@caltech.edu},
 Ilyas Fatkhullin:~\url{ilyas.fatkhullin@ai.ethz.ch}, Niao He:~ \url{niao.he@inf.ethz.ch}
 \at
 \and \textsuperscript{1}The
 Computing \&
 Mathematical Sciences Department, California Institute of Technology,
 Pasadena, CA.
 \textsuperscript{2}Department of Computer Science, ETH
 Zurich, Switzerland. \textsuperscript{3}ETH AI Center, ETH Zurich,
 Switzerland.
}


\maketitle

\vspace{-1cm}
\begin{abstract}
 Efficient exploration is a central problem in reinforcement learning and is
often formalized as maximizing the entropy of the state-action occupancy
measure. While unconstrained maximum-entropy exploration is relatively well
understood, real-world exploration is often constrained by safety,
resource, or imitation requirements. This constrained setting is
particularly challenging because entropy maximization lacks additive
structure, rendering Bellman-equation-based methods inapplicable. Moreover,
scalable approaches require policy parameterization, inducing non-convexity
in both the objective and the constraints. To our knowledge, the only prior
model-free policy-gradient approach for this setting under general policy
parameterization is due to \citet{yingPolicybasedPrimalDualMethods2025}.
Unfortunately, their guarantees are limited to weak regret and ergodic
averages, which do not imply that the final output is a single deployable
policy that is near-optimal and nearly feasible. In this work we take a
different approach to this problem, and propose Policy Gradient Penalty
(PGP) method, a single-loop policy-space method that enforces general
convex occupancy-measure constraints via quadratic-penalty regularization.
PGP constructs pseudo-rewards that yield gradient estimates of the
penalized objective, subsequently exploiting the classical Policy Gradient
Theorem. We further establish the regularity of the penalized objective,
providing the smoothness properties needed to justify the convergence of
PGP. Leveraging hidden convexity and strong duality, we then establish
global last-iterate convergence guarantees, attaining an $\epsilon$-optimal
constrained entropy value with $\epsilon$-bounded constraint violation
despite policy-induced non-convexity. We validate PGP through ablations on
a grid-world benchmark and further demonstrate scalability on two
challenging continuous-control tasks.
\end{abstract}

\setcounter{tocdepth}{2} 
\section{Introduction}
Efficient exploration is a fundamental challenge in reinforcement learning
(RL), particularly in the absence of an external reward signal, where the
goal is to acquire broad and informative coverage of the environment rather
than optimize task-specific returns. A principled line of work formulates
pure exploration as maximizing the entropy of the state-action occupancy
measure induced by a policy, encouraging diverse visitation of the
environment \cite{hazanProvablyEfficientMaximum2019}. This viewpoint has
led to substantial progress in entropy-based exploration, with both
theoretical and practical advances
\cite{muttiTaskAgnosticExplorationPolicy2021, tiapkinFastRatesMaximum2023}.

In realistic systems, however, exploration is inherently constrained not
only by safety limits and resource budgets, but crucially by imitation
requirements, where policies must remain close to a given prior or expert
behavior while still exploring novel regions of the state-action space
\cite{dulac-arnoldChallengesRealworldReinforcement2021}. Despite their
importance, existing approaches to constrained entropy-based exploration
are largely heuristic \cite{yangCEMConstrainedEntropy2023,
 tiboniDomainRandomizationEntropy2024}. The model-based approaches of
\citet{agarwalConcaveUtilityReinforcement2022a,
 baiAchievingZeroConstraint2023a} operate with explicit representations of
the state-action occupancy measure, which limits their applicability in
large-scale or continuous settings where function approximation is
required. In contrast, \citet{yingPolicybasedPrimalDualMethods2025} provide
only ergodic and weak-regret guarantees under restrictive assumptions.

A natural way to model safety, cost, and imitation requirements is to
express them as convex constraints on the state-action occupancy measure,
yielding a unified and expressive formulation for both finite- and
infinite-horizon Markov decision processes. Achieving scalability, however,
necessitates policy parameterization, which obscures this convex structure
and eliminates the additive Bellman structure underlying standard
dynamic-programming and actor-critic methods. Together with enforcing
constraints under stochastic, truncated trajectory sampling with biased
gradients, this leads to highly challenging optimization problems.

\paragraph{Contributions.} In this work, we address these challenges by developing a policy-gradient
framework for constrained maximum-entropy exploration that operates
directly in the policy parametrization. Building on this formulation, we
introduce a quadratic penalty approach that enforces constraints without
introducing dual variables or additional loops, resulting in a simple and
scalable algorithm which is fully compatible with standard policy-gradient
estimators, such as REINFORCE \cite{suttonPolicyGradientMethods1999}.

We observe that this formulation admits a special structure, which is
preserved under our penalty reformulation. Specifically, we show that the
penalty formulation preserves hidden convexity, enabling global convergence
guarantees in the policy parametrization. Exploiting this structure, we
establish non-asymptotic, last-iterate convergence for constrained
maximum-entropy exploration under the mild assumption of strong duality.
Our main \change{contributions} are summarized as follows\footnote{ In
 \Cref{sec:AppendixNonSmoothAndSmoothPenaltyApproach}, we further extend our
 framework \change{beyond RL to constrained hidden-convex optimization
  problems with possibly non-smooth objectives and constraints, which could
  be of independent interest.}}:
\begin{enumerate}[start=1,label={(\bfseries C\arabic*)}]
 \item \change{We introduce} a \emph{single-loop, primal-only} Policy Gradient Penalty method
       (\algo{}) for constrained maximum-entropy exploration that operates
       directly in policy space and enforces \change{non-convex} constraints via a
       quadratic penalty.
 \item We provide a \emph{global non-asymptotic last-iterate convergence analysis}
       for the proposed method under strong duality, despite the non-convexity
       induced by policy parametrization. \change{The analysis quantifies the
        smoothness of the penalty formulation, a key property needed to establish
        global convergence, and provides a principled choice of the penalty
        regularization parameter.}
 \item \change{Empirically, we validate \algo{} on a gridworld, demonstrating robustness to
        penalty tuning and gradient noise, and show that the method scales to
        continuous state-action control tasks beyond the finite MDP setting
        covered by our theory.}
\end{enumerate}

\section{Related Work}
\paragraph{Entropy Maximization.}
Maximizing the entropy of the state-action
occupancy measure was introduced by
\citet{hazanProvablyEfficientMaximum2019} as a
principled formulation of reward-free
exploration. Subsequent work has focused on
improving computational efficiency and practical
performance, including non-parametric entropy
estimation
\cite{muttiIntrinsicallyMotivatedApproachLearning2020,
 muttiTaskAgnosticExplorationPolicy2021},
representation learning
\cite{seoStateEntropyMaximization2021,yaratsReinforcementLearningPrototypical2021},
and successor-based approaches
\cite{liuAPSActivePretraining2021,
 liuBehaviorVoidUnsupervised2021}. On the
theoretical side,
\citet{tiapkinFastRatesMaximum2023} established
fast convergence rates for maximum-entropy
exploration, while
\citet{zamboniHowExploreBelief2024,
 zamboniLimitsPureExploration2024} extended
entropy-based exploration to partially observable
Markov decision processes (MDPs). These works
focus almost exclusively on the unconstrained
setting. While they exploit the convex structure
of the entropy objective in the occupancy-measure
space, they do not address constraints and
therefore cannot guarantee feasibility in safety-
or resource-constrained environments.

\paragraph{Constrained Entropy-Based Exploration.}
Several recent works have studied constrained
entropy maximization empirically.
\citet{yangCEMConstrainedEntropy2023} propose
constrained entropy maximization for safe,
task-agnostic exploration, and
\citet{tiboniDomainRandomizationEntropy2024}
apply entropy maximization under performance
chance constraints to domain randomization for
sim-to-real transfer in robotics. Related ideas
appear in transfer and imitation settings
\cite{blessingInformationMaximizingCurriculum2023a,
 yangReinforcementLearningGuided2023,
 kimAcceleratingReinforcementLearning2023},
observer gain tuning
\cite{klinkTrackingControlSpherical2023b,
 lutterInductiveBiasesMachine2021} and system
identification under performance constraints
\cite{wolfInterpretableEfficientDatadriven2025,
 zolmanSINDyRLInterpretableEfficient2025}.
However, these methods are heuristic in nature
and do not provide (global) convergence
guarantees, even in tabular settings.

More broadly, constrained and safe RL has been
extensively studied under the Constrained Markov
decision process (CMDP) formulation
\cite{altmanConstrainedMarkovDecision2021}, for
instance, using primal-dual methods
\cite{dingNaturalPolicyGradient2020,
 efroniExplorationExploitationConstrainedMDPs2020,
 yingDualApproachConstrained2022,
 liuPolicyOptimizationConstrained2022,
 weiTripleQModelFreeAlgorithm2022,
 ghoshProvablyEfficientModelFree2022,
 dingPolicyGradientPrimaldual2022,
 baiAchievingZeroConstraint2023,
 dingLastIterateConvergentPolicy2023,
 liFasterAlgorithmSharper2024,
 mondalSampleEfficientConstrainedReinforcement2024,
 dingConvergenceSampleComplexity2025}, primal-only
approaches
\cite{achiamConstrainedPolicyOptimization2017,
 chowLyapunovbasedApproachSafe2018,
 dalalSafeExplorationContinuous2018,
 yangProjectionBasedConstrainedPolicy2020,
 xuCRPONewApproach2021,
 wachiSafeExplorationReinforcement2023,
 niSafeExplorationApproach2025,
 buckleyPrimalDualSampleComplexity2025}, and in
Bayesian settings
\cite{asConstrainedPolicyOptimization2022,
 wendlSafeExplorationPolicy2025,
 asActSafeActiveExploration2025}.

\paragraph{Convex Reinforcement Learning.}
Unconstrained convex, or general-utility, RL
generalizes classical RL by allowing the
objective to be a convex functional of the
state-action occupancy measure, instead of
standard additive rewards
\cite{zahavyRewardEnoughConvex2021}. In the
unconstrained setting,
\citet{zhangVariationalPolicyGradient2020}
establish global convergence guarantees in
tabular MDPs, with subsequent extensions to
function approximation and variance reduction
\cite{zhangConvergenceSampleEfficiency2021,
 barakatReinforcementLearningGeneral2023a,
 santosNumberTrialsMatters2025a}, as well as to
large-scale problems
\cite{huangOccupancybasedPolicyGradient2024,
 barakatScalableGeneralUtility2025}. Finite-trial
formulations
\cite{muttiChallengingCommonAssumptions2022} have
also been studied
\cite{muttiImportanceNonMarkovianityMaximum2022,
 muttiConvexReinforcementLearning2023}. Related
directions include multi-agent convex RL
\cite{yingScalableMultiAgentReinforcement2023}, a
mean-field perspective
\cite{geistConcaveUtilityReinforcement2022}, and
extensions to risk-sensitive MDPs
\cite{wuRisksensitiveMarkovDecision2024},
submodular objectives
\cite{prajapatSubmodularReinforcementLearning2023,
 desantiGlobalReinforcementLearning2024}, and
robust convex RL
\cite{chenRobustReinforcementLearning2025}.
Online variants of unconstrained convex RL have
been investigated in
\cite{morenoEfficientModelBasedConcave2024,
 marinmorenoMetaCURLNonstationaryConcave2024,
 morenoOnlineEpisodicConvex2025}. Closely related
in spirit,
\citet{kalogiannisLearningEquilibriaAdversarial2024}
study policy gradient convergence in
imperfect-information extensive-form games.
\change{In the optimization literature,
 \citet{fatkhullinStochasticOptimizationHidden2025}
 analyze algorithms for unconstrained stochastic
 optimization problems under hidden convexity and
 \citet{fatkhullinGlobalSolutionsNonConvex2025}
 develop algorithms for hidden convex constrained
 problems. We refer to
 \Cref{sec:AppendixDetailedComparison} for
 additional related work and a detailed technical
 comparison of our contributions to the two most
 closely related works,
 \citet{zhangVariationalPolicyGradient2020} and
 \citet{fatkhullinStochasticOptimizationHidden2025}.}

In contrast, significantly fewer works address
convex RL under complex (e.g., safety or
imitation) constraints.
\citet{agarwalConcaveUtilityReinforcement2022a}
study constrained convex MDPs (CCMDPs) using a
model-based approach over the occupancy-measure
space with count-based approximations and direct
policy parametrization, relying on a strong form
of Slater's condition.
\citet{baiAchievingZeroConstraint2023a} relax
this requirement to the classical Slater
condition and propose a Kullback Leibler (KL)
regularized primal-dual algorithm, still
operating in a model-based occupancy-measure
formulation. More recently,
\cite{yingPolicybasedPrimalDualMethods2025} is
the first work to study CCMDPs under a general
softmax policy parametrization using a
policy-gradient approach in policy parameter
space. However, the obtained guarantees are
limited to weak regret and ergodic averages,
i.e., $\frac{1}{N}\sum_{k=1}^{N}
 F_1(\theta^{(k)}) - F^\star \le \varepsilon$ and
$\bigl[\frac{1}{N}\sum_{k=1}^{N}
  F_2(\theta^{(k)})\bigr]_+ \le \varepsilon$, see
for example the discussion in
\cite{mullerTrulyNoRegretLearning2024}. Such
guarantees permit cancellations in constraint
violations and, moreover, are not operational in
practice: averaging or sampling from iterates in
parameter space is generally ill-defined for
policy networks. Relatedly,
\citet{zhangSafeEfficientPrimalDual2024} study
convex constrained MDPs in an offline,
model-based setting using primal-dual methods
under different assumptions. Overall, existing
approaches to constrained convex RL suffer from
at least one of the following limitations: they
rely on model-based or count-based formulations
that do not scale beyond small discrete
state-action spaces; they require (forms of)
Slater's condition which might fail to hold in
practice; and they provide only ergodic
guarantees, precluding \emph{at-any-time}
convergence of the last iterate.

It was observed for constrained MDPs
\cite{xuCRPONewApproach2021,islamovSafeEFErrorFeedback2025}
that primal-only methods can be preferable in
practice, due to fewer hyperparameters, simpler
implementations, and faster convergence resulting
from the absence of delayed dual updates.

\section{Preliminaries}
\paragraph{Notation.}
For a finite set $A$, we denote with $\abs{A}$
the cardinality and with $\Delta(A)$ the
$(\abs{A}-1)$-dimensional simplex, i.e. the space
of probability distributions over $A$. If not
stated differently, $\norm{\innerEmpty{}}$
denotes the standard $\ell^2$-norm for vectors
and the spectral norm for matrices, both induced
by the Euclidean inner product
$\IP{\innerEmpty{}}{\innerEmpty{}}$. A
differentiable function $F: \XX \to \RR$ is
$L$-smooth if the gradient is $L$-Lipschitz
continuous, $L \in \RR_{\geq 0}$. For an integer
$k\in \NN$ define the set $\setm{k} \define \{1,
 \ldots, k\}$. \change{A notation
 $\OO{\innerEmpty{}}$ captures all dependencies
 except for the accuracy $\epsilon$ and
 $\OOTilde{\innerEmpty{}}$ hides logarithms terms
 in $\nicefrac{1}{\epsilon}$.}

\paragraph{Markov Decision Process (MDP).}
A discrete-time infinite-horizon discounted
Markov Decision Process is a tuple $\MM \define
 (\SS, \AA, r, \PP, \gamma, \mu_0)$, where $\SS$
and $\AA$ are finite sets of states and actions,
respectively. At a discrete timestep $t \in
 \NN_0$, being in state $\ss_t \in \SS$ taking an
action $\aa_t \in \AA$ yields a reward $r_t =
 r(\ss_t, \aa_t)$, according to the reward
function $r: \SS \times \AA \to [-r_{\max},
 r_{\max}]$, and a new state $\ss_{t+1} \sim
 \PP(\innerEmpty{}\vert \ss_t, \aa_t)$ according
to the transition probability kernel $\PP: \SS
 \times \AA \to \Delta(\SS)$. $r_{\max} > 0$ gives
an upper and lower bound on the reward. We define
the \emph{value function} of a trajectory $\tau
 \define (\ss_t, \aa_t)_{t \in \NN_0}$ as the
cumulative discounted reward
$ V(\tau) \define (1-\gamma) \cdot \sum_{t=0}^{\infty} \gamma^t r(\ss_t, \aa_t),$
where $\mu_0$ denotes the initial state
distribution, i.e. $\ss_0 \sim \mu_0$, and
$\gamma \in (0,1)$ is the so-called discount
factor. A mapping $\pi: \SS \to \Delta(\AA)$ is
called a (stationary) \emph{policy}, \change{and
 $\Pi$ denotes set of all stationary policies}.
Combined with the initial distribution $\mu_0$,
it induces a probability measure over
finite-length trajectories $\tau = (\ss_t,
 \aa_t)_{t=0}^{T-1}$ by sampling $\aa_t \sim
 \pi(\innerEmpty{}\vert \ss_t)$ at each timestep
$t \in \setm{T-1}$, $T \in \NN$, i.e.
$p(\tau) \define \mu_0(\ss_0) \pi(\aa_0\vert \ss_0)
 \cdot \prod_{t=1}^{T-1} \PP(\ss_t\vert \ss_{t-1}, \aa_{t-1}) \pi(\aa_t\vert \ss_t)$.
\paragraph{Policy Parametrization.} Throughout this work, we will assume the common
softmax policy parameterization
\begin{align*}
 \pi_\theta(\aa \vert \ss) \define \frac{\exp(\psi(\ss, \aa; \theta))}{
  \sum_{\aa^\prime \in \AA}\exp(\psi(\ss^\prime, \aa; \theta))
 },
\end{align*}
where $\theta \in \Theta \subset \RR^p$, $\ss \in \SS$, $\aa \in \AA$, and
a smooth function $\psi:\SS\times \AA \times \Theta \to \RR$.
\paragraph{State-action Occupancy Measure (OM).}
For any policy $\pi_\theta$, we define the
(state-action) occupancy measure
$\lambda^{\pi_\theta}: \SS \times \AA \to [0,1]$
as
\begin{align*}
 \lambda^{\pi_\theta}(\ss, \aa)
 \define (1-\gamma)\sum_{t=0}^{\infty} \gamma^t \PP_{\mu_0,
  \pi_{\theta}}\paren{\ss_t = \ss, \aa_t = \aa},
\end{align*}
where $\PP_{\mu_0, \pi_{\theta}}$ denotes the probability distribution of the
Markov chain $(\ss_t, \aa_t)_{t\in \NN}$ induced by the policy $\pi_\theta$
and the initial state distribution $\mu_0$ under the transition probability
kernel $\PP$. We denote by $\Lambda$, the set of all state-action occupancy measures, i.e.
$\Lambda \define \{\lambda^{\pi_\theta} \; : \; \theta \in \Theta\}$.
Since $\abs{\SS}, \abs{\AA} < \infty$, we will interpret a $\lambda \in \Lambda$
as an element of $[0,1]^{\abs{\SS} \times \abs{\AA}}$. For a parametrized
policy, we will use the notations $\lambda(\theta)$ and $\lambda^{\pi_{\theta}}$
interchangeably.
Intuitively, since the OM captures the state-action visitation frequencies
induced by a policy, it provides a principled foundation for defining
exploration as maximizing the entropy of the induced state-action
distribution.

\section{Problem Formulation}
\label{sec:ProblemFormulation}
For a convex, $L$-smooth constraint $\Reg:
 \Lambda \to \RR$ formulated in the
state-action-occupancy measure, we formulate the
following constrained-max-entropy problem:
\begin{align}
 \max_{\theta \in \Theta} F_1(\theta)  \define \HH(\lambda^{\pi_\theta})
 = - \Expu{(\ss, \aa) \sim \lambda^{\pi_\theta}}{\log \lambda^{\pi_\theta}(\ss, \aa)} \;\;
 \mathrm{s.t.} \;\; F_2(\theta)        \define
 \Reg(\lambda^{\pi_\theta}) \leq 0.
 \tag{CME}
 \label{eq:MainProblemRL}
\end{align}
We assume \eqref{eq:MainProblemRL} attains
an optimal solution $\Fopt$ for a $\xxOpt \in \Theta$.
We call $\thetaHat \in \Theta$
an $(\epsilon,\epsilon)$-optimal solution, if
$\abs{\Exp{F_1(\thetaHat)- \Fopt}} \leq \epsilon$
and $\Exp{\plus{F_2(\thetaHat)}} \leq \epsilon$,
for $\epsilon> 0$. \change{Although
 \eqref{eq:MainProblemRL} optimizes over parameters $\theta \in
  \Theta$, this does not restrict expressivity: the next
 proposition shows that the tabular softmax class can approximate
 any stationary policy arbitrarily well.}
\begin{proposition}[Softmax Approximation of Stationary Policies]
 \label{prop:SoftmaxApproximationStationaryPolicies}
 Let $\MM$ be a finite MDP with an initial distribution
 $\min_{\ss \in \SS}\mu_0(\ss) > 0$. Let $\psi$ be the tabular softmax
 parametrization with parameter space $\Theta = [-R, R]^{\SS\times \AA}$.
 For every stationary policy $\pi^* \in \Pi$, including deterministic
 ones, and every $\epsilon > 0$, there exists
 $R = \OO{\log(\card{\AA} / \epsilon)}$ and a parameter $\theta \in \Theta$
 such that
 \begin{align*}
  \max_{\ss \in \SS}\;\norm{\pi_{\theta}(\innerEmpty{}\vert\ss)
   - \pi^*(\innerEmpty{}\vert \ss)}_{1} \leq \epsilon,
 \end{align*}
 i.e. the tabular softmax policy class is dense in the set of stationary policies.
\end{proposition}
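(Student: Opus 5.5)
Since the tabular softmax factorizes over states, we work state by state. Fix $\ss\in\SS$ and write $p \define \pi^*(\innerEmpty\vert\ss)\in\Delta(\AA)$. We choose $\theta(\ss,\cdot)$ so that $\pi_\theta(\innerEmpty\vert\ss)$ equals a slightly mixed version of $p$. Concretely, fix $\delta\in(0,1]$, to be chosen later, and set
\begin{align*}
 q \define (1-\delta)\, p + \delta\, \tfrac{1}{\card{\AA}}\mathbf{1}.
\end{align*}
Then $q(\aa)\geq \delta/\card{\AA}$ for every $\aa$, and $\norm{q-p}_1 = \delta\,\norm{p - \mathbf{1}/\card{\AA}}_1\leq 2\delta$. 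Taking $\delta = \epsilon/2$ (we may assume $\epsilon\le 2$, since every pair of distributions is within $\ell^1$-distance $2$) gives $\norm{q-p}_1\le\epsilon$.

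Next we realize $q$ exactly by a bounded softmax parameter. Put $\theta(\ss,\aa) \define \log q(\aa) - c_\ss$, where $c_\ss$ is a per-state shift. The softmax is invariant under such shifts, so $\pi_\theta(\aa\vert\ss) = q(\aa)/\sum_{\aa'}q(\aa') = q(\aa)$. Each $\log q(\aa)$ lies in $[\log(\delta/\card{\AA}), 0]$, an interval of length $\log(\card{\AA}/\delta)$. Centering it with $c_\ss \define \tfrac12\log(\delta/\card{\AA})$ places every entry in $[-R,R]$ with
\begin{align*}
 R = \tfrac12\log(2\card{\AA}/\epsilon) = \OO{\log(\card{\AA}/\epsilon)}.
\end{align*}
Since $R$ does not depend on $\ss$, taking the maximum over the finitely many states yields the uniform bound. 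Deterministic $\pi^*$ are covered automatically: the mixing step is exactly what removes the zero probabilities, which would otherwise require infinite logits.

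There is essentially no obstacle, only bookkeeping. The hypothesis $\min_\ss\mu_0(\ss)>0$ is not needed for the stated per-state $\ell^1$ bound. Presumably it matters only for later consequences, such as closeness of occupancy measures. The one subtlety is the apparent typo $\psi(\ss',\aa;\theta)$ in the denominator of the softmax definition. We read the tabular softmax as normalizing over actions at the same state, $\sum_{\aa'}\exp(\theta(\ss,\aa'))$, which is what the argument above uses.
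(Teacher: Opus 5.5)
Your proof is correct and follows the paper's argument. You mix $\pi^*(\cdot\vert s)$ with the uniform distribution so that every action has probability at least of order $\epsilon/\card{\AA}$, then take the logits to be the logarithms of the mixed policy, which gives an exactly representable parameter with $R = \OO{\log(\card{\AA}/\epsilon)}$; your choice $\delta = \epsilon/2$ also correctly handles the factor $2$ between total-variation and $\ell^1$ distance, which the paper's proof glosses over, and you are right that $\min_{s}\mu_0(s) > 0$ is never used.
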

\begin{proof}
 \change{We defer the proof to \Cref{sec:AppendixConcretePolicyClass},
  where we additionally show
  that this parametric function class satisfies all
  the assumptions
  we require for the theoretical analysis in
  \Cref{sec:TheoreticalAnalysis}.}
\end{proof}

The problem formulation \eqref{eq:MainProblemRL}
includes, but is not limited to, the following
\textbf{examples}:
\begin{itemize}
 \item \textbf{(\change{Cumulative Cost})} A safety constraint can be given
       by the \change{discounted cumulative cost $F_2(\theta) \define \Expu{(s_t,a_t)\sim\lambda^{\pi_{\theta}}}{\sum_{t=0}^{\infty} \gamma^t \, c(s_t, a_t)}$}, for a cost vector
       $c \in \RR^{\SS \times \AA}$. In this case \eqref{eq:MainProblemRL}
       corresponds to finding the policy with
       the maximum entropy among all the policies solving the environment
       $\Reg(\lambda^{\pi_\theta}) \define \IP{c}{\lambda^{\pi_\theta}} - \cMax$.
 \item \textbf{(Apprenticeship Learning)} Encoding closeness to an expert's or prior trajectory
       can be encoded using the KL divergence
       $\Reg(\lambda^{\pi_\theta}) \define \KL{\lambda^{\pi_\theta}}{\lambda^{\pi_\refPol}} - \rMax$,
       with respect to the state-action-occupancy
       measure of a reference policy
       $\lambda^{\pi_\refPol}$.
\end{itemize}
Naturally, our framework can be extended to multiple constraints, by
considering a vector-valued $\Reg$ and dealing with the Jacobian instead
of the gradient.

\section{Policy Gradient Penalty Method}
\label{sec:PgpAlgorithm}
Our approach proceeds by transforming \eqref{eq:MainProblemRL}
into an unconstrained optimization problem through a quadratic
penalty reformulation:
\begin{align}
 \min_{\theta\in\Theta} P(\theta) \define -F_1(\theta) + \beta \penGen{F_2(\theta)},
 \tag{PEN}
 \label{eq:MainPenalty}
\end{align}
where $\penGenNot \define \pen{\innerEmpty{}}$.
In \Cref{sec:AppendixNonSmoothAndSmoothPenaltyApproach}, we show that
for non-smooth constrained hidden convex problems the use of
the exact penalty function
$\penGen{\innerEmpty{}}  \define \plus{\innerEmpty}$
is favorable for the theoretical analysis.
Additionally, we define the \emph{optimality
 value gap} of the penalized function, i.e. for
$\xx\in\XX$ we define $\penOpt{\penGenNot,
  \beta}(\xx) \define P(\theta) + \Fopt$ under a
slight abuse of notation with $\penOpt{2, \beta}$
for the quadratic penalty and $\penOpt{1, \beta}$
for the exact penalty function respectively.

\paragraph{(Stochastic) Policy Gradient.}
To derive the gradient of a general smooth,
convex $F:\Lambda \to \RR$ with respect to
$\theta \in \Theta$ we can use the policy
gradient theorem
\cite{suttonPolicyGradientMethods1999} and the
chain rule. With $V^{\pi_\theta}(\rew) =
 \IP{\lambda(\theta)}{\rew}$ we have
\begin{align}
 \begin{aligned}
  [\nabla_\theta \lambda(\theta)]^\top \rew
  = \nabla_\theta V^{\pi_\theta}(\rew)
  = \Expu{\mu_0, \pi_\theta}{\sum_{t=0}^{\infty}
   \gamma^t \rew(\ss_t, \aa_t) \sum_{k=0}^{t}
   \nabla_\theta \log(\pi_{\theta}(\aa_k \vert \ss_k))}
 \end{aligned}
 \tag{PG-Thm}
 \label{eq:PolicyGradientTheorem}
\end{align}
by denoting with $\nabla_\theta \lambda(\theta)$
the Jacobian of the (vector-valued) mapping
$\lambda(\theta)$, we can apply the chain rule and obtain
\begin{align*}
 \nabla_\theta F(\lambda(\theta))  =
 [\nabla_\theta\lambda(\theta)]^\top \cdot
 \nabla_\lambda F(\lambda(\theta))
 = \nabla_\theta V^{\pi_\theta}(\rew)\bigg\vert_{
  \rew = \nabla_\lambda F(\lambda(\theta))},
\end{align*}
i.e. we can compute the policy gradient of the objective
and the constraint estimating the state-action occupancy measure
and then applying the standard REINFORCE
\cite{zhangSampleEfficientReinforcement2021} estimator
using $H$-long trajectory truncations
\begin{align*}
 \nabla_\theta V^{\pi_\theta}(\rew)
 \approx \sum_{t=0}^{H-1} \left(\sum_{k=t}^{H-1}
 \gamma^k \rew(\ss_k, \aa_k)\right) \nabla_\theta
 \log(\pi_\theta(\aa_t\vert \ss_t))
 \defineRev g_H(\theta, \tau, \rew)
\end{align*}

The OM can be estimated using Monte-Carlo
rollouts, i.e. for a batch of
\change{$(\tau_b)_{b \in \setB}$, $\card{\setB}
  \geq 1$} i.i.d. trajectories, we define
\begin{align}
 \lambdaHat\left((\tau_i)_{i\in \setB}\right)
 \define \frac{1}{\card{\setB}} \sum_{b\in \setB}\sum_{t=0}^{H-1} \gamma^t
 \delta_{\ss^{(b)}_t, \aa^{(b)}_t},
 \tag{$\lambda^{\mathrm{MC}}_H$-Est}
 \label{eq:StateOccupancyMeasureMonteCarloEstimate}
\end{align}
where for every $(\ss, \aa) \in \SS \times \AA$
the vector $\delta_{\ss, \aa} \in \{0,1\}^{\SS \times \AA}$
has a one at the entry $(\ss, \aa)$ and zeros elsewhere,
yielding an unbiased estimate of the \emph{truncated}
occupancy measure
\begin{align*}
 \lambda_H^{\pi_\theta}(\ss, \aa)
 \define \sum_{t=0}^{H-1} \gamma^t
 \PP_{\mu_0, \pi_{\theta}}\paren{\ss_t = \ss, \aa_t = \aa}.
\end{align*}
In the case of continuous state-action spaces, we
are using function approximation and a maximum likelihood
estimator \eqref{eq:StateOccupancyMeasureMLE}, with
implementation details in \Cref{sec:AppendixLambdaEstimateContinuous}. In total, we define the (biased) estimate
\begin{equation}
 \gApprox_{H,B}(\change{\nabla_{\lambda} F}, \theta, (\tau_b)_{b\in\change{\setB_2}})
 \define \frac{1}{\change{\card{\setB_2}}} \sum_{b\in\change{\setB_2}} g_H(\theta, \tau_b,
 \rew = \nabla_\lambda F(\lambda)\vert_{\lambda = \lambdaHat(\change{(\tau_i)_{i \in
     \setB_1}})})
 \tag{$\nabla_\theta$-Est} \label{eq:SubGradientEstimator}
\end{equation}
of the truncated policy-gradient
$\nabla_\theta F(\lambda_H(\theta))$,
\change{where we partition the batch into two disjoint index sets
 $\setB_1 \define \{1, \ldots, \nicefrac{B}{2}\}$ and
 $\setB_2 \define \{\nicefrac{B}{2}+1, \ldots, B\}$}. We present in
\Cref{sec:AppendixGradientEstimates}, how this translates to approximation
guarantees with respect to the true, infinite horizon gradient.

\begin{wrapfigure}[19]{r}{0.54\textwidth}
 \vspace{-1.0\baselineskip}
 \begin{minipage}{0.54\textwidth}
  \begin{algorithm}[H]
   \captionof{algorithm}{\textsc{Policy Gradient Penalty Method}}
   \begin{algorithmic}[1]
    \STATE {\bfseries Input:} Objective $\HH$, constraint $\Reg$,
    initial policy $\xxZero \in \XX$, number of iterations $N \in \NN$,
    batch size $B \in \NN$, step size $\eta > 0$,
    penalty parameter $\beta>0$
    \FOR{$k = 0$ {\bfseries to} $N$}
    \STATE Sample trajectory batch $\tau^{(k)}_b \overset{\text{i.i.d.}}{\sim} \pi_{\theta^{(k)}}$, $b\in\setm{B}$
    \STATE Estimate occupancy measure $\lambda^{(k)} \define \lambdaHat((\tau_{b}^{(k)})_{b\in\change{\setB_1}})$ via
    \eqref{eq:StateOccupancyMeasureMonteCarloEstimate} or \eqref{eq:StateOccupancyMeasureMLE}
    \STATE Compute pseudo-rewards via \change{Backpropagation}
    \label{AlgoLine:ShadowRewardsGradientComputation}
    \begin{align*}
     g_{O}^{(k)} & \define \nabla_{\lambda} (-\HH(\lambda))\big\vert_{\lambda = \lambda^{(k)}},                 \\
     g_{C}^{(k)} & \define \nabla_{\lambda} (\penGenNot \circ \Reg(\lambda))\big\vert_{\lambda = \lambda^{(k)}}
    \end{align*}
    \vspace{-0.6cm}
    \STATE Gradient estimate via \eqref{eq:SubGradientEstimator} for $\diamond \in \{O, C\}$:
    \begin{align*}
     \gApprox^{(k)}_{\diamond}
     \define \gApprox_{H, B}(g^{(k)}_\diamond, \thetaK, \{\tau^{(k)}_b\}_{b\in \change{\setB_2}}),
    \end{align*}
    \vspace{-0.6cm}
    \STATE Stochastic Gradient Descent Step
    \begin{align*}
     \xxKnext \leftarrow \change{\proj_{\Theta}}\left[\xxK - \eta \cdot (\gApprox_{O}^{(k)}
      + \beta \gApprox_{C}^{(k)})\right]
    \end{align*}
    \vspace{-0.8cm}
    \ENDFOR
    \STATE {\bfseries Return:} Last policy $\xxNnext$.
   \end{algorithmic}
   \label{algo:Penalty}
  \end{algorithm}
 \end{minipage}
\end{wrapfigure}
\textbf{Pseudo-rewards and single-trajectory-batch penalty
 gradients.}
A key algorithmic novelty lies in the use of \emph{pseudo-rewards}
$g^{(k)}_\diamond$, $\diamond \in \{O, C\}$,
introduced in Line~\ref{AlgoLine:ShadowRewardsGradientComputation}.
In generic stochastic
optimization, both exact and quadratic penalty formulations require
applying the (generalized) chain rule through the constraint function,
which canonically necessitates separate stochastic estimators for the
function value and its (sub-)gradient. This mismatch induces bias and has
led prior work on stochastic penalty methods and ALM to
either restrict attention to deterministic constraints
\cite{kushnerPenaltyFunctionMethods1974,
 wangStochasticOptimisationInequality2008,
 wangRobinsMonroAugmentedLagrangian2022},
incur additional sample complexity via multiple gradient or function evaluations per
iteration
\cite{cuiExactPenaltyMethod2025, liuSingleloopSPIDERtypeStochastic2025,
 yangSingleloopAlgorithmsStochastic2025}
or rely on momentum and variance-tracking schemes to control the
resulting bias \cite{ liStochasticInexactAugmented2024,
 liRetentionCentricFrameworkContinual2025}.

Our method fundamentally departs from this
paradigm by explicitly exploiting the RL
structure of the problem. Leveraging the Policy
Gradient Theorem
\eqref{eq:PolicyGradientTheorem}, we reinterpret
gradients of both the objective and the penalty
term as policy gradients with respect to
constructed pseudo-rewards. This yields the
estimator in \eqref{eq:SubGradientEstimator},
which allows us to compute a stochastic gradient
of the penalized objective using a single
trajectory batch, without auxiliary
function-value estimation.

\subsection{Theoretical Analysis}
\label{sec:TheoreticalAnalysis}
The key theoretical insight is the hidden convex structure induced by the
occupancy-measure formulation. By mapping a policy $\pi$ to its
corresponding occupancy measure, the penalized problem becomes convex and
admits a one-step contraction inequality under biased stochastic gradients.
Translating this contraction back to policy space allows us to unroll the
recursion over iterations, yielding global last-iterate convergence
guarantees.
\change{We make the following (standard) assumptions:}

\begin{assumption}[Softmax-Parametrization]\label{ass:Softmax}
 The function $\psi : \SS \times \AA \times \Theta \to \RR$
 satisfies $\psi \in C^2(\SS, \AA, \Theta; \RR)$, and there exist
 constants $\gradPsi, \jacPsi > 0$ such that
 (i) $\max_{\ss\in\SS, \aa\in\AA} \sup_{\theta \in \Theta}
  \norm{\nabla_\theta \psi(\ss, \aa, \theta)} \leq \gradPsi$
 and (ii) $\max_{\ss\in\SS, \aa\in\AA} \sup_{\theta \in \Theta}
  \norm{\nabla^2_\theta \psi(\ss, \aa, \theta)} \leq \jacPsi$.
\end{assumption}
Under these assumptions, we \change{show the regularity of} the \emph{score function}
$\nabla_\theta \log \pi_\theta(\innerEmpty{})$ \change{in \Cref{sec:AppendixGradientEstimates}.}

\begin{assumption}[Occupancy-Measure Parametrization]\label{ass:ParametrizationSOAM}
 The mapping
 $\lambda : \pi \mapsto \lambda^\pi$, satisfies:
 \begin{itemize}
  \item (Locally Invertible) For every parameter $\theta \in \Theta$
        there exists a neighborhood
        $\theta \in \mathcal{U}_\theta \subset \Theta$ such
        that $\lambda^{\pi_\theta}\vert_{\mathcal{U}_\theta}$ is a bijection
        between $\mathcal{U}_\theta$ and
        $\VV_\theta \define \lambda^{\pi_\theta}(\mathcal{U}_\theta)$. A (local) inverse is  $\lambda^{-1}_{\VV_\theta}
         : \VV_\theta \to \mathcal{U}_\theta$.
  \item (Lipschitz Inverse) The inverse $\lambda^{-1}_{\VV_\theta}$ is $\lamLip$-Lipschitz continuous over $\Theta.$
  \item (Locally Hidden Convex) There exists an $\epsLam > 0$ such that
        for all $\epsilon \in (0, \epsLam]$ and all $\theta \in \Theta$
        the convex combination satisfies
        $(1-\epsilon) \lambda^{\pi_\theta} + \epsilon \lambda^{\piOpt}
         \in \VV_\theta$.
 \end{itemize}
\end{assumption}
The above assumption is common and was verified in a tabular setting, e.g.,
\citep[Prop. H.1]{zhangVariationalPolicyGradient2020},
\change{and for the direct softmax parametrization we verify all our assumptions
 in \Cref{sec:AppendixConcretePolicyClass}}.
Note that since $\Lambda$ is a polytope ($\Lambda \subset [0,1]^{\SS
  \times \AA}$), its domain is bounded, i.e., there exists $\DDLam > 0$ such that
$\norm{\lambda(\theta) - \lambda(\theta^\prime)}
 \leq \DDLam,
$ for any $\theta, \theta^\prime \in \Theta$. Due
to the (local) Lipschitzness of the inverse we
can also uniformly bound the domain $\Theta$ by a
diameter $\DDX$, cf. Thm. 3.2 in
\citet{zhangVariationalPolicyGradient2020}
\begin{assumption}[L-smoothness]
 \label{ass:Lsmoothness}
 Assume that for $\HH$ and $\Reg$
 there exist $\ell_\lambda, L_\lambda, L_{\lambda, \infty} > 0$
 such that for $G \in \{-\HH, \Reg\}$ and
 all $\lambda, \lambda^\prime \in \Lambda$ satisfy:
 (i) $\norm{\nabla_\lambda G(\lambda)}_{\infty} \leq \ell_\lambda$,
 (ii) $\norm{\nabla_{\lambda} G(\lambda) - \nabla_\lambda G(\lambda^\prime)}_{\infty}
  \leq L_\lambda \norm{\lambda - \lambda^\prime}_2$, and
 (iii) $\norm{\nabla_{\lambda} G(\lambda) - \nabla_\lambda G(\lambda^\prime)}_{\infty}
  \leq L_{\lambda, \infty} \norm{\lambda - \lambda^\prime}_1$.
\end{assumption}
We want to emphasize that the aforementioned assumptions are
standard in convex (constrained) RL,
cf. \cite{hazanProvablyEfficientMaximum2019,
 zhangConvergenceSampleEfficiency2021,
 barakatReinforcementLearningGeneral2023a,
 yingPolicybasedPrimalDualMethods2025,
 barakatScalableGeneralUtility2025}.
Strictly spoken, the entropy objective is only locally smooth, and
it might be favorable to consider a smoothed alternative, cf.
\cite[Sec. 6.3]{zhangConvergenceSampleEfficiency2021} and
\cite[Thm. 4.1]{hazanProvablyEfficientMaximum2019}.
The following main theorem gives a guarantee on the optimality value gap of
the penalty function, without the assumption of strong duality.
\begin{theorem}[Biased SGD for \eqref{eq:MainPenalty}]
 \label{thm:MainResultPenalty}

 Let $\epsilon > 0$ be the target accuracy. Under Assumptions
 \ref{ass:Softmax}, \ref{ass:ParametrizationSOAM}, \ref{ass:Lsmoothness},
 running \Cref{algo:Penalty} with a step-size $\eta \le
  \frac{1}{L_{P,\theta}}$, a batch size of $B = \OOTilde{\frac{1}{\epsilon^2}
   + \frac{\beta^2}{\epsilon^2}}$ and a trajectory length of $H = \OO{\log
   \left(\frac{1}{\epsilon} + \frac{\beta}{\epsilon}\right)}$, we obtain
 \begin{align*}
  \Exp{\penOpt{2, \beta}(\xxN)}           
  \leq \epsilon
  \quad \text{after} \quad
  N \geq \OO{\left(\frac{1}{\epsilon} + \frac{\beta}{\epsilon}\right) \cdot
   \log\left(\frac{ \penOpt{2, \beta}(\xxZero) }{\epsilon}\right)} = \OOTilde{
   \frac{1}{\epsilon} + \frac{\beta}{\epsilon} }
 \end{align*}
 iterations, where the constant $\penOpt{2, \beta}(\xxZero)$ is the initial value gap, and $L_{P, \theta}$ is defined in \Cref{cor:GradientBoundSmoothnessPenalty}.
\end{theorem}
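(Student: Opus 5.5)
The plan is to combine a standard descent-lemma argument for biased projected SGD with a hidden-convexity contraction built from \Cref{ass:ParametrizationSOAM}. Write $P(\theta)=G(\lambda(\theta))$ with $G(\lambda)\define -\HH(\lambda)+\beta\,\penGen{\Reg(\lambda)}$. The function $G$ is convex on $\Lambda$, since $-\HH$ is convex and $\penGenNot$ is convex and nondecreasing composed with convex $\Reg$. Since $\xxOpt$ is feasible, $G(\lambda^{\piOpt})=-\Fopt$. Hence $\penOpt{2,\beta}(\theta)=G(\lambda(\theta))-G(\lambda^{\piOpt})$.

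\textbf{Step 1 (descent inequality).} First, I invoke $L_{P,\theta}$-smoothness of $P$ from \Cref{cor:GradientBoundSmoothnessPenalty}, with $L_{P,\theta}=\OO{1+\beta}$. Next, write $e^{(k)}\define \gApprox_O^{(k)}+\beta\gApprox_C^{(k)}-\nabla P(\thetaK)$ and split it into a truncation bias of order $\gamma^H(1+\beta)$, a bias from plugging $\lambdaHat$ into the pseudo-rewards, and zero-mean sampling noise. The plug-in bias is controlled by \Cref{ass:Lsmoothness}(ii),(iii) and the independence of $\setB_1$ and $\setB_2$. Using the gradient-estimate bounds of \Cref{sec:AppendixGradientEstimates}, I then get $\Exp{\norm{e^{(k)}}^2}\le \OO{(1+\beta)^2(\gamma^{2H}+1/B)}\defineRev \sigma^2$. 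The projected step with $\eta\le 1/L_{P,\theta}$ then gives, for every $y\in\Theta$,
\[ P(\xxKnext)\le P(y)+\tfrac{1}{2\eta}\norm{y-\xxK}^2 +\eta\norm{e^{(k)}}^2+\IP{e^{(k)}}{\xxK-y}+\text{(cross terms)}, \]
via the usual three-point/prox inequality.

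\textbf{Step 2 (hidden-convex contraction).} Take $y=\lambda^{-1}_{\VV_{\thetaK}}\bigl((1-\alpha)\lambda(\thetaK)+\alpha\lambda^{\piOpt}\bigr)$ for $\alpha\in(0,\epsLam]$, which is well defined by local hidden convexity. Convexity of $G$ gives $P(y)\le (1-\alpha)P(\thetaK)+\alpha P(\xxOpt)$. The Lipschitz inverse gives $\norm{y-\xxK}\le \lamLip\alpha\DDLam$. Subtracting $-\Fopt$ and taking expectations, I obtain
\[ \Exp{\penOpt{2,\beta}(\xxKnext)}\le (1-\alpha)\Exp{\penOpt{2,\beta}(\xxK)}+\tfrac{L_{P,\theta}}{2}\lamLipSq\alpha^2\DDLamSq+\OO{\eta\sigma^2+\text{bias}\cdot\DDX}. \]
To control the cross term $\IP{e^{(k)}}{\xxK-y}$, I would use a Young inequality with $\norm{\xxK-y}=\OO{\alpha}$; the zero-mean part of the noise vanishes in expectation because $\xxK$ is independent of the fresh batch.

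\textbf{Step 3 (unrolling and parameter choice).} Choose $\alpha=\min\{\epsLam,\ \epsilon/(L_{P,\theta}\lamLipSq\DDLamSq)\}=\Theta(\epsilon/(1+\beta))$, so that the $\alpha^2$ term is at most $\alpha\epsilon/3$. Choose $H=\OO{\log((1+\beta)/\epsilon)}$ and $B=\OOTilde{(1+\beta)^2/\epsilon^2}$ so that the error terms are at most $\alpha\epsilon/3$. Unrolling then gives $\Exp{\penOpt{2,\beta}(\xxN)}\le(1-\alpha)^N\penOpt{2,\beta}(\xxZero)+2\epsilon/3\le\epsilon$ once $N\ge\alpha^{-1}\log(3\penOpt{2,\beta}(\xxZero)/\epsilon)$, which is the claimed $\OOTilde{(1+\beta)/\epsilon}$.

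The main obstacle is Step 1/2's bias handling. The per-step error must be of order $\alpha\epsilon$, not merely $\epsilon$, and the nonlinear plug-in bias from $\nabla_\lambda(\penGenNot\circ\Reg)$ evaluated at $\lambdaHat$ scales with $\beta$. I would bound it by $\beta\cdot\mathrm{Lip}(\nabla(\penGenNot\circ\Reg))\,\Exp{\norm{\lambdaHat-\lambda_H}}=\OO{\beta/\sqrt B}$. This requires $1/\sqrt B\lesssim\alpha\epsilon/\beta$, and making this consistent with the stated $B$ order is where care is needed. If the direct route loses a factor, I would fall back on using the second moment of the error and the $\eta\sigma^2$ term in place of a first-moment inner product.
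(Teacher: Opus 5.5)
Your route is the paper's. Its proof also uses convexity of $\lambda\mapsto-\HH(\lambda)+\beta\pen{\Reg(\lambda)}$ and the comparison point $\lambda^{-1}_{\VV_{\xxK}}\bigl((1-\alpha)\lambda(\xxK)+\alpha\lambda(\xxOpt)\bigr)$ with $\alpha\le\epsLam$. It then applies the projected biased-SGD recursion of \citet{fatkhullinStochasticOptimizationHidden2025} (Thm.~7), which unrolls to $(1-\alpha)^N\Delta_0+\frac{\lamLipSq\DDLamSq}{\eta}\alpha+\frac{\eta}{2\alpha}(\sigma^2+\delta^2)$. The last steps are $\alpha\propto\eta\epsilon$ and $N\approx\alpha^{-1}\log(3\Delta_0/\epsilon)$.

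The obstacle you flag comes from the crude term $\text{bias}\cdot\DDX$ in your displayed recursion. Taken literally, it forces $\delta\lesssim\alpha\epsilon\sim\epsilon^2/\beta$, i.e.\ $B\sim\beta^4/\epsilon^4$, so as written it does not prove the stated batch size.

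Your fallback is exactly what the paper does, and it removes the problem. Merge the cross terms into $\IP{e^{(k)}}{y-\xxKnext}\le\frac{\eta}{2}\norm{e^{(k)}}^2+\frac{1}{2\eta}\norm{y-\xxKnext}^2$. Then cancel the last term against the $-\frac{1}{2\eta}\norm{y-\xxKnext}^2$ from the three-point inequality, which your display omits. The error now enters only through $\Exp{\norm{e^{(k)}}^2}\le\sigma^2+\delta^2$.

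Because $\alpha\propto\eta\epsilon$, the requirement becomes $\sigma^2+\delta^2\lesssim\epsilon\alpha/\eta\sim\epsilon^2$, which is independent of $\beta$. The $\beta^2$ in $B$ then comes only from the $\OO{\beta}$ constants in the bias and variance bounds. $H$ comes from requiring $D_{P,H}\gamma^H\lesssim\epsilon$.

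Alternatively, you can keep your first-moment term but pair the bias with $\norm{\xxK-y}\le\lamLip\alpha\DDLam$ instead of $\DDX$. After unrolling this gives the same condition $\delta\lesssim\epsilon$.
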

\begin{proof}\textbf{(Sketch)}
 \change{First, our proof establishes the regularity of the penalty objective $\penOpt{\penGenNot, \beta}$ in \Cref{lem:AppendixSmoothnessBoundedGradientsPenaltyLambda} using \Cref{ass:Lsmoothness}. Next, we analyze \Cref{algo:Penalty} by controlling the bias and variance of SGD step, which results in global convergence guarantees.
  A detailed proof is available in \Cref{thm:MainResultPenalty-PROOF}.}
\end{proof}

\change{The above theorem implies that for any penalty regularization parameter $\beta$,
 \Cref{algo:Penalty} solves our penalty formulation \eqref{eq:MainPenalty} despite the
 bias of the stochastic gradients occurring from the estimation of the state occupancy
 measure in \eqref{eq:StateOccupancyMeasureMonteCarloEstimate} and propagating
 to \eqref{eq:SubGradientEstimator}. However, the sample complexity
 $N \cdot B$ crucially depends on the parameter $\beta$, coming from the estimate of the smoothness constant $L_{P, \theta}.$ In what follows, we will
 translate the result to convergence to an $(\epsilon, \epsilon)$-optimal solution and
 give a recommendation on the parameter $\beta$ trading the feasibility and sample
 complexity. We use the following standard assumption.}

\begin{assumption}[Strong Duality]
 \label{ass:StrongDuality}
 We assume \emph{Strong duality} holds for \eqref{eq:MainProblemRL},
 i.e. for $\xxOpt$ there exists a Lagrangian multiplier
 $\nuOpt \geq 0$ such that $\dOpt = d(\nuOpt) = -\Fopt$, where
 the \emph{(Lagrangian) dual function} is defined as
 the infimum of the \emph{Lagrangian} with respect to the primal variable $\xx$, that is,
 $   d(\nu) \define \inf_{\xx \in \XX} L(\xx, \nu)
  = \inf_{\xx \in \XX} -\big( F_1(\xx) + \nu F_2(\xx) \big)$,
 and the corresponding \emph{dual problem} as
 $\dOpt \define \sup_{\nu \geq 0} d(\nu)
  = \sup_{\nu \geq 0} \inf_{\xx \in \XX} L(\xx, \nu)$.
\end{assumption}
It is well known that,
in general for any $\nu \geq 0$, we have $d(\nu) \leq -\Fopt$, i.e. weak
duality always holds \cite{nesterovLecturesConvexOptimization2018}. Strong duality is a mild assumption under
the hidden convex structure we have. For example, under linear constraints,
i.e. classical RL, a feasible point $\Reg(\lambda^{\pi}) \leq 0$
is sufficient, cf. Prop. 5.3.1 in \citet{bertsekasConvexOptimizationTheory2009}.
It is a significantly weaker assumption than Slater's
condition, e.g. if the mapping
$\lambda:\theta \mapsto \lambda(\theta)$ is differentiable,
then Slater's condition is a sufficient
condition for strong duality, see e.g.,
\citep{nesterovLecturesConvexOptimization2018}.
Importantly, strong duality (unlike Slater) also allows constraints without an interior,
e.g. non-convex equality constraints \cite{fatkhullinGlobalSolutionsNonConvex2025}.
\subsection{Sample Complexity Analysis}
\label{sec:ComplexityAnalysis}
Under strong duality, we can now translate the results of
\Cref{thm:MainResultPenalty} into guarantees for the optimality value gap
and the constraint violation. \change{In
 \Cref{lem:FinalComplexityDirectSoftmax}
 we show that we can achieve the same sample
 complexity guarantee with the direct softmax parametrization of
 \Cref{prop:SoftmaxApproximationStationaryPolicies}.}

\begin{corollary}
 \label{cor:FinalRateTranslation}
 Let $\epsilon > 0$ be the target accuracy.
 Under Assumptions \ref{ass:Softmax},  \ref{ass:ParametrizationSOAM},
 \ref{ass:Lsmoothness} and strong duality, \Cref{ass:StrongDuality},
 running \Cref{algo:Penalty} with
 a step-size $\eta = \OO{\epsilon}$,
 a batch size of $B = \OO{{\epsilon^{-4}}}$,
 \change{a penalty parameter $\beta = \OO{\epsilon^{-1}}$,}
 and a trajectory length of
 $H = \OO{\log \frac{1}{\epsilon}}$, we obtain 
 \begin{align*}
  -\epsilon \leq \Exp{F_1(\xxN) - \Fopt}
  \leq \epsilon \quad \text{and} \quad
  \Exp{\plus{F_2(\xxN)}}  \le \epsilon
 \end{align*}
 after $N = \OOTilde{\epsilon^{-2}}$ iterations,
 resulting in a total of $\OOTilde{\epsilon^{-6}}$ trajectory rollouts
 of length $\OO{\log \frac{1}{\epsilon}}$.\footnote{
  \change{The expectation is with respect to the randomness of the MDP, the policy, and the initial
   state distribution.}
 }
\end{corollary}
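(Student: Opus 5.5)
We derive the corollary from \Cref{thm:MainResultPenalty}, used as a black box with accuracy $\epsilon' = \epsilon^2$ (or a constant multiple of it), together with two deterministic inequalities that follow from strong duality. Write $P^\optLabel(\theta) \define \penOpt{2,\beta}(\theta) = -F_1(\theta) + \beta\pen{F_2(\theta)} + \Fopt$.

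\textbf{Lower bound on the penalty gap.} By \Cref{ass:StrongDuality}, for every $\theta\in\Theta$,
\[
-\Fopt = d(\nuOpt) \le -F_1(\theta) - \nuOpt F_2(\theta) \le -F_1(\theta) - \nuOpt \plus{F_2(\theta)} + \nuOpt\plus{F_2(\theta)} + \nuOpt\plus{-F_2(\theta)}.
\]
The useful form is simply $F_1(\theta) - \Fopt \le -\nuOpt F_2(\theta) \le \nuOpt \plus{-F_2(\theta)}$. For the penalty gap we instead use $-F_1(\theta) + \Fopt \ge \nuOpt F_2(\theta) \ge -\nuOpt\plus{F_2(\theta)}$, which gives
\[
P^\optLabel(\theta) \ge \beta\pen{F_2(\theta)} - \nuOpt\plus{F_2(\theta)} \ge -\frac{(\nuOpt)^2}{4\beta},
\]
the last step by minimizing the quadratic in $t=\plus{F_2(\theta)}$.

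\textbf{Constraint violation.} Apply $\beta t^2 - \nuOpt t \le P^\optLabel(\theta)$ at $\theta=\xxN$ and take expectations. Jensen gives $\beta(\Exp{t})^2 - \nuOpt\Exp{t} \le \Exp{P^\optLabel(\xxN)} \le \epsilon'$. Solving the quadratic,
\[
\Exp{t} \le \frac{\nuOpt}{\beta} + \sqrt{\epsilon'/\beta}.
\]
With $\beta = c\,\epsilon^{-1}$, $c \gtrsim \nuOpt$, and $\epsilon' = \epsilon^2$, both terms are $\OO{\epsilon}$, so $\Exp{\plus{F_2(\xxN)}}\le\epsilon$.

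\textbf{Two-sided value gap.} For the upper bound, $\Exp{F_1(\xxN)-\Fopt} = -\Exp{P^\optLabel(\xxN)} + \beta\Exp{\pen{F_2}}$. This is awkward, so we argue from the other direction: since $\beta\pen{F_2}\ge 0$,
\[
\Fopt - F_1(\xxN) \le P^\optLabel(\xxN), \qquad \text{so} \qquad \Exp{\Fopt - F_1(\xxN)} \le \epsilon' \le \epsilon.
\]
For the reverse inequality, strong duality gives $F_1(\xxN) - \Fopt \le \nuOpt\plus{F_2(\xxN)}$, hence $\Exp{F_1(\xxN)-\Fopt} \le \nuOpt\,\Exp{\plus{F_2(\xxN)}} = \OO{\epsilon}$. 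Rescaling $\epsilon$ by constants involving $\nuOpt$ yields both sides.

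\textbf{Plugging in the theorem.} Invoking \Cref{thm:MainResultPenalty} with accuracy $\epsilon' = \Theta(\epsilon^2)$ and $\beta = \Theta(\epsilon^{-1})$ gives:
\begin{itemize}
\item $B = \OOTilde{(1+\beta^2)/\epsilon'^2}$. Naively this is $\epsilon^{-6}$, which would be too large. We will instead track that the bias/variance requirement in the theorem's proof is on the gradient error relative to the step size, so that the needed batch scales as $\beta^2/\epsilon' = \epsilon^{-4}$. This accounting, matching the stated $B=\OO{\epsilon^{-4}}$, must be checked against the appendix proof.
\item $N = \OOTilde{\beta/\epsilon'} = \OOTilde{\epsilon^{-3}}$ if read literally. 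To get $\OOTilde{\epsilon^{-2}}$, we expect the $(1+\beta)/\epsilon'$ factor to be really $L_{P,\theta}\DDX^2/\epsilon'$-type, with the contraction factor depending on $\eta \propto 1/L_{P,\theta}$.
\item $H = \OO{\log(\beta/\epsilon')} = \OO{\log(1/\epsilon)}$.
\item $\eta \le 1/L_{P,\theta} = \OO{\beta^{-1}} = \OO{\epsilon}$, since $L_{P,\theta}$ grows linearly in $\beta$ by \Cref{cor:GradientBoundSmoothnessPenalty}.
\end{itemize}

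\textbf{Main obstacle.} The difficult step is this bookkeeping: reconciling the literal rates of the theorem at accuracy $\epsilon^2$ with the claimed $N=\OOTilde{\epsilon^{-2}}$ and total $\OOTilde{\epsilon^{-6}}$ samples. We would first try to reopen the one-step contraction in the proof of \Cref{thm:MainResultPenalty}, which is linear convergence with rate $1-\eta\epsilon_\lambda$ from hidden convexity plus an additive bias and variance floor. We would then choose $B$ so that the floor is below $\epsilon^2$, rather than using the theorem's stated dependence verbatim.
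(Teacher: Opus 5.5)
\textbf{The gap is your choice $\epsilon'=\epsilon^2$.} Your translation steps coincide with the paper's: Jensen applied to $\beta t^2-\nuOpt t\le \penOpt{2,\beta}(\xxN)$, solving the quadratic, and the two one-sided value-gap bounds. As written, however, the argument does not deliver the claimed $N=\OOTilde{\epsilon^{-2}}$ or the $\OOTilde{\epsilon^{-6}}$ rollouts. The ``main obstacle'' you identify comes from invoking \Cref{thm:MainResultPenalty} at accuracy $\epsilon'=\epsilon^2$, which is tighter than necessary.

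Your own bound $\Exp{\plus{F_2(\xxN)}}\le \nuOpt/\beta+\sqrt{\epsilon'/\beta}$ only needs $\beta\epsilon'=\OO{1}$. With $\beta=c\,\epsilon^{-1}$ and $\epsilon'=\Theta(\epsilon)$, the second term is already $\epsilon/\sqrt{c}$, and the value-gap bounds only need $\epsilon'\le\epsilon$.

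The paper takes $\epsilon'=\epsTilde\define 1/\beta(\epsilon)$ with $\beta(\epsilon)=(\nuOpt+1)(\nuOpt+\sqrt{\nuOpt^2+2})/\epsilon$. Then $2\beta\epsTilde=2$, and the violation bound becomes $(\nuOpt+\sqrt{\nuOpt^2+2})/\beta=\epsilon/(\nuOpt+1)$. Read literally, \Cref{thm:MainResultPenalty} then gives:
\begin{itemize}
\item $N=\OOTilde{(1+\beta)/\epsTilde}=\OOTilde{\beta^2}=\OOTilde{\epsilon^{-2}}$;
\item $B=\OOTilde{(1+\beta^2)/\epsTilde^2}=\OOTilde{\beta^4}=\OOTilde{\epsilon^{-4}}$;
\item $H=\OO{\log(1/\epsilon)}$ and $\eta\le 1/L_{P,\theta}=\OO{\epsilon}$.
\end{itemize}
This yields $\OOTilde{\epsilon^{-6}}$ rollouts, with no need to reopen the contraction.

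\textbf{Your proposed repair cannot work at $\epsilon'=\epsilon^2$.} In \Cref{thm:BiasedSGDHiddenConvex} one needs $\alpha\lesssim \eta\epsilon'$ and $N\gtrsim \alpha^{-1}$. With $\eta\le 1/L_{P,\theta}$ and $L_{P,\theta}=\Theta(\beta)$ (\Cref{cor:GradientBoundSmoothnessPenalty}), the analysis therefore requires on the order of $\beta/\epsilon'=\epsilon^{-3}$ iterations, up to logarithms. This holds however you rescale $\eta$, since smaller steps only make it worse. Likewise, the bias requirement $\delta^2\lesssim\epsilon'^2$ with $\delta\sim\beta/\sqrt{\card{\setB_1}}$ forces $B\gtrsim\beta^2/\epsilon'^2=\epsilon^{-6}$, not $\beta^2/\epsilon'$.

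\textbf{A smaller sign issue.} Your intermediate step $\nuOpt F_2(\theta)\ge-\nuOpt\plus{F_2(\theta)}$ is false whenever $\nuOpt>0$ and $F_2(\theta)<0$. It comes from taking the dual function in \Cref{ass:StrongDuality} literally. Its sign on $\nu F_2$ is a typo: the Lagrangian of minimizing $-F_1$ subject to $F_2\le 0$ is $-F_1+\nu F_2$. With the correct sign, $\Fopt-F_1(\theta)\ge-\nuOpt F_2(\theta)\ge-\nuOpt\plus{F_2(\theta)}$. This is the inequality you actually use afterwards, namely \Cref{lemma:AppendixTranslatePenaltyGuaranteeToFuncGapConstrViol}.
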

\vspace{-0.3cm}
\begin{proof}
 Translating the guarantees of \Cref{thm:MainResultPenalty}
 under strong duality requires
 $\beta(\epsilon) \define (\nuOpt + 1)\frac{\nuOpt
   + \sqrt{\nuOpt^2 + 2}}{\epsilon} = \OO{\epsilon^{-1}}$.
 See
 \Cref{cor:FinalRateTranslation-PROOF} for the full derivation.
\end{proof}
\change{
 To the best of our knowledge, this is the first
 $\mathrm{poly}(1/\epsilon)$ sample complexity
 guarantee for constrained maximum-entropy exploration under general policy
 parametrization. Crucially, the guarantee is for the \emph{last iterate}
 of a single-loop algorithm and applies simultaneously to the objective and
 constraint in \eqref{eq:MainProblemRL}. Combined with our approximation scheme
 based on $\epsilon$-smoothed policies,
 cf. \Cref{prop:SoftmaxApproximationStationaryPolicies}, this yields an approximation
 guarantee with respect to any global solution $\lambda^{\pi^*}$ associated
 with any stationary policy $\pi^* \in \Pi$, including deterministic ones,
 of \eqref{eq:MainProblemRL}.
 In particular, the direct softmax policy satisfies all assumptions,
 and \Cref{lem:FinalComplexityDirectSoftmax} shows that the
 same $\OOTilde{\epsilon^{-6}}$ convergence rate holds in this fully
 verifiable setting. The choice $\beta = \OO{\epsilon^{-1}}$ follows from
 the strong-duality translation, reflecting the scaling between feasibility
 enforcement and the dual multiplier. A limitation of the current analysis
 is the large batch size $B = \OO{\epsilon^{-4}}$, required to control
 truncation bias and occupancy-measure estimation. We expect this can
 be reduced, for example via variance reduction techniques
 \citep{zhangConvergenceSampleEfficiency2021, barakatReinforcementLearningGeneral2023a,
  fatkhullinStochasticPolicyGradient2023a}; indeed, the ablations
 in the next section suggest the $\epsilon^{-4}$ dependence is likely conservative
 in practice. A sharper theoretical analysis is left for future work. }
\vspace{-0.3cm}
\section{Numerical Experiments}
\label{chapter:Experiments}
We validate our method in a \emph{FrozenLake} grid-world environments which is visualized
in \Cref{fig:SmallGridWorldVis}, where the experiment involves maximizing the entropy
while fulfilling a (linear) safety constraint.
The full details and additional experiments with a non-linear norm
constraint are deferred to \Cref{sec:AppendixNumericalExperiments}.
\change{All entropy and constraint violation guarantees are given in terms
 of $\mu\pm\sigma$ for 10 seeds.}

\subsection{\change{Theoretical Setting:} Grid World} \label{sec:NumericalExperimentsLinearPerformanceConstraints}

We compare our approach to the primal-dual (PDPG) algorithm of
\citet{yingPolicybasedPrimalDualMethods2025} for which we performed a
hyperparameter search (details in
\Cref{subsec:AppendixNumExpLinPerformanceConstraint}) and display their
best-performing model in
\Cref{fig:Figure1MaxEntropyPerformanceConstrainedVsUnconstrained}. While
our \algo{}-method achieves an objective function slightly below the
unconstrained maximum entropy policy and meets the safety constraints, the
primal-dual version exhibits weak regret in practice: the iterates
oscillate around the constraint, cf. \cite[Sec.
 6]{mullerTrulyNoRegretLearning2024}.
\begin{figure}[!h]
 \vspace{-0.3cm}
 \centering

 \begin{minipage}[t]{0.463\textwidth}
  \includegraphics[width=\columnwidth]{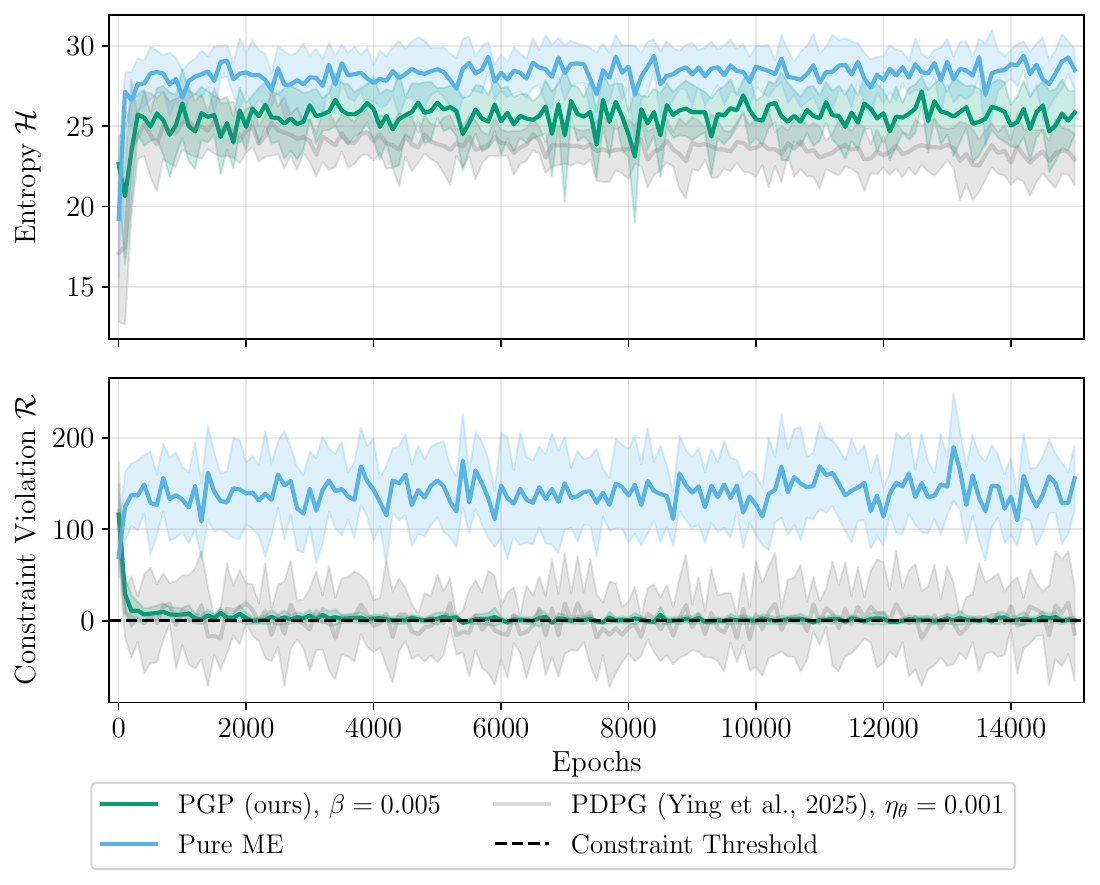}
  \caption{
   \textbf{Solving constrained maximum-entropy exploration} using
   \okBluishGreen{Policy Gradient Penalty Method (\algo{})},
   \okSkyBlue{unconstrained Maximum-Entropy (ME)}
   exploration,
   and the \gray{primal-dual (PDPG)} method
   of \cite{yingPolicybasedPrimalDualMethods2025}.
   Policies are visualized in \Cref{fig:VisPolicyMaxEntropyComparison} (Appendix).
  }
  \label{fig:Figure1MaxEntropyPerformanceConstrainedVsUnconstrained}
 \end{minipage}
 \hfill
 \begin{minipage}[t]{0.51\textwidth}
  \centering
  \includegraphics[width=\columnwidth]{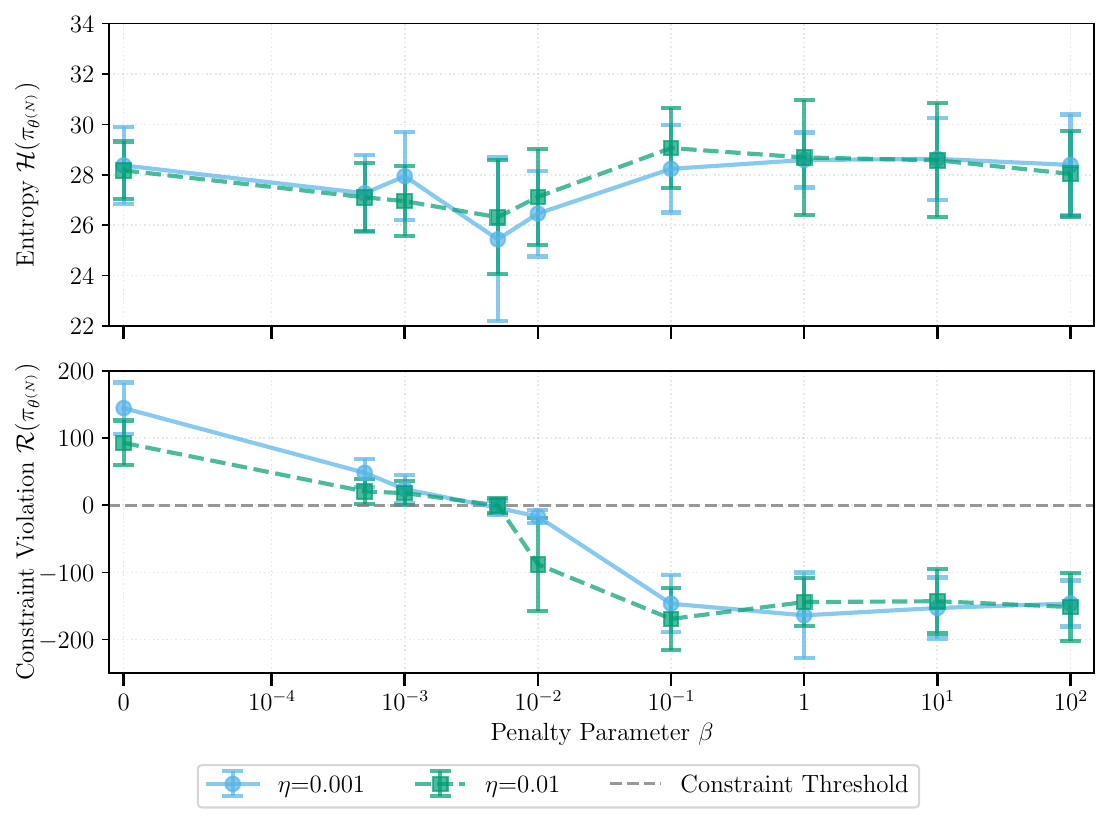}
  \caption{
   \textbf{Robustness to $\beta$.}
   The entropy of the final policy
   $\pi_{\theta^{(N)}}$ is
   insensitive to the choice of $\beta$
   across several orders of magnitude,
   while constraint violations decrease
   monotonically as $\beta$ increases, for
   both \textbf{step sizes}
   \okSkyBlue{$\eta = 0.001$} and
   \okBluishGreen{$\eta = 0.01$}.
   Experimental details are available in
   \Cref{sec:AppendixNumericalExperiments}.
  }
  \label{fig:SensitivityPenaltyParameterBeta}
 \end{minipage}
 \vspace{-0.2cm}
\end{figure}
\Cref{fig:SensitivityPenaltyParameterBeta} shows that the proposed penalty formulation is highly robust
to the choice of
$\beta$. Across several orders of magnitude, the achieved entropy remains essentially unchanged once
feasibility is reached, while constraint violations decrease monotonically.
This indicates that effective constraint enforcement does not require fine-tuning of
the penalty parameter and does not trade off against the exploration objective.

\subsection{Beyond Theory: Continuous State-Action Space}
\label{sec:NumericalExperimentsCartpole}
The main challenge in the continuous setting lies in accurately
approximating the state-action occupancy measure. 
We address this using the current
state-of-the-art maximum-likelihood
estimator of
\citet{barakatScalableGeneralUtility2025},
with technical details deferred to
\Cref{sec:AppendixContinuousSpaceExperiments}.
\paragraph{Imitation Learning Constraint.} \change{We train a reference policy $\pi_{\mathrm{ref}}$ using the Soft-Actor
 Critic (SAC) method \citep{raffinStableBaselines3ReliableReinforcement2021}
 to solve the \emph{PointMass} environment
 \citep{dulac-arnoldChallengesRealworldReinforcement2021}. We then apply our
 method to \eqref{eq:MainProblemRL} with an apprenticeship constraint
 $\Reg(\lambda^{\pi_\theta}) \define
  \KL{\lambda^{\pi_\theta}}{\lambda^{\pi_\mathrm{ref}}} - \rMax \leq 0$.
 \Cref{fig:PointMassHeatmapComparison} illustrates the central tension of
 this setting: the reference policy consistently directs the agent toward
 the center of the environment, inherently discouraging exploration. As the
 constraint budget $\rMax$ increases, the learned policy is permitted
 greater deviation from $\pi_{\mathrm{ref}}$ and progressively overcomes
 this bias, achieving increasingly uniform coverage of the state space.
 We visualize the controls for different starting positions and varying
 $\rMax$ in
 \Cref{fig:PointMassControlTrajectory} (Appendix).}
\begin{figure*}[!h]
 \centering
 \includegraphics[width=\textwidth]{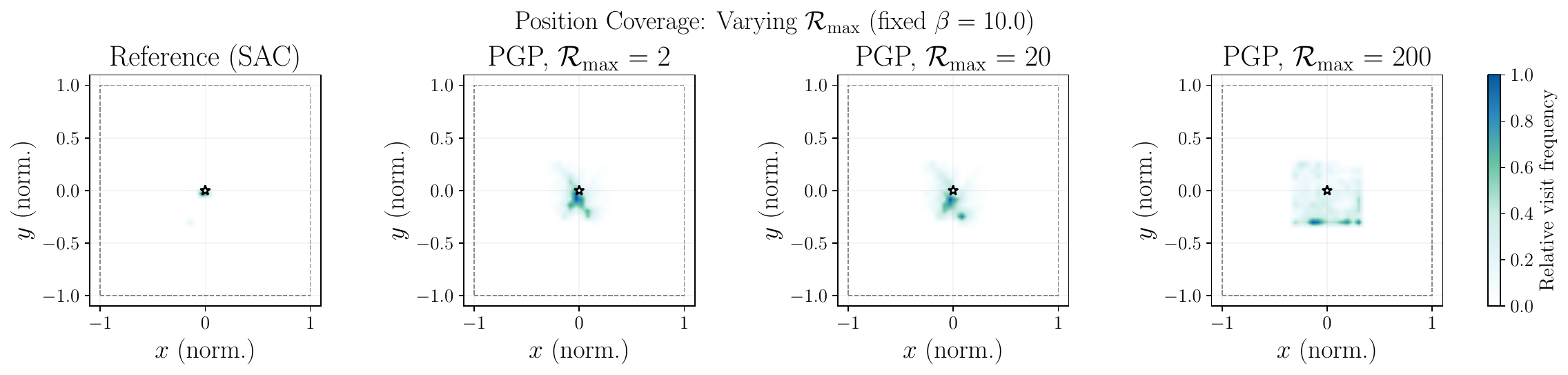}
 \caption{
  \textbf{PointMass: State-Space Coverage.}
  Relative visit frequencies of the SAC reference policy (trained to reach $(\star)$)
  and PGP-policies 
  evaluated on 30 start positions. 
  Reference policy concentrates near the
  center, while increasing $\rMax$
  yields progressively more uniform
  state-space coverage.}
 \label{fig:PointMassHeatmapComparison}
 \vspace{-0.5cm}
\end{figure*}
\paragraph{Reward Constraint.}
\change{The \emph{SafeCartpole} task
 \citep{dulac-arnoldChallengesRealworldReinforcement2021} imposes a
 (linear) symmetric box constraint
 $\Reg(\lambda^{\pi_\theta}) \define \IP{c_{\mathrm{pos}}}{\lambda^{\pi_\theta}} - \cMax \leq 0$
 on the cart position, making exploration
 structurally challenging. Effective exploration requires a swing-up, which
 must be executed within tight positional bounds.
 \Cref{fig:CartpolePerformance} quantifies this tension: increasing entropy,
 necessary for swing-up, systematically drives constraint violations toward
 the safety threshold, with \algo{} successfully navigating this tradeoff
 (\Cref{fig:Figure1Cartpole}).}
The policy snapshots (vertical lines in \Cref{fig:CartpolePerformance}) trace the
training dynamics in \Cref{fig:CartpoleTrainingEvolution} (Appendix): early iterations
focus on swing-up, intermediate policies exploit symmetry for broader
state-space coverage, and later stages approach the constraint boundary to
minimize swing-up time while maintaining feasibility.
A detailed quantitative analysis is provided in
\change{\Cref{sec:AppDetailsCartpole}.}
\begin{figure}[!ht]
 \centering
 \begin{subfigure}{0.39\textwidth}
  \centering
  \includegraphics[width=\linewidth]{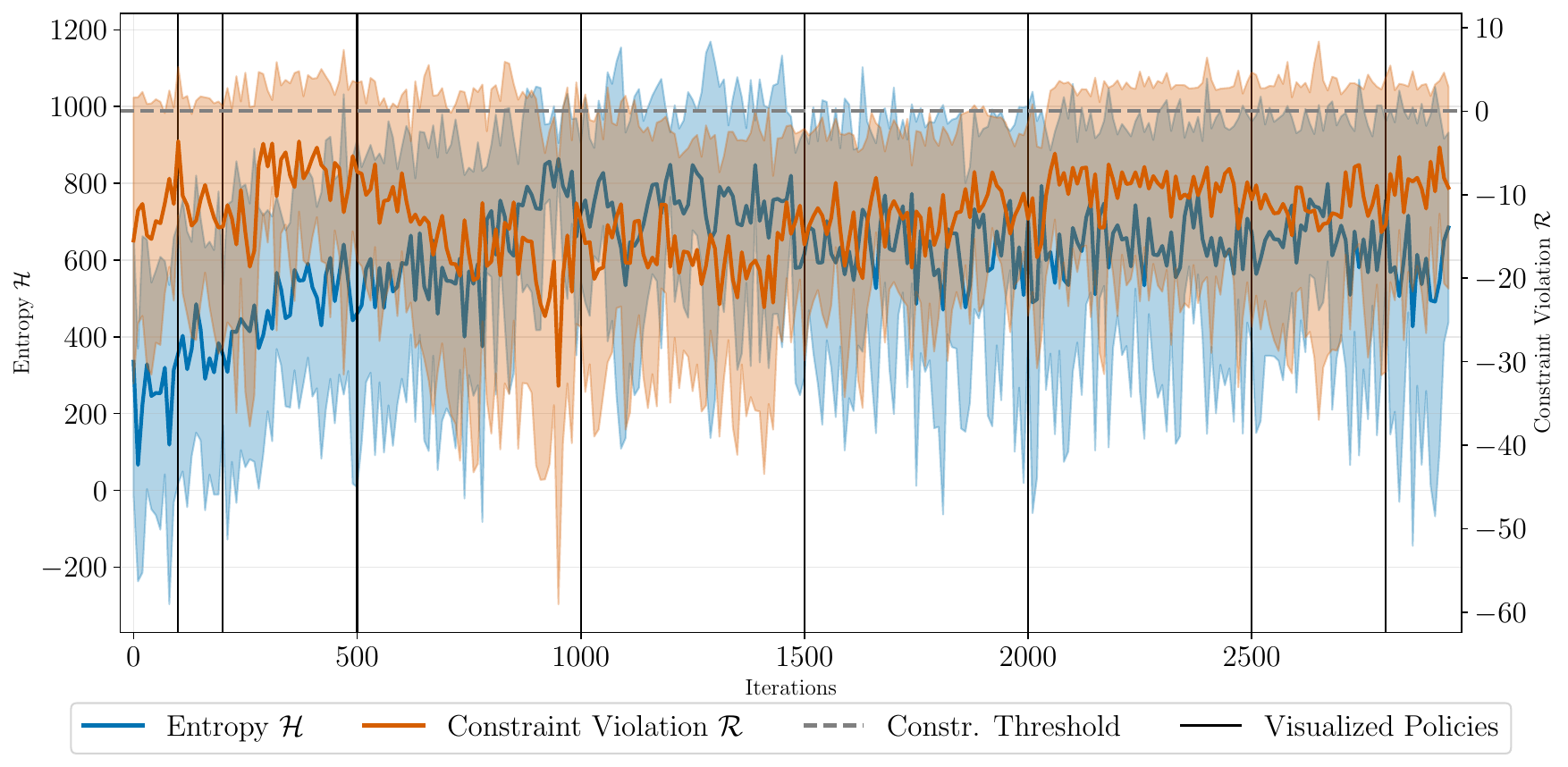}
  \caption{
   \okBlue{Entropy $\HH$} and the \okVermillion{Constraint Violation $\Reg$}.
   \textbf{Vertical lines} correspond to the \textbf{visualized policies}
   in \Cref{fig:CartpoleTrainingEvolution} (Appendix).
  }
  \label{fig:CartpolePerformance}
 \end{subfigure}
 \hfill
 \begin{subfigure}{0.6\textwidth}
  \centering
  \begin{minipage}[t]{0.48\linewidth}
   \centering
   \imgwithtopcaption
   {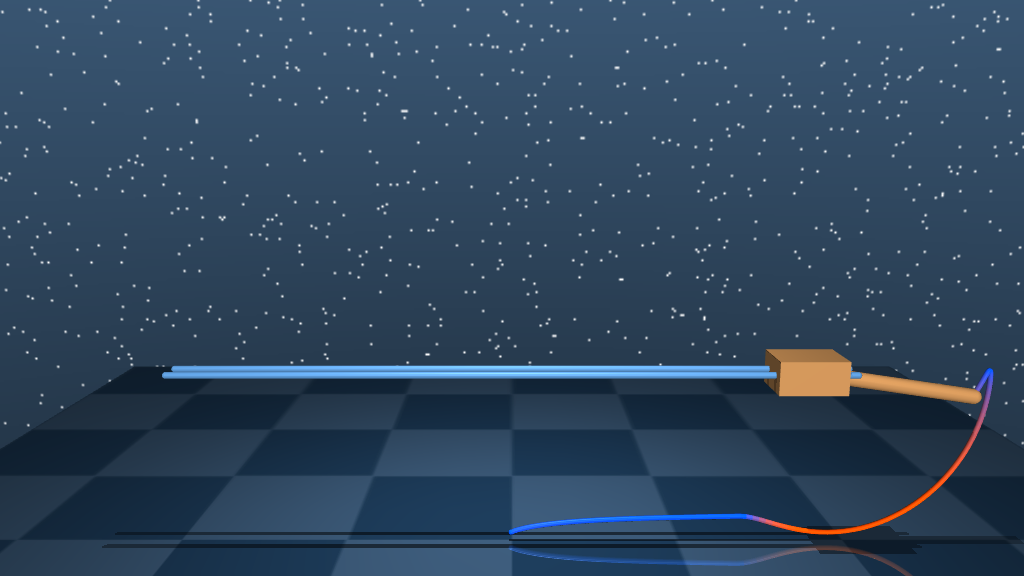}
   {$t = 4.8\,\mathrm{s}$}
  \end{minipage}
  \hfill
  \begin{minipage}[t]{0.48\linewidth}
   \centering
   \imgwithtopcaption
   {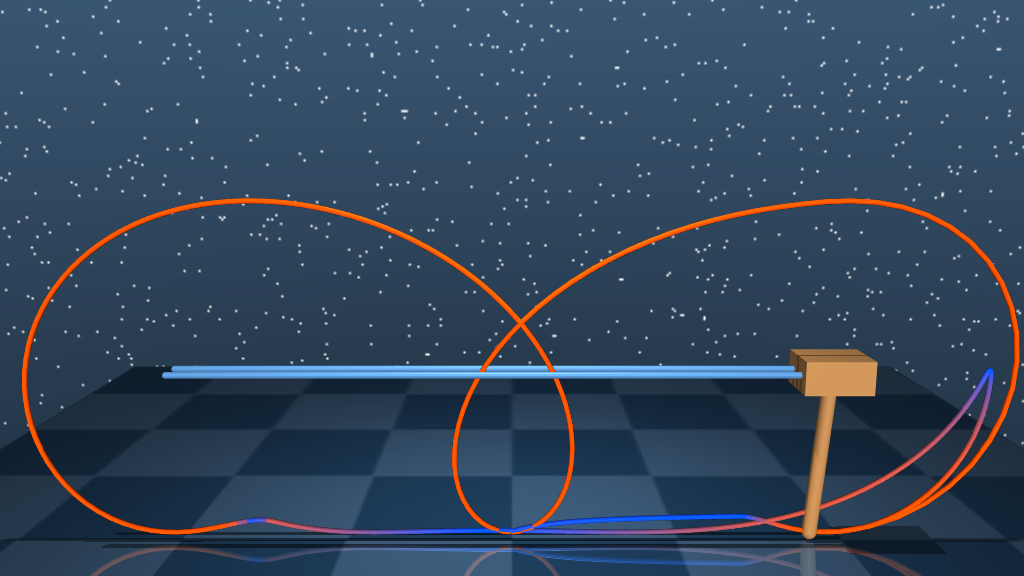}
   {$t = 7.2\,\mathrm{s}$}
  \end{minipage}

  \caption{
   Visualization of the \textbf{final policy} $\pi_{\theta^{(N)}}$ at two time steps,
   strictly satisfying the cart-position constraint $[-2.0,2.0]$.
   The \textbf{tracer color} encodes \textbf{pole angular velocity}
   (\markerBlue{lower}, \markerOrange{higher}), see
   \Cref{sec:AppDetailsCartpole}.
  }
  \label{fig:Figure1Cartpole}
 \end{subfigure}

 \caption{
  \textbf{\eqref{eq:MainProblemRL} on \change{SafeCartpole}.}
  Trajectories illustrate diverse exploration of the
  full state space, consisting of the cart position and velocity as well
  as the pole angle and angular velocity.
 }
 \label{fig:CartpoleCombined}

 \vspace{-0.3cm}
\end{figure}
\section{Concluding Remarks \& Future Work}
\label{sec:Conclusion}
We propose a single-loop, policy-space penalty method for constrained
maximum-entropy exploration. Our analysis provides global last-iterate
convergence guarantees under strong duality, despite non-convex policy
parameterization and stochastic gradients, and we demonstrate the
scalability of our method on a continuous control task. Future work
includes extensions to non-smooth occupancy-measure constraints.
On the algorithmic side, closing the theory-practice gap in step-size and
batch-size selection remains an important and non-trivial challenge.
Finally, our current algorithm and analysis yields a relatively large
$\widetilde{\mathcal{O}}(\epsilon^{-6})$ sample complexity. We believe it
can be possible to reduce the sampling cost using additional algorithmic
refinements, such as variance reduction or adaptive penalty schemes.
However, a rigorous validation of these directions remains open.
\section*{Acknowledgement}
FW is supported by the Prof. Dr.-Ing. Erich Müller-Stiftung, the Swiss National
Centre of Competence in Research (NCCR) Automation under grant agreement
No. 51NF40\_180545, and the
Kortschak Scholars Program.
IF is supported by ETH AI Center Doctoral Fellowship.
The authors thank Yarden As for fruitful discussions.
\bibliography{./penalty_HC_refs_zotero.bib, ./penalty_HC_refs_add_bib_here.bib}
\bibliographystyle{abbrvnat}
\appendix
\tableofcontents
\section{Additional Related Work}
\label{sec:AppendixDetailedComparison}
We provide additional related work and make precise why our contributions are
substantively different from the
two most closely related works.

\textbf{Different Entropy Formulations.}
Other works deal with Rényi entropy formulation
\cite{zhangExplorationMaximizingRenyi2021, yuanRenyiStateEntropy2022},
geometric aware Shannon entropy \cite{guoGeometricEntropicExploration2021},
and f-Divergence entropy regularization
\cite{agarwal$f$PolicyGradientsGeneral2023}.
\citet{leeEfficientExplorationState2020,
 islamMarginalizedStateDistribution2019} recast exploration in terms of
marginal state distribution learning, while
\citet{jainMaximumStateEntropy2023} focus on entropy maximization of a
single trajectory, \citet{zamboniPrincipledUnsupervisedMultiAgent2025} on a
multi-agent RL setting, and \citet{zisselmanExploreGeneralizeZeroShot2023}
use ensemble agents to ensure exploration at test-time.

\textbf{Optimization Literature.}
Our algorithmic framework is most closely related to penalty and augmented
Lagrangian methods (ALM) for constrained optimization. Such methods are
classical tools with a long history and broad adoption
\cite{courantVariationalMethodsSolution1943,
 fiaccoNonlinearProgrammingSequential1990,
 bertsekasNonlinearProgramming1999, boydConvexOptimization2004,
 nesterovLecturesConvexOptimization2018}, and have been extensively studied
from a first-order complexity perspective in both convex and non-convex
settings \cite{aybatFirstOrderAugmentedLagrangian2012,
 lanIterationcomplexityFirstorderPenalty2013,
 necoaraComplexityFirstorderInexact2019, liAugmentedLagrangianBased2021}.
However, existing analyses typically focus on deterministic optimization,
establish asymptotic convergence or convergence to $\epsilon$-KKT points,
and do not directly apply to entropy-based reinforcement learning, where
non-convexity arises from policy parameterization and gradients are
estimated stochastically.
Recent work has also highlighted naively incorporating constraints as costs
in the objective function may fail to simultaneously ensure feasibility and
optimality in deep learning settings
\cite{ramirezPositionAdoptConstraints2025}. Our results demonstrate that,
when combined with problem-specific gradient structure, quadratic penalty
methods can yield strong guarantees even in stochastic, non-convex RL
problems. A unified view connecting our approach to proximal-point and ALM
frameworks under hidden convexity is developed in
Appendix~\ref{sec:AppendixNonSmoothAndSmoothPenaltyApproach}.

\subsection{Detailed Comparison to \citet{zhangVariationalPolicyGradient2020}}

\textbf{Problem.}
\citet{zhangVariationalPolicyGradient2020} study \emph{unconstrained}
general-utility RL, i.e., maximizing a convex functional $F(\lambda^\pi)$
without constraints; introducing constraints changes the
optimization landscape fundamentally.
Moreover, Example~2.1 therein is restricted to cumulative discounted reward
in a standard constrained MDP via a generic penalty function, leaving both
the concrete choice of penalty and the specific objective unspecified.

\textbf{Algorithm.}
The core algorithmic novelty is not in running biased SGD per se, but in
\emph{how} we construct the gradient estimator for the penalized objective.
In generic stochastic penalty and ALM methods, the chain rule through
$\beta\,\pen{\Reg(\lambda(\theta))}$ canonically requires separate
stochastic estimators for the constraint function value
$\Reg(\lambda(\theta))$ and its sub-gradient; their product creates a
cross-term bias that does not vanish with batch size. This fundamental bias
mismatch has forced the prior stochastic penalty and ALM methods to either
(i)~restrict to deterministic constraints, (ii)~use nested sampling or
multiple gradient evaluations per iteration, or (iii)~rely on momentum or
variance-tracking. Our pseudo-reward construction (Line~5 of
\Cref{algo:Penalty}) circumvents this entirely by reinterpreting
\emph{both} gradient terms as policy gradients with respect to constructed
pseudo-rewards via the Policy Gradient Theorem. This construction is
specific to the RL structure and does not appear in, and cannot be derived
from, \citet{zhangVariationalPolicyGradient2020}, which operates in the
unconstrained setting.

\textbf{Theory.}
\citet{zhangVariationalPolicyGradient2020} explicitly assume \emph{exact}
knowledge of the Q-function (cf.\ the Remark after their Theorem~2);
our analysis operates in the realistic stochastic setting, requiring a joint
control of truncation bias $\mathcal{O}(\gamma^H)$ and Monte-Carlo variance
$\mathcal{O}(B^{-1/2})$ (cf.\ \Cref{lem:BiasVarianceGradientEstimator}).
Beyond this stochastic extension, two further nontrivial analytical steps
are required:
\begin{itemize}
 \item \textbf{Preservation of hidden convexity under the penalty}
       (\Cref{lem:AppendixSmoothnessBoundedGradientsPenaltyLambda}):
       Showing that composing $\mathcal{R}(\lambda)$ with the quadratic penalty
       $[\innerEmpty{}]_+^2$
       preserves the smoothness and hidden convexity structure, and deriving the
       $\beta$-dependent smoothness constants $(\ell_{P, \lambda}, L_{P, \lambda}, L_{P, \lambda, \infty})$,
       requires careful analysis of the interaction between the penalty nonlinearity and the
       occupancy-measure geometry. We are not aware of any prior work establishing such a result.
 \item \textbf{Translation under strong duality}
       (\Cref{cor:FinalRateTranslation},
       \Cref{cor:AppendixConstraintViolationQuadraticPenaltyTranslation}):
       Converting the penalty gap
       $\penOpt{2,\beta}(\thetaN) \leq \epsilon$ into simultaneous objective and
       constraint guarantees requires a quantitative relationship between $\beta(\varepsilon)$ and
       the dual multiplier $\eta^*$. This translation, yielding the specific scaling
       $\beta = \mathcal{O}(\varepsilon^{-1})$, is new to hidden convex optimization.
\end{itemize}

\subsection{Detailed Comparison to \citet{fatkhullinStochasticOptimizationHidden2025}}

\citet{fatkhullinStochasticOptimizationHidden2025} analyze
\emph{unconstrained} stochastic optimization under hidden convexity with an
\emph{unbiased} gradient oracle. Thus constraints, stochastic biased gradients,
and their interaction with hidden convexity are not addressed therein.
We borrow the hidden-convexity framework but face \textbf{two difficulties} absent
in \citet{fatkhullinStochasticOptimizationHidden2025}.
\begin{itemize}
 \item First, our gradient estimator introduces bias through two distinct
       mechanisms: (i)~trajectory truncation at horizon $H$, and (ii)~the
       occupancy-measure estimation error propagated through the penalty gradient
       via the pseudo-rewards, which are evaluated at the \emph{estimated}
       occupancy measure $\lambdaHat$ rather than the true $\lambda_H$.
 \item Second, the penalty introduces a composite structure whose chain rule
       canonically induces two independently estimated quantities which we resolve
       by introducing our pseudo-reward construction.
\end{itemize}
Controlling the chain of penalty smoothness preservation, through occupancy-measure
error propagation, to the final bias-variance tradeoff, is the technical
core of our analysis beyond a direct adaptation of
\citet{fatkhullinStochasticOptimizationHidden2025} and requires carefully
balancing the horizon $H$, batch size $B$.
as well as the the step-size $\eta$ is a three-way trade-off
absent from the unbiased oracle model of
\citet{fatkhullinStochasticOptimizationHidden2025}.

Furthermore, translating the penalty guarantees to the constrained problem
requires the scaling $\beta = \OO{\epsilon^{-1}}$ from strong duality
(\Cref{ass:StrongDuality}), a condition entirely absent in the
unconstrained framework. To the best of our knowledge, no prior work
handles the combination of hidden convex constraints and stochastic biased
gradients.

\section{Useful Technical Lemmas}
\label{sec:AppendixGradientEstimates}
\subsection{Bias \& Variance of Gradient Estimator, Smooth Case, (In-)Finite Horizon}
In this section, we will derive the bias and the variance of our gradient
estimator \eqref{eq:SubGradientEstimator} for $F \in \{\HH, \Reg\}$.
\Cref{Appendix-sec:PenaltyApproachAnalysis} shows that the composition of
such a function with our quadratic penalty maintains the important property
of $L$-smoothness and hidden convexity.

\begin{lemma}[Properties of \eqref{eq:SubGradientEstimator}]
 \label{lem:AppendixScoreFunction}
 Under \Cref{ass:Softmax} and  \Cref{ass:Lsmoothness}, the following
 statements hold:
 \begin{itemize}
  \item $\forall \theta \in \Theta, \forall (\ss,\aa) \in \SS \times \AA$
        we have
        \begin{align*}
         \norm*{\nabla_\theta \log \pi_\theta(\aa\vert \ss)}       & \leq 2 \ell_{\psi}                                                            \\
         \norm*{\nabla^2_{\theta} \log \pi_{\theta}(\aa\vert \ss)} & \leq 2 (L_\psi + \ell_\psi^2)                                                 \\
         \norm*{\nabla_\theta F(\lambda(\theta))}                  & \leq \frac{2\ell_\psi \ell_\lambda}{(1-\gamma)^2} \defineRev \ell_{F, \theta}
        \end{align*}
  \item $\forall \theta_1, \theta_2 \in \Theta$ the (truncated) state-action occupancy measure
        is Lipschitz with respect to the parametrization, i.e.
        \begin{align*}
         \norm*{\lambda^{\pi_{\theta_1}} - \lambda^{\pi_{\theta_2}}}_1
          & \leq \frac{2\ell_\psi}{(1-\gamma)^2} \norm{\theta_1 - \theta_2} \\
         \norm*{\lambda_H(\theta_1) - \lambda_H(\theta_2)}_1
          & \leq \frac{2\ell_\psi}{(1-\gamma)^2} \norm{\theta_1 - \theta_2} \\
        \end{align*}
  \item Concatenating $F$ with the parametrization maintains smoothness, i.e. the
        function $\theta \mapsto F(\lambda^{\pi_\theta})$ is $L_{F, \theta}$-smooth
        with
        \begin{align*}
         L_{F, \theta} \define \frac{4 L_{\lambda, \infty} \ell_\psi^2}{(1-\gamma)^2}
         + \frac{8 \ell_\psi^2 \ell_\lambda}{(1-\gamma)^3}
         + \frac{2\ell_\lambda(L_\psi + \ell_\psi^2)}{(1-\gamma)^2}.
        \end{align*}
 \end{itemize}
\end{lemma}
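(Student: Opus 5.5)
I would argue the three groups of claims in the order stated, since each later bullet reuses the earlier ones.

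\textbf{Score function bounds.} Write $\log \pi_\theta(\aa\vert\ss) = \psi(\ss,\aa;\theta) - \log\sum_{\aa'}\exp(\psi(\ss,\aa';\theta))$. Differentiating gives
\begin{align*}
 \nabla_\theta \log\pi_\theta(\aa\vert\ss) = \nabla_\theta\psi(\ss,\aa;\theta) - \Expu{\aa'\sim\pi_\theta(\cdot\vert\ss)}{\nabla_\theta\psi(\ss,\aa';\theta)} .
\end{align*}
The triangle inequality and \Cref{ass:Softmax}(i) then give the bound $2\ell_\psi$. Differentiating once more yields two kinds of terms. The first is $\nabla^2\psi$ minus its $\pi_\theta$-average, which is bounded by $2L_\psi$. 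The second is the negative covariance matrix of $\nabla_\theta\psi(\ss,\aa';\theta)$ under $\pi_\theta(\cdot\vert\ss)$, whose spectral norm is at most $\ell_\psi^2$. Together these give $2(L_\psi+\ell_\psi^2)$ with room to spare.

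\textbf{Gradient bound and Lipschitzness of $\lambda$.} For $\nabla_\theta F(\lambda(\theta))$, I apply \eqref{eq:PolicyGradientTheorem} with $\rew=\nabla_\lambda F$, which satisfies $\norm{\rew}_\infty\le\ell_\lambda$ by \Cref{ass:Lsmoothness}. Taking norms inside the expectation gives
\begin{align*}
 \sum_t\gamma^t\ell_\lambda (t+1)\, 2\ell_\psi = \frac{2\ell_\psi\ell_\lambda}{(1-\gamma)^2}.
\end{align*}
The normalization $(1-\gamma)$ in $\lambda$ only helps here.

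For the $\ell^1$-Lipschitz bounds on $\lambda$ and $\lambda_H$, I use duality. Note $\norm{\lambda_1-\lambda_2}_1 = \sup_{\norm{\rew}_\infty\le 1}\IP{\rew}{\lambda_1-\lambda_2}$. By the mean value theorem this is at most $\sup_{\norm{\rew}_\infty\le1}\sup_\theta\norm{\nabla_\theta V^{\pi_\theta}(\rew)}\,\norm{\theta_1-\theta_2}$, and the same calculation bounds the gradient factor by $2\ell_\psi/(1-\gamma)^2$. For the truncated $\lambda_H$, the sum is finite and each term is dominated by the infinite-horizon bound, so the same constant works.

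\textbf{Smoothness.} Write $\nabla_\theta F(\lambda(\theta)) = \nabla_\theta V^{\pi_\theta}(\rew)\vert_{\rew=\nabla_\lambda F(\lambda(\theta))}$ and split the difference at $\theta_1,\theta_2$ into two pieces:
\begin{align*}
 \bigl[\nabla V^{\pi_{\theta_1}}(\rew_1)-\nabla V^{\pi_{\theta_1}}(\rew_2)\bigr] + \bigl[\nabla V^{\pi_{\theta_1}}(\rew_2)-\nabla V^{\pi_{\theta_2}}(\rew_2)\bigr].
\end{align*}
The first piece is linear in $\rew$. It is bounded by $\frac{2\ell_\psi}{(1-\gamma)^2}\norm{\rew_1-\rew_2}_\infty$. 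Then \Cref{ass:Lsmoothness}(iii) and the $\ell^1$-Lipschitz bound above give $\frac{2\ell_\psi}{(1-\gamma)^2} L_{\lambda,\infty}\frac{2\ell_\psi}{(1-\gamma)^2}\norm{\theta_1-\theta_2}$. This produces the $L_{\lambda,\infty}\ell_\psi^2$ term, and I expect the constant bookkeeping in the $(1-\gamma)$ powers to need care to match exactly.

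The second piece is the smoothness of the standard value function for a fixed reward with $\norm{\rew}_\infty\le\ell_\lambda$. Here I follow the classical argument (e.g.\ Agarwal et al., or Zhang et al.\ 2020). Differentiate the REINFORCE expression: the Hessian consists of a term with $\sum_k\nabla^2\log\pi$, bounded via the score-Hessian bound, which yields $\frac{2\ell_\lambda(L_\psi+\ell_\psi^2)}{(1-\gamma)^2}$. It also contains a term with the outer product of summed scores, bounded using $\sum_t\gamma^t(t+1)^2 \le 2/(1-\gamma)^3$, which yields the $\frac{8\ell_\psi^2\ell_\lambda}{(1-\gamma)^3}$ term.

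The main obstacle is this second-order term. The naive bound $\norm{\sum_k\nabla\log\pi}^2\le (t+1)^2 4\ell_\psi^2$ suffices for the stated constant, but differentiating under the infinite expectation must be justified. I would handle this by truncating at horizon $H$ and letting $H\to\infty$, using uniform geometric convergence of the truncated sums.
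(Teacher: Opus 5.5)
Your route is the standard argument behind the results the paper cites for this lemma (Lem.~H.1 of Barakat et al.\ and Lem.~5.3 of Zhang et al.~2021). The paper's own proof is only that citation. Your score bounds, gradient bound, duality-based $\ell^1$-Lipschitz bounds, and fixed-reward Hessian bound are all correct. Truncation is also an adequate way to justify term-by-term differentiation.

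\textbf{The gap: the $(1-\gamma)$ power in the first smoothness term.} This is exactly the step where you hedge, and as written it does not deliver the stated constant. You use $\frac{2\ell_\psi}{(1-\gamma)^2}$ once for the linear-in-$r$ bound and once more for $\norm{\lambda(\theta_1)-\lambda(\theta_2)}_1$. That yields $\frac{4L_{\lambda,\infty}\ell_\psi^2}{(1-\gamma)^4}$, which is a factor $(1-\gamma)^{-2}$ worse than the claimed $\frac{4L_{\lambda,\infty}\ell_\psi^2}{(1-\gamma)^2}$.

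\textbf{The fix.} You must actually use the $(1-\gamma)$ normalization in the paper's definition of $\lambda$, in both factors. With it, $\nabla_\theta\IP{r}{\lambda(\theta)} = (1-\gamma)\,\mathbb{E}\bigl[\sum_t\gamma^t r(s_t,a_t)\sum_{k\le t}\nabla_\theta\log\pi_\theta(a_k\vert s_k)\bigr]$; note that the paper's display of the policy gradient theorem drops this factor. Two sharper bounds follow:
\begin{itemize}
 \item $\norm{[\nabla_\theta\lambda(\theta)]^\top r}\le\frac{2\ell_\psi}{1-\gamma}\norm{r}_\infty$;
 \item by your duality argument, $\norm{\lambda(\theta_1)-\lambda(\theta_2)}_1\le\frac{2\ell_\psi}{1-\gamma}\norm{\theta_1-\theta_2}$, which is sharper than the second-bullet bound you reused.
\end{itemize}
Multiplying these with $L_{\lambda,\infty}$ gives exactly the first term of $L_{F,\theta}$.

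For consistency, the fixed-reward piece must then also be computed with the normalized $\lambda$. This multiplies your Hessian estimate by $(1-\gamma)$, giving $\frac{8\ell_\psi^2\ell_\lambda}{(1-\gamma)^2}+\frac{2\ell_\lambda(L_\psi+\ell_\psi^2)}{1-\gamma}$. That is dominated by the remaining two terms of $L_{F,\theta}$, so the stated constant follows. The normalization is therefore essential here, not merely helpful: with the unnormalized occupancy measure your decomposition only gives a $(1-\gamma)^{-4}$ first term.

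A minor point: your mean-value step needs the segment between $\theta_1$ and $\theta_2$ to lie in $\Theta$, i.e.\ $\Theta$ convex. The projection step in the algorithm implicitly presumes this, but you should state it.
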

\begin{proof}
 See Lem. H.1 in \citet{barakatReinforcementLearningGeneral2023a} and Lem. 5.3
 in \citet{zhangConvergenceSampleEfficiency2021}.
\end{proof}

The following lemma allows us to run (biased) SGD as a solver for the
unconstrained penalty problem.
\begin{lemma}[Gradient Estimator: Bias and Variance]
 \label{lem:BiasVarianceGradientEstimator}
 Under \Cref{ass:Softmax} and  \Cref{ass:Lsmoothness},
 the gradient estimator
 $\gApprox_{H,B}(\change{\nabla_\lambda F}, \theta, (\tau_i)_{i\in\change{\setB_2}})$ in
 \eqref{eq:SubGradientEstimator} satisfies
 for every $\theta \in \Theta$ the following bias and variance bounds:
 \begin{align*}
  \norm*{\Exp{\gApprox_{H,B}(\change{\nabla_\lambda F}, \theta, (\tau_{i})_{i\in\change{\setB_2}})}
   - \nabla_\theta F(\lambda(\theta))}
   & \leq D_H \gamma^H + \frac{D_{\gApprox}}{\sqrt{\change{\card{\setB_1}}}} \\
  \Exp{\norm*{\Exp{\gApprox_{H,B}(\change{\nabla_\lambda F}, \theta, (\tau_{i})_{i\in\change{\setB_2}})}
   - \gApprox_{H,B}(\change{\nabla_\lambda F}, \theta, (\tau_{i})_{i\in\change{\setB_2}})}^2}
   & \leq \frac{1}{\change{\card{\setB_2}}} \sigma^2_H,
 \end{align*}
 with constants
 \begin{align*}
  D^2_{H}      & \define \frac{6\ell_\psi^2 L_\lambda^2}{(1-\gamma)^6}
  + 16 \ell_\psi^2 \ell_\lambda^2\left(\frac{(H+1)^2}{(1-\gamma)^2}
  + \frac{1}{(1-\gamma)^4}\right),                                           \\
  D_{\gApprox} & \define \frac{2\ell_\psi L_{\lambda, \infty}}{(1-\gamma)^2}
  \abs{S}\abs{A},                                                            \\
  \sigma^2_H   & \define \frac{H \ell_\lambda \ell_\psi}{1-\gamma}.
 \end{align*}
\end{lemma}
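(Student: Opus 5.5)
The proof splits the estimator error into three pieces. Write $r^\star \define \nabla_\lambda F(\lambda(\theta))$ for the exact pseudo-reward and $\hat r \define \nabla_\lambda F(\lambda)\vert_{\lambda=\lambdaHat((\tau_i)_{i\in\setB_1})}$ for the estimated one. Since $\setB_1$ and $\setB_2$ are disjoint and the trajectories are i.i.d., $\hat r$ is independent of $(\tau_b)_{b\in\setB_2}$. Conditioning on $\hat r$ therefore gives
\begin{align*}
 \Exp{\gApprox_{H,B}} = \Exp{\nabla_\theta V_H^{\pi_\theta}(\hat r)},
\end{align*}
where $V_H$ is the $H$-truncated value, since $g_H$ is an unbiased REINFORCE estimator of the truncated policy gradient for a fixed reward. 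The bias then decomposes as
\begin{align*}
 \Exp{\nabla_\theta V_H^{\pi_\theta}(\hat r)} - \nabla_\theta V_H^{\pi_\theta}(r^\star)
 \quad\text{plus}\quad
 \nabla_\theta V_H^{\pi_\theta}(r^\star) - \nabla_\theta V^{\pi_\theta}(r^\star),
\end{align*}
and the second term equals $\nabla_\theta F(\lambda(\theta))$ by \eqref{eq:PolicyGradientTheorem} and the chain rule. A third contribution enters because $\lambdaHat$ estimates $\lambda_H$ rather than $\lambda$, so $\hat r$ concentrates around $\nabla_\lambda F(\lambda_H(\theta))$ and not around $r^\star$.

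\textbf{Truncation pieces.} I would bound $\norm{\nabla_\theta V^{\pi_\theta}(r) - \nabla_\theta V_H^{\pi_\theta}(r)}$ in the standard way. Using $\norm{r}_\infty\le \ell_\lambda$ and the score bound $2\ell_\psi$ from \Cref{lem:AppendixScoreFunction}, the tail $\sum_{t\ge H}\gamma^t (t+1)\,2\ell_\psi\ell_\lambda$ is at most $\OO{\gamma^H\big(\frac{H+1}{1-\gamma}+\frac{1}{(1-\gamma)^2}\big)}\ell_\psi\ell_\lambda$. This is where the $16\ell_\psi^2\ell_\lambda^2(\cdots)$ part of $D_H^2$ comes from.

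The mismatch $\nabla_\lambda F(\lambda_H)$ versus $\nabla_\lambda F(\lambda)$ costs $L_\lambda\norm{\lambda-\lambda_H}_2 \le L_\lambda\gamma^H$ in $\norm{\cdot}_\infty$, by \Cref{ass:Lsmoothness}(ii). Linearity of $r\mapsto\nabla_\theta V^{\pi_\theta}(r)$ together with $\norm{\nabla_\theta V(r)}\le \frac{2\ell_\psi}{(1-\gamma)^2}\norm{r}_\infty$ turns this into a term $\OO{\frac{\ell_\psi L_\lambda}{(1-\gamma)^2}\gamma^H}$. The factor $(1-\gamma)^{-6}$ in $D_H^2$ just reflects looser constants. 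Combining the three truncation contributions with $(a+b+c)^2\le 3(a^2+b^2+c^2)$ gives $D_H\gamma^H$.

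\textbf{Estimation piece.} By linearity and the same Lipschitz bound in $r$, followed by Jensen,
\begin{align*}
 \norm*{\Exp{\nabla_\theta V_H(\hat r)} - \nabla_\theta V_H(\nabla_\lambda F(\lambda_H))}
 \le \frac{2\ell_\psi}{(1-\gamma)^2}\Exp{\norm{\hat r - \nabla_\lambda F(\lambda_H)}_\infty}.
\end{align*}
By \Cref{ass:Lsmoothness}(iii) the expectation is at most $L_{\lambda,\infty}\Exp{\norm{\lambdaHat-\lambda_H}_1}$. Since $\lambdaHat$ is an unbiased average of $\card{\setB_1}$ i.i.d. vectors whose entries are bounded by $1/(1-\gamma)$, each coordinate has standard deviation at most $\OO{1/\sqrt{\card{\setB_1}}}$. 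Summing over the $\card{\SS}\card{\AA}$ coordinates gives $\Exp{\norm{\lambdaHat-\lambda_H}_1}\le \card{\SS}\card{\AA}/\sqrt{\card{\setB_1}}$, up to the $(1-\gamma)$ factors. This yields the $D_{\gApprox}/\sqrt{\card{\setB_1}}$ term.

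\textbf{Variance.} Conditionally on $\hat r$, the estimator is an average of $\card{\setB_2}$ i.i.d. terms $g_H(\theta,\tau_b,\hat r)$. The conditional variance is therefore at most $\Exp{\norm{g_H}^2}/\card{\setB_2}$, and $\norm{g_H}$ is bounded using $\sum_{t<H}\sum_{k\ge t}\gamma^k\ell_\lambda\cdot 2\ell_\psi \lesssim H\ell_\lambda\ell_\psi/(1-\gamma)$. The variance of the conditional mean over $\hat r$ must also be handled, either by the law of total variance or by reading the stated bound as conditional on $\setB_1$.

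\textbf{Main obstacle.} The delicate step is this variance part: the stated $\sigma_H^2$ is linear in $\ell_\lambda\ell_\psi$, while the crude bound above is quadratic. I would first try to recover the stated form through the cited lemmas (Lem.~H.1 of \citet{barakatReinforcementLearningGeneral2023a}). Failing that, I would accept a constant of the form $\OO{H^2\ell_\lambda^2\ell_\psi^2/(1-\gamma)^2}$, which changes nothing downstream in the orders of $B$ and $H$.
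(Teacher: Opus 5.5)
Your bias argument is the paper's argument. You use the same independence of $\mathcal{I}_1$ and $\mathcal{I}_2$ to reduce $\mathbb{E}[\hat g_{H,B}]$ to $\mathbb{E}[\nabla_\theta V_H^{\pi_\theta}(\hat r)]$, and the same split into two pieces. The first is a truncation-plus-reward-mismatch term; the paper imports it from Lem.~H.2(i) of Barakat et al.\ as $\|\nabla_\theta F(\lambda_H(\theta))-\nabla_\theta F(\lambda(\theta))\|\le D_H\gamma^H$, where you derive it directly. The second is a Monte-Carlo term, handled by Lipschitzness of the policy gradient in the reward (their Lem.~H.3(i)), \Cref{ass:Lsmoothness}(iii), and a coordinatewise $1/\sqrt{|\mathcal{I}_1|}$ bound. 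One caveat you share with the paper: both of you apply \Cref{ass:Lsmoothness} at $\widehat\lambda$ and $\lambda_H$, which in general do not lie in $\Lambda$. For the entropy this matters, because $\nabla_\lambda\mathcal{H}$ is unbounded at zero entries, and $\widehat\lambda$ has such entries with positive probability. The bounds must therefore be assumed on a set containing the estimates, or the entropy must be smoothed.

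On the variance your suspicion is correct, and no citation will recover the stated $\sigma_H^2$, because that bound is false as written.
\begin{itemize}
\item \textbf{Scaling.} The estimator is linear in the pseudo-reward, so replacing $F$ by $\kappa F$ multiplies the left-hand side by $\kappa^2$ but $\sigma_H^2$ only by $\kappa$.
\item \textbf{Concrete instance.} Take one state, two actions, tabular softmax ($\ell_\psi=1$) at $\theta=0$, the linear constraint $\mathcal{R}(\lambda)=\kappa\,\lambda(s,a_1)-\mathrm{const}$ (so $\ell_\lambda=\kappa$ and $\hat r$ is deterministic), and $H=|\mathcal{I}_2|=1$. The variance equals $\kappa^2/8$, while the claimed bound is $\kappa/(1-\gamma)$.
\item \textbf{Where the paper slips.} The quantity its proof computes, $\sum_t(\sum_k\gamma^k\ell_\lambda)\ell_\psi$, is (up to the factor $2$ of the score bound) a bound on $\|g_H\|$, not on $\|g_H\|^2$. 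It then batches ``with i.i.d.\ copies'' without addressing the shared $\hat r$, which is the second issue you raised.
\end{itemize}

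Your fallback is the right repair, and you should commit to it rather than hedge.
\begin{itemize}
\item \textbf{Conditional on $\mathcal{I}_1$.} You may take $\sigma_H^2=4\ell_\lambda^2\ell_\psi^2(1-\gamma)^{-4}$, using $\sum_{t<H}\sum_{k=t}^{H-1}\gamma^k=\sum_{k<H}(k+1)\gamma^k\le(1-\gamma)^{-2}$; this even removes the dependence on $H$.
\item \textbf{Unconditional.} The law of total variance adds $\mathrm{Var}(\nabla_\theta V_H^{\pi_\theta}(\hat r))\le 4\ell_\psi^2L_{\lambda,\infty}^2(1-\gamma)^{-4}\,\mathbb{E}\|\widehat\lambda-\lambda_H\|_1^2=\mathcal{O}(D_{\hat g}^2/|\mathcal{I}_1|)$.
\item \textbf{Equivalent route.} \Cref{thm:BiasedSGDHiddenConvex} only uses the mean-square error $\mathbb{E}\|\hat g-\nabla_\theta F\|^2$, so you can condition on $\mathcal{I}_1$ and split it with no cross term.
\end{itemize}
With $|\mathcal{I}_1|=|\mathcal{I}_2|=B/2$ and all penalized constants of order $\beta$, both corrections are $\mathcal{O}(\beta^2/B)$. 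So, as you say, the downstream orders of $B=\widetilde{\mathcal{O}}(\beta^2/\epsilon^2)$ and of $H$ are unaffected.
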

\begin{proof}
 By applying Lem. H.2 (i) in \citet{barakatReinforcementLearningGeneral2023a},
 we obtain
 \begin{align*}
  \norm*{\nabla_\theta F(\lambda_H(\theta)) - \nabla_\theta F(\lambda(\theta))}
  \leq D_H \gamma^H.
 \end{align*}
 Next, we use the unbiased estimate of the state-action-occupancy measure in
 \eqref{eq:StateOccupancyMeasureMonteCarloEstimate} to derive
 \begin{align*}
  \norm*{\Exp{\gApprox_{H,B}(\change{\nabla_\lambda F}, \theta, (\tau_{i})_{i\in\change{\setB_2}})}
   - \nabla_\theta F(\lambda_H(\theta))}
  \overset{(i)}   & {\leq} \frac{2\ell_\psi}{(1-\gamma)^2}
  \Exp{\norm{\nabla_\lambda F(\change{\lambdaHat((\tau_b)_{b\in \setB_1})}) - \nabla_\lambda F(\lambda_H)}_{\infty}} \\
  \overset{(ii)}  & {\leq} \frac{2\ell_\psi}{(1-\gamma)^2}
  L_\infty\Exp{\norm{\change{\lambdaHat((\tau_b)_{b\in \setB_1})} - \lambda_H}_{1}}                                  \\
  \overset{(iii)} & {\leq} \frac{2\ell_\psi L_\infty}{(1-\gamma)^2}
  \frac{\abs{S}\abs{A}}{\change{\sqrt{\card{\setB_1}}}} = \frac{D_{\gApprox}}{\change{\sqrt{\card{\setB_1}}}},
 \end{align*}
 where we used in (i) Lem. H.3 part (i) in \citet{barakatReinforcementLearningGeneral2023a},
 in (ii) \Cref{ass:Lsmoothness} and in (iii) the classical REINFORCE estimate
 and finite horizon truncation via the geometric series and linearity of the
 expectation.

 Combining the two previous points, we obtain by the triangle inequality
 that
 \begin{align*}
  \norm*{\Exp{\gApprox_{H,B}(\change{\nabla_\lambda F}, \theta, (\tau_{i})_{i\in\change{\setB_2}})
   - \nabla_\theta F(\lambda(\theta))}}
  \leq D_H \gamma^H + \frac{D_{\gApprox}}{\change{\sqrt{\card{\setB_1}}}},
 \end{align*}
 concluding the first claim.

 For the variance, we use \Cref{lem:AppendixScoreFunction}, first for the
 case of batch size one
 \begin{align*}
  \norm{\change{\gApprox}_{H, 1}(\change{\nabla_\lambda F}, \theta, \tau)}^2
  \leq \sum_{t=0}^{H-1} \left(\sum_{k=t}^{H-1} \gamma^k \ell_\lambda\right)\ell_\psi
  \leq \sum_{t=0}^{H-1} \frac{\ell_\lambda \ell_\psi}{1-\gamma}
  = \frac{\ell_\lambda \ell_\psi}{1-\gamma} H
 \end{align*}
 and thus, by batching with i.i.d. copies, we obtain
 \begin{align*}
  \Exp{\norm*{\Exp{\gApprox_{H,B}(\change{\nabla_\lambda F}, \theta, (\tau_{i})_{i\in\change{\setB_2}})}
   - \gApprox_{H,B}(\change{\nabla_\lambda F}, \theta, (\tau_{i})_{i\in\change{\setB_2}})}^2}
  \leq
  \frac{1}{\change{\card{\setB_2}}}\frac{H \ell_\lambda \ell_\psi}{1-\gamma},
 \end{align*}
 i.e. the second inequality which concludes the proof.
\end{proof}

\subsection{Convergence of Biased SGD for \eqref{eq:MainPenalty}}
We start with the following auxilary result, yielding the
Lipschitz constant of the penalty function's gradient.

\begin{lemma}[Smoothness and Bounded Gradients of Penalty
  function in $\lambda$]
 \label{lem:AppendixSmoothnessBoundedGradientsPenaltyLambda}
 With $\penGenNot = \pen{\innerEmpty{}}$, under
 \Cref{ass:Softmax} and
 \Cref{ass:Lsmoothness}, the
 function $\lambda \mapsto -\HH(\lambda) + \beta \pen{\Reg(\lambda)}$
 fulfills the inequalities of \Cref{ass:Lsmoothness} with
 \begin{align*}
  \ell_{P, \lambda}      & = \ell_{P,\lambda}(\beta) \define \ell_{\lambda}
  + 2 \DDX \beta \ell_{\lambda} = \OO{\beta}                                \\
  L_{P,\lambda}          & = L_{P,\lambda}(\beta) \define L_{\lambda}
  + 2 \beta (\RegMax L_{\lambda} + \ell_{\lambda}^2)
  = \OO{\beta}                                                              \\
  L_{P, \lambda, \infty} & = L_{P, \lambda, \infty}(\beta)\define
  L_{\lambda, \infty} + \beta \cdot \left(\RegMax \ell_{\lambda}
  + 2\ell_{\lambda}^2\right) = \OO{\beta},
 \end{align*}
 with the upper bound $\RegMax \define \max_{\lambda \in \Lambda} \abs{\Reg(\lambda)}$.
 The Landau notation refers to growing $\beta \to \infty$, i.e. interchangeably
 $\epsilon\to 0$.
\end{lemma}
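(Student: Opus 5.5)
The plan is a direct chain-rule computation for $P(\lambda) \define -\HH(\lambda) + \beta\,\pen{\Reg(\lambda)}$. Because $\phi(x) \define \pen{x}$ is continuously differentiable with $\phi'(x) = 2\plus{x}$, the chain rule gives
\begin{align*}
 \nabla_\lambda P(\lambda) = -\nabla_\lambda \HH(\lambda) + 2\beta \plus{\Reg(\lambda)}\nabla_\lambda \Reg(\lambda).
\end{align*}
I would then verify the three items of \Cref{ass:Lsmoothness} one at a time, using the triangle inequality to treat the entropy part and the penalty part separately. The entropy part contributes $\ell_\lambda$, $L_\lambda$ and $L_{\lambda,\infty}$ respectively. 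All remaining work is on the penalty term.

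For (i), the bound $\norm{\nabla_\lambda \Reg(\lambda)}_\infty \le \ell_\lambda$ gives $\norm{2\beta\plus{\Reg(\lambda)}\nabla_\lambda\Reg(\lambda)}_\infty \le 2\beta\,\plus{\Reg(\lambda)}\,\ell_\lambda$. The only remaining task is a uniform bound on $\plus{\Reg(\lambda)}$ over $\Lambda$, for which I would simply use $\RegMax$; this is finite because $\Reg$ is continuous on the compact polytope $\Lambda$. The stated constant carries $\DDX$ in place of $\RegMax$. I would reconcile this by noting that any uniform bound on $\abs{\Reg}$ suffices, and that one is available via the mean value theorem along $\lambda(\theta)$ together with the diameter bound. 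Either way the result is $\OO{\beta}$.

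For (ii) and (iii), I would use the product decomposition $a(\lambda)b(\lambda) - a(\lambda')b(\lambda') = (a(\lambda)-a(\lambda'))b(\lambda) + a(\lambda')(b(\lambda)-b(\lambda'))$, with $a = \plus{\Reg}$ and $b = \nabla\Reg$.
\begin{itemize}
 \item The second summand is at most $\RegMax$ times the Lipschitz bound on $\nabla\Reg$. This gives $\RegMax L_\lambda\norm{\lambda-\lambda'}_2$ in (ii) and $\RegMax L_{\lambda,\infty}\norm{\lambda-\lambda'}_1$ in (iii). The stated constant writes $\ell_\lambda$ there, which I would treat as a bookkeeping variant.
 \item For the first summand, $x\mapsto\plus{x}$ is 1-Lipschitz, so $\abs{\plus{\Reg(\lambda)}-\plus{\Reg(\lambda')}} \le \abs{\Reg(\lambda)-\Reg(\lambda')}$. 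By the mean value theorem and Hölder, this is at most $\norm{\nabla\Reg}_\infty\norm{\lambda-\lambda'}_1 \le \ell_\lambda\norm{\lambda-\lambda'}_1$, and multiplying by $\norm{b}_\infty \le \ell_\lambda$ yields $\ell_\lambda^2\norm{\lambda-\lambda'}_1$.
\end{itemize}
Multiplying everything by $2\beta$ produces constants of the form $L + 2\beta(\RegMax L + \ell_\lambda^2)$, matching the claimed $L_{P,\lambda}$ and $L_{P,\lambda,\infty}$ up to constant factors. All of them are $\OO{\beta}$.

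The main obstacle is a norm mismatch in (ii). There the $\ell^2$ distance is required, but the mean value step naturally produces $\ell_\lambda\norm{\lambda-\lambda'}_1$. I would first try to bound $\abs{\Reg(\lambda)-\Reg(\lambda')} \le \norm{\nabla\Reg}_2\norm{\lambda-\lambda'}_2$, and relate $\norm{\nabla\Reg}_2$ to $\ell_\lambda$. This costs a factor $\sqrt{\card{\SS}\card{\AA}}$ in general, unless that factor is absorbed by reading (i) as an $\ell^2$-type gradient bound. Failing that, I would accept a dimension-dependent constant, which still leaves the $\OO{\beta}$ scaling intact. The remaining steps are routine; the convexity of $\pen{\Reg}$, which follows because it is a nondecreasing convex function composed with a convex one, is not needed for this lemma.
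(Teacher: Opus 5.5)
Your argument is correct and is essentially the paper's route: the paper's proof just invokes \Cref{lemma:QuadraticPenaltyPropertiesSmoothCaseHelperPPPM}, and that lemma's proof is exactly your computation $\nabla \pen{\Reg} = 2\plus{\Reg}\nabla\Reg$, the same product split, the $1$-Lipschitzness of $\plus{\innerEmpty{}}$, and a uniform bound on $\plus{\Reg}$. The mismatches you flag are errors in the stated constants, not gaps in your proof: the helper lemma itself gives an $\ell_\lambda^2$-type gradient term and uses the curvature constant of $\Reg$ (i.e.\ $L_{\lambda,\infty}$, not $\ell_\lambda$) in (iii); it works in a single Euclidean norm and so hides the $\sqrt{\card{\SS}\card{\AA}}$ factor that is genuinely needed in (ii); and $2\DDX\beta\ell_\lambda$ cannot hold in general, because rescaling $\Reg\mapsto M\Reg$ makes the penalty gradient grow like $M^2$ while $\ell_\lambda$ grows only like $M$ --- so present your constants as the correct ones rather than ``reconciling'' them, noting in particular that your mean-value fix along $\lambda(\theta)$, anchored at the feasible $\xxOpt$, would give $\plus{\Reg}\le \ell_{F,\theta}\DDX$, not $\DDX$.
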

\begin{proof}
 We invoke \Cref{lemma:QuadraticPenaltyPropertiesSmoothCaseHelperPPPM}
 with the uniform bound of the gradient and the $L_{F, \theta}$-smoothness
 given by \Cref{lem:AppendixScoreFunction}. The last inequality is
 a direct calculation, concluding the proof.
\end{proof}

\begin{corollary}[Gradient Bound and Smoothness of Penalty Function]
 \label{cor:GradientBoundSmoothnessPenalty}
 Let
 \Cref{ass:Softmax} and
 \Cref{ass:Lsmoothness} hold.
 For the penalty function \eqref{eq:MainPenalty}, we get the
 following constants for the gradients with respect to
 $\theta \in \Theta$ in the parameter space:
 \begin{align*}
  \ell_{P,\theta}
   & = \frac{2\ell_\psi \left( L_{\lambda}
   + 2 \beta \ell_{\lambda} (\DDX L_{\lambda} + \ell_{\lambda})
  \right)}{(1-\gamma)^2},                  \\
  L_{P, \theta}
   & = \frac{4(1-\gamma) \left(
   L_{\lambda, \infty} + \beta \cdot \left(\DDX \ell_{\lambda}
   + 2\ell_{\lambda}^2\right)
   \right)\ell_\psi^2
   + 8 \ell_\psi^2
   \left(\ell_{\lambda}
   + 2 \DDX \beta \ell_{\lambda}
   \right)
  }{(1-\gamma)^3}                          \\
   & \qquad+ \frac{ 2 (1-\gamma)
   \left( \ell_{\lambda}
   + 2 \DDX \beta \ell_{\lambda} \right)
   \left(L_\psi + \ell_\psi^2\right)}{(1-\gamma)^3},
 \end{align*}
 as the upper bound of the norm of the gradients
 and the Lipschitz constant of the gradient, respectively.
\end{corollary}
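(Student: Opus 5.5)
The plan is to obtain the corollary by plugging the $\lambda$-space constants of \Cref{lem:AppendixSmoothnessBoundedGradientsPenaltyLambda} into the generic $\lambda\to\theta$ transfer of \Cref{lem:AppendixScoreFunction}. The key observation is that the penalized map $G_P(\lambda) \define -\HH(\lambda) + \beta\pen{\Reg(\lambda)}$ is itself a smooth function of the occupancy measure satisfying (i)--(iii) of \Cref{ass:Lsmoothness}. We therefore apply \Cref{lem:AppendixScoreFunction} with $F = G_P$, replacing $(\ell_\lambda, L_\lambda, L_{\lambda,\infty})$ by $(\ell_{P,\lambda}, L_{P,\lambda}, L_{P,\lambda,\infty})$. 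The proof of \Cref{lem:AppendixScoreFunction} (via \citet{barakatReinforcementLearningGeneral2023a} and \citet{zhangConvergenceSampleEfficiency2021}) only uses these three properties of $F$, together with the score-function bounds $\norm{\nabla_\theta\log\pi_\theta}\le 2\ell_\psi$ and $\norm{\nabla^2_\theta\log\pi_\theta}\le 2(L_\psi+\ell_\psi^2)$. Both score bounds are independent of $F$, so the substitution is legitimate.

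The steps are as follows.
\begin{enumerate}
 \item Write $\nabla_\theta P(\theta) = [\nabla_\theta\lambda(\theta)]^\top \nabla_\lambda G_P(\lambda(\theta))$. By \eqref{eq:PolicyGradientTheorem} this is a policy gradient with pseudo-reward $r = \nabla_\lambda G_P$.
 \item For the gradient bound, use $\norm{r}_\infty \le \ell_{P,\lambda}$ together with the REINFORCE bound $\sum_t \gamma^t (t+1)\, 2\ell_\psi \le 2\ell_\psi/(1-\gamma)^2$. This gives $\ell_{P,\theta} = 2\ell_\psi\ell_{P,\lambda}/(1-\gamma)^2$.
 \item For smoothness, substitute into $L_{F,\theta}$. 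This yields
\begin{align*}
 L_{P,\theta} = \frac{4L_{P,\lambda,\infty}\ell_\psi^2}{(1-\gamma)^2} + \frac{8\ell_\psi^2\ell_{P,\lambda}}{(1-\gamma)^3} + \frac{2\ell_{P,\lambda}(L_\psi+\ell_\psi^2)}{(1-\gamma)^2},
\end{align*}
 and bringing everything to the common denominator $(1-\gamma)^3$ gives the displayed form.
 \item Insert $\ell_{P,\lambda} = \ell_\lambda + 2\DDX\beta\ell_\lambda$ and $L_{P,\lambda,\infty} = L_{\lambda,\infty} + \beta(\cdot\,\ell_\lambda + 2\ell_\lambda^2)$, which recovers the stated expressions with their linear dependence on $\beta$.
\end{enumerate}

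The main obstacle is the bookkeeping of the $\lambda$-space constants, in particular the bound on $\pen{\Reg}$'s gradient. Its gradient is $2\beta\plus{\Reg(\lambda)}\nabla\Reg(\lambda)$, so its $\infty$-norm is at most $2\beta\RegMax\ell_\lambda$. Its variation splits into $2\beta(\plus{\Reg}\,\norm{\nabla\Reg-\nabla\Reg'} + \abs{\plus{\Reg}-\plus{\Reg'}}\,\norm{\nabla\Reg'})$. The second factor $\abs{\plus{\Reg}-\plus{\Reg'}}$ is controlled by the Lipschitzness of $\Reg$, namely $\ell_\lambda\norm{\lambda-\lambda'}_1$, using $1$-Lipschitzness of $\plus{\cdot}$ and Hölder. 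The bound on $\abs{\Reg}$ appears as $\DDX$ or $\RegMax$ in different places, presumably through a bound of the form $\RegMax\lesssim \ell_\lambda\cdot$diameter. I would first verify that the constants in \Cref{lem:AppendixSmoothnessBoundedGradientsPenaltyLambda} are consistent with this, and otherwise carry $\RegMax$ through. In particular, the gradient bound as stated carries an extra factor $(L_\lambda + \dots)$. I would treat it as the product of the pseudo-reward bound, taking the $\ell_{P,\lambda}$ and $L_{P,\lambda}$ expressions from the lemma, with the $2\ell_\psi/(1-\gamma)^2$ Jacobian bound, and match terms. Any discrepancy only affects the absolute constants, not the $\OO{\beta}$ scaling that is used downstream in \Cref{thm:MainResultPenalty}.
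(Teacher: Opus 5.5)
Your route is the paper's: its proof only invokes \Cref{lem:AppendixScoreFunction} with the constants of \Cref{lem:AppendixSmoothnessBoundedGradientsPenaltyLambda} inserted. Your step~3 expression for $L_{P,\theta}$, before clearing denominators, is verbatim what the paper writes.

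The mismatch you flagged in $\ell_{P,\theta}$ is genuine, and it originates in the paper rather than in your bookkeeping.
\begin{itemize}
 \item The paper's proof writes $\ell_{P,\theta} = 2\ell_\psi L_{P,\lambda}/(1-\gamma)^2$. That is, it puts the Lipschitz constant of the pseudo-reward, with $\RegMax$ replaced by $\DDX \ell_\lambda$, into the slot where \Cref{lem:AppendixScoreFunction} uses the sup-norm bound.
 \item Your $2\ell_\psi \ell_{P,\lambda}/(1-\gamma)^2$ is what that lemma actually delivers.
 \item Likewise, the displayed $L_{P,\theta}$ uses $\DDX$ where \Cref{lem:AppendixSmoothnessBoundedGradientsPenaltyLambda} has $\RegMax$ inside $L_{P,\lambda,\infty}$.
\end{itemize}

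So the ``match terms'' step in your last paragraph cannot be carried out from the stated assumptions. The difference of the two numerators is $(L_\lambda - \ell_\lambda) + 2\beta\DDX\ell_\lambda(L_\lambda - 1) + 2\beta\ell_\lambda^2$. Hence the displayed $\ell_{P,\theta}$ dominates yours only under extra conditions, e.g.\ $L_\lambda \ge \max\{\ell_\lambda, 1\}$, which is not assumed. The $\DDX$-for-$\RegMax$ swap similarly needs a bound like $\RegMax \le \DDX$, which is also not assumed.

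The clean course is to state the constants your substitution actually produces and note that they differ from the display only in absolute constants. As you say, the $\OO{\beta}$ scaling that \Cref{thm:MainResultPenalty} relies on is unchanged.
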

\begin{proof}
 We invoke \Cref{lem:AppendixScoreFunction} and insert the constants
 of \Cref{lem:AppendixSmoothnessBoundedGradientsPenaltyLambda}, i.e.
 \begin{align*}
  \ell_{P,\theta}
   & = \frac{2\ell_\psi L_{P, \lambda}}{(1-\gamma)^2}
  = \frac{2\ell_\psi \left( L_{\lambda}
   + 2 \beta \ell_{\lambda} (\DDX L_{\lambda} + \ell_{\lambda})
  \right)}{(1-\gamma)^2}                                         \\
  L_{P, \theta}
   & = \frac{4 L_{P, \lambda, \infty} \ell_\psi^2}{(1-\gamma)^2}
  + \frac{8 \ell_\psi^2 \ell_{P, \lambda}}{(1-\gamma)^3}
  + \frac{2\ell_{P,\lambda}(L_\psi + \ell_\psi^2)}{(1-\gamma)^2} \\
   & = \frac{4 \left(
   L_{\lambda, \infty} + \beta \cdot \left(\DDX \ell_{\lambda}
   + 2\ell_{\lambda}^2\right)
   \right)\ell_\psi^2}{(1-\gamma)^2}
  + \frac{8 \ell_\psi^2
   \left(\ell_{\lambda}
   + 2 \DDX \beta \ell_{\lambda}
   \right)
  }{(1-\gamma)^3}
  + \frac{2
   \left( \ell_{\lambda}
   + 2 \DDX \beta \ell_{\lambda} \right)
   \left(L_\psi + \ell_\psi^2\right)}{(1-\gamma)^2},
 \end{align*}
 which concludes the proof.
\end{proof}

\begin{corollary}[Bias \& Variance of Penalty Function Gradient Estimator]
 \label{lem:PenaltyBiasVarianceGradientEstimator}
 Under \Cref{ass:Softmax} and \Cref{ass:Lsmoothness}, the gradient
 estimator
 $\gApprox_{H,B}(\change{\nabla_\lambda P}, \theta, (\tau_i)_{i\in\change{\setB_2}})$ in
 \eqref{eq:SubGradientEstimator} of the penalty function
 \eqref{eq:MainPenalty}
 satisfies
 for every $\theta \in \Theta$ the following bias and variance bounds:
 \begin{align*}
  \norm*{\Exp{\gApprox_{H,B}(\change{\nabla_\lambda P}, \theta, (\tau_i)_{i\in\change{\setB_2}})}
   - \nabla_\theta P(\lambda(\theta))}
   & \leq D_{P,H} \cdot \gamma^H + \frac{D_{P, \gApprox}}{\sqrt{\change{\card{\setB_1}}}} \\
  \Exp{\norm*{\Exp{\gApprox_{H,B}(\change{\nabla_\lambda P}, \theta, (\tau_i)_{i\in\change{\setB_2}})}
   - \gApprox_{H,B}(\change{\nabla_\lambda P}, \theta, (\tau_i)_{i\in\change{\setB_2}})}^2}
   & \leq \frac{1}{\change{\card{\setB_2}}} \sigma^2_{P, H},
 \end{align*}
 with constants
 \begin{align*}
  D^2_{P, H}
   & = \frac{6\ell_\psi^2 \cdot \left(
   L_{\lambda}
   + 2 \beta \ell_{\lambda} (\DDX L_{\lambda} + \ell_{\lambda})
   \right)^2}{(1-\gamma)^6}
  + 16 \ell_\psi^2
  \cdot \left(
  \ell_{\lambda}
  + 2 \DDX \beta \ell_{\lambda}
  \right)^2
  \cdot\left(\frac{(H+1)^2}{(1-\gamma)^2}
  + \frac{1}{(1-\gamma)^4}\right)      \\
  D_{P, \gApprox}
   & = \frac{2\ell_\psi
   \left(
   L_{\lambda, \infty} + \beta \cdot \left(\DDX \ell_{\lambda}
   + 2\ell_{\lambda}^2\right)
   \right)}{(1-\gamma)^2}
  \abs{S}\abs{A}                       \\
  \sigma^2_{P, H}
   & = \frac{H
   \left(
   \ell_{\lambda}
   + 2 \DDX \beta \ell_{\lambda}
   \right)
   \ell_\psi}{1-\gamma},
 \end{align*}
 depending on the constants given by our assumptions and the penalty parameter $\beta$.
\end{corollary}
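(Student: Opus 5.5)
The plan is to read this corollary as a direct instance of \Cref{lem:BiasVarianceGradientEstimator}. We apply that lemma with $F$ replaced by the penalized occupancy-measure functional $P(\lambda) \define -\HH(\lambda) + \beta \pen{\Reg(\lambda)}$. The generic constants $(\ell_\lambda, L_\lambda, L_{\lambda,\infty})$ are then replaced by the $\beta$-dependent constants $(\ell_{P,\lambda}, L_{P,\lambda}, L_{P,\lambda,\infty})$ of \Cref{lem:AppendixSmoothnessBoundedGradientsPenaltyLambda}.

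The first step checks that the estimator in the corollary is exactly the one analyzed in \Cref{lem:BiasVarianceGradientEstimator}. By linearity of $g_H(\theta,\tau,\rew)$ in the reward $\rew$, the pseudo-reward $\nabla_\lambda P(\lambdaHat) = g_O^{(k)} + \beta g_C^{(k)}$ produces $\gApprox_O^{(k)} + \beta\gApprox_C^{(k)}$. This is precisely $\gApprox_{H,B}(\nabla_\lambda P,\theta,\cdot)$, and the same disjoint split $\setB_1,\setB_2$ is used. The proof of \Cref{lem:BiasVarianceGradientEstimator} only used items (i)--(iii) of \Cref{ass:Lsmoothness} for the function whose gradient forms the pseudo-reward, together with \Cref{ass:Softmax}. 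So once \Cref{lem:AppendixSmoothnessBoundedGradientsPenaltyLambda} certifies these three properties for $P$, the lemma applies verbatim. Convexity of $P$ is not needed for bias and variance.

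The second step substitutes the constants. The truncation term $D_H$ becomes $D_{P,H}$ with $L_\lambda \to L_{P,\lambda}$ and $\ell_\lambda \to \ell_{P,\lambda}$. The estimation term $D_{\gApprox}$ becomes $D_{P,\gApprox}$ via $L_{\lambda,\infty} \to L_{P,\lambda,\infty}$. The variance $\sigma_H^2$ becomes $\sigma_{P,H}^2$ via $\ell_\lambda \to \ell_{P,\lambda}$. The triangle-inequality combination of truncation bias and estimation bias, and the i.i.d.\ batching argument for the variance, carry over unchanged.

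The main obstacle is purely bookkeeping: the stated constants must match the expressions of \Cref{lem:AppendixSmoothnessBoundedGradientsPenaltyLambda} and \Cref{cor:GradientBoundSmoothnessPenalty}. The corollary writes the bounds with $\DDX$ in place of $\RegMax$, and the $L_\lambda$-slot in $D_{P,H}$ has the form $L_\lambda + 2\beta\ell_\lambda(\DDX L_\lambda + \ell_\lambda)$. To reconcile this, I would bound $\abs{\Reg(\lambda)}$ by a diameter-type constant, e.g.\ via $\abs{\Reg(\lambda)} \le \abs{\Reg(\lambda^*)} + \ell_\lambda \DDLam$. The remaining differences are absorbed by using upper bounds, since every constant enters monotonically. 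This introduces at most a relabeling of constants, all still $\OO{\beta}$, which is all that is needed downstream in \Cref{thm:MainResultPenalty}.
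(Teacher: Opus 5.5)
Your proposal is correct and follows the paper's proof: it instantiates \Cref{lem:BiasVarianceGradientEstimator} with the penalized functional and substitutes $(\ell_{P,\lambda}, L_{P,\lambda}, L_{P,\lambda,\infty})$ from \Cref{lem:AppendixSmoothnessBoundedGradientsPenaltyLambda}. The $\RegMax$-versus-$\DDX$ mismatch you flagged is genuinely in the paper's substitution, which writes $\DDX\ell_\lambda$ and $\DDX$ respectively where the lemma has $\RegMax$, without justification, so your \enquote{up to relabeling, still $\OO{\beta}$} caveat is the accurate reading. One small fix: your example bound should read $\ell_\lambda\norm{\lambda-\lambda^*}_1$ rather than $\ell_\lambda\DDLam$, since item (i) of \Cref{ass:Lsmoothness} bounds only the $\infty$-norm of the gradient.
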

\begin{proof}
 According to \Cref{lem:BiasVarianceGradientEstimator}, we plug in
 the bounds of
 \Cref{lem:AppendixSmoothnessBoundedGradientsPenaltyLambda} to obtain
 \begin{align*}
  D^2_{P, H}
   & \define \frac{6\ell_\psi^2 L_{P, \lambda}^2}{(1-\gamma)^6}
  + 16 \ell_\psi^2 \ell_{P, \lambda}^2\left(\frac{(H+1)^2}{(1-\gamma)^2}
  + \frac{1}{(1-\gamma)^4}\right)                               \\
   & = \frac{6\ell_\psi^2 \cdot \left(
   L_{\lambda}
   + 2 \beta \ell_{\lambda} (\DDX L_{\lambda} + \ell_{\lambda})
   \right)^2}{(1-\gamma)^6}
  + 16 \ell_\psi^2
  \cdot \left(
  \ell_{\lambda}
  + 2 \DDX \beta \ell_{\lambda}
  \right)^2
  \cdot\left(\frac{(H+1)^2}{(1-\gamma)^2}
  + \frac{1}{(1-\gamma)^4}\right),
 \end{align*}
 and
 \begin{align*}
  D_{P, \gApprox}
  \define \frac{2\ell_\psi L_{P,\infty}}{(1-\gamma)^2}
  \abs{S}\abs{A}
  = \frac{2\ell_\psi
   \left(
   L_{\lambda, \infty} + \beta \cdot \left(\DDX \ell_{\lambda}
   + 2\ell_{\lambda}^2\right)
   \right)}{(1-\gamma)^2}
  \abs{S}\abs{A}
 \end{align*}
 as well as
 \begin{align*}
  \sigma^2_{P, H}
  \define \frac{ H \ell_{P,\lambda} \ell_\psi}{1-\gamma}
  = \frac{H
   \left(
   \ell_{\lambda}
   + 2 \DDX \beta \ell_{\lambda}
   \right)
   \ell_\psi}{1-\gamma},
 \end{align*}
 concluding the proof.
\end{proof}

Next, we are ready to prove the main convergence result, namely applying
biased SGD on our penalty problem \eqref{eq:MainPenalty}.
\subsubsection{Proof of \Cref{thm:MainResultPenalty}}
\label{thm:MainResultPenalty-PROOF}
\begin{proof}
 The key modification of the proof of Thm. 7 in
 \cite{fatkhullinStochasticOptimizationHidden2025} is to construct an element
 of the form $\xx_{\alpha} \define \lambda^{-1}_{\VV(\xx)}\left((1-\alpha)
  \lambda(\xx) + \alpha \lambda(\xxOpt)\right)$, $\xx \in \XX$,
 with $\xxOpt$ being the
 global solution to our constrained problem \eqref{eq:MainProblemRL}.
 The constant $\alpha$ is a minimum of the value in
 \Cref{thm:BiasedSGDHiddenConvex} and $\epsLam$ of
 \Cref{ass:ParametrizationSOAM}.

 Thus, with \Cref{thm:BiasedSGDHiddenConvex}, we obtain
 $\Exp{\penOpt{2,\beta}(\xxNnext)} \leq \epsilon$ after
 \begin{align*}
  N & \geq
  3\DDUsq \lamLipSq\cdot
  \frac{L_{P, \theta}}{\epsilon}
  \log\left(\frac{3 \penOpt{2, \beta}(\xxZero)}{\epsilon}\right) \\
    & =
  3\DDUsq \lamLipSq \cdot
  \left(\frac{4(1-\gamma) \left(
    L_{\lambda, \infty} + \beta \cdot \left(\DDX \ell_{\lambda}
    + 2\ell_{\lambda}^2\right)
    \right)\ell_\psi^2
    + 8 \ell_\psi^2
    \left(\ell_{\lambda}
    + 2 \DDX \beta \ell_{\lambda}
  \right)}{(1-\gamma)^3}\right.                                  \\
    & \qquad\left.+ \frac{2 (1-\gamma)
    \left( \ell_{\lambda}
    + 2 \DDX \beta \ell_{\lambda} \right)
    \left(L_\psi + \ell_\psi^2\right)}
   {(1-\gamma)^3}\right)
  \cdot \frac{1}{\epsilon}
  \log\left(\frac{3 \penOpt{2, \beta}(\xxZero)}{\epsilon}\right)
  = \OOTilde{\frac{1}{\epsilon} + \frac{\beta}{\epsilon}},
 \end{align*}
 with the Landau Notation indicating a target accuracy of
 $\epsilon \searrow 0$ or interchangeably a penalty parameter of $\beta \nearrow \infty$,
 since our final $\beta$ will depend inversely on $\epsilon$.

 To achieve a variance and a squared bias of $\sim \epsilon^{2}$, we need,
 according to \Cref{lem:PenaltyBiasVarianceGradientEstimator}, a batch size
 of
 \begin{align*}
  \card{\setB_1} & \geq \frac{1}{\epsilon^2}\frac{H\left(
   \ell_{\lambda}
   + 2 \DDX \beta \ell_{\lambda}
   \right)
  \ell_\psi}{1-\gamma} = \OO{\frac{1}{\epsilon^2}}                                            \\
  \card{\setB_2} & \geq \frac{4 D^2_{P, \gHat}}{\epsilon^2} = \OO{\frac{\beta^2}{\epsilon^2}} \\
  \Rightarrow B  & =\card{\setB_1} + \card{\setB_2} \in \OO{\frac{1}{\epsilon^2}
   + \frac{\beta^2}{\epsilon^2}}
 \end{align*}
 still depending on the choice of the trajectory length $H$.
 For simplicity, we assume $D_{P, H}$ to be upper bounded by
 \begin{align*}
  D_{P, H}  \leq C(\beta) \cdot (1+H)
 \end{align*}
 for a (possible large) constant $C(\beta) >0$, independent of $H$ and (\emph{linear})
 in $\beta$. Thus, the condition of $D_{P,H} \gamma^H \leq \frac{\epsilon}{2}$
 can be upper bounded by
 \begin{align*}
  C(\beta)  \cdot (1+H) \cdot \gamma^H \leq \frac{\epsilon}{2}
 \end{align*}
 yielding a bound for $H$ of the form
 \begin{align*}
  H \geq - \frac{2}{\log \gamma} \log\frac{C(\beta)}{\epsilon}
  = \OO{\log\left(\frac{1}{\epsilon} + \frac{\beta}{\epsilon}\right)}
 \end{align*}
 Since $H$ is at most logarithmic in $\epsilon$ and $\beta$, we obtain
 \begin{align*}
  B = \OOTilde{\frac{1}{\epsilon^2} + \frac{\beta^2}{\epsilon^2}},
 \end{align*}
 for the batch size, concluding the proof.
\end{proof}

\subsubsection{Proof of \Cref{cor:FinalRateTranslation}}
\label{cor:FinalRateTranslation-PROOF}
\begin{proof}
 The proof is similar to the proof of
 \Cref{thm:MainResultPPPMQuadraticPenaltyConvergenceGuarantee}.
 Define $\beta \define \beta(\epsilon) \define (\nuOpt + 1)\frac{\nuOpt
   + \sqrt{\nuOpt^2 + 2}}{\epsilon} = \OO{\epsilon^{-1}}$. With
 a target accuracy of $\epsTilde \define \frac{1}{\beta(\epsilon)} \leq \epsilon$,
 \Cref{thm:MainResultPenalty} guarantees that after
 $N = \OO{\frac{1}{\epsTilde} + \frac{\beta}{\epsTilde}} = \OO{\frac{1}{\epsilon^2}}$ iterations
 \begin{align*}
  \Exp{\penOpt{2, \beta}(\xxN)} \leq \epsTilde \leq \epsilon
 \end{align*}
 holds. Under strong duality, \Assref{ass:StrongDuality}, with
 an optimal Lagrangian multiplier $\nuOpt \geq 0$,
 \Cref{cor:AppendixConstraintViolationQuadraticPenaltyTranslation} shows
 that this implies
 \begin{align*}
  \Exp{\plus{F_2(\xxN)}} \leq \frac{\nuOpt + \sqrt{\nuOpt^2
    + 2\beta(\epsilon) \epsTilde}}{\beta(\epsilon)}
  = \frac{\nuOpt + \sqrt{\nuOpt^2 + 2}}{\beta(\epsilon)}
  = \frac{\epsilon}{\nuOpt + 1} \leq \epsilon.
 \end{align*}
 Bounding the corresponding optimality gap for $F_1$ follows by the fact that
 \begin{align*}
  \Exp{-F_1(\xxN) + \Fopt} \leq \Exp{\penOpt{2, \beta}(\xxN)} \leq \epsilon
 \end{align*}
 as well as
 \begin{align*}
  -\Exp{-F_1(\xxN) + \Fopt} \leq \eta^* \Exp{\plus{F_2(\xxN)}}\leq
  \eta^* \frac{\epsilon}{\eta^* + 1}
  \leq \epsilon,
 \end{align*}
 by \Cref{lemma:AppendixTranslatePenaltyGuaranteeToFuncGapConstrViol}. In total
 we obtain the two-sided bound on the expected optimality gap:
 \begin{align*}
  -\epsilon \leq \Exp{F_1(\xxN) - \Fopt} \leq \epsilon.
 \end{align*}
 Substituting the batch size and trajectory length required by
 \Cref{thm:MainResultPenalty} gives a total sample complexity of
 $\OOTilde{\epsilon^{-6}}$.
\end{proof}

\newpage
\section{Density Result \& Concrete Instantiation of the Policy Class}
\label{sec:AppendixConcretePolicyClass}

This section provides a self-contained example demonstrating that all
assumptions of our main theorem (\Cref{thm:MainResultPenalty}) can be
simultaneously satisfied. We work with the \emph{direct softmax
 parametrization}, cf. \citet[Prop. 8]{chenRobustReinforcementLearning2025},
in a finite tabular MDP and show how the occupancy measure and all relevant
constants can be computed explicitly.

\begin{proof}(of \Cref{prop:SoftmaxApproximationStationaryPolicies})
 Let $\pi^* \in \Pi$ be an optimal stationary policy for
 \eqref{eq:MainProblemRL} with corresponding
 $\lambda^* = \lambda^{\pi^*}$, and let $\epsilon > 0$ be a target accuracy.
 For each state $\ss \in \SS$, we distinguish two cases:

 \begin{itemize}
  \item \textbf{Strictly positive policies.} If $\pi^{*}(\;\cdot\; \vert\ss) > 0$,
        i.e. $\pi^*(\;\cdot\;\vert\ss) \in \interior(\Delta^{\card{\AA}})$,
        there is a bijection to a softmax policy via $\psi(\ss,\aa) = \log \pi^*(\aa|\ss)$,
        which can be approximated by a parametric $\psi(\ss,\aa;\theta)$ provided
        the function class is sufficiently rich.
  \item \textbf{Non-striclty positive policies.}
        If $\pi^*(\aa\vert\ss) = 0$ for some action $\aa$, one can approximate $\pi^*$ via the mixture
        \begin{align*}
         \pi^*_\epsilon(\aa|\ss) = (1-\epsilon) \cdot \pi^*(\aa\vert\ss)
         + \frac{\epsilon}{\card{\AA}},
        \end{align*}
        which is $\epsilon$-close to $\pi^*$ in total variation (TV) and analogously
        in $\norm{\innerEmpty{}}_{1}$, and is itself
        strictly positive, hence falls under the first case.
 \end{itemize}
 Without loss of generality, we can consider $\pi^*_{\epsilon}$ from now on.
 With the direct policy parametrization, we pick an
 $R = R(\epsilon) = \max_{\ss, \aa}\abs{\log
   \pi^*_{\epsilon}(\aa\vert\ss)}$. Since the for all $\ss \in \SS$
 we have $\pi(\innerEmpty{}\vert \ss) \geq
  \epsilon/(\card{\AA})$, we have $ R = \OO{\log
   \frac{\card{\AA}}{\epsilon}}$, which is finite, so that
 $\pi^*_{\epsilon/2}$ is exactly representable by the direct softmax
 parametrization by defining $\theta_{\ss, \aa} \define \log\pi^*_{\epsilon}(\aa\vert\ss)$.

 Together, the two cases show that \textbf{softmax policies} are
 \textbf{dense in the set of all stationary policies}, including
 deterministic ones. The construction requires that the function class
 $\psi(\;\cdot\;;\theta)$ can take arbitrarily negative values, to represent
 near-zero probabilities as $\epsilon \to 0$, i.e. $R\to \infty$.

\end{proof}

\begin{lemma}[Direct Softmax Parametrization fulfills \Cref{ass:ParametrizationSOAM},
  \Cref{ass:Lsmoothness}]
 \label{lem:DirectSoftmaxFulfillsAssumptions}
 For every $R>0$, the class of direct softmax parametrization
 $\psi(\ss, \ss;\theta) \define \theta_{\ss, \aa}$, $\theta \in \Theta
  \define [-R, R]^{\SS\times \AA}$
 and corresponding policy
 \begin{align*}
  \pi_\theta(\aa\vert\ss) = \frac{\exp(\theta_{\ss,\aa})}{\sum_{\aa^\prime}
   \exp(\theta_{\ss,\aa^\prime})}
 \end{align*}
 satisfies \Cref{ass:ParametrizationSOAM} and \Cref{ass:Lsmoothness}.
\end{lemma}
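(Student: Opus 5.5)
We split the verification into the two assumptions. Assumption \ref{ass:Softmax} is immediate: $\nabla_\theta \psi(\ss,\aa;\theta) = e_{\ss,\aa}$ gives $\gradPsi = 1$, and $\nabla^2_\theta\psi = 0$, so any $\jacPsi > 0$ works.

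\textbf{Occupancy-measure parametrization.} We first show that on $\Theta = [-R,R]^{\SS\times\AA}$ every policy is bounded below: $\pi_\theta(\aa\vert\ss) \geq \frac{e^{-2R}}{\card{\AA}} =: p_R > 0$. We also need $\min_\ss \mu_0(\ss) > 0$, as in \Cref{prop:SoftmaxApproximationStationaryPolicies}. Then the state occupancy $d^{\pi_\theta}(\ss) = \sum_\aa \lambda^{\pi_\theta}(\ss,\aa) \geq (1-\gamma)\mu_0(\ss) =: d_{\min} > 0$.

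The inverse map is explicit:
\[
\pi(\aa\vert\ss) = \frac{\lambda(\ss,\aa)}{\sum_{\aa'}\lambda(\ss,\aa')},
\qquad
\theta_{\ss,\aa} = \log\pi(\aa\vert\ss) + c_\ss .
\]
The per-state shift $c_\ss$ is a non-identifiability of softmax. We remove it by working modulo it, or by restricting to the slice $\sum_\aa \theta_{\ss,\aa} = 0$, on which the bound $R$ is still respected up to a constant. This gives local, and in fact global, invertibility on the slice.

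For the Lipschitz inverse, we bound the derivative of $\lambda \mapsto \log(\lambda(\ss,\aa)/d(\ss))$. Its gradient has entries of size at most $1/\lambda(\ss,\aa) + 1/d(\ss) \leq 2/(p_R d_{\min})$. This yields $\lamLip = \OO{\card{\AA}e^{2R}/((1-\gamma)\min_\ss\mu_0(\ss))}$, following Prop.~H.1 of \citet{zhangVariationalPolicyGradient2020}.

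For local hidden convexity, $\Lambda$ is convex, since it is the polytope cut out by the Bellman flow constraints. Hence $(1-\epsilon)\lambda^{\pi_\theta}+\epsilon\lambda^{\piOpt} \in \Lambda$. The induced policy is a state-dependent convex combination of $\pi_\theta$ and $\piOpt$. Its log-ratios move continuously in $\epsilon$, uniformly over $\theta$ by the lower bounds above. Provided $\piOpt$ itself lies in the interior, i.e. is representable with slack (we take the $\epsilon$-smoothed $\piOpt$ from the density proposition and enlarge $R$ slightly), there is a uniform $\epsLam$ keeping the preimage inside $\Theta$. This uniformity argument is the main obstacle. If the log-parameters of $\pi_\theta$ sit on the boundary of $[-R,R]$, mixing can push them outside. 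The fix we would try is convexity of the box in log-space: $\log$ of a convex combination of two policies, both with log-probabilities in $[-2R,0]$, again lies in $[-2R,0]$. So after recentering per state, the preimage stays in a box of radius $\OO{R}$. We would therefore define $\Theta$ through that slice to make $\VV_\theta$ contain the mixture.

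\textbf{Smoothness.} Assumption \ref{ass:Lsmoothness} concerns $-\HH$ and $\Reg$ on $\Lambda_R = \lambda(\Theta)$. For $-\HH$, $\nabla_\lambda(-\HH) = \log\lambda + 1$, and $\lambda(\ss,\aa) \geq p_R d_{\min} =: \underline{\lambda}$ on $\Lambda_R$. This gives the following constants:
\begin{itemize}
\item $\ell_\lambda = \abs{\log\underline\lambda}+1$;
\item $L_\lambda = L_{\lambda,\infty} = 1/\underline\lambda$, by the mean value theorem in each coordinate, using $\norm{\cdot}_\infty \le \norm{\cdot}_2 \le \norm{\cdot}_1$.
\end{itemize}
$\Reg$ is assumed convex and $L$-smooth, which supplies the constants for $G = \Reg$, with $\ell_\lambda$ bounded by continuity on the compact $\Lambda$. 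Taking the maximum of the two sets of constants completes the verification.
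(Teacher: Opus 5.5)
Your check of \Cref{ass:Softmax} matches the paper's. Beyond that the routes differ. The paper's proof cites \citet[Prop.~8]{chenRobustReinforcementLearning2025} for \Cref{ass:ParametrizationSOAM} and does not verify \Cref{ass:Lsmoothness} inside this lemma at all; the entropy constants only appear later, in the proof of \Cref{lem:FinalComplexityDirectSoftmax}. Your self-contained route exposes two things the citation hides. First, on the literal box $[-R,R]^{\SS\times\AA}$ the map $\theta\mapsto\lambda^{\pi_\theta}$ is not locally injective, because per-state shifts $\theta_{s,\cdot}+c_s$ leave $\pi_\theta$ unchanged, so some quotient is unavoidable. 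Second, $\min_s\mu_0(s)>0$ is genuinely needed. Your explicit inverse, its Lipschitz bound, and the constants $\ell_\lambda$ and $L_\lambda=L_{\lambda,\infty}=1/\underline{\lambda}$ for $-\HH$ are fine.

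The gap is the locally-hidden-convex step, which you leave open. Your fallback rests on a false premise. Box policies do not have log-probabilities in $[-2R,0]$: for example, $\theta_{s,\cdot}=(-R,R)$ gives $\log\pi=-\log(1+e^{2R})<-2R$. They only lie in $[-2R-\log\card{\AA},0]$. So your argument certifies only that the mixture's preimage lies in a larger box. Redefining $\Theta$ as a larger box just moves the problem to the next radius. Defining it by a log-probability constraint instead changes the policy class in the statement. The slack/continuity argument is also not uniform over boundary $\theta$, as you observe yourself.

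The missing idea is that an absolute bound on $\log\pi$ is the wrong invariant. A policy $\pi(\cdot\vert s)$ is realizable by some $\theta_{s,\cdot}\in[-R,R]^{\AA}$ if and only if $\pi(a\vert s)\le e^{2R}\pi(b\vert s)$ for all $a,b$; for the converse, center $\log\pi(\cdot\vert s)$ at the midpoint of its max and min. These constraints are linear, so the realizable policies form a product of polytopes.

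Now $(1-\epsilon)\lambda^{\pi_\theta}+\epsilon\lambda^{\piOpt}$ is the occupancy measure of the state-wise mixture $w_s\pi_\theta(\cdot\vert s)+(1-w_s)\piOpt(\cdot\vert s)$, where $w_s=(1-\epsilon)d^{\pi_\theta}(s)/\bigl((1-\epsilon)d^{\pi_\theta}(s)+\epsilon d^{\piOpt}(s)\bigr)$. This mixture therefore stays realizable in the same box for every $\epsilon\in[0,1]$, provided $\piOpt=\pi_{\theta^*}$ with $\theta^*\in\Theta$, which the paper assumes. Hence $\lambda(\Theta)$ is convex and $\epsLam=1$, with no slack and no enlargement of $R$.

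To remove the shifts without changing the policy class, work on $\{\theta:\sum_a\theta_{s,a}=0,\ \theta_{s,a}-\theta_{s,b}\le 2R\}$ rather than an unspecified slice. This set parametrizes exactly the box's policies. On it your mean-centered log inverse is a global bijection onto $\lambda(\Theta)$, with the Lipschitz bound you derived.
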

\begin{proof}
 Since $\psi$ is linear in $\theta$, we have
 $\nabla_\theta \psi(s,a;\theta) = e_{(s,a)}$ and
 $\nabla^2_\theta \psi = 0$, so \Cref{ass:Softmax}
 holds with $\ell_\psi = 1$ and $L_\psi = 0$ for any $R > 0$.
 \Cref{ass:ParametrizationSOAM} is simultaneously satisfied by
 \citep[Prop. 8]{chenRobustReinforcementLearning2025}.
\end{proof}

\begin{theorem}[Final Complexity Guarantee of \Cref{cor:FinalRateTranslation}
  under Direct Softmax Parametrization]
 \label{lem:FinalComplexityDirectSoftmax}
 Let $\lambda^* \in \Lambda$ be the occupancy measure of the optimal stationary,
 potentially deterministic policy $\pi^* \in \Pi$ for \eqref{eq:MainProblemRL}.

 Let $\epsilon > 0$ be the target accuracy. Under \Cref{ass:Lsmoothness} for
 the constraint, strong duality and using the the \textbf{direct softmax
  parametrization}, running \Cref{algo:Penalty} with a step size $\eta \in
  \OO{\epsilon}$, a batch size $B = \OO{\epsilon^{-4}}$, a penalty parameter
 $\beta \in \OO{\epsilon^{-1}}$, and a trajectory length of
 $H=\OO{\log\frac{1}{\epsilon}}$, we obtain
 \begin{align*}
  - \epsilon \leq \Exp{\HH(\lambda^{\pi_{\xxN}}) -
   \HH(\lambda^*)} \leq \epsilon \quad \text{and} \quad
  \Exp{\abs{\Reg(\lambda^{\pi_\xxN})}}\leq \epsilon
 \end{align*}
 after $N = \OOTilde{\epsilon^{-2}}$ iterations, resulting in a total of
 $\OOTilde{\epsilon^{-6}}$ trajectory rollouts of length $\OO{\log
   \frac{1}{\epsilon}}$.
\end{theorem}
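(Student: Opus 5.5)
The plan is to reduce the statement to \Cref{cor:FinalRateTranslation}, applied to a perturbed problem whose optimum is representable inside a bounded box $\Theta=[-R,R]^{\SS\times\AA}$. The perturbation error is then controlled by continuity of the occupancy measure in the policy.

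\textbf{Step 1: representable comparator.} Given $\epsilon$, take $\pi^*_{\epsilon'}=(1-\epsilon')\pi^*+\epsilon'/\card{\AA}$ with $\epsilon'=c(1-\gamma)\epsilon$ for a small constant $c$. By the proof of \Cref{prop:SoftmaxApproximationStationaryPolicies}, this policy is exactly representable with $R=\OO{\log(\card{\AA}/\epsilon)}$, and it satisfies $\max_\ss\norm{\pi^*_{\epsilon'}(\innerEmpty{}\vert\ss)-\pi^*(\innerEmpty{}\vert\ss)}_1\le 2\epsilon'$.

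\textbf{Step 2: occupancy-measure perturbation.} I will use the standard bound $\norm{\lambda^{\pi}-\lambda^{\pi'}}_1\le \frac{1}{1-\gamma}\max_\ss\norm{\pi(\innerEmpty{}\vert\ss)-\pi'(\innerEmpty{}\vert\ss)}_1$, which follows from a telescoping/simulation-lemma argument. It gives $\norm{\lambda^{\pi^*_{\epsilon'}}-\lambda^*}_1=\OO{\epsilon}$.

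\textbf{Step 3: value and feasibility transfer.} By \Cref{ass:Lsmoothness}(i), $\Reg$ is $\ell_\lambda$-Lipschitz with respect to $\norm{\innerEmpty{}}_1$. Hence $\Reg(\lambda^{\pi^*_{\epsilon'}})\le \OO{\epsilon}$, so the comparator is $\OO{\epsilon}$-feasible. For the entropy I use its uniform continuity on the simplex (Fannes-type bound) or, if the smoothed entropy is used, its Lipschitzness. Either way $\abs{\HH(\lambda^{\pi^*_{\epsilon'}})-\HH(\lambda^*)}=\OOTilde{\epsilon}$; the $\epsilon\log(1/\epsilon)$ loss is absorbed by rescaling $\epsilon$ logarithmically.

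\textbf{Step 4: apply \Cref{cor:FinalRateTranslation} on $\Theta$.} On the box $\Theta$, \Cref{lem:DirectSoftmaxFulfillsAssumptions} supplies \Cref{ass:Softmax} with $\ell_\psi=1$, $L_\psi=0$, together with \Cref{ass:ParametrizationSOAM}. I then run \Cref{cor:FinalRateTranslation} with the stated $\eta,B,\beta,H$. The comparator in that corollary is the optimum $\Fopt_\Theta$ over $\Theta$, and strong duality is assumed there. Two inequalities link $\Fopt_\Theta$ to $\HH(\lambda^*)$:
\begin{itemize}
\item $\Fopt_\Theta\le\HH(\lambda^*)$, because $\pi^*$ is optimal among all stationary policies.
\item A lower bound follows from the $\OO{\epsilon}$-feasible comparator of Step 3 and the dual bound $F_1\le \Fopt_\Theta+\nuOpt[F_2]_+$ from \Cref{lemma:AppendixTranslatePenaltyGuaranteeToFuncGapConstrViol}, giving $\Fopt_\Theta\ge \HH(\lambda^*)-\OO{(1+\nuOpt)\epsilon}$.
\end{itemize}
The triangle inequality then yields the two-sided entropy bound.

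\textbf{Constraint term.} The statement bounds $\Exp{\abs{\Reg}}$ rather than $\Exp{[\Reg]_+}$. The positive part comes directly from the corollary. For the negative part I plan to use complementary slackness: if $\nuOpt>0$ the constraint is active, and $F_1(\xxN)\ge\Fopt-\epsilon$ together with the Lagrangian inequality forces $-\Reg\le\OO{\epsilon/\nuOpt}$. If the constraint is inactive, I expect only the positive-part bound to be provable. In that case I would read the statement as the $[\cdot]_+$ guarantee.

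\textbf{Main obstacle.} The hard part is the dependence of the constants on $R$. The constants $\lamLip$ and $\DDX$ (the Lipschitz constant of the inverse and the parameter diameter) enter $N$ through $\DDUsq\lamLipSq$ and $L_{P,\theta}$. For the direct softmax, the inverse of $\lambda$ has Lipschitz constant of order $1/\min_{\ss,\aa}\lambda(\ss,\aa)\sim e^{2R}/\min\mu_0$. With $R=\OO{\log(1/\epsilon)}$ this is polynomial, not logarithmic, in $1/\epsilon$.

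I would first check, via \citep[Prop. 8]{chenRobustReinforcementLearning2025}, whether the relevant constant is controlled in the $\lambda$-geometry so that only $\log$ factors appear. Otherwise I would track the exact power and absorb it, noting that the $\OOTilde{\cdot}$ claim requires this. I would also check whether the convex combination step in \Cref{ass:ParametrizationSOAM} stays inside the box, which requires $\epsLam$ to be uniform in $R$.
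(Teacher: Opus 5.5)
The obstacle you flag at the end is a genuine gap, and neither of your proposed exits closes it. The rest of the argument is sound, and in several places it is more careful than the paper's.

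\textbf{What you get right.} Your skeleton is the paper's: smooth $\pi^*$ into the box via \Cref{prop:SoftmaxApproximationStationaryPolicies}, run \Cref{cor:FinalRateTranslation} on $\Theta=[-R,R]^{\SS\times\AA}$, then transfer by continuity. Several of your refinements repair the paper's sketch:
\begin{itemize}
\item The paper's step (3) bounds $\abs{\HH(\lambda^{\pi_{\theta_N}})-\HH(\lambda^{\pi^*_{\epsilon/2}})}$ as if the smoothed comparator were the optimum of the box-restricted problem. It need not be, and it may even be infeasible. Your sandwich ($F^*_\Theta\le\HH(\lambda^*)$, plus the dual lower bound from \Cref{lemma:AppendixTranslatePenaltyGuaranteeToFuncGapConstrViol}) is what is actually needed.
\item Fannes-type continuity is the right tool, since $\HH$ is not Lipschitz at $\lambda^*$ when $\pi^*$ is deterministic.
\item Your reading of $\Exp{\abs{\Reg}}$ as $\Exp{\plus{\Reg}}$ is correct. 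The paper's proof never treats the constraint term, and the absolute-value bound is false in general, e.g.\ when the constraint is slack at $\lambda^*$.
\end{itemize}
One caveat: the $\nuOpt$ entering $\beta$ and your lower bound is the multiplier of the \emph{box} problem, which changes with $R$. You need it bounded uniformly in $\epsilon$, for instance via a Slater point of the unrestricted problem.

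\textbf{The gap.} The growth of $\lamLip$ with $R$ is forced by the softmax map itself, so no citation can remove it. Fix $(\ss,\aa)$, set $\theta_{\ss,\aa}=-R$ and $\theta_{\ss,\aa'}=R$ for $\aa'\neq\aa$, and let $\theta'$ differ only in $\theta'_{\ss,\aa}=-R+1$.
\begin{itemize}
\item Then $\norm{\pi_\theta(\innerEmpty{}\vert\ss)-\pi_{\theta'}(\innerEmpty{}\vert\ss)}_1=\OO{e^{-2R}}$, hence $\norm{\lambda(\theta)-\lambda(\theta')}_1=\OO{e^{-2R}/(1-\gamma)}$.
\item Every preimage of $\lambda(\theta')$, even modulo the per-state shift invariance of the softmax, stays at distance at least $1/\sqrt{2}$ from $\theta$.
\item Therefore $\lamLip=\Omega((1-\gamma)e^{2R})$.
\end{itemize}
For deterministic $\pi^*$, a box containing your comparator needs $e^{2R}\gtrsim\card{\AA}/\epsilon'$. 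The paper's choice $R=\log(\card{\AA}/\epsilon)$ gives $e^{2R}=\card{\AA}^2/\epsilon^2$. In \Cref{thm:BiasedSGDHiddenConvex} (with $\mu_c=1/\lamLip$), $\lamLip^2$ multiplies $N$ and divides the admissible $\sigma^2,\delta^2$. So both $N$ and $B$ grow by polynomial factors in $1/\epsilon$, which cannot be absorbed into $\OOTilde{\cdot}$.

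The paper's proof does not resolve this either: it asserts that the smoothing leaves the leading order intact without ever tracking $\lamLip$. Your argument, like the paper's, therefore yields a last-iterate guarantee at a worse polynomial rate, not $\OOTilde{\epsilon^{-6}}$. Recovering that rate would need a parametrization with a uniformly Lipschitz inverse, or an analysis that avoids the global constant.

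\textbf{The $\epsilon_\lambda$ worry is harmless.} The image of the box is the set of occupancy measures of stationary policies satisfying the linear inequalities $\lambda(\ss,\aa)\le e^{2R}\lambda(\ss,\aa')$. This is a polytope, so convex combinations stay inside and $\epsilon_\lambda=1$ works.
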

\begin{proof}
 Let $\epsilon > 0$ be fixed. The proof consists of a density argument in the set
 of stationary policies, running \Cref{algo:Penalty} and lastly using a Lipschitz
 argument to translate the results. We use the fact that we can translate closeness
 of policies into closeness of occupancy measures with the relationship
 \begin{align*}
  \norm{\lambda^{\pi} - \lambda^{\pi^\prime}} \leq \frac{\gamma}{1-\gamma}
  \max_{\ss\in \SS} \norm{\pi(\innerEmpty{}\vert \ss) - \pi^\prime(\innerEmpty{}\vert \ss)}_{1},
 \end{align*}
 for two policies $\pi, \pi^\prime \in \Pi$.

 \textbf{(1) Density Argument.} Using \Cref{prop:SoftmaxApproximationStationaryPolicies},
 we can find an approximate policy $\pi_{\epsilon/2}^*$ and an $R = R(\epsilon/2) > 0$
 such that for the direct softmax policy parametrization it holds
 \begin{align*}
  \pi_{\epsilon/2}^* \in \{\pi_\theta \;\big\vert\; \theta \in \Theta = [-R, R]^{\SS \times \AA}\} \quad
  \text{and} \quad \max_{\ss \in \SS}\;\norm{\pi_{\epsilon/2}^*(\innerEmpty{}\vert \ss)
   - \pi^*(\innerEmpty{}\vert \ss)} \leq \frac{\epsilon}{2},
 \end{align*}
 and by \Cref{lem:DirectSoftmaxFulfillsAssumptions}, the parametric policy class
 fulfills all of our assumptions.

 \textbf{(2) Running \algo{}.} Next, we run \Cref{algo:Penalty} with a target accuracy
 of $\frac{\epsilon}{2}$ and we compute the constants occurring in the complexity
 analysis of \Cref{cor:FinalRateTranslation}. It holds
 $\ell_{\psi} = 1, L_\psi = 0$ and for the derivatives of $\HH$ we obtain
 $\norm{\nabla_\lambda \HH} \leq \abs{\log \frac{\card{\SS} \cdot \card{\AA}}{\epsilon} + 1}
  = \OO{\log\frac{1}{\epsilon}}$ and since the Hessian $\nabla^2_\lambda \HH$ is a diagonal
 matrix with entries $1/\lambda \geq 1/\lambda$, we have $L_{\lambda} = L_{\lambda, \infty} =
  \OO{\epsilon^{-1}}$. Using the direct softmax approximation, we can also compute the diameter $\DDX$ of the
 domain $\Theta$ via
 \begin{align*}
  \DDX = \norm{(R, \ldots, R) - (-R, \ldots, -R)}_{2} = 2R \sqrt{\card{\SS} \cdot
   \card{\AA}} = \OO{\sqrt{\card{\SS} \cdot \card{\AA}} \log
   \frac{\card{\AA}}{\epsilon}},
 \end{align*}
 since the maximum distance is attained at the corners of the rectangle. By carefully
 examining the bounds in the proof of \Cref{cor:FinalRateTranslation}, we can observe
 that the additional approximation error introduced by the smoothing procedure
 does \textbf{not affect} the leading $\OOTilde{\epsilon^{-6}}$ \textbf{sampling
  complexity}.

 \textbf{(3) Triangle Inequality using Lipschitzness.} Since $\HH$ is Lipschitz with a
 constant $\sim \log(\epsilon^{-1})$, we can use the triangular inequality to achieve
 \begin{align*}
  \abs{\HH(\lambda^{\pi_{\thetaN}}) - \HH(\lambda^*)}
  \leq
  \abs{\HH(\lambda^{\pi_{\thetaN}}) - \HH(\lambda^{\pi^*_{\epsilon/2}})}
  + \abs{\HH(\lambda^{\pi^*_{\epsilon/2}}) - \HH(\lambda^*)}
  \leq \frac{\epsilon}{2} + C\cdot \frac{\epsilon}{2}
 \end{align*}
 for a constant $C = \OO{\log(\epsilon^{-1})}$, by
 using the Lipschitz argument for $\HH$ and by bounding the occupancy measure difference
 with the difference in the policy space. By repeating the proof with $\epsilon^\prime \define
  \frac{\epsilon}{2} + C\frac{\epsilon}{2}$, we conclude the claim.
\end{proof}
\newpage
\section{Unified Penalty Framework for Deterministic Smooth and
  Non-Smooth Constrained Optimization under Hidden Convexity}
\label{sec:AppendixNonSmoothAndSmoothPenaltyApproach}

This section places the proposed Policy Gradient Penalty (PGP) method into
a broader optimization perspective. We show that the constrained
maximum-entropy exploration problem studied in the main paper is a special
instance of a wider class of constrained optimization problems exhibiting
\emph{hidden convexity}, and that our analysis extends naturally to both
smooth and non-smooth settings.

In particular, we connect our approach to penalty and augmented Lagrangian
methods (ALM), proximal-point methods, and recent developments in
(non-)convex optimization under hidden convexity. While these connections
are not required to understand or apply the RL algorithm proposed in the
main paper, they clarify the generality of our framework and highlight how
problem-specific structure enables guarantees that are unavailable to
generic penalty or ALM schemes.

\textbf{Proximal-point Methods and Penalty Methods.}
Proximal-point methods (PPM), originally introduced by
\citet{moreauProximiteDualiteDans1965} and
\citet{martinetBreveCommunicationRegularisation1970}, were later
systematically developed and popularized by
\citet{rockafellarConvexAnalysis1970,
 rockafellarMonotoneOperatorsProximal1976,
 rockafellarVariationalAnalysis1998}, with
\citet{boydProximalAlgorithms2014} providing a recent overview. In the
context of non-smooth and non-convex optimization, PPM appears either
implicitly in convergence analyses or explicitly as an algorithmic design
principle for unconstrained problems \cite{
 davisSubgradientMethodsSharp2018, davisProximallyGuidedStochastic2019,
 davisStochasticModelBasedMinimization2019,
 davisStochasticSubgradientMethod2020, davisNonsmoothLandscapePhase2020,
 fatkhullinStochasticOptimizationHidden2025}, and more recently in
primal-only methods for solving non-smooth, non-convex (deterministic)
constrained optimization problems \cite{ chenPenaltyMethodsClass2016,
 maQuadraticallyRegularizedSubgradient2020,
 huangOracleComplexitySingleLoop2023, jiaFirstOrderMethodsNonsmooth2025,
 fatkhullinGlobalSolutionsNonConvex2025}. Related ideas have also been
explored in multi-agent and game-theoretic settings, e.g.,
\cite{jordanIndependentLearningConstrained2024} for constrained Markov
games.

Penalty and augmented Lagrangian methods are classical tools for
constrained optimization, dating back to early variational approaches
\cite{courantVariationalMethodsSolution1943,
 fiaccoNonlinearProgrammingSequential1990}, and have since been widely
adopted and studied \cite{bertsekasNonlinearProgramming1999,
 yangPartiallyStrictlyMonotone2003, boydConvexOptimization2004,
 nesterovLecturesConvexOptimization2018, xiaoExactPenaltyApproach2025}. A
substantial body of work analyzes first-order penalty and ALM schemes with
explicit complexity guarantees in convex optimization
\cite{aybatFirstOrderAugmentedLagrangian2012,
 lanIterationcomplexityFirstorderPenalty2013,
 patrascuAdaptiveInexactFast2017,
 lanIterationcomplexityFirstorderAugmented2016,
 necoaraComplexityFirstorderInexact2019, liuNonergodicConvergenceRate2019,
 xuIterationComplexityInexact2021, zhuOptimalLowerUpper2023}, as well as
more recently in non-convex settings \cite{liAugmentedLagrangianBased2021,
 liRateimprovedInexactAugmented2021}.

\textbf{ALM-PPM Hybrids and Hidden Convexity.}
The combination of augmented Lagrangian schemes with proximal-point ideas,
originally introduced by
\citet{auslenderPenaltyproximalMethodsConvex1987a}, has proven particularly
effective. Recent work establishes strong complexity guarantees for smooth
non-convex optimization \cite{kongComplexityQuadraticPenalty2019,
 zhangProximalAlternatingDirection2020, xieComplexityProximalAugmented2021,
 linComplexityInexactProximalpoint2022}, as well as for non-smooth but
convex problems \cite{dhingraProximalAugmentedLagrangian2019,
 tran-dinhProximalAlternatingPenalty2019}.

As already mentioned in the main part of this work,
\eqref{eq:MainProblemRL} is a special instance of a broader class of
(deterministic) optimization problems, called \emph{hidden convex}
problems, of the form
\begin{align}
 \min_{\xxApp \in \XXApp} \;F_1(\xxApp) \define H_1(c(\xxApp)), \quad
 \text{s.t. } F_2(\xxApp) \define H_2(c(\xxApp)) \leq 0,
 \tag{HC}
 \label{eq:MainProblem}
\end{align}
where we use the function $c(\innerEmpty{})$ to indicate a more general
transformation, beyond the state-action-occupancy measure we
studied in the main part of this work. The following central assumption
on $c(\innerEmpty{})$ is a stronger version of
\Cref{ass:ParametrizationSOAM}, but the analysis can also be carried
out under \Cref{ass:ParametrizationSOAM}, cf.
\citet{fatkhullinGlobalSolutionsNonConvex2025} after Definition 1.

\begin{definition}[Hidden Convexity]
 \label{def:HC}
 The above problem \eqref{eq:MainProblem} is called \emph{hidden convex}
 with modulus $\mu_c > 0$, if its components satisfy the following underlying conditions.
 \begin{enumerate}
  \item The domain $\UUApp = c(\XXApp)$ is convex, the functions $H_1, H_2 : \UUApp
         \to \RR$ are convex, i.e. satisfy for $i = 1, 2$ and for all $u, v \in
         \UUApp$ and any $\alpha \in [0,1]$
        \begin{align}
         H_{i}((1-\alpha)\uu + \alpha \vv) \leq (1-\alpha)H_i(\uu)
         + \alpha H_i(\vv)
         - \frac{(1-\alpha)\alpha \mu_H}{2} \norm{\uu - \vv}^2.
         \tag{HC-1}
         \label{eq:HCHCDefinitionConvexityInequality}
        \end{align}
        Additionally, we assume \eqref{eq:MainProblem}
        admits a solution $\uu^* \in \UUApp$ with its corresponding objective function
        value $\Fopt \eqdef H_1(\uu^*) = F_1(c^{-1}(\uu^*))$. Throughout this appendix,
        $\Fopt$ denotes the minimum which corresponds to $-\Fopt$ from
        \eqref{eq:MainProblemRL}.
  \item The map $c : \XXApp \to \UUApp$ is invertible and there exists a $\mu_c >
         0$ such that for all $\xxApp, \yy \in \XXApp$ it holds
        \begin{align}
         \norm{c(\xxApp) - c(\yy)} \geq \mu_c \norm{\xxApp - \yy} \tag{HC-2}.
         \label{eq:HCDefinitionNormCfunctionInequality}
        \end{align}
 \end{enumerate}
 If $\mu_H>0$, we refer to \eqref{eq:MainProblem} as $(\mu_c, \mu_H)$-strongly
 convex. Note that in this more generic setting, we have $\mu_c = \mu_\lambda$
 of \Cref{ass:ParametrizationSOAM}. We can map \eqref{eq:MainProblemRL} to
 \eqref{eq:MainProblem} via $H_1 = -\HH$ and $H_2 = \Reg$.
\end{definition}
Note that the condition \eqref{eq:HCDefinitionNormCfunctionInequality}
along with \Cref{ass:WeaklyConvex} imply that the domain $\XXApp$ has
bounded diameter $\DDXApp \leq \frac{1}{\mu_c} \DDUApp $. For necessary and
sufficient conditions for \eqref{eq:HCDefinitionNormCfunctionInequality},
we refer the reader to  \citet{fatkhullinStochasticOptimizationHidden2025}.

We start this section with two key inequalities for the analysis of
gradient-based methods under hidden convexity, then we analyze biased SGD
under hidden convexity and afterwards we generalize our penalty method
efforts of the main part to non-smooth problems.

\subsection{Key Inequalities for Analysis under Hidden Convexity}
We state the following proposition which is key for deriving global
convergence guarantees under hidden convexity.
\begin{proposition}[\citet{fatkhullinStochasticOptimizationHidden2025} Prop. 3]
 \label{lemma:HCContractionInequality}
 Let $F_i(\innerEmpty{})$, $i=1,2$, be hidden convex with $\mu_c > 0$. For any $\alpha \in [0,1]$
 and $\xxApp, \yy \in \XXApp$, define $\xxApp_\alpha \define c^{-1}((1-\alpha) c(\xxApp) + \alpha c(\yy))$,
 then, for $i =1, 2$, the following functional inequality
 \begin{align}
  F_i(\xxApp_\alpha) \leq (1-\alpha)F_i(\xxApp) + \alpha F_i(\yy)
  - \frac{(1-\alpha)\alpha \mu_H}{2} \norm{c(\xxApp) - c(\yy)}^2,
  \tag{HC-FI}
  \label{eq:HCContractionInequalityObjective}
 \end{align}
 and norm inequality
 \begin{align}
  \norm{\xxApp_\alpha - \xxApp} \leq \frac{\alpha}{\mu_c} \norm{c(\xxApp) - c(\yy)}.
  \tag{HC-NI}
  \label{eq:HCContractionInequalityNorm}
 \end{align}
 hold.
\end{proposition}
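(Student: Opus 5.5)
Both inequalities follow by working in the image space $\UUApp = c(\XXApp)$, where hidden convexity gives ordinary convexity, and then pulling the statements back to $\XXApp$ through $c^{-1}$. First I check that $\xxApp_\alpha$ is well defined. Since $\UUApp$ is convex by \Cref{def:HC}, the point $u_\alpha \define (1-\alpha)c(\xxApp) + \alpha c(\yy)$ lies in $\UUApp$. Because $c$ is invertible onto $\UUApp$, the point $\xxApp_\alpha = c^{-1}(u_\alpha) \in \XXApp$ exists, is unique, and satisfies $c(\xxApp_\alpha) = u_\alpha$.

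For the functional inequality \eqref{eq:HCContractionInequalityObjective}, I use $F_i = H_i \circ c$ to write $F_i(\xxApp_\alpha) = H_i(u_\alpha) = H_i((1-\alpha)c(\xxApp) + \alpha c(\yy))$. Applying \eqref{eq:HCHCDefinitionConvexityInequality} with $\uu = c(\xxApp)$ and $\vv = c(\yy)$ bounds this by
\[
(1-\alpha)H_i(c(\xxApp)) + \alpha H_i(c(\yy)) - \tfrac{(1-\alpha)\alpha\mu_H}{2}\norm{c(\xxApp)-c(\yy)}^2 .
\]
The first two terms equal $(1-\alpha)F_i(\xxApp) + \alpha F_i(\yy)$, which is exactly the claimed inequality. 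When $\mu_H = 0$ this reduces to plain convexity.

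For the norm inequality \eqref{eq:HCContractionInequalityNorm}, I apply \eqref{eq:HCDefinitionNormCfunctionInequality} to the pair $\xxApp_\alpha, \xxApp$:
\[
\mu_c\norm{\xxApp_\alpha - \xxApp} \le \norm{c(\xxApp_\alpha) - c(\xxApp)} = \norm{u_\alpha - c(\xxApp)} = \alpha\norm{c(\yy) - c(\xxApp)} .
\]
Dividing by $\mu_c > 0$ gives the claim.

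I expect no real obstacle. The only delicate point is the well-definedness of $\xxApp_\alpha$, which requires both convexity of $\UUApp$ and global invertibility of $c$. If one instead works only under the local version \Cref{ass:ParametrizationSOAM}, the argument must restrict to $\alpha \le \epsLam$ and use the local inverse $\lambda^{-1}_{\VV_\theta}$ with its Lipschitz constant. The two displayed steps are otherwise unchanged.
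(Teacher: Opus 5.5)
Your proof is correct and follows the paper's argument. The functional inequality is a direct application of \eqref{eq:HCHCDefinitionConvexityInequality} at $u = c(\theta)$, $v = c(\tilde{\theta})$, using convexity of $\Lambda$; the norm inequality comes from \eqref{eq:HCDefinitionNormCfunctionInequality}, which you apply to the pair $(\theta_\alpha, \theta)$, while the paper phrases the same step as $1/\mu_c$-Lipschitzness of $c^{-1}$.
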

\begin{proof}
 The proof is originally presented in \citet{fatkhullinStochasticOptimizationHidden2025} but
 we recall it here for educational purposes, since it illustrates the core idea of convergence
 analyses under hidden convexity.

 By (strong) convexity of $H_i(\innerEmpty{})$ and the convexity of the
 domain $\UUApp$, we have
 \begin{align*}
  F_i(\xxApp_\alpha) & = F_i(c^{-1}((1-\alpha)c(\xxApp) + \alpha c(\yy)))  \\
                     & = H_i((1-\alpha)c(\xxApp) + \alpha c(\yy))          \\
                     & \leq (1-\alpha) H_i(c(\xxApp)) + \alpha H_i(c(\yy))
  - \frac{(1-\alpha) \alpha \mu_H}{2}\norm{c(\xxApp) - c(\yy)}^2           \\
                     & = (1-\alpha) F_i(\xxApp) + \alpha F_i(\yy)
  - \frac{(1-\alpha) \alpha \mu_H}{2}\norm{c(\xxApp) - c(\yy)}^2,
 \end{align*}
 using that $(1-\alpha)c(\xxApp) + \alpha c(\yy) \in \UUApp$ for any $\xxApp \in \XXApp$,
 with $i=1,2$.

 Using the property of $c$ we obtain
 \begin{align*}
  \norm{\xxApp_\alpha - \xxApp}
  = \norm{c^{-1}((1-\alpha) c(\xxApp) + \alpha c(\yy)) - c^{-1}(c(\xxApp))}
  \leq \frac{1}{\mu_c} \norm{\alpha (c(\xxApp)- c(\yy))},
 \end{align*}
 which concludes the proof.
\end{proof}
\subsection{Biased Stochastic Gradient Descent under Hidden Convexity}
To solve our penalty problem \eqref{eq:InexactPenaltyContractionResults},
we use (projected) Stochastic Gradient Descent (SGD) with a biased gradient
estimate. We analyze biased SGD in our more general, hidden convex setting.

Consider the following general update \change{of projected stochastic
 gradient descent}:
\begin{align}
 \xxAppKnext \define \change{\proj_{\XXApp}}\left(\xxAppK - \eta \gK\right),
 \quad \gKnext \define \gHatKnext(\xxAppK, \xiK),
 \tag{SGD}
 \label{eq:SGDwithMomemtumUpdate}
\end{align}
where the stochastic gradient oracle satisfies
\begin{align}
 \Expu{\xi}{\gHatK(\xxApp, \xi)}                                             &
 = \nabla_{\xxApp} F(\xxApp) + b(\xxApp)
 \label{eq:M-SGD-Bias}
 \tag{SGD-Bias}                                                                \\
 \Expu{\xi}{\norm*{\gHatK(\xxApp, \xi) - \Expu{\xi}{\gHatK(\xxApp, \xi)}}^2} &
 \leq \sigma^2
 \label{eq:M-SGD-Var}
 \tag{SGD-Var}
\end{align}
with variance $\sigma^2 \geq 0$ and a uniformly bounded
bias $\norm*{b(\xxApp)}^2 \leq \delta^2$, $\delta \geq 0$, for all $\xxApp \in \XXApp$.

\begin{theorem}[Biased SGD, Hidden Convexity]\label{thm:BiasedSGDHiddenConvex}
 Let $F$ be $L$-smooth and hidden convex with $\Fopt \define F(\xxAppOpt)$
 for an $\xxAppOpt \in \XXApp$, i.e. $F_2 \equiv 0$. Then running (projected) biased
 \eqref{eq:SGDwithMomemtumUpdate}, with a step size
 $\eta \le \frac{1}{L}$, yields for every $K\in \NN$
 and any $\alpha \in [0,1]$ the following inequality
 \begin{align*}
  \Exp{F(\xxAppNnext) - \Fopt} \leq (1-\alpha)^{N}\Delta_0
  + \frac{\DDUAppsq}{\muLipSq \eta} \alpha + \frac{\eta}{2}
  \frac{1}{\alpha} \left(\sigma^2 + \delta^2\right)
 \end{align*}
 with $\Delta_0 \define \Exp{F(\xxAppZero) - \Fopt}$. In particular, for
 an arbitrary target accuracy of $\epsilon>0$, without loss of generality
 $\epsilon \leq \frac{3\DDUAppsq L}{\muLipSq}$,
 by picking
 \begin{align*}
  \alpha \leq  \frac{\muLipSq \eta}{\DDUAppsq} \frac{\epsilon}{3} \le
  \frac{\muLipSq}{\DDUAppsq L} \frac{\epsilon}{3}
 \end{align*}
 after
 \begin{align*}
  N \geq \frac{1}{\alpha} \log\left(\frac{3\Delta_0}{\epsilon}\right)
  \geq
  \frac{3\DDUAppsq \cdot L}{\muLipSq} \frac{1}{\epsilon}
  \log\left(\frac{3\Delta_0}{\epsilon}\right)
 \end{align*}
 steps, the last iterate satisfies
 \begin{align*}
  \Exp{F(\xxAppNnext) - \Fopt} \leq \frac{2\epsilon}{3} + \frac{\eta}{2}
  \frac{1}{\alpha} \left(\sigma^2 + \delta^2\right).
 \end{align*}
 Consequently, we need the bias and the variance to satisfy
 \begin{align*}
  \sigma^2, \delta^2
  \leq
  \frac{\epsilon}{3} \frac{\alpha}{\eta} \leq \frac{\muLipSq}{9\DDUAppsq} \epsilon^2
 \end{align*}
 in order to get $\epsilon$-close to the global minima.
\end{theorem}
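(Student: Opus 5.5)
The plan is a one-step recursion for $\Delta_k \define \Exp{F(\xxAppK) - \Fopt}$ of the form $\Delta_{k+1} \le (1-\alpha)\Delta_k + \frac{\DDUAppsq}{\muLipSq\eta}\alpha^2 + \frac{\eta}{2}(\sigma^2+\delta^2)$, which is then unrolled. The comparison point is the one from \Cref{lemma:HCContractionInequality}: at iteration $k$, with $y = \xxAppOpt$, set
\[
\xxApp_\alpha \define c^{-1}\bigl((1-\alpha)c(\xxAppK) + \alpha c(\xxAppOpt)\bigr).
\]
By \eqref{eq:HCContractionInequalityObjective} with $\mu_H \ge 0$ we get $F(\xxApp_\alpha) \le (1-\alpha)F(\xxAppK) + \alpha\Fopt$. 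By \eqref{eq:HCContractionInequalityNorm} we get $\norm{\xxApp_\alpha - \xxAppK}^2 \le \frac{\alpha^2}{\muLipSq}\DDUAppsq$.

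To compare the next iterate with $\xxApp_\alpha$, I will use the standard projected-gradient step analysis with $L$-smoothness. Write $\xxAppKnext = \proj_{\XXApp}(\xxAppK - \eta g)$. From the descent lemma and the projection's variational inequality tested at $\xxApp_\alpha \in \XXApp$, one obtains for any $\eta \le 1/L$ the three-point bound
\[
F(\xxAppKnext) \le F(\xxAppK) + \IP{\nabla F(\xxAppK)}{\xxApp_\alpha - \xxAppK} + \frac{1}{2\eta}\norm{\xxApp_\alpha - \xxAppK}^2 + \IP{\nabla F(\xxAppK) - g}{\xxAppKnext - \xxApp_\alpha} - \frac{1}{2\eta}\norm{\xxAppKnext - \xxApp_\alpha}^2 .
\]
Smoothness, or convexity along the curve, should then replace $F(\xxAppK) + \IP{\nabla F}{\xxApp_\alpha - \xxAppK}$ by $F(\xxApp_\alpha)$ up to an extra $\frac{L}{2}\norm{\xxApp_\alpha-\xxAppK}^2$. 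Since $\eta \le 1/L$, this merges into a single $\frac{1}{\eta}\norm{\xxApp_\alpha - \xxAppK}^2 \le \frac{\alpha^2\DDUAppsq}{\muLipSq\eta}$.

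The noise term $\IP{\nabla F - g}{\xxAppKnext - \xxApp_\alpha}$ I will absorb by Young's inequality against $-\frac{1}{2\eta}\norm{\xxAppKnext - \xxApp_\alpha}^2$. This leaves $\frac{\eta}{2}\norm{\nabla F(\xxAppK) - g}^2$, whose conditional expectation is at most $\sigma^2 + \delta^2$ by \eqref{eq:M-SGD-Bias} and \eqref{eq:M-SGD-Var}, after splitting into bias and zero-mean parts. Getting exactly $\sigma^2 + \delta^2$ rather than $2(\sigma^2+\delta^2)$ may need care, but constants are immaterial.

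Taking total expectation and combining with the contraction bound gives the recursion. Unrolling over $N$ steps, with geometric sum $\sum_j (1-\alpha)^j \le 1/\alpha$, yields the claimed inequality $(1-\alpha)^N\Delta_0 + \frac{\DDUAppsq}{\muLipSq\eta}\alpha + \frac{\eta}{2\alpha}(\sigma^2+\delta^2)$. The corollary statements are then arithmetic:
\begin{itemize}
\item choosing $\alpha \le \frac{\muLipSq\eta\epsilon}{3\DDUAppsq}$ makes the middle term at most $\epsilon/3$;
\item $N \ge \frac{1}{\alpha}\log(3\Delta_0/\epsilon)$ together with $(1-\alpha)^N \le e^{-\alpha N}$ makes the first term at most $\epsilon/3$;
\item $\eta \le 1/L$ gives the displayed $N$;
\item requiring $\frac{\eta}{2\alpha}(\sigma^2+\delta^2) \le \epsilon/3$ gives the stated noise condition.
\end{itemize}
The assumption $\epsilon \le 3\DDUAppsq L/\muLipSq$ guarantees $\alpha \le 1$.

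The main obstacle is the three-point step. The point $\xxApp_\alpha$ is not a convex combination in $\XXApp$, so ordinary convexity of $F$ in $\xxApp$ cannot be used. Only smoothness, linking $F(\xxAppK)$ and its gradient to $F(\xxApp_\alpha)$, and the functional inequality are available. I would follow the proof of Thm.~7 in \citet{fatkhullinStochasticOptimizationHidden2025}, where the prox-type comparison is made against the minimizer of the $\frac{1}{\eta}$-regularized model and the $O(\alpha^2/\eta)$ distance term comes from the norm inequality. The only new ingredient is carrying the bias $b(\xxAppK)$ through the Young step.
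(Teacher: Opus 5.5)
Your proposal is correct and follows essentially the same route as the paper. The paper's proof invokes the analysis of Thm.~7 in Fatkhullin et al.\ (comparison point $\theta_\alpha$, projected three-point inequality with $\eta \le 1/L$, the smoothness lower bound, Young's inequality, unrolling) after bounding the gradient error by $\sigma^2+\delta^2$. Your worry about a factor of $2$ is unfounded: conditionally on $\theta^{(k)}$ the error splits as $(g-\mathbb{E}g)+b(\theta^{(k)})$ and the cross term vanishes, giving exactly $\sigma^2+\delta^2$, which is the same bias--variance decomposition the paper uses.
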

\begin{proof}
 We use Thm. 7 in \citet{fatkhullinStochasticOptimizationHidden2025}
 and that the gradient estimation error can be uniformly bounded by
 \begin{align*}
  \Exp{\norm*{\gHatK - \nabla F(\xxApp)}^2} =
  \Exp{\norm*{\gHatK - \Exp{\gHatK}}^2} +
  \Exp{\norm*{\Exp{\gHatK} - \nabla F(\xxApp)}^2}
  \leq \sigma^2 + \delta^2.
 \end{align*}
 Plugging this bound into the proof of Thm. 7 in
 \citet{fatkhullinStochasticOptimizationHidden2025}, unrolling the recursion
 and choosing $\alpha$ as proposed concludes the proof.
\end{proof}

\subsection{Inexact Proximal Point Penalty Method (IPPPM)}

\begin{table*}[t]
 \centering
 \resizebox{\textwidth}{!}{
  \begin{tabular}{lcccc}
   \toprule
   Setting                                                         & Method                        & \multicolumn{2}{c}{\parbox{2cm}{\centering Sub-gradient Complexity}} & \parbox{2cm}{\centering Constraint Qualification}   \\ \midrule
                                                                   &                               & Smooth                                                               & Non-smooth                                        & \\
   \begin{tabular}[c]{@{}l@{}}\end{tabular}                        &                               &                                                                      &                                                   & \\
   \begin{tabular}[c]{@{}l@{}} $F_2(\cdot) \equiv 0$ \end{tabular} & GD / IPPM+SM                  & \parbox{4cm}{\centering $\epsilon^{-1}$                                                                                    \\ \cite{zhangVariationalPolicyGradient2020}}                                                      & \parbox{4cm}{\centering $\epsilon^{-3}$                                                                                       \\ \cite{fatkhullinGlobalSolutionsNonConvex2025}}                                   & -- \\\midrule
   \begin{tabular}[c]{@{}l@{}}\end{tabular}                        &                               &                                                                      &                                                   & \\
   $F_2(\cdot) \not\equiv 0$                                       & \parbox{4cm}{\centering IPPPM                                                                                                                              \\ \Cref{algo:PenaltyPPM}  with $\penGenNot = \pen{\innerEmpty{}}$ }    & \parbox{3cm}{\centering $\epsilon^{-2}$ \\ \Cref{cor:FinalRateSmoothPenaltyIPPM}} & \parbox{3cm}{\centering $\epsilon^{-6}$ \\ \Cref{cor:FinalRateNonSmoothQuadraticPenaltyIPPM}} & Strong Duality       \\
   $F_2(\cdot) \not\equiv 0$                                       & \parbox{4cm}{\centering IPPPM                                                                                                                              \\ \Cref{algo:PenaltyPPM} with $\penGenNot = \plus{\innerEmpty{}}$} &  --  & \parbox{3cm}{\centering $\epsilon^{-3}$ \\ \Cref{cor:FinalRateNonSmoothExactPenaltyIPPM}} & Strong Duality     \\ \bottomrule
  \end{tabular}
 }
 \caption[Summary of Contribution]{Summary of total \textbf{(sub-)gradient}
  and \textbf{function evaluation}
  complexities for deterministic gradient methods under {hidden convexity}.
  The second column shows the number of (sub)-gradient
  (and function in case $F_2(\innerEmpty{}) \not\equiv 0$) evaluations in $\OOTilde{\innerEmpty{}}$ notation to find a
  point $\xxN$ with
  $\abs{F_1(\xxN) - \Fopt} \leq \epsilon$, $\plus{F_2(\xxN)}\leq \epsilon$.
  \enquote{IPPPM} refers to the inexact proximal point penalty framework,
  \enquote{GD} to gradient descent established by
  \citet{zhangVariationalPolicyGradient2020},
  and \enquote{IPPM+SM} to the inexact proximal point framework combined with the
  sub-gradient method analyzed by \citet{fatkhullinGlobalSolutionsNonConvex2025}.
 }
 \label{tab:OverviewRatesTable1}
\end{table*}

\begin{algorithm}[tb]
 \caption{\textsc{Inexact Proximal Point Penalty Method} ($\BothPPPM$)}
 \begin{algorithmic}[1]
  \STATE {\bfseries Input:} objective $F_1$, constraint $F_2$, target tolerance $\epsilon > 0$,
  inner tolerance $\epsin > 0$,
  initial point $\xxApp_0 \in \mathcal{X}$, outer loops $N$, inner loops $\Tinner$,
  inner algorithm $\Oracle$, regularization $\hat\rho > \rho_\phi$,
  penalty parameter $\beta>0$, penalty function $\penGenNot \in \{\pen{\innerEmpty{}},
   \plus{\innerEmpty{}}\}$
  \FOR{$k = 0$ {\bfseries to} $N-1$}
  \STATE Define the regularized penalty function for $\xxApp \in \XXApp$ as
  \begin{align*}
   \phiKpen(\xxApp) \define F_1(\xxApp) + \frac{\rhoHat}{2}\norm{\xxApp
    - \xxAppK}^2 + \beta\penGen{F_2(\xxApp)},
  \end{align*}
  \STATE Compute approximate solution of \eqref{eq:MainPPPMLoop} via
  \begin{align*}
   \xxAppKnext \leftarrow \Oracle(\phiKpen, \xxAppK, \Tinner,
   \epsin)
  \end{align*}
  \ENDFOR
  \STATE {\bfseries Return:} $x^{(N)}$
 \end{algorithmic}
 \label{algo:PenaltyPPM}
\end{algorithm}

Inspired by the success of the penalty method for solving
\eqref{eq:MainProblemRL}, the question arises if penalty methods can be
used to solve the broader class of constrained hidden convex optimization
problems.

\begin{assumption}\label{ass:WeaklyConvex}
 We conduct our analysis under the following (standard) assumptions:
 \begin{itemize}
  \item (Weak Convexity) $F_1, F_2$ to be $\rho$-weakly convex, for an $\rho > 0$.
        It is known that weak convexity is a relatively mild assumption,
        compared to e.g. smoothness
        \cite{davisStochasticModelBasedMinimization2019,
         drusvyatskiyEfficiencyMinimizingCompositions2019}.
        $F_i$ is weakly convex, if e.g. $F_i$ is twice
        continuously differentiable with
        the Hessian satisfying $\nabla^2 F_i(\xxApp) \succcurlyeq -\rho$, for
        all $\xxApp \in \XXApp$, $i=1,2$, cf. Prop 4.12 in
        \citet{vialStrongWeakConvexity1983}.
  \item (Bounded [Sub-]Gradients) The functions
        $F_i$, $i=1,2$, are continuous and satisfy for all $\xxApp \in \XXApp$ that
        $\partial F_i(\xxApp) \neq \emptyset$ and with $\norm{g} \leq G$ we
        have a uniform bound for all sub-gradients $g\in \partial F_i(\xxApp)$,
        $\xxApp \in \XXApp$.
  \item (Bounded Domain) The Domain $\UUApp$ has a bounded diameter $\DDUApp > 0$.
 \end{itemize}
\end{assumption}

In particular in the context of non-convex, non-smooth optimization, the
proximal point method (PPM) framework \cite{boydProximalAlgorithms2014} has
proven to be very successful in the unconstrained setting
\cite{davisSubgradientMethodsSharp2018,
 davisStochasticSubgradientMethod2018,
 davisStochasticModelBasedMinimization2019,
 davisProximallyGuidedStochastic2019, davisStochasticSubgradientMethod2020,
 fatkhullinStochasticOptimizationHidden2025} as well as in the constrained
setting \cite{maQuadraticallyRegularizedSubgradient2020,
 boobStochasticFirstorderMethods2023, huangOracleComplexitySingleLoop2023,
 boobLevelConstrainedFirst2025, jiaFirstOrderMethodsNonsmooth2025,
 fatkhullinGlobalSolutionsNonConvex2025}, either implicitly in the analysis
or explicitly for algorithm design. Motivated by the non-smooth literature,
as well as recent advances in proximal point method penalty
\cite{linComplexityInexactProximalpoint2022} and augmented Lagrangian
methods \cite{tran-dinhProximalAlternatingPenalty2019,
 xieComplexityProximalAugmented2021}, we propose the following,
\textbf{inexact proximal point penalty method}:
\begin{align}
 \begin{aligned}
   & \xxAppK \approx \xxAppHatK \define
  \argmin_{\xxApp \in \XXApp}\phiK{}(\xxApp)
  \define F_1(\xxApp)
  + \frac{\rhoHat}{2} \norm{\xxApp - \xxAppK}^2
  + \beta \penGen{F_2(\xxApp)},
 \end{aligned}
 \tag{IPPPM}
 \label{eq:MainPPPMLoop}
\end{align}
where the penalty function $\penGenNot$ depends on the smoothness of the constraint.
In the case of non-smooth
$F_1, F_2$, the exact penalty function $\plus{\innerEmpty{}}$ is a natural
choice. Under the additional assumption of $L$-smooth $F_1, F_2$ one needs
to \enquote{smoothen} the penalty, in order to propagate the smoothness and
thus a quadratic penalty $\pen{\innerEmpty{}}$ is applied. For the
convergence analysis of IPPM, we use the notation $\penGen{\innerEmpty{}}$
for both penalty functions and perform a unified analysis by using that
both penalty functions are convex and monotone, before we compute the
corresponding sample complexities for $\plus{\innerEmpty{}}$ and
$\pen{\innerEmpty{}}$ individually.

Under \Cref{ass:WeaklyConvex}, if $\rhoHat > \rho$, the inner penalty
problem is strongly convex, see Prop.~2.1 in
\citet{davisProximallyGuidedStochastic2019}. The \textbf{roadmap} of the
PPPM analysis is the following:
\begin{itemize}
 \item \emph{Preservation of Weak Convexity and Bounded (Sub-)Gradients.}
       Under \Assref{ass:WeaklyConvex}, we show in
       \Cref{Appendix-sec:PenaltyApproachAnalysis} that the penalty term
       $\penGenNot \circ F_2$ has uniformly bounded (sub-)gradients and is weakly
       convex. Thus, \eqref{eq:MainPPPMLoop} is $\rho_\phi$-weakly convex
       and can be lifted to be strongly convex. The constant $\rho_\phi$ depends
       on the penalty function and the smoothness assumption.
 \item \emph{Solve inner problem efficiently.} We apply an efficient algorithm
       for the inner (non-)smooth strongly convex unconstrained
       optimization problems to obtain convergence guarantees for the
       optimality gap of the penalty function $\penOpt{\penGenNot, \beta}$.
 \item \emph{Translate guarantees.} Under the assumption of
       \emph{strong duality}, we translate the results
       into guarantees on the optimality gap and constraint violation.
       An overview of all of the resulting rates can be found in \Cref{tab:OverviewRatesTable1}.
\end{itemize}

\begin{assumption}[Unconstrained Solution Oracle]
 \label{ass:PPPMinexactOracle}
 We assume to have access to an inner approximate
 solution oracle
 $\Oracle_{\epsin}$ to solve \eqref{eq:MainPPPMLoop} to
 an $\epsin$-accuracy in expectation for
 any $\epsin > 0$, i.e.
 \begin{align*}
  \phiKpen(\xxAppK) - \phiKpen(\xxAppHatK) \leq \epsin
 \end{align*}
 for all iterations $k \in \setm{N}$
 after $\Tinner = \Tinner(\epsin)$ steps.
\end{assumption}
Exemplarily, for the non-smooth case,
we will formulate \Cref{cor:FinalRateNonSmoothExactPenaltyIPPM} and
\Cref{cor:FinalRateNonSmoothQuadraticPenaltyIPPM} in terms of the classical
(stochastic) sub-gradient descent and in the smooth case
\Cref{cor:FinalRateSmoothPenaltyIPPM} using SGD, respectively.

\subsubsection{Analysis of PPPM, Penalty Function Optimality Value Gap}
We start our analysis with the following key lemma, yielding a guarantee on
the optimal value gap of the penalty function, not requiring any constraint
qualification.

\begin{theorem}[PPPM, Penalty Function]
 \label{thm:MainResultIPPPMHiddenConvexPenaltyOptGap}
 Let \eqref{eq:MainProblem} be hidden convex and \Cref{ass:WeaklyConvex} hold.
 Given a target precision $\epsilon > 0$, a lifting parameter $\rhoHat >
  \rho_\phi$, assume $\epsilon \leq \frac{3\rhoHat\DDLamSq}{2\lamAppLipSq}$
 holds. Then after $N\in\NN$ steps, the last iterate of
 \Cref{algo:PenaltyPPM} satisfies
 \begin{align}
  \begin{aligned}
   F_1(\xxAppN) - \Fopt +
   \frac{\beta}{2}\penGen{F_2(\xxAppN)} \leq \epsilon
  \end{aligned}
  \tag{OptGap}
  \label{eq:InexactPenaltyContractionResults}
 \end{align}
 after
 \begin{align*}
  N \geq
  \frac{3 \rhoHat \DDLamSq}{2\lamAppLipSq} \frac{1}{\epsilon} \cdot
  \log\paren*{\frac{3\paren{\Delta_0 + \frac{\beta}{2} \ConstrViolZero}}{\epsilon}}
 \end{align*}
 iterations, where $\Delta_0 \define F_1(\xxAppZero) -
  \Fopt$, $\ConstrViolZero \define \frac{\beta}{2}
  \penGen{F_2(\xxAppZero)}$, an $\alpha \leq
  \frac{2\lamAppLipSq}{3 \rhoHat \DDLamSq}\epsilon
  \in \brac{0,1}$ used in the analysis and a penalty parameter $\beta > 1$.
 With an inner precision of $\epsin \leq \frac{\alpha}{2} \epsilon$, the
 total sample complexity is given by
 \begin{align*}
   & \Ttotal \geq N \cdot
  \Tinner\paren*{\epsin}                                                 \\
   & = \frac{3 \rhoHat \DDLamSq}{2\lamAppLipSq} \frac{1}{\epsilon} \cdot
  \log\paren*{\frac{3\paren{\Delta_0 + \frac{\beta}{2} \ConstrViolZero}}{\epsilon}}
  \cdot \Tinner\paren*{\epsin}.
 \end{align*}
\end{theorem}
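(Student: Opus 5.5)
We track the penalized gap $\Phi(\xxApp) \define F_1(\xxApp) - \Fopt + \beta\penGen{F_2(\xxApp)}$. Since $\frac{\beta}{2}\penGen{\cdot}\le \beta\penGen{\cdot}$, a bound $\Phi(\xxAppN)\le\epsilon$ implies \eqref{eq:InexactPenaltyContractionResults}. The point $\xxAppOpt = c^{-1}(\uu^*)$ is feasible, so $\penGen{F_2(\xxAppOpt)}=0$ and $\Phi(\xxAppOpt)=0$. The idea is to prove a one-step contraction of $\Phi$ along the proximal-point iterates, using the hidden-convex interpolant as a comparison point.

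\textbf{Step 1: one-step inequality.} Fix $k$ and set $\xxApp_\alpha \define c^{-1}((1-\alpha)c(\xxAppK) + \alpha c(\xxAppOpt))$. Apply \eqref{eq:HCContractionInequalityObjective} with $\mu_H\ge0$ dropped: $F_1(\xxApp_\alpha)\le(1-\alpha)F_1(\xxAppK)+\alpha\Fopt$, and likewise $F_2(\xxApp_\alpha)\le(1-\alpha)F_2(\xxAppK)+\alpha F_2(\xxAppOpt)$. Since $\penGenNot$ is convex and nondecreasing with $\penGen{F_2(\xxAppOpt)}=0$,
\begin{align*}
 \penGen{F_2(\xxApp_\alpha)} \le (1-\alpha)\penGen{F_2(\xxAppK)}.
\end{align*}
By \eqref{eq:HCContractionInequalityNorm}, $\norm{\xxApp_\alpha-\xxAppK}\le \frac{\alpha}{\mu_c}\DDUApp$. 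The exact prox minimizer $\xxAppHatK$ minimizes $\phiK{}$, and $\Oracle$ returns $\xxAppKnext$ with $\phiKpen(\xxAppKnext)\le \phiKpen(\xxAppHatK)+\epsin$. Chaining these,
\begin{align*}
 \Phi(\xxAppKnext) \le \phiKpen(\xxAppKnext)-\Fopt \le \phiKpen(\xxApp_\alpha) - \Fopt + \epsin \le (1-\alpha)\Phi(\xxAppK) + \frac{\rhoHat\alpha^2\DDUAppsq}{2\muLipSq} + \epsin .
\end{align*}
The first inequality drops the nonnegative proximal term. Weak convexity and $\rhoHat>\rho_\phi$ only guarantee that $\xxAppHatK$ and the oracle are well defined. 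Hidden convexity does the contraction, so no convexity of $\phiK{}$ in $\xxApp$ is actually used.

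\textbf{Step 2: unrolling.} Choose $\epsin\le\frac{\alpha}{2}\epsilon$ and $\alpha = \frac{2\muLipSq}{3\rhoHat\DDUAppsq}\epsilon$. The assumption on $\epsilon$ ensures $\alpha\in[0,1]$. Then the additive error per step is at most $\alpha\left(\frac{\epsilon}{3}+\frac{\epsilon}{2}\right)$. Unrolling the recursion gives
\begin{align*}
 \Phi(\xxAppN)\le(1-\alpha)^N\Phi(\xxAppZero) + \frac{\epsilon}{3}+\frac{\epsilon}{2}.
\end{align*}
To land exactly at $\epsilon$ I would slightly rebalance the constants, taking $\epsin\le\frac{\alpha}{3}\epsilon$ or absorbing the slack into the $\frac{\beta}{2}$ factor of the statement. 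With the paper's choice $\epsin\le\frac{\alpha}{2}\epsilon$ this is the only point needing care. Using $(1-\alpha)^N\le e^{-\alpha N}$ and $\Phi(\xxAppZero)\le \Delta_0+\frac{\beta}{2}\ConstrViolZero$ up to the constant in the definition of $\ConstrViolZero$, it suffices to take $N\ge \frac1\alpha\log\frac{3(\Delta_0+\frac{\beta}{2}\ConstrViolZero)}{\epsilon}$. This is the stated bound, and the total complexity is this $N$ multiplied by $\Tinner(\epsin)$.

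\textbf{Main obstacle.} The delicate step is the constant bookkeeping in Step 2: the $\frac{\beta}{2}$ versus $\beta$ factor and the split of $\epsilon$ between the quadratic $\alpha^2$ term and $\epsin/\alpha$ must close to exactly $\epsilon$. I would handle it by first proving $\Phi(\xxAppN)\le\epsilon$ with an $\epsilon/3$ split and then weakening to the stated form.
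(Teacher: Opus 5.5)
Your proposal is correct and follows the paper's proof step for step. It uses the same comparison point $\theta^{(k)}_\alpha = c^{-1}\bigl((1-\alpha)c(\theta^{(k)}) + \alpha c(\theta^*)\bigr)$ and the same chain: drop the proximal term, use the inner-oracle accuracy and the optimality of the exact proximal point, apply hidden convexity of $F_1,F_2$ together with monotonicity and convexity of $\mathcal{L}$ and $\mathcal{L}(F_2(\theta^*))=0$, and bound the step by $\frac{\alpha}{\mu_c}\mathcal{D}_\Lambda$, followed by the same geometric unrolling. Your choice to track the $\beta$-weighted gap, matching $\phi^{(k)}_{\mathrm{pen}}$ as defined in the algorithm, and weaken to $\frac{\beta}{2}$ only at the end is more internally consistent than the paper's write-up.

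The bookkeeping issue you flag is real: with the stated choices the three terms give only $\frac{\epsilon}{3}+\frac{\epsilon}{3}+\frac{\epsilon}{2}=\frac{7\epsilon}{6}$, which the paper's proof passes over. Your fix $\epsilon_{\mathrm{in}}\le\frac{\alpha}{3}\epsilon$ closes the gap. Absorbing the slack into the $\frac{\beta}{2}$ factor would not work, because the penalty term at $\theta^{(N)}$ may vanish (e.g.\ when $\theta^{(N)}$ is feasible), and then the full $\frac{7\epsilon}{6}$ falls on $F_1(\theta^{(N)})-F^*$.
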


\begin{proof}
 We define for an iteration $k \in \setN$ the element
 $\xxAppK_\alpha\define c^{-1}\paren{(1-\alpha)c(\xxAppK) + \alpha c(\xxAppOpt)}$
 and we obtain the following inequalities:
 \begingroup
 \allowdisplaybreaks
 \begin{align*}
  F_1(\xxAppKnext)    + \frac{\beta}{2} \penGen{F_2(\xxAppKnext)}
  \overset{(i)}  & {\leq} \phiKpen(\xxAppKnext)
  \overset{(ii)}   {\leq} \phiKpen(\xxAppHatKnext) + \epsin
  \overset{(iii)}  {\leq} \phiKpen(\xxAppK_\alpha) + \epsin                                    \\
                 & = F_1(\xxAppK_\alpha) + \frac{\rhoHat}{2} \norm{\xxAppK_\alpha - \xxAppK}^2
  + \frac{\beta}{2} \penGen{F_2(\xxAppK_\alpha)} + \epsin                                      \\
  \overset{(iv)} & {\leq} \paren{1-\alpha} F_1(\xxAppK) + \alpha \Fopt
  + \frac{\rhoHat}{2} \frac{\alpha^2}{\mu_c^2} \norm{c(\xxAppK) - c(\xxAppOpt)}^2              \\
                 & \qquad + \frac{\beta}{2} \penGen{\paren{1-\alpha}F_2(\xxAppK)
   + \alpha F_2(\xxAppOpt)}
  + \epsin                                                                                     \\
  \overset{(v)}  & {\leq} \paren{1-\alpha} F_1(\xxAppK) + \alpha \Fopt
  + \alpha^2\frac{\rhoHat \DDUAppsq}{2\mu_c^2}
  + \frac{\beta}{2} \paren{1-\alpha}\penGen{F_2(\xxAppK)}
  + \epsin                                                                                     \\
                 & =\paren{1-\alpha} \paren*{F_1(\xxAppK) +
   \frac{\beta}{2}\penGen{F_2(\xxAppK)}}
  + \alpha \Fopt
  + \alpha^2\frac{\rhoHat \DDUAppsq}{2\mu_c^2} + \epsin,
 \end{align*}
 \endgroup
 where we used in $(i)$ that the regularization is
 always non-zero, in $(ii)$ the $\epsin$-approximation property of the oracle
 $\Oracle_\epsin$, in $(iii)$ the optimality of $\xxAppHatKnext$,
 in $(iv)$ hidden convexity and monotonicity of $\penGenNot$, and in $(v)$
 that the domain is bounded by our assumption,
 $F_2(\xxAppOpt) \leq 0$ as well as the fact that $\penGen{\innerEmpty{}}$ is convex.

 Subtracting $\Fopt$ from both sides yields
 \begin{align}
  \begin{aligned}
    & F_1(\xxAppKnext) - \Fopt + \frac{\beta}{2} \penGen{F_2(\xxAppKnext)} \\
    & \qquad \leq \paren{1-\alpha} \cdot
   F_1(\xxAppK) - \Fopt + \frac{\beta}{2}\penGen{F_2(\xxAppK)}
   + \alpha^2\frac{\rhoHat \DDUAppsq}{2\mu_c^2} + \epsin.
  \end{aligned}
  \tag{IPPPM-Rec}
  \label{eq:RecursionEquationIPPPMHiddenConvex}
 \end{align}
 We unroll for $k=0, \ldots, N$
 and obtain after $N\in \NN$ iterations the following inequalities
 for the last iterate
 \begin{align*}
   & F_1(\xxAppN) - \Fopt + \frac{\beta}{2} \penGen{F_2(\xxAppN)} \\
   & \qquad\leq \paren{1-\alpha}^N \paren*{
   \Delta_0 + \frac{\beta}{2}\ConstrViolZero
  } + \alpha \frac{\rhoHat \DDUAppsq}{2\mu_c^2} + \frac{\epsin}{\alpha},
 \end{align*}
 with $\Delta_0 \define F_1(\xxAppZero) - \Fopt$,
 and $\ConstrViolZero \define \penGen{F_2(\xxAppZero)}$,
 since the partial sum of the geometric series is bounded by $\nfr{1}{\alpha}$.
 The claim follows by the choice of $\alpha, \epsin$ and $N$.
\end{proof}

\begin{remark}[Role of the Penalty Parameter]
 It is important to emphasize that, although the parameter $\beta$ appears only
 logarithmically in the number of outer iterations $N$, it plays a
 crucial role in shaping the properties of the penalty function.
 In particular, $\beta$ affects the norm of the (sub-)gradients,
 the associated smoothness constant, and consequently the weak convexity parameter.
 As a result, it directly influences the computational complexity of the inner
 solver employed for the subproblem.
\end{remark}

\begin{theorem}[PPPM, Exact Penalty]
 \label{thm:MainResultPPPMExactPenaltyConvergenceGuarantee}
 We assume the assumptions of
 \Cref{thm:MainResultIPPPMHiddenConvexPenaltyOptGap} and
 strong duality hold for \eqref{eq:MainProblem} with a Lagrangian
 multiplier $\nuOpt \geq 0$. Then, running \Cref{algo:PenaltyPPM}
 with the exact penalty $\penGen{\innerEmpty{}} = \plus{\innerEmpty{}}$
 and any $\beta \geq 2\cdot\nuOpt + 2$ yields following last iterate
 guarantee on the optimality gap and constraint violation
 \begin{align}
  \abs*{F_1(\xxAppN) - \Fopt} \leq \epsilon \quad
  \text{and } \; \plus{F_2(\xxAppN)} \leq \epsilon
  \tag{IPPPM-Ex}
 \end{align}
 after
 \begin{align*}
  N \geq
  \frac{3 \rhoHat \DDUAppsq}{2\mu_c^2} \frac{\beta}{\epsilon} \cdot
  \log\paren*{\frac{3\beta\brac{\Delta_0 + \frac{\beta}{2} \ConstrViolZero}}{\epsilon}}
 \end{align*}
 iterations, where $\Delta_0 \define F_1(\xxAppZero) - \Fopt$,
 and $\ConstrViolZero \define \frac{\beta}{2} \plus{F_2(\xxAppZero)}$.
 With an inner precision $\epsin \leq \frac{\alpha}{2\beta} \epsilon$,
 where $\alpha \leq \frac{2\mu_c^2}{3 \rhoHat \DDUAppsq} \epsilon \in \brac{0,1}$
 as in \Cref{thm:MainResultIPPPMHiddenConvexPenaltyOptGap},
 the total rate is given by
 \begin{align*}
  \Ttotal
   & \geq N \cdot
  \Tinner\paren*{\epsin}
  = \frac{3 \rhoHat \DDUAppsq}{2\mu_c^2} \frac{\beta}{\epsilon} \cdot
  \log\paren*{\frac{3\beta\brac{\Delta_0 + \frac{\beta}{2} \ConstrViolZero}}{\epsilon}}
  \cdot \Tinner\paren*{\epsin} \\
   & \geq
  \frac{3 \rhoHat \DDUAppsq}{2\mu_c^2} \frac{\beta}{\epsilon} \cdot
  \log\paren*{\frac{3\beta\brac{\Delta_0 + \frac{\beta}{2} \ConstrViolZero}}{\epsilon}}
  \cdot \Tinner\paren*{\frac{\alpha}{2\beta} \epsilon}
 \end{align*}
\end{theorem}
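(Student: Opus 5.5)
We run \Cref{thm:MainResultIPPPMHiddenConvexPenaltyOptGap} with a rescaled accuracy and then use strong duality to split the penalized gap into an objective gap and a constraint violation. Set $\epsTilde \define \epsilon/\beta$ and invoke \Cref{thm:MainResultIPPPMHiddenConvexPenaltyOptGap} with target $\epsTilde$ and $\penGenNot = \plus{\innerEmpty{}}$. Replacing $\epsilon$ by $\epsilon/\beta$ in its iteration count and in $\alpha,\epsin$ yields exactly the stated $N$ (factor $\beta/\epsilon$, logarithm of $3\beta[\cdot]/\epsilon$) and the inner precision $\epsin \le \frac{\alpha}{2\beta}\epsilon$. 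The output is the last-iterate bound
\begin{align*}
 F_1(\xxAppN) - \Fopt + \frac{\beta}{2}\plus{F_2(\xxAppN)} \leq \frac{\epsilon}{\beta}.
\end{align*}
The remaining work is the translation.

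\textbf{Lower bound from strong duality.} Because $\nuOpt$ is an optimal multiplier, we have for every $\xxApp \in \XXApp$
\begin{align*}
 \Fopt = \inf_{\xx}\{F_1(\xx) + \nuOpt F_2(\xx)\} \leq F_1(\xxApp) + \nuOpt F_2(\xxApp) \leq F_1(\xxApp) + \nuOpt \plus{F_2(\xxApp)}.
\end{align*}
Here $F_1$ is the minimized objective, matching the sign convention of this appendix. It follows that $F_1(\xxAppN) - \Fopt \geq -\nuOpt \plus{F_2(\xxAppN)}$.

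\textbf{Constraint violation.} Insert this lower bound into the penalized bound to get
\begin{align*}
 \paren*{\tfrac{\beta}{2} - \nuOpt}\plus{F_2(\xxAppN)} \le \frac{\epsilon}{\beta}.
\end{align*}
Since $\beta \geq 2\nuOpt + 2$, we have $\frac{\beta}{2}-\nuOpt \geq 1$, and hence $\plus{F_2(\xxAppN)} \leq \epsilon/\beta \leq \epsilon$ because $\beta > 1$.

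\textbf{Optimality gap.} For the upper bound, drop the nonnegative penalty term to get $F_1(\xxAppN)-\Fopt \le \epsilon/\beta \le \epsilon$. For the lower bound, $F_1(\xxAppN)-\Fopt \ge -\nuOpt\epsilon/\beta \ge -\epsilon$, using $\nuOpt/\beta \le 1/2$. Together these give $\abs{F_1(\xxAppN)-\Fopt}\le\epsilon$.

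\textbf{Total complexity.} The rate $\Ttotal = N\cdot\Tinner(\epsin)$ is read off directly from \Cref{thm:MainResultIPPPMHiddenConvexPenaltyOptGap}. The main obstacle is bookkeeping. We must check that the rescaled accuracy $\epsTilde = \epsilon/\beta$ still satisfies that theorem's admissibility condition ($\epsTilde \le 3\rhoHat\DDUAppsq/(2\mu_c^2)$, which holds whenever $\epsilon$ does, since $\beta>1$). We must also confirm that its factor $\frac{\beta}{2}$ on the penalty, rather than $\beta$, is what forces the threshold $\beta\ge 2\nuOpt+2$. If one prefers not to rescale, running with target $\epsilon$ still gives the violation bound $\epsilon$ via the same argument, since $(\frac{\beta}{2}-\nuOpt)\geq 1$. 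The stated $\beta/\epsilon$ rate reflects the rescaled choice, which also gives the tighter violation $\epsilon/\beta$.
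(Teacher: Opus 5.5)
Your proposal is correct and follows essentially the same route as the paper: rescale the target to $\epsilon/\beta$, invoke \Cref{thm:MainResultIPPPMHiddenConvexPenaltyOptGap}, and translate via the strong-duality inequality $F_1(\xxAppN)-\Fopt \geq -\nuOpt\plus{F_2(\xxAppN)}$, which the paper cites as \Cref{lemma:AppendixTranslatePenaltyGuaranteeToFuncGapConstrViol} and you derive inline. One caveat on your closing aside: the rescaling does more than sharpen the violation bound, because without it you would only get $F_1(\xxAppN)-\Fopt \geq -\nuOpt\epsilon/(\frac{\beta}{2}-\nuOpt)$, which equals $-\nuOpt\epsilon$ at $\beta = 2\nuOpt+2$, so it is also what secures $\abs{F_1(\xxAppN)-\Fopt}\leq\epsilon$ when $\nuOpt>1$.
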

\begin{proof}
 We define $\epsTilde \define \frac{\epsilon}{\beta} < \epsilon$.
 By invoking \Cref{thm:MainResultIPPPMHiddenConvexPenaltyOptGap}
 with $\epsTilde$, we obtain $\penOpt{1, \beta}(\xxAppN) \leq \epsTilde$ after
 \begin{align*}
  N \geq
  \frac{3 \rhoHat \DDUAppsq}{2\mu_c^2} \frac{\beta}{\epsilon} \cdot
  \log\paren*{\frac{3\beta\brac{\Delta_0 + \frac{\beta}{2} \ConstrViolZero}}{\epsilon}}.
 \end{align*}
 Thus, the inequality
 \begin{align*}
  F_1(\xxAppN) - \Fopt \leq \penOpt{1, \beta}(\xxAppN) \leq \epsTilde \leq \epsilon
 \end{align*}
 follows trivially and applying
 \Cref{cor:AppendixConstraintViolationExactPenaltyTranslation} yields
 \begin{align*}
  \plus{F_2(\xxAppN)} \leq \frac{\penOpt{1, \beta}(\xxAppN)}{\frac{\beta}{2} - \nuOpt}
  \leq \frac{\epsTilde}{\frac{\beta}{2} - \nuOpt} \leq \epsTilde \leq \epsilon.
 \end{align*}
 To obtain the guarantee on the function value gap in absolute value, we derive
 \begin{align*}
   & - \frac{\beta}{2} \plus{F_2(\xxAppN)} \leq F_1(\xxAppN) - \Fopt                           \\
  \Rightarrow
   & -\left(F_1(\xxAppN) - \Fopt\right) \leq \frac{\beta}{2} \plus{F_2(\xxAppN)} \leq \epsilon
 \end{align*}
 which concludes the proof.
\end{proof}

\begin{theorem}[PPPM, Quadratic Penalty]
 \label{thm:MainResultPPPMQuadraticPenaltyConvergenceGuarantee}
 We assume the assumptions of
 \Cref{thm:MainResultIPPPMHiddenConvexPenaltyOptGap} and
 strong duality hold for \eqref{eq:MainProblem} with a Lagrangian
 multiplier $\nuOpt \geq 0$. Then, running \Cref{algo:PenaltyPPM}
 with the quadratic penalty $\penGen{\innerEmpty{}} = \pen{\innerEmpty{}}$
 and $\beta = \beta(\epsilon) \geq \frac{(\nuOpt + 1)(\nuOpt + \sqrt{\nuOpt^2 + 2})}{\epsilon}$
 yields following guarantee on the optimality gap and constraint violation
 \begin{align}
  \abs{F_1(\xxAppN) - \Fopt} \leq \epsilon \quad \text{and }
  \; \plus{F_2(\xxAppN)} \leq \epsilon
  \tag{IPPPM-Qu}
  \label{eq:HCHCInexactPenaltyPPMcontractionResult}
 \end{align}
 after
 \begin{align*}
  N \geq
  \frac{3 \rhoHat \DDUAppsq}{2\mu_c^2} \beta(\epsilon) \cdot
  \log\paren*{3\beta(\epsilon)\brac{\Delta_0 + \frac{\beta(\epsilon)}{2} \ConstrViolZero}}
 \end{align*}
 iterations, where $\Delta_0 \define F_1(\xxAppZero) - \Fopt$,
 and $\ConstrViolZero \define \frac{\beta}{2} \pen{F_2(\xxAppZero)}$.
 With an inner precision $\epsin \leq \frac{\alpha}{2\beta(\epsilon)}$,
 where $\alpha \leq \frac{2\mu_c^2}{3 \rhoHat \DDUAppsq} \epsilon \in \brac{0,1}$
 as in \Cref{thm:MainResultIPPPMHiddenConvexPenaltyOptGap},
 the total rate is in general given by
 \begin{align*}
   & \Ttotal \geq N \cdot
  \Tinner\paren*{\epsin}                                          \\
   & = \frac{3 \rhoHat \DDUAppsq}{2\mu_c^2} \beta(\epsilon) \cdot
  \log\paren*{3\beta(\epsilon)\brac{\Delta_0 + \frac{\beta(\epsilon)}{2} \ConstrViolZero}}
  \cdot \Tinner\paren*{\epsin}.
 \end{align*}
\end{theorem}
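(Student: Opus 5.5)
The plan mirrors the proof of \Cref{thm:MainResultPPExactPenaltyConvergenceGuarantee}, with one difference: the quadratic penalty does not give an exact-penalty bound, so the translation to constraint violation goes through a square-root inequality coming from strong duality.

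\textbf{Step 1 (penalty gap).} Set $\epsTilde \define \nfr{1}{\beta(\epsilon)}$. Since $\beta(\epsilon) \geq \nfr{1}{\epsilon}$ (because $(\nuOpt+1)(\nuOpt+\sqrt{\nuOpt^2+2}) \geq \sqrt 2 > 1$), we have $\epsTilde \leq \epsilon$. Invoke \Cref{thm:MainResultIPPPMHiddenConvexPenaltyOptGap} with target precision $\epsTilde$, the corresponding $\alpha \propto \epsTilde$, and inner accuracy $\epsin \le \alpha\epsTilde/2$. This yields
\[
\penOpt{2,\beta}(\xxAppN) \define F_1(\xxAppN)-\Fopt+\tfrac{\beta}{2}\pen{F_2(\xxAppN)} \le \epsTilde
\]
after the stated number $N$ of iterations. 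Replacing $\nfr{1}{\epsTilde}$ by $\beta(\epsilon)$ in the iteration bound of that theorem reproduces the claimed expressions for $N$ and $\Ttotal$. The upper bound $F_1(\xxAppN)-\Fopt \le \epsTilde \le \epsilon$ is then immediate, since the penalty term is nonnegative.

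\textbf{Step 2 (lower bound via strong duality).} By strong duality with multiplier $\nuOpt$, for every $\xxApp$,
\[
F_1(\xxApp)+\nuOpt F_2(\xxApp) \ge \Fopt ,
\]
hence $F_1(\xxApp)-\Fopt \ge -\nuOpt\plus{F_2(\xxApp)}$. This is the content of \Cref{lemma:AppendixTranslatePenaltyGuaranteeToFuncGapConstrViol}. Write $v\define \plus{F_2(\xxAppN)}$ and combine this with Step 1:
\[
\tfrac{\beta}{2}v^2-\nuOpt v \le \epsTilde .
\]
This is the one genuinely new step and the main thing to get right. Solving the quadratic inequality in $v\ge0$ gives
\[
v \le \frac{\nuOpt+\sqrt{\nuOpt^2+2\beta\epsTilde}}{\beta}=\frac{\nuOpt+\sqrt{\nuOpt^2+2}}{\beta(\epsilon)}=\frac{\epsilon}{\nuOpt+1}\le\epsilon ,
\]
which is the statement of \Cref{cor:AppendixConstraintViolationQuadraticPenaltyTranslation}. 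One must also check that the $\tfrac{\beta}{2}$ normalization of the penalty in \eqref{eq:InexactPenaltyContractionResults} matches the one used there; if it does not, only the constant inside $\beta(\epsilon)$ changes.

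\textbf{Step 3 (two-sided gap).} Returning to Step 2,
\[
-(F_1(\xxAppN)-\Fopt)\le \nuOpt v \le \frac{\nuOpt\epsilon}{\nuOpt+1}\le\epsilon ,
\]
and together with Step 1 this gives $\abs{F_1(\xxAppN)-\Fopt}\le\epsilon$. The factor $(\nuOpt+1)$ in $\beta(\epsilon)$ is included precisely so that this lower bound also falls below $\epsilon$.

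The total rate $\Ttotal=N\cdot\Tinner(\epsin)$ then follows by substitution. I expect no obstacle beyond the bookkeeping in Step 2, namely tracking the constants of the quadratic inequality correctly.
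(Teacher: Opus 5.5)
Your proposal is correct and matches the paper's proof step for step. You set $\tilde{\epsilon} = 1/\beta(\epsilon)$, invoke \Cref{thm:MainResultIPPPMHiddenConvexPenaltyOptGap} to bound the penalty gap by $\tilde{\epsilon}$, and convert this to $[F_2(\theta^{(N)})]_+ \le \epsilon/(\nu^*+1)$ via the quadratic-penalty translation, which uses the same $\tfrac{\beta}{2}$ normalization, so no constant changes. You then obtain the lower bound on $F_1(\theta^{(N)}) - F^*$ from $-\nu^*[F_2]_+ \le F_1 - F^*$, exactly as the paper does. One cosmetic remark: $\beta(\epsilon)$ is only bounded below by the threshold, so the equality $(\nu^* + \sqrt{(\nu^*)^2+2})/\beta(\epsilon) = \epsilon/(\nu^*+1)$ should read $\le$, which changes nothing.
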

\begin{proof}
 We define
 $\epsTilde \define \frac{1}{\beta(\epsilon)} \leq \epsilon$.
 By invoking
 \Cref{thm:MainResultIPPPMHiddenConvexPenaltyOptGap} with
 $\epsTilde$ we obtain $\penOpt{2, \beta}(\xxAppN) \leq \epsTilde \leq \epsilon$,
 after
 \begin{align*}
  N \geq \frac{3 \rhoHat \DDUAppsq}{2\mu_c^2} \beta(\epsilon) \cdot
  \log\paren*{3\beta(\epsilon)\brac{\Delta_0
    + \frac{\beta(\epsilon)}{2} \ConstrViolZero}}
 \end{align*}
 iterations. Trivially, $F_1(\xxAppN) - \Fopt \leq \epsilon$ as before.
 Next, we apply
 \Cref{cor:AppendixConstraintViolationQuadraticPenaltyTranslation} to obtain
 \begin{align*}
  \plus{F_2(\xxAppN)} \leq \frac{\nuOpt + \sqrt{\nuOpt^2
    + 2\beta(\epsilon) \epsTilde}}{\beta(\epsilon)}
  = \frac{\nuOpt + \sqrt{\nuOpt^2 + 2}}{\beta(\epsilon)}
  = \frac{\epsilon}{\nuOpt + 1} \leq \epsilon,
 \end{align*}
 by the choice of $\epsTilde$ and $\beta(\epsilon)$.
 To obtain a guarantee on the absolute value
 of the function optimality value gap, we apply
 \Cref{lemma:AppendixTranslatePenaltyGuaranteeToFuncGapConstrViol}
 to obtain
 \begin{align*}
  -\left(F_1(\xxAppN) - \Fopt\right) \leq \nuOpt \plus{F_2(\xxAppN)}
  \leq \nuOpt \frac{\epsilon}{\nuOpt + 1} \leq \epsilon,
 \end{align*}
 concluding the claim.
\end{proof}

\begin{remark}[Dependency of $\beta$ on $\nuOpt$]
 To make the comparison between the constants in
 \Cref{cor:AppendixConstraintViolationExactPenaltyTranslation} and
 \Cref{cor:AppendixConstraintViolationQuadraticPenaltyTranslation} easier, we note that
 \begin{align*}
  2\nuOpt + 2 \geq \nuOpt + \sqrt{\nuOpt + 2},
 \end{align*}
 simplifying the term in $\beta(\epsilon)$ in
 \Cref{cor:AppendixConstraintViolationQuadraticPenaltyTranslation}
 to
 \begin{align*}
  (2\nuOpt + 2)(\nuOpt + 1) = 2(\nuOpt + 1)^2
  > (\nuOpt + 1) \left(\nuOpt + \sqrt{\nuOpt^2 + 2}\right),
 \end{align*}
 corresponding to the term in
 \Cref{cor:AppendixConstraintViolationExactPenaltyTranslation} with an additional
 factor of $\nuOpt + 1$.
\end{remark}

Under the assumption of strong duality, we translate the results of to
convergence guarantees in terms of the function optimality gap and the
constraint violation.

\subsubsection{Sample Complexity Analysis}

\begin{corollary}[IPPPM+SM, Non-Smooth, Exact Penalty]
 \label{cor:FinalRateNonSmoothExactPenaltyIPPM}
 Under the assumptions of
 \Cref{thm:MainResultPPPMExactPenaltyConvergenceGuarantee},
 running \Cref{algo:PenaltyPPM} with the \textbf{exact penalty}
 $\penGen{\innerEmpty} \define \plus{\innerEmpty}$,
 $\rhoHat \define 2 \rho_\phi$ and the
 \textbf{sub-gradient method (SM)} as an inner solver
 for
 \begin{align*}
  N  \geq \frac{3 \beta(\nuOpt) \rho \DDUAppsq}{\mu_c^2} \frac{\beta(\nuOpt)}{\epsilon}
  \cdot \log \paren*{\frac{3 \beta(\nuOpt)
    \brac{\Delta_0 + \frac{\beta(\nuOpt)}{2} \ConstrViolZero}}{\epsilon}}
 \end{align*}
 outer iterations yields the guarantee
 \begin{align*}
  \abs*{F_1(\xxAppN) - \Fopt} \leq \epsilon\quad \text{and} \quad
  \plus{F_2(\xxAppN)} \leq \epsilon
 \end{align*}
 as a special case of \eqref{eq:HCHCInexactPenaltyPPMcontractionResult}.
 SM has a complexity of
 \begin{align*}
  \TinnerUp{SM}(\epsin) \geq \frac{12 G_\phi^2 \beta(\nuOpt) \DDUAppsq}{\mu_c^2} \epsilon^{-2}
  = \OO{\epsilon^{-2}}
 \end{align*}
 to achieve the desired inner accuracy, where
 $\Delta_0 \define F_1(\xxAppZero) - \Fopt$, $\ConstrViolZero \define \plus{F_2(\xxAppZero)}$,
 $\beta(\nuOpt) \define 2\nuOpt + 2$,
 $\rho_\phi(\beta) \define \frac{\beta(\nuOpt)}{2}\rho$,
 and $G_\phi(\beta) \define G + \beta(\nuOpt) \rho \DDXApp + \beta(\nuOpt) G$.

 The total number of sub-gradient and function evaluations is Consequently
 given by:
 \begin{align*}
  \TtotalUp{\nonSmoothLabel} = N \cdot \TinnerUp{SM}(\epsin) = \OOTilde{\epsilon^{-3}}.
 \end{align*}
\end{corollary}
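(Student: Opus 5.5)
The plan is to obtain the result as a direct instantiation of \Cref{thm:MainResultPPPMExactPenaltyConvergenceGuarantee}, with the inner oracle of \Cref{ass:PPPMinexactOracle} realized by the sub-gradient method. The work then splits into three parts. First, the regularized subproblem must be shown to be strongly convex with bounded subgradients. Second, the number of sub-gradient steps needed to reach the inner accuracy $\epsin \le \frac{\alpha}{2\beta}\epsilon$ must be counted. Third, this inner count is multiplied by the outer iteration count $N$.

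\textbf{Step 1 (regularity of the subproblem).} Since $F_2$ is $\rho$-weakly convex and $\plus{\innerEmpty}$ is convex, monotone and $1$-Lipschitz, I will argue as in \Cref{Appendix-sec:PenaltyApproachAnalysis}. The composition $\plus{F_2}$ is again weakly convex: locally, $\max\{0,F_2\}$ is a maximum of two $\rho$-weakly convex functions. Hence $F_1 + \frac{\beta}{2}\plus{F_2}$ is weakly convex with a modulus of order $\beta\rho$, which is what $\rho_\phi = \frac{\beta}{2}\rho$ in the statement records. With $\rhoHat = 2\rho_\phi$, the function $\phiKpen$ is $\rho_\phi$-strongly convex on $\XXApp$.

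Its subgradients are bounded as follows. The term $F_1$ contributes at most $G$. The term $\rhoHat(\xxApp - \xxAppK)$ contributes at most $\rhoHat\DDXApp$, which is of order $\beta\rho\DDXApp$. The penalty term contributes at most $\beta G$. Together this gives the constant $G_\phi$. I take $\beta = \beta(\nuOpt) = 2\nuOpt + 2$, which is the smallest admissible choice in \Cref{thm:MainResultPPPMExactPenaltyConvergenceGuarantee}. With this choice, all of these constants are independent of $\epsilon$.

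\textbf{Step 2 (inner complexity).} I will use the classical rate of the projected sub-gradient method for $\mu$-strongly convex, $G_\phi$-Lipschitz problems on a bounded domain: with step sizes $\frac{2}{\mu(t+1)}$ and weighted averaging, the suboptimality after $T$ steps is at most $\frac{2G_\phi^2}{\mu (T+1)}$. Requiring this to be at most $\epsin = \frac{\alpha}{2\beta}\epsilon$ with $\alpha = \frac{2\mu_c^2}{3\rhoHat\DDUAppsq}\epsilon$ gives $\Tinner = \OO{G_\phi^2 \beta \rhoHat \DDUAppsq /(\mu\mu_c^2\epsilon^2)}$. Since $\rhoHat/\mu = 2$, this reproduces the stated $\frac{12 G_\phi^2\beta\DDUAppsq}{\mu_c^2}\epsilon^{-2}$ up to how constants are grouped. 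In particular $\Tinner = \OO{\epsilon^{-2}}$.

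\textbf{Step 3 (outer count and total).} Substituting $\rhoHat = 2\rho_\phi = \beta\rho$ into the outer bound of \Cref{thm:MainResultPPPMExactPenaltyConvergenceGuarantee} gives the displayed $N = \OOTilde{\epsilon^{-1}}$, because $\beta$ is constant. The guarantees $\abs{F_1(\xxAppN)-\Fopt}\le\epsilon$ and $\plus{F_2(\xxAppN)}\le\epsilon$ then follow from that theorem. Each sub-gradient step costs one (sub)gradient and one function evaluation of $F_2$, the latter needed to evaluate the penalty's active set. The total count is therefore $N\cdot\Tinner = \OOTilde{\epsilon^{-3}}$.

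I expect the main obstacle to be Step 1, specifically verifying that $\plus{F_2}$ is weakly convex with a controlled modulus and admits bounded subgradients on the whole of $\XXApp$. Subtleties in the subdifferential calculus at points where $F_2 = 0$ have to be handled there. I would try a direct argument first: write $\plus{F_2} = \max\{0,F_2\}$ and use that a pointwise maximum of $\rho$-weakly convex functions is $\rho$-weakly convex, so that $\plus{F_2} + \frac{\rho}{2}\norm{\cdot}^2$ is convex. Its subgradients are then convex combinations of $0$ and elements of $\partial F_2$, which gives the bound $G$ directly.
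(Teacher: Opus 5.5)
Your proposal follows the paper's proof essentially step for step: regularity of the exact-penalty subproblem, lifting with $\rhoHat = 2\rho_\phi$, the outer count from \Cref{thm:MainResultPPPMExactPenaltyConvergenceGuarantee}, the $\OO{1/(\mu T)}$ strongly convex sub-gradient rate at accuracy $\epsin = \frac{\alpha}{2\beta}\epsilon$, and multiplying the two counts; the only difference is that the paper invokes \Cref{lemma:ExactPenaltyPropertiesNonSmoothCaseHelperPPPM} for the regularity, whereas you use the equivalent observation that $\plus{F_2} + \frac{\rho}{2}\norm{\cdot}^2 = \max\{\frac{\rho}{2}\norm{\cdot}^2, F_2 + \frac{\rho}{2}\norm{\cdot}^2\}$ is convex. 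One constant-level caveat, shared with the paper: $F_1$ is itself $\rho$-weakly convex, so $F_1 + \frac{\beta}{2}\plus{F_2}$ has modulus $(1+\frac{\beta}{2})\rho$ rather than $\frac{\beta}{2}\rho$, and with the literal $\rhoHat = \beta\rho$ the subproblem is only guaranteed to be $\nuOpt\rho$-strongly convex, which degenerates when $\nuOpt = 0$; taking $\rho_\phi = (1+\frac{\beta}{2})\rho$ fixes this and leaves the $\OOTilde{\epsilon^{-3}}$ conclusion unchanged.
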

\begin{proof}
 We invoke \Cref{lemma:ExactPenaltyPropertiesNonSmoothCaseHelperPPPM}
 and obtain that each problem \eqref{eq:MainPPPMLoop} is
 $\rho_\phi$-weakly convex, with $\rho_\phi = \frac{1}{2}\beta \rho$ and has
 uniformly bounded gradients by $G_\phi \define G + \rho_\phi \DDXApp + \beta G$.
 Thus, we can lift with $\rhoHat(\beta) \define 2 \rho_\phi = \beta(\nuOpt) \rho$ and obtain
 $(\beta(\nuOpt)\cdot\rho)$-strongly convex inner problems.
 Plugging in the constants into \Cref{thm:MainResultPPPMExactPenaltyConvergenceGuarantee}
 yields
 \begin{align*}
  N & \geq \frac{3 \rhoHat(\beta) \cdot \DDUAppsq}{2\mu_c^2} \frac{\beta(\nuOpt)}{\epsilon} \cdot
  \log\paren*{\frac{3\beta(\nuOpt) \brac{\Delta_0
  + \frac{\beta(\nuOpt)}{2} \ConstrViolZero}}{\epsilon}}                                          \\
    & = \frac{3 \beta(\nuOpt) \rho \DDUAppsq}{\mu_c^2} \frac{\beta(\nuOpt)}{\epsilon}
  \cdot \log \paren*{\frac{3 \beta(\nuOpt)
    \brac{\Delta_0 + \frac{\beta(\nuOpt)}{2} \ConstrViolZero}}{\epsilon}}
 \end{align*}
 for the number of outer iterations. For the inner solver, we use the sub-gradient method (SM)
 for simplicity with a desired accuracy of
 \begin{align*}
  \epsin & \leq \frac{\alpha}{2\beta(\nuOpt)} \epsilon
  \leq \frac{2\mu_c^2}{3\rhoHat(\beta) \cdot \DDUAppsq}\epsilon
  \cdot \frac{1}{2\beta(\nuOpt)} \epsilon
  = \frac{\mu_c^2}{3 \beta(\nuOpt)^2 \rho \DDUAppsq} \epsilon^2.
 \end{align*}
 Invoking Corollary 3.3 in \cite{lanFirstorderStochasticOptimization2020} yields
 \begin{align*}
  \TinnerUp{SM}(\epsin) & \geq \frac{4 G_\phi(\beta)^2}{\beta(\nuOpt)\rho \cdot \epsin}
  = \frac{4 G_\phi(\beta)^2}{\beta(\nuOpt)\rho}
  \cdot \frac{3 \beta(\nuOpt)^2 \rho \DDUAppsq}{\mu_c^2}\epsilon^{-2}                        \\
                        & = \frac{12 \beta(\nuOpt) G_\phi(\beta)^2 \cdot \DDUAppsq}{\mu_c^2}
  \epsilon^{-2}.
 \end{align*}
 Multiplying both rates concludes the claim.
\end{proof}

\begin{corollary}[IPPPM+GD, Smooth, Quadratic Penalty]
 \label{cor:FinalRateSmoothPenaltyIPPM}
 Under the assumptions of \Cref{thm:MainResultIPPPMHiddenConvexPenaltyOptGap}, and assuming
 that $F_1, F_2$ are $L$-smooth, running \Cref{algo:PenaltyPPM}
 with the \textbf{quadratic penalty}
 $\penGen{\innerEmpty} \define \pen{\innerEmpty}$,
 $\rhoHat(\epsilon) \define 2 \rho_\phi(\epsilon)$,
 and $\beta = \beta(\epsilon) \define \frac{2(\nuOpt + 1)^2}{\epsilon}$
 yields the guarantee
 \begin{align*}
  \abs*{F_1(\xxAppN) - \Fopt} \leq \epsilon\quad \text{and} \quad
  \plus{F_2(\xxAppN)} \leq \epsilon.
 \end{align*}
 The total number of gradient evaluations when using the \textbf{gradient
  descent} (GD) method as the inner solver, is given by:
 \begin{align*}
  \TtotalUp{\smoothLabel}
   & =\frac{\DDUAppsq}{\mu_c^2} \beta(\epsilon)
  \cdot \log \paren*{3 \beta(\epsilon) \brac{\Delta_0
  + \frac{\beta(\epsilon)}{2} \ConstrViolZero}} \cdot           \\
   & \qquad\cdot L_\phi(\epsilon)
  \cdot \log \paren*{\frac{L_\phi(\epsilon) \DDXAppsq}{\epsin}} \\
   & = \OOTilde{\epsilon^{-2}},
 \end{align*}
 where
 $\Delta_0 \define F_1(\xxAppZero) - \Fopt$,
 $\ConstrViolZero \define \pen{F_2(\xxAppZero)}$,
 $L_\phi(\epsilon)= L + \rhoHat
  + \beta(\epsilon)\cdot G\paren{\DDXApp L + G} = \OO{\epsilon^{-1}}$
 and $\rho_\phi(\epsilon)
  = \beta(\epsilon) G\DDXApp\cdot \min\cbrac{\rho, L} = \OO{\epsilon^{-1}}$.
\end{corollary}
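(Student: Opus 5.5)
The argument has three parts: establish the guarantee on the penalized gap via \Cref{thm:MainResultIPPPMHiddenConvexPenaltyOptGap}, convert it into objective and constraint guarantees via \Cref{thm:MainResultPPPMQuadraticPenaltyConvergenceGuarantee}, and then account for the cost of the inner solver.

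\textbf{Step 1: the penalized subproblem.} First I establish the structure of each subproblem $\phiKpen$ when $\penGenNot=\pen{\innerEmpty{}}$. The tool is \Cref{lemma:QuadraticPenaltyPropertiesSmoothCaseHelperPPPM} from \Cref{Appendix-sec:PenaltyApproachAnalysis}, the same lemma invoked in \Cref{lem:AppendixSmoothnessBoundedGradientsPenaltyLambda}. Since $F_2$ is $L$-smooth with $\norm{\nabla F_2}\le G$ and the domain has bounded diameter, the gradient of $\beta\pen{F_2}$ is
\begin{align*}
 2\beta\plus{F_2}\nabla F_2 .
\end{align*}
Its Lipschitz constant is of order $\beta G(\DDXApp L+G)$, because $\plus{F_2}\le G\DDXApp$ (after shifting by a feasible point) and $\nabla F_2$ is bounded. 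Hence $\phiKpen$ is $L_\phi(\epsilon)$-smooth with $L_\phi(\epsilon)=L+\rhoHat+\beta G(\DDXApp L+G)$. Moreover, $F_1+\beta\pen{F_2}$ is $\rho_\phi$-weakly convex with $\rho_\phi=\OO{\beta}$, since a smooth function is weakly convex with modulus bounded by its smoothness constant. Choosing $\rhoHat=2\rho_\phi$ makes every subproblem $\rho_\phi$-strongly convex, with condition number $\kappa=L_\phi/\rho_\phi=\OO{1}$ because both constants are linear in $\beta$.

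\textbf{Step 2: outer iterations and translation.} Next I apply \Cref{thm:MainResultPPPMQuadraticPenaltyConvergenceGuarantee} with
\begin{align*}
 \beta(\epsilon)=\frac{2(\nuOpt+1)^2}{\epsilon}\ \ge\ \frac{(\nuOpt+1)\left(\nuOpt+\sqrt{\nuOpt^2+2}\right)}{\epsilon},
\end{align*}
which is admissible by the remark on the dependency of $\beta$ on $\nuOpt$. That theorem gives the two-sided guarantee on $F_1$ and the bound $\plus{F_2(\xxAppN)}\le\epsilon$ after
\begin{align*}
 N=\frac{3\rhoHat\DDUAppsq}{2\mu_c^2}\,\beta(\epsilon)\log(\cdots)
\end{align*}
outer steps, provided the inner accuracy satisfies $\epsin\le\alpha/(2\beta)$. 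Here I must be careful that $\rhoHat=\OO{\beta}=\OO{\epsilon^{-1}}$ also grows. Naively this makes $N=\OO{\epsilon^{-2}}$, so some of the stated product has to be absorbed. I would therefore write $N$ exactly in the form $\frac{\DDUAppsq}{\mu_c^2}\beta\log(\cdot)$ times the factor $\rhoHat$, and pair that $\rhoHat$ against the inner iteration count.

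\textbf{Step 3: inner cost and the main obstacle.} Gradient descent on an $L_\phi$-smooth, $\rhoHat$-strongly convex function converges linearly, so it reaches accuracy $\epsin$ in $\TinnerUp{GD}=\frac{L_\phi}{\rhoHat}\log\frac{L_\phi\DDXAppsq}{\epsin}$ steps, starting from $\xxAppK$ with initial gap at most $L_\phi\DDXAppsq$. Multiplying by $N$ cancels $\rhoHat$ and yields exactly the stated expression
\begin{align*}
 \frac{\DDUAppsq}{\mu_c^2}\,\beta\log(\cdot)\cdot L_\phi\log\frac{L_\phi\DDXAppsq}{\epsin}.
\end{align*}
Since $\beta=\OO{\epsilon^{-1}}$, $L_\phi=\OO{\epsilon^{-1}}$, and $\epsin$ is polynomial in $\epsilon$, the logarithms are $\OOTilde{1}$ and the total is $\OOTilde{\epsilon^{-2}}$.

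The main obstacle is the bookkeeping of the $\beta$-dependence in Step 1. The weak-convexity modulus $\rho_\phi$ must be shown to be at most $\beta G\DDXApp\min\{\rho,L\}$, so that it scales like $L_\phi$ and the condition number stays bounded. Without that, the $\rhoHat$ cancellation in Step 3 fails and the rate degrades to $\epsilon^{-3}$. The bound follows from the Hessian of $\beta\pen{F_2}$: it equals $2\beta\plus{F_2}\nabla^2F_2$ plus a positive semidefinite term, so only the first part contributes negative curvature, and that part is bounded by $2\beta G\DDXApp\rho$, which is of the required order $\beta G\DDXApp\rho$.
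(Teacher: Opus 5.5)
Your proposal is correct and follows the paper's proof essentially step for step: the smooth quadratic-penalty lemma gives $L_\phi$ and $\rho_\phi$, you set $\rhoHat=2\rho_\phi$, invoke the quadratic-penalty convergence theorem for $N$, run linearly convergent GD on the strongly convex subproblems, and the factor $\rhoHat$ cancels in $N\cdot\TinnerUp{GD}$.

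One correction to your closing remark. Whenever $\rhoHat=2\rho_\phi$, the product $N\cdot\TinnerUp{GD}$ is proportional to $\beta L_\phi$, so the cancellation never needs $\rho_\phi$ to scale like $L_\phi$. It only needs $\rho_\phi=\OO{\beta}$, which already follows from the $\OO{\beta}$-smoothness you noted in Step 1, so there is no risk of degrading to $\epsilon^{-3}$. Also, as in the paper, the prefactor $\nicefrac{\DDUAppsq}{\mu_c^2}$ is exact only up to an absolute constant. The $\nicefrac{3}{2}$ in $N$ and the true modulus $\rhoHat-\rho_\phi=\nicefrac{\rhoHat}{2}$ in the GD count were both dropped.
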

\begin{proof}
 We invoke \Cref{lemma:QuadraticPenaltyPropertiesSmoothCaseHelperPPPM}
 to obtain that each subproblem \eqref{eq:MainPPPMLoop} is
 $\rho_\phi(\epsilon)$-weakly convex with
 \begin{align*}
  \rho_\phi(\epsilon)
  = \beta(\epsilon)\cdot G\DDXApp\cdot \min\cbrac{\rho, L} = \OO{\epsilon^{-1}},
 \end{align*}
 and $L_\phi$-smooth, where
 \begin{align*}
  L_\phi(\epsilon) = L_\phi = L + \rhoHat(\epsilon)
  + \beta(\epsilon)\cdot G\paren{\DDXApp L + G} = \OO{\epsilon^{-1}}.
 \end{align*}
 We define $\rhoHat(\epsilon) = \rhoHat \define 2\cdot \rho_\phi(\epsilon)$
 and by invoking \Cref{thm:MainResultPPPMQuadraticPenaltyConvergenceGuarantee}
 we get
 \begin{align*}
  N \geq \frac{\rhoHat(\epsilon) \DDUAppsq}{2 \mu_c^2} \beta(\epsilon)
  \cdot \log \paren*{3 \beta(\epsilon) \brac{\Delta_0
    + \frac{\beta(\epsilon)}{2} \ConstrViolZero}}
 \end{align*}
 as the number of outer iterations. To solve the inner
 $\rho_\phi(\epsilon)-$strongly convex,
 $L_\phi(\epsilon)$-smooth problems with an accuracy of
 \begin{align*}
  \epsin \leq \frac{\alpha}{2\beta(\epsilon)}
  \leq \frac{2 \mu_c^2}{3 \rhoHat(\epsilon) \DDUAppsq} \epsilon
  \frac{1}{2 \beta(\epsilon)} = \frac{\mu_c^2}{6 \DDUAppsq}
  \cdot \frac{\epsilon}{\beta(\epsilon) \cdot \rho_\phi(\epsilon)} = \OO{\epsilon^{3}}
 \end{align*}
 we use the gradient descent method, resulting in an inner complexity
 of
 \begin{align*}
  \TinnerUp{GD}(\epsin) \geq \frac{L_\phi(\epsilon)}{\rho_\phi(\epsilon)}
  \cdot \log \paren*{\frac{L_\phi(\epsilon) \DDXAppsq}{\epsin}},
 \end{align*}
 cf. \cite{nesterovLecturesConvexOptimization2018}. Multiplying the two rates gives
 \begin{align*}
  \TtotalUp{\smoothLabel} & \geq N \cdot \TinnerUp{GD}(\epsin)                                                   \\
                          & = \frac{2\rho_\phi(\epsilon) \DDUAppsq}{2 \mu_c^2} \beta(\epsilon)
  \cdot \log \paren*{3 \beta(\epsilon) \brac{\Delta_0
    + \frac{\beta(\epsilon)}{2} \ConstrViolZero}}
  \cdot \frac{L_\phi(\epsilon)}{\rho_\phi(\epsilon)}
  \cdot \log \paren*{\frac{L_\phi(\epsilon) \DDXAppsq}{\epsin}}                                                  \\
                          & = \frac{\DDUAppsq}{\mu_c^2} \beta(\epsilon)
  \cdot \log \paren*{3 \beta(\epsilon) \brac{\Delta_0
    + \frac{\beta(\epsilon)}{2} \ConstrViolZero}}
  \cdot L_\phi(\epsilon)
  \cdot \log \paren*{\frac{L_\phi(\epsilon) \DDXAppsq}{\epsin}}                                                  \\
                          & = \OO{\epsilon^{-2} \cdot \log\paren{\epsilon^{-2}} \cdot \log\paren{\epsilon^{-4}}}
  = \OOTilde{\epsilon^{-2}}
 \end{align*}
 and consequently concludes the claim.
\end{proof}

\begin{corollary}[IPPPM+SM, Non-Smooth, Quadratic Penalty]
 \label{cor:FinalRateNonSmoothQuadraticPenaltyIPPM}
 Under the assumptions of
 \Cref{thm:MainResultPPPMExactPenaltyConvergenceGuarantee}
 running \Cref{algo:PenaltyPPM} with the
 \textbf{quadratic penalty}
 $\penGen{\innerEmpty} \define \pen{\innerEmpty}$,
 $\beta(\epsilon) \define (\nuOpt + 1)\frac{\nuOpt
   + \sqrt{\nuOpt^2 + 2}}{\epsilon}$,
 and $\rhoHat(\epsilon) \define 2 \cdot \rho_\phi(\epsilon)$
 yields the guarantee
 \begin{align*}
  \abs*{F_1(\xxAppN) - \Fopt} \leq \epsilon\quad \text{and} \quad
  \plus{F_2(\xxAppN)} \leq \epsilon.
 \end{align*}
 The total number of sub-gradient evaluations when
 using the \textbf{sub-gradient method}
 (SM) as an inner solver, is given by:
 \begin{align*}
  \TtotalUp{\nonSmoothLabel}  \geq \OOTilde{\epsilon^{-2}} \cdot \OO{\epsilon^{-4}}
  = \OOTilde{\epsilon^{-6}},
 \end{align*}
 where $G_\phi(\epsilon) \define G + \rhoHat(\epsilon) \cdot \DDXApp
  + 2 \beta(\epsilon) \cdot \DDXApp G^2 = \OO{\epsilon^{-1}}$
 and $\rho_\phi(\epsilon)
  = \beta(\epsilon)\cdot G\DDXApp\cdot \min\cbrac{\rho, L} = \OO{\epsilon^{-1}}$.
\end{corollary}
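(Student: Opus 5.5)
The proof will follow the three-step template of \Cref{cor:FinalRateNonSmoothExactPenaltyIPPM} and \Cref{cor:FinalRateSmoothPenaltyIPPM}, now with the quadratic penalty and a non-smooth inner problem. The steps are: (a) establish the regularity of the subproblems, (b) fix the outer schedule from \Cref{thm:MainResultPPPMQuadraticPenaltyConvergenceGuarantee}, and (c) compute the subgradient-method cost per subproblem.

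\textbf{Step (a): regularity of the subproblems.} I invoke the non-smooth quadratic-penalty helper lemma, the analogue of \Cref{lemma:ExactPenaltyPropertiesNonSmoothCaseHelperPPPM} for $\penGenNot=\pen{\innerEmpty{}}$, under \Cref{ass:WeaklyConvex}. It should give two facts about $\beta\pen{F_2}$.
\begin{itemize}
\item It is weakly convex with modulus $\rho_\phi(\epsilon)=\OO{\beta(\epsilon)}$. The reason is that $2[F_2]_+\partial F_2$ has the factor $[F_2]_+\le G\DDXApp$ multiplying a $\rho$-weakly convex term.
\item Its subgradients are bounded by $2\beta G\,[F_2]_+\le 2\beta\DDXApp G^2$.
\end{itemize}
With the lifting $\rhoHat=2\rho_\phi$, each $\phiKpen$ is $\rho_\phi$-strongly convex. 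Its subgradients are bounded by $G_\phi(\epsilon)=G+\rhoHat\DDXApp+2\beta\DDXApp G^2=\OO{\epsilon^{-1}}$; here the proximal term contributes at most $\rhoHat\DDXApp$ on the bounded domain.

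\textbf{Step (b): outer iterations.} With $\beta(\epsilon)=(\nuOpt+1)(\nuOpt+\sqrt{\nuOpt^2+2})/\epsilon$, \Cref{thm:MainResultPPPMQuadraticPenaltyConvergenceGuarantee} directly yields $\abs{F_1(\xxAppN)-\Fopt}\le\epsilon$ and $\plus{F_2(\xxAppN)}\le\epsilon$. The required inner precision is $\epsin\le\alpha/(2\beta)$, and the number of outer iterations is
\[
N\ge \frac{3\rhoHat\DDUAppsq}{2\mu_c^2}\beta\log(\cdot).
\]
Since both $\rhoHat$ and $\beta$ are $\OO{\epsilon^{-1}}$, this gives $N=\OOTilde{\epsilon^{-2}}$. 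The constants in the theorem were stated for general $\rhoHat$, so this is only bookkeeping.

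\textbf{Step (c): inner subgradient method.} For a $\rho_\phi$-strongly convex function with $G_\phi$-bounded subgradients, Corollary 3.3 of \citet{lanFirstorderStochasticOptimization2020} gives
\[
\TinnerUp{SM}(\epsin)=\OO{\frac{G_\phi^2}{\rho_\phi\,\epsin}}.
\]
Substituting $\alpha=\OO{\epsilon/\rhoHat}=\OO{\epsilon^2}$ gives $\epsin=\OO{\alpha/\beta}=\OO{\epsilon^{3}}$. Together with $G_\phi^2=\OO{\epsilon^{-2}}$ and $\rho_\phi=\Theta(\epsilon^{-1})$, the inner cost is $\OO{\epsilon^{-2}\cdot\epsilon\cdot\epsilon^{-3}}=\OO{\epsilon^{-4}}$. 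Multiplying by $N$ gives $\OOTilde{\epsilon^{-6}}$.

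\textbf{Main obstacle.} The delicate part is step (a), specifically the weak-convexity modulus of $\pen{F_2}$ when $F_2$ is non-smooth. Composing the convex, nondecreasing $\pen{\innerEmpty{}}$ with a weakly convex $F_2$ is weakly convex only up to a factor $\sup 2[F_2]_+\le 2G\DDXApp$. I would prove this from the subgradient inequality of $F_2$, combined with monotonicity and convexity of $t\mapsto[t]_+^2$.

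The exponent bookkeeping must also be kept consistent. The $\rho_\phi$ in the denominator of $\TinnerUp{SM}$ cancels one of the two factors $\rhoHat$ and $\beta$ coming from $1/\epsin$. The resulting rate therefore depends on $G_\phi^2$ being $\OO{\epsilon^{-2}}$, not worse.
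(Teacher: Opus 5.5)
Your proposal is correct and follows essentially the same route as the paper. Regularity of the subproblems comes from the quadratic-penalty helper lemma (lifting $\rhoHat = 2\rho_\phi$, subgradient bound $G_\phi = \OO{\epsilon^{-1}}$); the outer count $N = \OOTilde{\epsilon^{-2}}$ comes from \Cref{thm:MainResultPPPMQuadraticPenaltyConvergenceGuarantee}; and Lan's Corollary~3.3 for the inner subgradient method with $\epsin = \OO{\epsilon^{3}}$ gives $\OO{\epsilon^{-4}}$ per subproblem. If anything, citing the non-smooth helper \Cref{lemma:QuadraticPenaltyPropertiesNonSmoothCaseHelperPPPM} is more appropriate than the smooth-case lemma the paper's proof points to, and your subgradient-inequality argument for weak convexity is an equally valid alternative to \Cref{lemma:HelperWeakConvexCompositionNonSmooth}.
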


\begin{proof}
 We invoke \Cref{lemma:QuadraticPenaltyPropertiesSmoothCaseHelperPPPM}
 to obtain that each subproblem \eqref{eq:MainPPPMLoop} is
 $\rho_\phi(\epsilon)$-weakly convex with
 \begin{align*}
  \rho_\phi(\epsilon)
  = \beta(\epsilon)\cdot G\DDXApp\cdot \min\cbrac{\rho, L} = \OO{\epsilon^{-1}},
 \end{align*}
 and has $G_\phi(\epsilon)$ bounded sub-gradients, where
 \begin{align*}
  G_\phi(\epsilon) \define G + \rhoHat(\epsilon) \cdot \DDXApp
  + 2 \beta(\epsilon) \cdot \DDXApp G^2 = \OO{\epsilon^{-1}}.
 \end{align*}
 We define $\rhoHat(\epsilon) \define 2\cdot \rho_\phi(\epsilon)$
 and by invoking \Cref{thm:MainResultPPPMQuadraticPenaltyConvergenceGuarantee}
 we get
 \begin{align*}
  N \geq \frac{\rhoHat(\epsilon) \DDUAppsq}{2 \mu_c^2} \beta(\epsilon)
  \cdot \log \paren*{3 \beta(\epsilon) \brac{\Delta_0
    + \frac{\beta(\epsilon)}{2} \ConstrViolZero}} = \OOTilde{\epsilon^{-2}}
 \end{align*}
 as the number of outer iterations. To solve the inner
 $\rho_\phi(\epsilon)-$strongly convex problems with an accuracy of
 \begin{align*}
  \epsin \leq \frac{\alpha}{2\beta(\epsilon)}
  \leq \frac{2 \mu_c^2}{3 \rhoHat(\epsilon) \DDUAppsq} \epsilon
  \frac{1}{2 \beta(\epsilon)} = \frac{\mu_c^2}{6 \DDUAppsq}
  \cdot \frac{\epsilon}{\beta(\epsilon) \cdot \rho_\phi(\epsilon)} = \OO{\epsilon^{3}}
 \end{align*}
 we use the sub-gradient method (SM), resulting in an inner complexity of
 \begin{align*}
  \TinnerUp{SM}(\epsin) & \geq \frac{4 G_\phi(\epsilon)^2}{\rho_\phi(\epsilon) \cdot \epsin}
  \geq \frac{4 G_\phi(\epsilon)^2}{\rho_\phi(\epsilon) }
  \frac{6 \DDUAppsq}{\mu_c^2}
  \cdot \frac{\beta(\epsilon) \cdot \rho_\phi(\epsilon)}{\epsilon}                           \\
                        & = \frac{24 \DDUAppsq}{\mu_c^2} \cdot
  \frac{G_\phi(\epsilon)^2 \beta(\epsilon)}{\epsilon} = \OO{\epsilon^{-4}},
 \end{align*}
 cf. Corollary 3.3 in
 \citet{lanAlgorithmsStochasticOptimization2020}.
 Multiplying the two rates concludes the claim.
\end{proof}

A direct comparison between
\Cref{cor:FinalRateNonSmoothQuadraticPenaltyIPPM} and
\Cref{cor:FinalRateNonSmoothExactPenaltyIPPM} reveals that, in the
non-smooth case, the exact penalty function is the natural choice, leading
to improved numerical constants since $\plus{\innerEmpty{}}$ preserves the
weak convexity and the upper bound on the sub-gradients without introducing
additional constants.

In contrast, in the smooth case, the quadratic penalty function
$\pen{\innerEmpty{}}$ has to be used to smooth the penalty term. While this
approach ensures differentiability, it slightly worsens the convergence
guarantee compared the results in
\citet{fatkhullinGlobalSolutionsNonConvex2025}.

\subsection{PPPM under Hidden Strong Convexity}
\label{appendix:HiddenStrongConvexityPPPM}
In the case of hidden \emph{strong} convexity, we can significantly
strengthen the result of
\Cref{thm:MainResultIPPPMHiddenConvexPenaltyOptGap} to a logarithmic
sample complexity in $\epsilon^{-1}$ for the number of outer loops.

\begin{theorem}[Hidden Strongly Convex, PPPM, Penalty Function]
 \label{thm:MainResultIPPPMHiddenStronglyConvex}
 Assume \eqref{eq:MainProblem} is hidden \emph{strongly} convex
 with modulus
 $\mu_H > 0$ and we have access to an oracle $\Oracle$ according to
 \Assref{ass:PPPMinexactOracle}. For a target precision $\epsilon > 0$,
 lifting parameter $\rhoHat > \rho$,
 and a penalty parameter $\beta \geq 1$.
 With the choice of $\alpha \define \frac{\mu_c^2 \mu_H}{\rhoHat + \mu_c^2 \mu_H}
  \in \brac{0,1}$ and for $N\in \NN$, the last iterate of \Cref{algo:PenaltyPPM} satisfies
 \begin{align*}
  \penOpt{\penGenNot, \beta}(\xxAppN) =
  F_1(\xxAppN) - \Fopt + \frac{\beta}{2} \penGen{F_2(\xxAppN)} \leq \epsilon
 \end{align*}
 after
 \begin{align*}
  N \geq \frac{\rhoHat + \mu_c^2 \mu_H}{\mu_c^2 \mu_H} \cdot
  \log \paren*{\frac{2 \paren{\Delta_0 + \frac{\beta}{2} \ConstrViolZero}}{\epsilon}}
 \end{align*}
 iterations, where $\Delta_0 \define F_1(\xxAppHatK) - \Fopt$
 and $\ConstrViolZero \define \penGen{F_2(\xxAppZero)}$.
 With an inner precision of $\epsin \leq \frac{\alpha \epsilon}{2}$,
 the total rate is in general given by
 \begin{align*}
  \Ttotal \geq N \cdot \Tinner(\epsin)
  = \frac{\rhoHat + \mu_c^2 \mu_H}{\mu_c^2 \mu_H} \cdot
  \log \paren*{\frac{2 \paren{\Delta_0 + \frac{\beta}{2} \ConstrViolZero}}{\epsilon}}
  \cdot \Tinner\paren*{\frac{1}{2}\alpha \epsilon}.
 \end{align*}
\end{theorem}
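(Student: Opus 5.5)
The plan follows the proof of \Cref{thm:MainResultIPPPMHiddenConvexPenaltyOptGap} step by step. The only change is that the boundedness of the domain is replaced by the strong convexity of $H_1, H_2$, which absorbs the proximal term. Fix an iteration $k$ and set $\xxAppK_\alpha \define c^{-1}\paren{(1-\alpha)c(\xxAppK) + \alpha c(\xxAppOpt)}$.

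\textbf{Step 1 (comparison chain).} Exactly as in steps (i)--(iii) of that proof, dropping the nonnegative proximal term, using the $\epsin$-oracle of \Assref{ass:PPPMinexactOracle}, and using the optimality of $\xxAppHatKnext$ gives
\begin{align*}
 F_1(\xxAppKnext) + \tfrac{\beta}{2}\penGen{F_2(\xxAppKnext)} \leq F_1(\xxAppK_\alpha) + \tfrac{\rhoHat}{2}\norm{\xxAppK_\alpha - \xxAppK}^2 + \tfrac{\beta}{2}\penGen{F_2(\xxAppK_\alpha)} + \epsin.
\end{align*}

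\textbf{Step 2 (bounding the right-hand side).} Apply \eqref{eq:HCContractionInequalityObjective} from \Cref{lemma:HCContractionInequality} with $\mu_H>0$ to $F_1$. This gives $(1-\alpha)F_1(\xxAppK) + \alpha\Fopt - \frac{(1-\alpha)\alpha\mu_H}{2}\norm{c(\xxAppK)-c(\xxAppOpt)}^2$. For the penalty term, I use monotonicity and convexity of $\penGenNot$ together with \eqref{eq:HCContractionInequalityObjective} for $F_2$, where the negative strong-convexity term is simply dropped, and $F_2(\xxAppOpt)\le 0$. This bounds the term by $(1-\alpha)\penGen{F_2(\xxAppK)}$. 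For the proximal term, \eqref{eq:HCContractionInequalityNorm} gives $\frac{\rhoHat\alpha^2}{2\mu_c^2}\norm{c(\xxAppK)-c(\xxAppOpt)}^2$.

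\textbf{Step 3 (choice of $\alpha$).} The quadratic terms combine to
\[
\tfrac{\alpha}{2}\Big(\tfrac{\rhoHat\alpha}{\mu_c^2} - (1-\alpha)\mu_H\Big)\norm{c(\xxAppK)-c(\xxAppOpt)}^2.
\]
This is nonpositive exactly when $\alpha(\rhoHat + \mu_c^2\mu_H) \le \mu_c^2\mu_H$, which holds with equality for the prescribed $\alpha = \frac{\mu_c^2\mu_H}{\rhoHat+\mu_c^2\mu_H}\in[0,1]$. So the $\alpha^2\DDUAppsq$ error term of \eqref{eq:RecursionEquationIPPPMHiddenConvex} vanishes. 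Subtracting $\Fopt$ yields the clean contraction
\[
\penOpt{\penGenNot,\beta}(\xxAppKnext) \le (1-\alpha)\,\penOpt{\penGenNot,\beta}(\xxAppK) + \epsin .
\]

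\textbf{Step 4 (unrolling).} Unrolling and summing the geometric series gives
\[
\penOpt{\penGenNot,\beta}(\xxAppN) \le (1-\alpha)^N\paren{\Delta_0 + \tfrac{\beta}{2}\ConstrViolZero} + \epsin/\alpha .
\]
With $\epsin \le \alpha\epsilon/2$, the second term is at most $\epsilon/2$. Using $(1-\alpha)^N \le e^{-\alpha N}$, the first term is at most $\epsilon/2$ once $N \ge \alpha^{-1}\log\paren{2(\Delta_0+\frac{\beta}{2}\ConstrViolZero)/\epsilon}$, which is exactly the stated bound. Multiplying $N$ by $\Tinner(\alpha\epsilon/2)$ gives the total rate.

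\textbf{Main obstacle.} The delicate point is Step 2 for the penalty term. The strong convexity of $H_2$ cannot be exploited there, because $\penGenNot$ is nonconvex-compatible only via monotonicity, so I discard it. This is consistent with the statement, since only $\mu_H$ from $F_1$ enters $\alpha$. I would also check that $\rhoHat>\rho$ suffices for the oracle, as the inner problems only need to be solvable to accuracy $\epsin$. Beyond that, nothing in the argument requires $\beta\ge1$ except the normalization of $\ConstrViolZero$.
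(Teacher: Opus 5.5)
Your proposal is correct and follows the same route as the paper. The paper's proof defers to Theorem 5 of \citet{fatkhullinGlobalSolutionsNonConvex2025}: it reuses the comparison chain of \Cref{thm:MainResultIPPPMHiddenConvexPenaltyOptGap}, picks $\alpha$ so that $\penOpt{\penGenNot,\beta}(\xxAppKnext) \le (1-\alpha)\,\penOpt{\penGenNot,\beta}(\xxAppK) + \epsin$, and unrolls. Your check that $\alpha = \frac{\mu_c^2\mu_H}{\rhoHat+\mu_c^2\mu_H}$ makes the proximal and strong-convexity terms cancel exactly, and your keeping of $\ConstrViolZero$ in the unrolled bound, fill in details the paper leaves implicit.
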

\begin{proof}
 Similar to the proof of Theorem 5 in
 \citet{fatkhullinGlobalSolutionsNonConvex2025}, we can pick $\alpha$
 uniformly small to obtain a recursion of the form
 \begin{align*}
  F_1(\xxAppKnext) - \Fopt + \frac{\beta}{2} \penGen{F_2(\xxAppKnext)}
  \leq (1-\alpha)
  \paren{F_1(\xxAppK) - \Fopt + \frac{\beta}{2} \penGen{F_2(\xxAppK)}} + \epsin.
 \end{align*}
 By unrolling, we obtain
 \begin{align*}
  \penOpt{\penGenNot, \beta}(\xxAppN)
  = F_1(\xxAppN) - \Fopt + \frac{\beta}{2} \penGen{F_2(\xxAppN)}
  \leq (1-\alpha)^N
  \paren{\Delta_0 + \frac{\beta}{2}} + \frac{\epsin}{\alpha}.
 \end{align*}
 The claim follows by the choice of $\alpha, \epsin$ and $N$.
\end{proof}

\subsection{Technical Lemmas}
\label{Appendix-sec:PenaltyApproachAnalysis}

For our penalty approach, we are generally interested in weak convexity of
the penalty term, a (generalized) chain rule as well as bounded
sub-gradients and $L$-smoothness respectively. For the chain rule in the
smooth case, we use the result of
\citet{drusvyatskiyEfficiencyMinimizingCompositions2019}, and in the
non-smooth case we generalize their results to both, the special case of
the quadratic penalty $\pen{\innerEmpty{}}$ and the exact penalty
$\plus{\innerEmpty{}}$.

Before we start analyzing the two cases separately, we state a few utility
Lemmas.

\begin{lemma}[Lipschitz Continuity, Quadratic Penalty]
 \label{lem:AppendixLipschitzContinuityPenaltyFunction}
 Let $h: \YY \to \RR, \yy \mapsto \pen{\yy}$, $\YY\subset \RR^m$, $m \in \NN$
 be the quadratic penalty function over a $\DDY$-bounded domain. Then
 $h$ is $2\cdot\DDY$-Lipschitz continuous.

 Furthermore, if composed with a $G_f$-Lipschitz continuous function $f :
  \XXApp \to \RR^m$, over a $\DDXApp$-bounded, compact domain $\XXApp \subset
  \RR^d$, the resulting function $h \circ f$ is $2\cdot \DDXApp \cdot
  G_f^2$-Lipschitz continuous.
\end{lemma}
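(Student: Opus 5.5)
The plan has two parts: a direct one-dimensional estimate for $h$, followed by the chain rule on the diameter-bounded domain.

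\textbf{Lipschitz constant of $h$.} Reading $\pen{\cdot}$ componentwise, $h(\yy)=\sum_i \plus{\yy_i}^2$; for $m=1$ this is just $h(y)=\plus{y}^2$. The function $t\mapsto \plus{t}^2$ is continuously differentiable with derivative $2\plus{t}$, and this derivative is monotone. So for $y,y'\in\YY$ the mean value theorem gives
\begin{align*}
 \abs{h(y)-h(y')} \le 2\max\{\plus{y},\plus{y'}\}\,\abs{y-y'}.
\end{align*}
It remains to bound $\plus{y}\le \abs{y}$ by the diameter $\DDY$. I would justify this as follows: the penalty is only ever evaluated on the range of a constraint function, and that range is taken to contain a feasible value ($\le 0$). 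Alternatively, one can simply read the statement's convention ``$\DDY$-bounded domain'' as $\sup_{\yy\in\YY}\norm{\yy}\le\DDY$. In the vector case the gradient is $2\plus{\yy}$ componentwise, with $\norm{2\plus{\yy}}\le 2\norm{\yy}\le 2\DDY$. The mean value inequality on the segment then yields $2\DDY$-Lipschitz continuity, provided $\YY$ is convex; otherwise I would extend $h$ to the convex hull, where the same bound holds.

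\textbf{Composition.} For $h\circ f$ I would write
\begin{align*}
 \abs{h(f(\xxApp))-h(f(\xxApp'))}\le 2\sup_{\xxApp''}\norm{f(\xxApp'')}\cdot\norm{f(\xxApp)-f(\xxApp')}\le 2\sup_{\xxApp''}\norm{f(\xxApp'')}\, G_f\norm{\xxApp-\xxApp'}.
\end{align*}
The key step is to bound the range of $f$ via its Lipschitz constant and the diameter of $\XXApp$: $\norm{f(\xxApp'')-f(\xxApp_0)}\le G_f\DDXApp$. I would choose the anchor $\xxApp_0$ to be a point with $\plus{f(\xxApp_0)}=0$, for instance a feasible point $\xxAppOpt$, which exists by the standing assumption that the constrained problem has a solution. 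Then
\begin{align*}
 \plus{f(\xxApp'')}\le \plus{f(\xxApp'')-f(\xxApp_0)}+\plus{f(\xxApp_0)}\le G_f\DDXApp,
\end{align*}
componentwise in norm. Combining the two displays gives the constant $2\DDXApp G_f^2$.

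\textbf{Main obstacle.} The delicate step is this anchoring. Without a point where the penalty vanishes, one only gets $\plus{f}\le \sup\abs{f}$, which is not controlled by $G_f\DDXApp$ alone. I would therefore make explicit that the lemma is applied to $f=F_2$ on a domain containing a feasible point. Compactness of $\XXApp$ guarantees that all suprema are attained and finite.
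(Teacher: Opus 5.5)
Your proof is correct and is essentially the paper's argument. Where you use the mean value theorem, the paper bounds $\abs{h(\vv)-h(\ww)}$ through the factorization $\norm{\vv}^2-\norm{\ww}^2=\IP{\vv+\ww}{\vv-\ww}$, which also makes your convex-hull remark unnecessary; like you, it then uses a feasible point to turn the diameter bound $G_f\DDXApp$ on $f(\XXApp)$ into a magnitude bound. Your explicit anchoring at a point with $\plus{f(\xxApp_0)}=0$ is more careful than the paper's ``we can assume $\YY$ contains the origin''. One small fix: write your first composition display with $\sup_{\xxApp''}\norm{\plus{f(\xxApp'')}}$ instead of $\sup_{\xxApp''}\norm{f(\xxApp'')}$, since the anchor bounds the former, not the latter.
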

\begin{proof}
 We first consider only the quadratic norm.
 Since we assume to have a feasible point for our problem \eqref{eq:MainProblem},
 we can assume that $\YY$ contains the origin. More generally, one would obtain
 $2\sup_{\vv \in \YY} \norm{\plus{\vv}}$ as a Lipschitz constant.

 The penalty function is globally $2\cdot \DDY$-Lipschitz continuous. I.e.
 for $\vv, \ww \in \Image(f)$ we have
 \begin{align}
  \abs*{h(\vv) - h(\ww)} = \abs*{\norm{\vv}^2 - \norm{\ww}^2} = \abs*{ \IP{\vv+\ww}{\vv-\ww}  }
  \leq \paren*{\norm{\vv} + \norm{\ww}} \cdot \norm{\vv - \ww},
  \label{eq:InequalityLipschitzNorm}
 \end{align}
 which implies $2\cdot \DDXApp$-Lipschitz continuity on a bounded
 domain. The same also holds
 true if we substitute $\norm{\innerEmpty{}}^2$ by $\pen{\innerEmpty{}}$.

 Since $f$ is Lipschitz continuous and $\XXApp$ is compact and bounded, also
 the image $\YY \define \Image(f) \define f(\XXApp)$ is compact and bounded
 by $G_f\cdot \DDXApp$. Invoking \eqref{eq:InequalityLipschitzNorm} with
 $\vv = f(\xxApp), \ww = f(\yy)$, $\xxApp, \yy \in \XXApp$, yields that $h
  \circ f$ is $(2\cdot G_f \cdot \DDXApp \cdot G_f)$-Lipschitz continuous,
 concluding the claim.
\end{proof}

\begin{lemma}[Weak Convexity of Composition]
 \label{lemma:HelperWeakConvexCompositionNonSmooth}
 Let $f: \XXApp \to \RR$, with $\XXApp \subset \RR^d$ bounded and convex,
 be $\rho$-weakly convex
 with a bounded range, and
 let $h: \RR \to \RR$ be non-decreasing, convex and $G_h$-Lipschitz continuous
 on $\range(f)$. Then the composition $g \define h \circ f: \XXApp \to \RR$
 is $(G_h \cdot \rho)$-weakly convex.
\end{lemma}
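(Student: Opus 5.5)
The proof works with the function-value characterization of weak convexity, so no subdifferential calculus for compositions is needed. A function $f$ is $\rho$-weakly convex if and only if $f + \frac{\rho}{2}\norm{\cdot}^2$ is convex. Equivalently, for all $x, y \in \XXApp$ and $\alpha \in [0,1]$,
\begin{align*}
 f((1-\alpha)x + \alpha y) \leq (1-\alpha) f(x) + \alpha f(y) + \frac{\rho\,\alpha(1-\alpha)}{2}\norm{x-y}^2 .
\end{align*}
The goal is to show that $g = h\circ f$ satisfies the same inequality with $\rho$ replaced by $G_h\rho$. The argument composes the three assumed properties of $h$ in order: monotonicity, then Lipschitz continuity, then convexity.

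Fix $x, y \in \XXApp$ and $\alpha\in[0,1]$, and set $z = (1-\alpha)x+\alpha y$; this point lies in $\XXApp$ because the domain is convex. Write $r \define \frac{\rho\alpha(1-\alpha)}{2}\norm{x-y}^2 \geq 0$ and $m \define (1-\alpha)f(x)+\alpha f(y)$. The proof has three steps.
\begin{enumerate}
 \item Weak convexity of $f$ gives $f(z) \leq m + r$. Since $h$ is non-decreasing, $h(f(z)) \leq h(m + r)$.
 \item Lipschitz continuity of $h$ gives $h(m+r) \leq h(m) + G_h r$.
 \item Convexity of $h$ gives $h(m) \leq (1-\alpha)h(f(x)) + \alpha h(f(y))$.
\end{enumerate}
Chaining these yields
\begin{align*}
 g(z) \leq (1-\alpha)g(x)+\alpha g(y) + \frac{G_h\rho\,\alpha(1-\alpha)}{2}\norm{x-y}^2,
\end{align*}
which is exactly $(G_h\rho)$-weak convexity of $g$.

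The main obstacle is step 2. The Lipschitz bound is assumed only on $\range(f)$, but the point $m + r$ may lie outside $\range(f)$. It may even lie outside the convex hull of $\range(f)$, because $r>0$ pushes it above $\max(f(x),f(y))$ in the worst case. I would handle this by interpreting ``Lipschitz on $\range(f)$'' as Lipschitz on a bounded interval $I$ containing $\range(f)$ and all such values. This is possible because $\range(f)$ is bounded and $r \leq \frac{\rho}{8}\,\mathrm{diam}(\XXApp)^2$. Alternatively, I would use the slack: $h$ is convex and non-decreasing, so on the relevant region it is enough to bound its right derivative by $G_h$.

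A cleaner way around the obstacle is to avoid evaluating $h$ at $m+r$ altogether. Note that $f(z)$ itself lies in $\range(f)$, and that $m$ lies in $\mathrm{conv}\,\range(f)$. Split into two cases.
\begin{itemize}
 \item If $f(z) \leq m$, monotonicity gives $h(f(z)) \leq h(m)$ directly, and step 2 is not needed.
 \item If $f(z) > m$, then $f(z) \in (m, m + r]$, and the Lipschitz bound is applied between $f(z)$ and $m$. This gives $h(f(z)) \leq h(m) + G_h (f(z)-m) \leq h(m) + G_h r$.
\end{itemize}
The second case needs Lipschitz continuity on $\mathrm{conv}\,\range(f)$. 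For a convex $h$ this follows from Lipschitz continuity on the endpoints' neighborhood, or it can simply be assumed as the intended reading, since the bounded range is an interval's subset. I would present this two-case version, since it uses the hypotheses most faithfully. I would then note that it applies to both $h = \plus{\cdot}$ (with $G_h = 1$) and $h = \pen{\cdot}$ (with $G_h$ twice the range bound, by \Cref{lem:AppendixLipschitzContinuityPenaltyFunction}).
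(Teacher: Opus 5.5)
Your three-step chain (monotonicity, then the Lipschitz bound, then convexity of $h$, applied to the inequality coming from convexity of $f+\frac{\rho}{2}\norm{\cdot}^2$) is exactly the paper's proof. The obstacle you flag is real, and the paper does not address it. Its step (ii) uses $h(m+r)\le h(m)+G_h r$, which can fail under the stated hypothesis. For example, take $h=\pen{\cdot}$ with $G_h=2F$, $F\define\sup_{\XXApp}\abs{f}$, and $f(x)=f(y)=F>0$ with $x\neq y$, $\alpha\in(0,1)$. Then $m=F$, $r>0$, and $h(m+r)-h(m)=2Fr+r^2>G_h r$.

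Your two-case version is the right repair. It only ever uses the Lipschitz bound between $f(z)$ and $m$, never at the possibly out-of-range point $m+r$. It is also better than your first fix of enlarging the interval, which for $\pen{\cdot}$ would raise the constant to $2\bigl(F+\frac{\rho}{8}\mathrm{diam}(\XXApp)^2\bigr)$; the case split keeps $G_h$ unchanged.

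The one loose end is your justification for Lipschitz continuity on $\mathrm{conv}\,\range(f)$. Neither ``the bounded range is an interval's subset'' nor the remark about the endpoints establishes it. You do not need it, for two reasons.

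\begin{enumerate}
\item In every application in the paper, $f$ is Lipschitz or smooth, hence continuous. Since $\XXApp$ is convex, hence connected, $\range(f)$ is an interval.
\item More generally, in your second case $m$ automatically lies in $\range(f)$. The restriction $q(t)\define f((1-t)x+ty)$ is weakly convex on $[0,1]$, so it is continuous on $(0,1)$ with $\limsup_{t\downarrow 0}q(t)\le q(0)$. Suppose, say, $f(x)\le f(y)$; then $f(x)\le m<f(z)=q(\alpha)$. Either $m=f(x)$, or $m>f(x)\ge\inf q((0,\alpha])$, and the intermediate value theorem on $(0,\alpha]$ gives $m\in q((0,\alpha])$.
\end{enumerate}

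So both points at which you invoke the Lipschitz bound lie in $\range(f)$, and the lemma holds exactly as stated, with constant $G_h\rho$.
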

\begin{proof}
 To show $\mu$-weak convexity, we need to prove that for all $\xxApp, \yy \in \XXApp$
 and all $t \in [0,1]$ the inequality
 \begin{align*}
  g(t\xxApp + (1-t)\yy) \leq tg(\xxApp) + (1-t) g(\yy) + \frac{\mu}{2}t(1-t)\norm{\xxApp - \yy}^2
 \end{align*}
 holds. Since by assumption $f$ is $\rho$-WC, we know that the function
 $\fTilde(\xxApp) \define f(\xxApp) + \frac{\rho}{2} \norm{\xxApp}^2$ is convex.
 Let $\xxApp, \yy \in \XXApp$ and $t \in [0,1]$ be arbitrary but fixed. By convexity of
 $\fTilde$, we obtain
 \begin{align*}
  f(t\xxApp + (1-t)\yy) + \frac{\rho}{2} \norm{t \xxApp + (1-t)\yy}^2
  \leq t\paren*{f(\xxApp) + \frac{\rho}{2}\norm{\xxApp}^2}
  + (1-t)\paren*{f(\yy) + \frac{\rho}{2}\norm{\yy}^2}.
 \end{align*}
 Since the Euclidean norm is $2$-strongly convex, we have
 \begin{align*}
  \norm{t\xxApp + (1-t) \yy}^2 = t \norm{\xxApp}^2 +
  (1-t) \norm{\yy}^2 - t(1-t)\norm{\xxApp - \yy}^2.
 \end{align*}
 and thus the left-hand side expands to
 \begin{align*}
   & f(t\xxApp + (1-t)\yy) + \frac{\rho}{2} \norm{t \xxApp + (1-t)\yy}^2                        \\
   & \quad= f(t\xxApp + (1-t)\yy) + \frac{\rho}{2} \brac*{t \norm{\xxApp}^2 + (1-t)\norm{\yy}^2
   - t(1-t)\norm{\xxApp - \yy}^2}.
 \end{align*}
 Rearranging yields
 \begin{align*}
  f(t\xxApp + (1-t)\yy) \leq tf(\xxApp) + (1-t) f(\yy)
  + \underbrace{\frac{\rho}{2} (1-t)t\norm{ \xxApp - \yy}^2}_{\defineRev \delta \geq 0}.
 \end{align*}
 By inserting the above into $h$ we obtain
 \begin{align*}
  g\paren*{t\xxApp + (1-t)\yy}
                  & = h\paren*{f(t\xxApp + (1-t)\yy)}                                      \\
  \overset{(i)}   & {\leq} h\paren*{tf(\xxApp) + (1-t) f(\yy) + \delta}                    \\
  \overset{(ii)}  & {\leq} h\paren*{tf(\xxApp) + (1-t) f(\yy)} + G_h \cdot \delta          \\
  \overset{(iii)} & {\leq} th\paren*{f(\xxApp)} + (1-t)h\paren*{f(\yy)} + G_h \cdot \delta \\
  \overset{}      & {=} tg\paren{\xxApp} + (1-t)g\paren{\yy}
  + \frac{G_h\cdot \rho}{2} (1-t)t\norm{ \xxApp - \yy}^2
 \end{align*}
 by applying in $(i)$ that $h$ is non-decreasing, in $(ii)$
 the Lipschitz continuity of $h$, and in $(iii)$
 the convexity of $h$. This concludes the claim.
\end{proof}

\subsubsection{Smooth Case: Quadratic Penalty}
We show that the penalty term is weakly convex, satisfies the chain rule
and we give a uniform upper bound for the gradients.

\begin{lemma}[Smooth, Quadratic Penalty]
 \label{lemma:QuadraticPenaltyPropertiesSmoothCaseHelperPPPM}
 Let $\XXApp \subset \RR^d$ be a $\DDXApp$-bounded, compact domain,
 and  $h: \RR \to \RR, \xxApp \mapsto \pen{\xxApp}$
 be the \textbf{quadratic penalty} function.
 We assume that $f: \XXApp \to \RR$ is $L$-smooth, $\rho$-weakly
 convex and has $G_f$ uniformly bounded gradients.
 Then the composition $g \define h \circ f: \XXApp \to \RR$ satisfies:
 \begin{itemize}
  \item (Weak Convexity) The penalty term $g$ is
        $(2G_f \DDXApp \cdot \min\cbrac{\rho, L})$-weakly convex.
  \item (Chain Rule) For all $\xxApp \in \XXApp$ we have
        \begin{align*}
         \nabla g(\xxApp) = h^\prime(f(\xxApp)) \cdot \nabla f(\xxApp)
         = 2\plus{f(\xxApp)} \cdot \nabla f(\xxApp).
        \end{align*}
  \item (Bounded Gradients) The gradients of $g$ are uniformly bounded by
        $G_g \define 2\DDXApp G_f^2$.
  \item ($L_g$-smoothness) The penalty term $g$ is $L_g$-smooth,
        with $L_g \define 2\paren{\Fmax L + G_f^2}$,
 \end{itemize}
 where $\max_{\xxApp \in \XXApp} \abs{f(\xxApp)} \defineRev \Fmax < \infty$
 is an upper bound.
\end{lemma}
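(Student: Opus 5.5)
I would prove the four items in the order chain rule, bounded gradients, smoothness, weak convexity, since the later items reuse the earlier ones.

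\textbf{Chain rule.} The function $h(y) = \pen{y}$ is continuously differentiable on $\RR$ with $h'(y) = 2\plus{y}$; at $y=0$ both one-sided derivatives vanish. The classical chain rule for $C^1$ functions then gives $\nabla g(\xxApp) = 2\plus{f(\xxApp)}\nabla f(\xxApp)$. Equivalently one may cite the smooth composition rule of \citet{drusvyatskiyEfficiencyMinimizingCompositions2019}.

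\textbf{Bounded gradients.} From the chain rule, $\norm{\nabla g(\xxApp)} \le 2\plus{f(\xxApp)}\, G_f$. It remains to bound $\plus{f(\xxApp)}$ by $\DDXApp G_f$. As in \Cref{lem:AppendixLipschitzContinuityPenaltyFunction}, we use that a feasible point $\yy\in\XXApp$ with $f(\yy)\le 0$ exists. Then
\begin{align*}
\plus{f(\xxApp)} \le \abs{f(\xxApp) - f(\yy)} \le G_f\norm{\xxApp-\yy} \le G_f \DDXApp ,
\end{align*}
which yields $G_g = 2\DDXApp G_f^2$. Alternatively this follows directly from that lemma.

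\textbf{Smoothness.} Take $\xxApp,\yy\in\XXApp$ and split
\begin{align*}
\nabla g(\xxApp) - \nabla g(\yy) = 2\plus{f(\xxApp)}\bigl(\nabla f(\xxApp)-\nabla f(\yy)\bigr) + 2\bigl(\plus{f(\xxApp)}-\plus{f(\yy)}\bigr)\nabla f(\yy).
\end{align*}
For the first term, $\plus{f(\xxApp)}\le\Fmax$ together with $L$-smoothness of $f$ gives the bound $2\Fmax L\norm{\xxApp-\yy}$. For the second term, $\plus{\innerEmpty{}}$ is $1$-Lipschitz and $f$ is $G_f$-Lipschitz, since its gradients are bounded on the convex set $\XXApp$ and the mean value theorem applies. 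This gives the bound $2G_f^2\norm{\xxApp-\yy}$. Adding the two bounds yields $L_g = 2(\Fmax L + G_f^2)$.

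\textbf{Weak convexity.} This is the delicate step, because of the constant $2G_f\DDXApp\min\{\rho,L\}$. I would apply \Cref{lemma:HelperWeakConvexCompositionNonSmooth} with $h=\pen{\innerEmpty{}}$. This $h$ is non-decreasing and convex, and on $\range(f)\cap[0,\infty)$ it is Lipschitz with constant $2\sup\plus{f}\le 2G_f\DDXApp$ by the bound above. The non-positive part of the range contributes nothing, because $h$ is constant there. The lemma then gives $(2G_f\DDXApp\,\rho)$-weak convexity. Separately, $L$-smoothness of $f$ implies that $f$ is $L$-weakly convex, so the same lemma with $\rho$ replaced by $L$ gives the modulus $2G_f\DDXApp L$. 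Taking the better of the two yields the factor $\min\{\rho,L\}$.

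The main obstacle is checking the hypotheses of that lemma. Its step (ii) applies the Lipschitz bound of $h$ at the point $tf(\xxApp)+(1-t)f(\yy)+\delta$, which can leave $\range(f)$. To handle this, I would use the Lipschitz constant of $h$ on the interval $[\min f, \max f + \rho\DDXApp^2/8]$ (or the analogous interval with $L$), noting that only the positive part matters. Should this produce a constant slightly larger than $2G_f\DDXApp$, I would instead redo the argument directly with the one-sided inequality $h(a+\delta)\le h(a)+h'(a+\delta)\delta$ and bound $\plus{a+\delta}$ accordingly.
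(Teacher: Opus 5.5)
Your proposal is correct and matches the paper's proof in substance. It uses the same feasible-point bound $\plus{f}\le G_f\DDXApp$ that underlies \Cref{lem:AppendixLipschitzContinuityPenaltyFunction} for the gradient bound, the same two-term splitting for $L_g$, and \Cref{lemma:HelperWeakConvexCompositionNonSmooth} for weak convexity; the paper cites Lemma~4.2 of \citet{drusvyatskiyEfficiencyMinimizingCompositions2019} for the modulus $L$, where you instead use that $L$-smoothness implies $L$-weak convexity.

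On the obstacle you flag: both of your fallbacks evaluate $h'$ at points that may exceed $\max f$, so they give a slightly larger constant. Applying convexity of $h$ at $f(z)$, with $z\define t\xxApp+(1-t)\yy$ and $a\define tf(\xxApp)+(1-t)f(\yy)$, gives exactly the stated constant: $h(f(z))-h(a)\le h'(f(z))\,(f(z)-a)\le 2\plus{f(z)}\,\delta\le 2G_f\DDXApp\,\delta$, because $h'\ge 0$, $f(z)\le a+\delta$, and $f(z)$ lies in the range of $f$.
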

\begin{proof}
 The weak convexity is a consequence of
 \Cref{lemma:HelperWeakConvexCompositionNonSmooth} and
 Lemma 4.2 in
 \citet{drusvyatskiyEfficiencyMinimizingCompositions2019}.
 We apply \Cref{lem:AppendixLipschitzContinuityPenaltyFunction} to obtain that
 $g$ is Lipschitz continuous. The third claim follows by
 Theorem 3.1 in
 \citet{drusvyatskiyEfficiencyMinimizingCompositions2019}.
 To show the $L_g$-smoothness, let $\xxApp, \yy \in \XXApp$. We have
 \begin{align*}
  \norm*{\nabla g(\xxApp) - \nabla g(\yy)}
   & \leq 2\cdot\norm*{\plus{f(\xxApp)} \nabla f(\xxApp) - \plus{f(\xxApp)}\nabla f(\yy)
  + \plus{f(\xxApp)} \nabla f(\yy)- \plus{f(\yy)} \nabla f(\yy)}                         \\
   & \leq 2 \cdot \norm{\plus{f(\xxApp)} \paren{\nabla f(\xxApp) - \nabla f(\yy)}}
  + 2\cdot \norm{\paren{\plus{f(\xxApp)} - \plus{f(\yy)}} \nabla f(\yy)}                 \\
   & \leq 2\cdot \Fmax L \norm{\xxApp - \yy} + 2\cdot G_f G_f \norm{\xxApp - \yy}        \\
   & = 2\paren{\Fmax L + G_f^2} \cdot \norm{\xxApp - \yy},
 \end{align*}
 since for all $\xxApp \in \XXApp$ we have
 $\plus{f(\xxApp)} \leq \abs{f(\xxApp)} \leq \max_{\xxApp \in \XXApp} \abs{f(\xxApp)}
  \defineRev \Fmax < \infty$.
 This concludes the claim.
\end{proof}

\subsubsection{Non-smooth Case: Exact and Quadratic Penalty}
We can generalize the result above to the case of non-smooth $f$ and $h$
being either the quadratic $\pen{\innerEmpty{}}$ or exact penalty
$\plus{\innerEmpty{}}$.

\begin{lemma}[Non-Smooth, Quadratic Penalty]
 \label{lemma:QuadraticPenaltyPropertiesNonSmoothCaseHelperPPPM}
 Let $f: \XXApp \to \RR$ be a $\rho$-weakly convex, (potentially) non-smooth,
 $G_f$-Lipschitz
 continuous function over a $\DDXApp$-bounded domain $\XXApp \subset \RR^d$.
 Furthermore, let
 $h: \RR \to \RR, \xxApp \mapsto \pen{\xxApp}$ be the \textbf{quadratic penalty} function.
 Then the composition $g \define h \circ f$ satisfies:
 \begin{itemize}
  \item (Weak Convexity) The penalty term $g$ is $(2G_f \DDXApp \cdot \rho)$-weakly convex.
  \item (Generalized Chain Rule) For all $\xxApp \in \XXApp$ we have
        \begin{align}
         \partial g(\xxApp) = 2 \plus{f(\xxApp)} \cdot \partial f(\xxApp).
        \end{align}
  \item (Bounded Sub-gradients) The sub-gradients of $g$ are uniformly bounded
        by $G_g \define 2\DDXApp G_f^2$.
 \end{itemize}
\end{lemma}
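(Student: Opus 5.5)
The plan is to treat the three claims separately and reduce each one to the already-proved \Cref{lemma:HelperWeakConvexCompositionNonSmooth} and \Cref{lem:AppendixLipschitzContinuityPenaltyFunction}, together with standard facts on composing a convex, non-decreasing outer function with a weakly convex inner function.

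\textbf{Weak convexity.} First note that $f$ has bounded range. It is $G_f$-Lipschitz on a $\DDXApp$-bounded domain, so its oscillation is at most $G_f\DDXApp$. As in the proof of \Cref{lem:AppendixLipschitzContinuityPenaltyFunction}, we use the existence of a feasible point to assume $\range(f)$ meets $(-\infty,0]$. Hence $\plus{f(\xxApp)}\le G_f\DDXApp$ on $\XXApp$.

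The function $h=\pen{\innerEmpty}$ is convex and non-decreasing. On $\range(f)$ its derivative $2\plus{\innerEmpty}$ is bounded by $2G_f\DDXApp$, so $h$ is $G_h\define 2G_f\DDXApp$-Lipschitz there. Applying \Cref{lemma:HelperWeakConvexCompositionNonSmooth} then gives $(2G_f\DDXApp\rho)$-weak convexity of $g$.

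\textbf{Generalized chain rule.} We work with the Fréchet/limiting subdifferential, which agrees with the Clarke subdifferential for weakly convex functions. The key fact is that $f$ is locally Lipschitz and subdifferentially regular, since it is a convex function minus a quadratic. The outer function $h$ is $C^1$ with $h'\ge 0$. The nonsmooth chain rule for a $C^1$ outer function composed with a regular locally Lipschitz inner function then gives equality:
\begin{align*}
 \partial g(\xxApp)=h'(f(\xxApp))\,\partial f(\xxApp)=2\plus{f(\xxApp)}\,\partial f(\xxApp).
\end{align*}
This is Clarke's Theorem 2.3.9 (ii) with regularity; see also Drusvyatskiy--Paquette's treatment for $h$ convex and $f$ weakly convex.

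To keep the argument self-contained, we would verify both inclusions directly.
\begin{itemize}
 \item[$\supseteq$:] For $v\in\partial f(\xxApp)$, use the weak-convexity subgradient inequality $f(\yy)\ge f(\xxApp)+\IP{v}{\yy-\xxApp}-\frac{\rho}{2}\norm{\yy-\xxApp}^2$. Combine it with monotonicity and convexity of $h$, i.e. $h(b)\ge h(a)+h'(a)(b-a)$. This yields a weak-convexity-type lower bound for $g$ with slope $h'(f(\xxApp))v$.
 \item[$\subseteq$:] Use a first-order expansion of $g$ and a case split. If $f(\xxApp)<0$, then $g\equiv 0$ near $\xxApp$ and both sides equal $\{0\}$. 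If $f(\xxApp)>0$, $h$ is locally smooth with positive slope, and dividing by $h'(f(\xxApp))$ recovers a subgradient of $f$.
\end{itemize}
The boundary case $f(\xxApp)=0$ is where I expect the main obstacle. There the right-hand side is $\{0\}$, so we must show $\partial g(\xxApp)=\{0\}$. This follows because $g(\yy)\le (G_f\norm{\yy-\xxApp})^2=o(\norm{\yy-\xxApp})$ near $\xxApp$ and $g\ge 0=g(\xxApp)$. Together these force every Fréchet subgradient to vanish, and show that $0$ is one.

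\textbf{Bounded subgradients.} This is immediate from the chain rule. Any element of $\partial g(\xxApp)$ has the form $2\plus{f(\xxApp)}v$ with $\norm{v}\le G_f$, using Lipschitz continuity of $f$. Hence its norm is at most $2G_f\DDXApp\cdot G_f=2\DDXApp G_f^2$, which matches \Cref{lem:AppendixLipschitzContinuityPenaltyFunction}.
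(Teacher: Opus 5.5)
Your proposal is correct. The weak-convexity claim is the paper's argument: bound the Lipschitz constant of $h$ on $\range(f)$ by $2G_f\DDXApp$, then apply \Cref{lemma:HelperWeakConvexCompositionNonSmooth}. The routes differ on the other two claims.

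\textbf{Chain rule.} The paper notes that $g$ is regular because it is weakly convex, then cites Rockafellar--Wets, Thm.~10.49, together with $h\in C^1$. You instead check both inclusions directly with Fr\'echet subgradients. Convexity and monotonicity of $h$ turn the weak-convexity inequality for $f$ at slope $v$ into one for $g$ at slope $2\plus{f(\xxApp)}v$. Continuity of $h'$ and Lipschitzness of $f$ let you divide by $h'(f(\xxApp))>0$. At the kink, $0\le g(\yy)\le G_f^2\norm{\yy-\xxApp}^2$ makes $g$ differentiable with zero gradient. This is longer but self-contained, and it shows exactly why the kink of $\plus{\innerEmpty}$ is harmless: $\pen{\innerEmpty}$ is $C^1$. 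The paper's version is a short appeal to regularity and a general variational-analysis chain rule.

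\textbf{Subgradient bound.} The paper reads it off the Lipschitz constant $2\DDXApp G_f^2$ of $g$ from \Cref{lem:AppendixLipschitzContinuityPenaltyFunction}, independently of the chain rule. You deduce it from the chain rule and $\norm{v}\le G_f$. Both work.

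\textbf{Citations.} Clarke's Theorem~2.3.9(ii), for a strictly differentiable outer function of one variable, already gives equality with no regularity of $f$. Drusvyatskiy--Paquette treat convex functions of \emph{smooth} maps, so that reference does not cover this composition. Your direct argument makes both citations unnecessary.

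\textbf{Feasibility convention.} You were right to state $\sup_{\XXApp}\plus{f}\le G_f\DDXApp$ explicitly; the paper uses it only implicitly, via the Lipschitz lemma. It is not among the lemma's hypotheses, yet both the modulus and the bound fail without it. Take $f(\xxApp)=c-\frac{\rho}{2}\norm{\xxApp}^2$ on the unit ball: then $G_f=\rho$, $\DDXApp=2$ and $\nabla^2 g(0)=-2\rho c\,I$. This violates the claimed modulus $4\rho^2$ as soon as $c>2\rho$.
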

\begin{proof}
 By \Cref{lem:AppendixLipschitzContinuityPenaltyFunction} we know that $h$ is
 $(2 G_f \DDXApp)$—Lipschitz continuous and the third claim is proved.
 Consequently, the first claim follows by
 applying \Cref{lemma:HelperWeakConvexCompositionNonSmooth}.

 Since $g$ is weakly convex, it is \emph{regular} by Proposition 4.4 in
 \citet{vialStrongWeakConvexity1983}, according to Definition 3 in
 \citet{liUnderstandingNotionsStationarity2020} of regularity, which is by
 Corollary 8.11 in \citet{rockafellarVariationalAnalysis1998} equivalent to
 Definition 6.24 and Corollary 6.29 in
 \citet{rockafellarVariationalAnalysis1998} of regularity. Applying the
 generalized chain rule for sub-gradients, Theorem 10.49 of
 \citet{rockafellarVariationalAnalysis1998} and using the fact that $h$ is
 continuously differentiable concludes the claim.
\end{proof}

\begin{lemma}[Non-Smooth, Exact Penalty]
 \label{lemma:ExactPenaltyPropertiesNonSmoothCaseHelperPPPM}
 Let $f: \XXApp \to \RR$ be a $\rho$-weakly convex, (potentially) non-smooth,
 $G_f$-Lipschitz
 continuous function over a $\DDXApp$-bounded domain $\XXApp \subset \RR^d$.
 Furthermore, let
 $h: \RR \to \RR, \xxApp \mapsto \plus{\xxApp}$ be the \textbf{exact penalty} function.
 Then the composition $g \define h \circ f$ satisfies:
 \begin{itemize}
  \item (Weak Convexity) The penalty term $g$ is $\rho$-weakly convex.
  \item (Generalized Chain Rule) For all $\xxApp \in \XXApp$ we have
        \begin{align}
         \partial g(\xxApp) =
         \begin{cases}
          \partial f(\xxApp),             & f(\xxApp) > 0, \\
          [0,1] \cdot \partial f(\xxApp), & f(\xxApp) = 0, \\
          \cbrac{0},                      & f(\xxApp) < 0,
         \end{cases}
         \label{eq:appendixGeneralizedChainRuleNonSmooth}
        \end{align}
        with a Minkowski product in the second case.
  \item (Bounded Sub-gradients) The sub-gradients of $g$ are uniformly
        bounded by $G_g \define G_f$.
 \end{itemize}
\end{lemma}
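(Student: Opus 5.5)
We prove the three claims of \Cref{lemma:ExactPenaltyPropertiesNonSmoothCaseHelperPPPM} in the order: bounded sub-gradients (Lipschitz continuity), then weak convexity, then the generalized chain rule. The outer function $h(y) = \plus{y}$ is convex, non-decreasing and $1$-Lipschitz on all of $\RR$. Hence for $\xxApp, \yy \in \XXApp$,
\begin{align*}
 \abs{g(\xxApp) - g(\yy)} \leq \abs{f(\xxApp) - f(\yy)} \leq G_f \norm{\xxApp - \yy},
\end{align*}
so $g$ is $G_f$-Lipschitz. For a weakly convex, locally Lipschitz function, every (regular/limiting) sub-gradient is bounded in norm by the Lipschitz constant, which gives $G_g = G_f$.

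For weak convexity we invoke \Cref{lemma:HelperWeakConvexCompositionNonSmooth} with $h = \plus{\innerEmpty{}}$. Its hypotheses are exactly the ones just listed: $h$ is non-decreasing, convex and $G_h = 1$-Lipschitz, and $f$ is $\rho$-weakly convex with bounded range, the latter because $f$ is continuous on a bounded domain. The lemma then yields that $g$ is $(1\cdot \rho)$-weakly convex, as claimed.

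The chain rule is the main step. As in the proof of \Cref{lemma:QuadraticPenaltyPropertiesNonSmoothCaseHelperPPPM}, both $f$ and $g$ are weakly convex and therefore Clarke regular (Prop.~4.4 in \citet{vialStrongWeakConvexity1983}). The convex function $h$ is regular with $\partial h(y) = \{1\}$ for $y>0$, $[0,1]$ at $y=0$, and $\{0\}$ for $y<0$. We apply the generalized chain rule, Theorem 10.49 in \citet{rockafellarVariationalAnalysis1998}, for $h \circ f$ with $h$ convex and Lipschitz and $f$ Lipschitz. It gives
\begin{align*}
 \partial g(\xxApp) \subseteq \bigcup_{y \in \partial h(f(\xxApp))} \partial (y f)(\xxApp).
\end{align*}
Since $y \geq 0$, we have $\partial(yf) = y\,\partial f$, and the inclusion becomes an equality under regularity of $f$ and $h$ together with monotonicity ($y \ge 0$). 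Substituting the three forms of $\partial h$ produces the three cases of \eqref{eq:appendixGeneralizedChainRuleNonSmooth}, with the Minkowski product $[0,1]\cdot\partial f(\xxApp)$ in the kink case.

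The obstacle is justifying equality rather than mere inclusion at $f(\xxApp)=0$. The two strict cases are easier: if $f(\xxApp) \neq 0$, continuity gives $g = f$ or $g \equiv 0$ on a neighborhood, and the formula follows directly. For $f(\xxApp)=0$, I would argue via the convexified functions. Add $\frac{\rho}{2}\norm{\innerEmpty{}-\xxApp}^2$ to reduce to convex $\tilde f$, and use the classical convex chain rule for $\max\{\tilde f,\, \cdot\}$-type compositions with a non-decreasing convex outer function (Rockafellar, convex analysis). Note that the quadratic term has zero gradient at $\xxApp$, so the subdifferentials agree there.
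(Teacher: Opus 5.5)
Your proposal is correct and follows the paper's proof essentially step for step. Lipschitzness of $g$ gives the gradient bound, \Cref{lemma:HelperWeakConvexCompositionNonSmooth} with $G_h = 1$ gives weak convexity, and Theorem 10.49 of \citet{rockafellarVariationalAnalysis1998} combined with the subdifferential of $\plus{\innerEmpty{}}$ gives the chain rule. Your extra argument for equality rather than inclusion at a kink point $f(\xxApp)=0$ goes beyond the paper, which reads equality directly off Theorem 10.49. The cleanest way to make it precise is to note that $g + \frac{\rho}{2}\norm{\innerEmpty{}-\xxApp}^2 = \max\{f + \frac{\rho}{2}\norm{\innerEmpty{}-\xxApp}^2, \frac{\rho}{2}\norm{\innerEmpty{}-\xxApp}^2\}$ is a maximum of two convex functions, both active at $\xxApp$. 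The convex max rule then gives $\partial g(\xxApp) = \mathrm{conv}(\partial f(\xxApp) \cup \{0\}) = [0,1]\cdot\partial f(\xxApp)$.
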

\begin{proof}
 If $f$ is $G_f$-Lipschitz, then trivially also $h\circ f$ is $G_f$-Lipschitz.
 Additionally, $\plus{\innerEmpty{}}$ is non-decreasing, convex,
 and $1$-Lipschitz. Thus, the first claim follows by applying
 \Cref{lemma:HelperWeakConvexCompositionNonSmooth}.
 With the same argument as above, we can apply the generalized chain rule
 for sub-gradients, Thm. 10.49 in
 \citet{rockafellarVariationalAnalysis1998}, and obtain
 \begin{align*}
  \partial_\xxApp g(\xxApp)
  = \bigcup_{\yy \in \partial h(f(\xxApp))} \partial(y f)(\xxApp),
 \end{align*}
 with $\xxApp \in \XXApp$. To obtain \eqref{eq:appendixGeneralizedChainRuleNonSmooth},
 we use the known sub-differential of $\plus{\xxApp}$:
 \begin{align*}
  \partial_\xxApp \plus{\xxApp} =
  \begin{cases}
   \cbrac{1},  & \xxApp > 0, \\
   \brac{0,1}, & \xxApp = 0, \\
   \cbrac{0},  & \xxApp < 0.
  \end{cases}
 \end{align*}
 Thus, at $f(\xxApp) = 0$, we obtain a convex combination of sub-gradients
 $g \in \partial f(\xxApp)$ and weights $\alpha \in [0,1]$. The result follows.
\end{proof}

\subsection{Strong Duality: Translation of Convergence Guarantees}
The challenge in \Cref{sec:TheoreticalAnalysis} is the translation of the
guarantee on $\Exp{\penOpt{\penGenNot, \beta}(\innerEmpty{})}$ to a
guarantee on the constraint violation, namely
$\Exp{\plus{F_2(\innerEmpty{})}}$. The following results show how to
perform this translation under strong duality for both, the exact and the
quadratic penalty function. For simplicity, we formulate the following
corollaries in the deterministic case. Taking expectation yields the
desired results.

\begin{lemma}[Upper Bound Constraint Violation]
 \label{lemma:AppendixTranslatePenaltyGuaranteeToFuncGapConstrViol}
 Let $\xxAppOpt$ be the optimal solution of \eqref{eq:MainProblemRL}
 and assume that strong duality holds, i.e. there exists $\nuOpt \geq 0$
 such that $\Fopt = d(\nuOpt)$. Then we have for all $\xxApp \in \XXApp$ the
 inequality
 \begin{align*}
  - \nuOpt \plus{F_2(\xxApp)} \leq F_1(\xxApp) - \Fopt.
 \end{align*}
 In particular, for all $\xxApp\in \XXApp$ and any $\beta \geq 2\cdot\nuOpt$,
 we have
 \begin{align*}
  0 & \leq F_1(\xxApp) - \Fopt + \frac{\beta}{2} \plus{F_2(\xxApp)} = \penOpt{1, \beta}(\xxApp).
 \end{align*}
\end{lemma}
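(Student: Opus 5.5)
The plan is to read both claims directly off the definition of strong duality in \Cref{ass:StrongDuality}, used in the minimization convention of this appendix. Here $\Fopt$ is the minimum of $F_1$ over feasible points, and strong duality means
\begin{align*}
 \Fopt = d(\nuOpt) = \inf_{\yy \in \XXApp}\big(F_1(\yy) + \nuOpt F_2(\yy)\big).
\end{align*}
This is the usual Lagrangian sign once $F_1$ denotes the objective being minimized; in \eqref{eq:MainProblemRL} this corresponds to $-\HH$.

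\textbf{First inequality.} Since $\Fopt$ is the infimum of the Lagrangian, we get for every $\xxApp \in \XXApp$
\begin{align*}
 \Fopt \leq F_1(\xxApp) + \nuOpt F_2(\xxApp).
\end{align*}
Next use $\nuOpt \geq 0$ together with $F_2(\xxApp) \leq \plus{F_2(\xxApp)}$, which gives $\nuOpt F_2(\xxApp) \leq \nuOpt \plus{F_2(\xxApp)}$. Combining the two and rearranging yields $-\nuOpt \plus{F_2(\xxApp)} \leq F_1(\xxApp) - \Fopt$, which is the first claim.

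\textbf{Second inequality.} Add $\frac{\beta}{2}\plus{F_2(\xxApp)}$ to both sides of the first inequality. This gives
\begin{align*}
 F_1(\xxApp) - \Fopt + \frac{\beta}{2}\plus{F_2(\xxApp)} \geq \Big(\frac{\beta}{2} - \nuOpt\Big)\plus{F_2(\xxApp)}.
\end{align*}
The right-hand side is nonnegative whenever $\beta \geq 2\nuOpt$, because $\plus{F_2(\xxApp)} \geq 0$. The left-hand side is exactly $\penOpt{1,\beta}(\xxApp)$, written with the factor $\frac{\beta}{2}$ as in \eqref{eq:InexactPenaltyContractionResults}.

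\textbf{Main obstacle.} The only delicate point is bookkeeping of sign conventions. \Cref{ass:StrongDuality} is stated for the maximization problem \eqref{eq:MainProblemRL}, with $d(\nu) = \inf -(F_1 + \nu F_2)$ and $\dOpt = -\Fopt$, whereas this appendix works with minimization. I would therefore first restate the assumption in the minimization form above, mapping $F_1 \mapsto -\HH$ so the Lagrangian becomes $F_1 + \nu F_2$ with $\nu \geq 0$. I would then check that this identification is consistent with weak duality, i.e.\ that $d(\nu) \leq \Fopt$ holds in this convention. Once that is settled, the argument is the two lines above. Hidden convexity is not needed here, only the existence of the multiplier $\nuOpt$.
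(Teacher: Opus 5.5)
Your proposal is correct and is the standard argument behind the paper's one-line citation of Nesterov's Lemma 3.1.21. The chain is $\Fopt = d(\nuOpt) \le F_1(\theta) + \nuOpt F_2(\theta) \le F_1(\theta) + \nuOpt \plus{F_2(\theta)}$, after which you add $\frac{\beta}{2}\plus{F_2(\theta)}$ to both sides. Your caution about signs is also justified: the Lagrangian $-(F_1+\nu F_2)$ in \Cref{ass:StrongDuality} puts the wrong sign on the constraint term for the maximization problem. The minimization form $F_1 + \nu F_2$ with $F_1 = -\HH$ that you adopt is the one under which weak duality, and hence this lemma, actually holds.
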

\begin{proof}
 Lemma 3.1.21 in \citet{nesterovIntroductoryLecturesConvex2004}.
\end{proof}

\begin{corollary}[Constraint Violation, Exact Penalty]
 \label{cor:AppendixConstraintViolationExactPenaltyTranslation}
 Assume that for \eqref{eq:MainProblemRL} strong duality holds
 with a Lagrangian multiplier $\nuOpt \geq 0$, and let
 $\xxApp \in \XXApp$ be a point such that
 $\Exp{\penOpt{1, \beta}(\xxApp)} \leq \epsilon$ with the exact penalty
 and $\beta > 2 \cdot \nuOpt$.
 Then we have the following translation for the constraint violation:
 \begin{align*}
  0 & \leq \Exp{\plus{F_2(\xxApp)}} \leq
  \frac{\Exp{\penOpt{1, \beta}(\xxApp)}}{\frac{\beta}{2} - \nuOpt}
  \leq \frac{\epsilon}{\frac{\beta}{2} - \nuOpt}.
 \end{align*}
\end{corollary}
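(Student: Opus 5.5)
The plan is to combine the lower bound of \Cref{lemma:AppendixTranslatePenaltyGuaranteeToFuncGapConstrViol} with the definition of the exact-penalty optimality gap. Throughout, $\penOpt{1,\beta}(\xxApp) = F_1(\xxApp) - \Fopt + \frac{\beta}{2}\plus{F_2(\xxApp)}$, as in the minimization convention of \eqref{eq:MainProblem}.

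First, fix a deterministic $\xxApp \in \XXApp$. By \Cref{lemma:AppendixTranslatePenaltyGuaranteeToFuncGapConstrViol}, which rests on strong duality, we have
\begin{align*}
 F_1(\xxApp) - \Fopt \geq -\nuOpt \plus{F_2(\xxApp)} .
\end{align*}
Substituting this into the definition of the gap gives
\begin{align*}
 \penOpt{1,\beta}(\xxApp) \geq \paren*{\frac{\beta}{2} - \nuOpt}\plus{F_2(\xxApp)} .
\end{align*}
Since $\beta > 2\nuOpt$, the factor $\frac{\beta}{2} - \nuOpt$ is strictly positive. Dividing by it yields the pointwise bound
\begin{align*}
 0 \leq \plus{F_2(\xxApp)} \leq \frac{\penOpt{1,\beta}(\xxApp)}{\frac{\beta}{2} - \nuOpt} .
\end{align*}
The left inequality is immediate, because $\plus{\innerEmpty{}} \geq 0$.

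Next, let $\xxApp$ be random, for instance the algorithm's output. The pointwise bound holds for every realization, so we can take expectations using monotonicity and linearity; the constant denominator passes outside the expectation. Finally, the hypothesis $\Exp{\penOpt{1,\beta}(\xxApp)} \leq \epsilon$ gives the last inequality of the chain.

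The only delicate step is the sign convention in the duality inequality. Strong duality gives $\Fopt = d(\nuOpt) \leq F_1(\xxApp) + \nuOpt F_2(\xxApp)$ for every $\xxApp$. Combining this with $F_2 \leq \plus{F_2}$ and $\nuOpt \geq 0$ gives $-\nuOpt\plus{F_2(\xxApp)} \leq F_1(\xxApp) - \Fopt$. I would write out this one-line derivation explicitly rather than only cite it, to make sure it matches the minimization form. Measurability of $\plus{F_2(\xxApp)}$ is not an issue, since $F_2$ is continuous.
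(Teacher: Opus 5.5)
Your proposal is correct and follows the paper's proof exactly. The paper likewise starts from $-\nuOpt\plus{F_2(\xxApp)} \leq F_1(\xxApp) - \Fopt$ (\Cref{lemma:AppendixTranslatePenaltyGuaranteeToFuncGapConstrViol}), adds and subtracts $\frac{\beta}{2}\plus{F_2(\xxApp)}$ to obtain $(\frac{\beta}{2}-\nuOpt)\plus{F_2(\xxApp)} \leq \penOpt{1,\beta}(\xxApp)$, and takes expectations. Your explicit derivation of that lemma from $\Fopt = d(\nuOpt) \leq F_1(\xxApp) + \nuOpt F_2(\xxApp)$ in the minimization convention is a sound addition, since the paper only cites Nesterov for this step.
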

\begin{proof}
 We invoke \Cref{lemma:AppendixTranslatePenaltyGuaranteeToFuncGapConstrViol}
 and obtain as a trivial consequence:
 \begin{align*}
  -\nuOpt \plus{F_2(\xxApp)} \leq F_1(\xxApp) - \Fopt +\frac{\beta}{2} \plus{F_2(\xxApp)}
  -\frac{\beta}{2} \plus{F_2(\xxApp)}.
 \end{align*}
 Taking expectation concludes the claim.
\end{proof}

\begin{corollary}[Constraint Violation, Quadratic Penalty]
 \label{cor:AppendixConstraintViolationQuadraticPenaltyTranslation}
 Assume that for \eqref{eq:MainProblemRL} strong duality holds
 with a Lagrangian multiplier $\nuOpt \geq 0$, and let
 $\xxApp \in \XXApp$ be a point such that
 $\Exp{\penOpt{2, \beta}(\xxApp)} \leq \epsilon$ with the quadratic penalty.
 Then we have the following translation for the constraint violation:
 \begin{align*}
  \Exp{\plus{F_2(\xxApp)}} \leq \frac{\nuOpt
   + \sqrt{\nuOpt^2 + 2 \beta \epsilon}}{\beta}.
 \end{align*}
\end{corollary}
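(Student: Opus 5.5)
The plan is to combine the strong-duality lower bound of \Cref{lemma:AppendixTranslatePenaltyGuaranteeToFuncGapConstrViol} with the quadratic structure of the penalty. That yields a scalar quadratic inequality in the constraint violation, which we then solve.

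I work in the appendix convention (minimization, $\Fopt$ the minimal value) with
\begin{align*}
 \penOpt{2,\beta}(\xxApp) = F_1(\xxApp) - \Fopt + \frac{\beta}{2}\pen{F_2(\xxApp)}.
\end{align*}
Write $v(\xxApp) \define \plus{F_2(\xxApp)} \geq 0$.

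\textbf{Step 1 (pointwise inequality).} By \Cref{lemma:AppendixTranslatePenaltyGuaranteeToFuncGapConstrViol}, strong duality gives $F_1(\xxApp) - \Fopt \geq -\nuOpt v(\xxApp)$ for every $\xxApp \in \XXApp$. Substituting this into the definition of $\penOpt{2,\beta}$ yields, pointwise,
\begin{align*}
 \frac{\beta}{2} v(\xxApp)^2 - \nuOpt v(\xxApp) \leq \penOpt{2,\beta}(\xxApp).
\end{align*}
This step uses only the identity $\pen{F_2} = v^2$.

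\textbf{Step 2 (expectation and Jensen).} Take expectations over the randomness of $\xxApp$. Set $x \define \Exp{v(\xxApp)} \geq 0$. Since $t \mapsto t^2$ is convex, Jensen gives $\Exp{v^2} \geq x^2$. Because $\beta > 0$, the quadratic term can be lower bounded, and together with the hypothesis $\Exp{\penOpt{2,\beta}(\xxApp)} \leq \epsilon$ this gives
\begin{align*}
 \frac{\beta}{2} x^2 - \nuOpt x - \epsilon \leq 0.
\end{align*}
The linear term is not affected by Jensen, since $\Exp{\nuOpt v} = \nuOpt x$ exactly.

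\textbf{Step 3 (solve the quadratic).} The left-hand side is a convex parabola in $x$ with roots
\begin{align*}
 x_\pm = \frac{\nuOpt \pm \sqrt{\nuOpt^2 + 2\beta\epsilon}}{\beta}.
\end{align*}
Here $x_- \leq 0 \leq x_+$. Hence every $x \geq 0$ satisfying the inequality obeys $x \leq x_+$, which is the claimed bound.

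\textbf{Main obstacle.} There is no real obstacle. The only points needing care are the direction of Jensen's inequality, which is favorable here because we lower bound $\Exp{v^2}$, and the bookkeeping of the factor $\tfrac{\beta}{2}$ in the penalty. If the penalty is instead written with weight $\beta$, as in \eqref{eq:MainPenalty}, the same argument goes through with $\beta$ replaced by $2\beta$.
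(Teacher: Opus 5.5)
Your proposal is correct and follows the paper's proof essentially verbatim. Both arguments use the strong-duality bound $F_1(\xxApp)-\Fopt\ge-\nuOpt\plus{F_2(\xxApp)}$ from \Cref{lemma:AppendixTranslatePenaltyGuaranteeToFuncGapConstrViol}, apply Jensen's inequality to $\pen{F_2(\xxApp)}$, and solve $s(\tfrac{\beta}{2}s-\nuOpt)-\epsilon\le 0$ over $s\ge 0$. Your remark on weight $\beta$ versus $\tfrac{\beta}{2}$ is a worthwhile check, since the main-text $\penOpt{2,\beta}$ uses weight $\beta$ and the resulting bound is then only smaller.
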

\begin{proof}
 We invoke \Cref{lemma:AppendixTranslatePenaltyGuaranteeToFuncGapConstrViol}
 and obtain
 \begin{align*}
  \epsilon \geq \Exp{\penOpt{2, \beta}(\xxApp)}
  = \Exp{F_1(\xxApp) - \Fopt + \frac{\beta}{2}
   \pen{F_2(\xxApp)}}
  \overset{\text{Jensen}}{\geq}
  - \nuOpt \Exp{\plus{F_2(\xxApp)}}
  + \frac{\beta}{2} \left(\Exp{\plus{F_2(\xxApp)}}\right)^2.
 \end{align*}
 Solving
 \begin{align*}
  s\paren*{\frac{\beta}{2} s - \nuOpt} - \epsilon \leq 0, \;\text{s.t.} \;s \geq 0
 \end{align*}
 concludes the claim.
\end{proof}
\newpage
\section{Experimental Details \& Ablation Studies}
\label{sec:AppendixNumericalExperiments}

\begin{figure}[H]
 \begin{subfigure}{0.48\textwidth}
  \centering
  \includegraphics[scale=0.3]{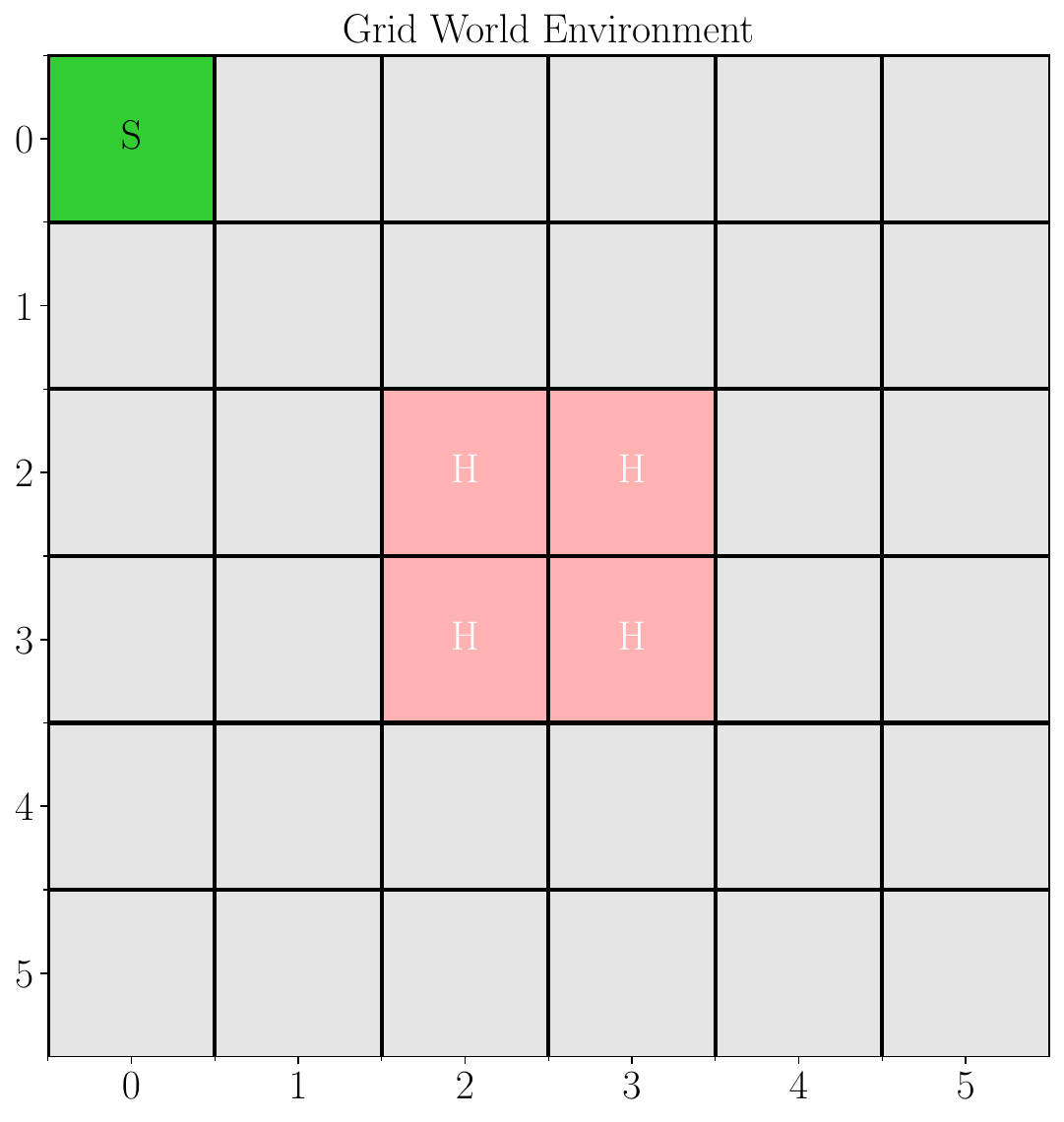}
  \caption{Visualization of the \emph{Frozen Lake} grid-world environment used in
   \Cref{sec:NumericalExperimentsLinearPerformanceConstraints} to test and ablate our
   \Cref{algo:Penalty}. "\texttt{S}" refers to the starting position of the agent,
   "\texttt{H}" to a (penalized) unsafe hole and " " to a safe tile.}
  \label{fig:SmallGridWorldVis}
 \end{subfigure}
 \hfill
 \begin{subfigure}{0.48\textwidth}
  \includegraphics[width=\textwidth]{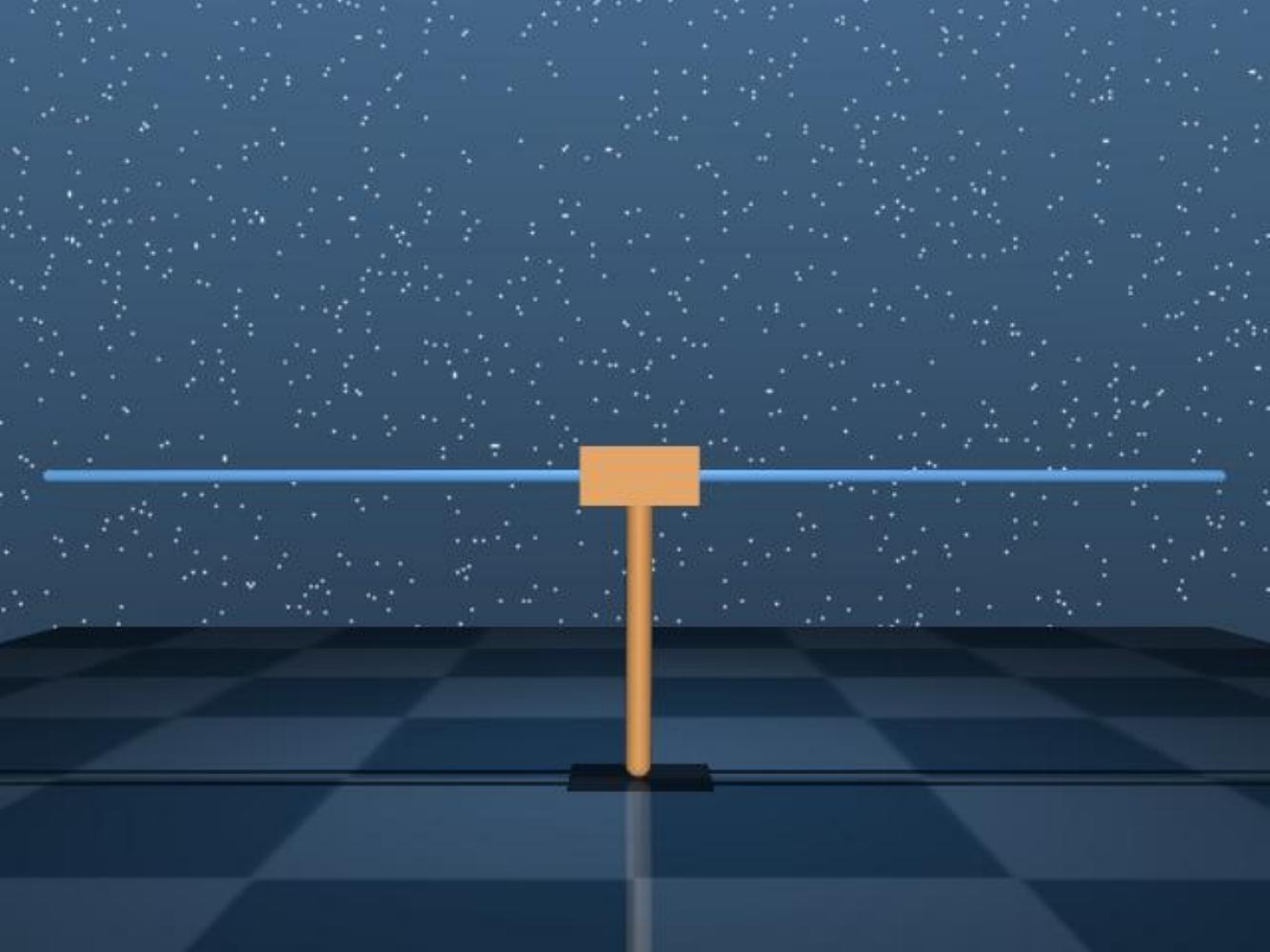}
  \caption{Visualization of the \emph{SafeCartpole} \cite{dulac-arnoldChallengesRealworldReinforcement2021}
   which is an extension of the Google DeepMind Control suite \cite{todorovMuJoCoPhysicsEngine2012,
    tassaDeepMindControlSuite2018}.
   The pendulum starts in the resting downwards position and the goal is to safely explore the
   dynamics of the system without exceeding an imposed slider safety limits on the left and right.}
  \label{fig:CartpoleVis}
 \end{subfigure}
 \caption{Overview of the grid-world with discrete state-action space (left) and
  the safe cartpole exploration with continuous state-action spaces (right).}
 \label{fig:Environment}
\end{figure}

\subsection{Grid World: Linear Performance Constraint}
\label{subsec:AppendixNumExpLinPerformanceConstraint}

In this ablation study, we perform a Maximum Entropy exploration under a
linear performance constraint, i.e. \eqref{eq:MainProblemRL} of the form
\begin{align*}
 \max_{\theta\in\Theta} & \; \HH(\lambda^{\pi_\theta})           \\
 \text{s.t.}            & \; \IP{c}{\lambda^{\pi_\theta}} \leq 0
\end{align*}
where the cost function is of the form
\begin{align}
 c(\ss) = \begin{cases}
           -50,   & \ss = \texttt{Hole} \\
           0.001, & \text{else}
          \end{cases},
 \tag{$\Reg_{\mathrm{lin}}$}
 \label{eq:AppendixCostFuncLinearPerformanceConstraint}
\end{align}
i.e. penalizing the four holes in center of the environment \Cref{fig:SmallGridWorldVis}.
We ablate the penalty parameter $\beta \in \{5\cdot 10^{-4}, 0.001, 0.005, 0.01\}$, the batch sizes $B \in \{8,
 24, 72\}$, and the step sizes $\eta \in \{0.001, 0.01, 0.1\}$.
We want to highlight that episodes terminate upon reaching the bottom-right
tile. Consequently, the corresponding terminal state has an estimated
occupancy $\lambdaHat$ of almost zero for all methods, as it does not contribute
informative state-action visitation beyond termination.

The full ablation results of the objective are shown in
\Cref{fig:MaxEntropyLinConstraintAblationStudyObjective}, for the
constraint in \Cref{fig:MaxEntropyLinConstraintAblationStudyConstraint}
respectively.

\textbf{Penalty Parameter $\beta$.}
Since our analysis requires $\beta \sim \epsilon^{-1}$, we validate
empirically that this is actually not required in practice. While bigger
$\beta$ might be indeed necessary to balance the tradeoff between
improvement on the objective function and the constraint violation, cf.
\Cref{fig:MaxEntropyLinConstraintAblationStudyConstraint}, we ablate in
\Cref{subsub:AppendixLinearConstraintLOWER_C} that the order of magnitude
of $\beta$ rather depends on the scale of the safety constraint and does
not necessarily have to be huge in practice.

\textbf{Batch Size $B$.} While the penalty parameter $\beta$ plays a significant role ensuring
constraint violation, the batch size is elementary for the objective
function. Larger $B$ becomes more important if one wants to increase the
step size to speed-up the training process, cf.
\Cref{fig:MaxEntropyLinConstraintAblationStudyObjective}. Empirically, we
cannot observe that batches of $B \sim \epsilon^{-4}$ are necessary,
indicating that our estimates are most likely too conservative.

\textbf{SGD Step Size $\eta$.}
The step size plays a crucial role, in particular to ensure convergence in
the objective. In particular the interplay between the batch size $B$ and
$\eta$ suggests, that both of them have to be chosen proportionally.

\textbf{Implementation Details Primal-Dual \cite{yingPolicybasedPrimalDualMethods2025}.}
We implement the primal-dual method of \citet{yingPolicybasedPrimalDualMethods2025} as a baseline
and perform a hyperparameter search for the primal and dual learning rate
$\alpha_\theta, \alpha_\mu$ and the momentum term $\alpha_t$
in the notation
of \citet{yingPolicybasedPrimalDualMethods2025} Algorithm 1,
for parameters
$\alpha_\theta \in \{0.001, 0.005, 0.01, 0.05,0.1, 0.5, 1\}$,
$\alpha_\mu \in \{0.001, 0.005, 0.01, 0.05,0.1, 0.5, 1\}$, and
$\alpha_t \in \{0.001, 0.005, 0.01, 0.03, 0.05, 0.1, 1\}$.
We also ablate the batch size $B \in \{8, 24, 72\}$,
as for our PGP-method and pick the best configuration ($\alpha_\theta =
 0.01$, $\alpha_\mu= 0.001$, $\alpha = 1$, i.e. no momentum) displayed in
\Cref{fig:Figure1MaxEntropyPerformanceConstrainedVsUnconstrained}.
Interestingly, also for the primal-dual method, the batch size does not
significantly change the performance, which is why we focus in
\Cref{fig:Figure1MaxEntropyPerformanceConstrainedVsUnconstrained} on $B=8$
for the sake of comparability. In our experiments, we could not see an
improvement using their variance reduction momentum scheme:
the resulting policies exhibited the oscillating constraint violation
behavior and the best performance was achieved
with $\alpha_t \equiv 1$, $t \in \setm{T}$.

\subsubsection{Unconstrained Baseline}
\label{subsub:AppendixUnconstrainedBaseline}
For comparison, we also run the unconstrained baseline where we
maximize entropy without any safety constraints. In particular, we run our penalty algorithm
with $\beta = 0.0$ which corresponds to running unconstrained SGD to maximize the entropy
and evaluate the constraint violation with respect to
\eqref{eq:AppendixCostFuncLinearPerformanceConstraint}.
The results are shown in \Cref{fig:MaxEntropyUnconstrainedAblation}.

\begin{figure}
 \centering
 \begin{subfigure}{0.8\textwidth}
  \centering
  \includegraphics[width=\textwidth]{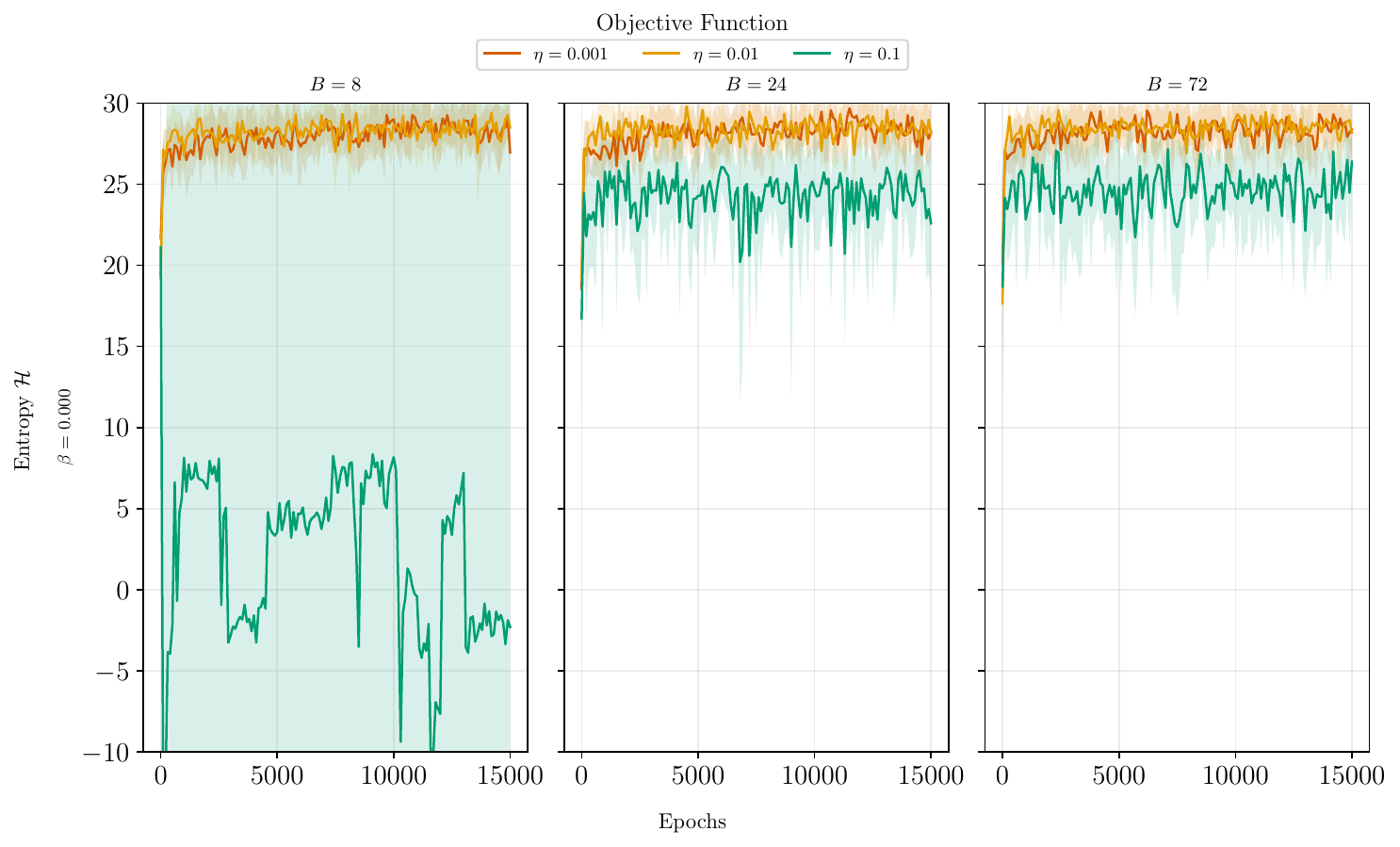}
  \caption{Unconstrained \textbf{entropy objective}.}
  \label{fig:MaxEntropyUnconstrainedAblationObjective}
 \end{subfigure}
 \hfill
 \begin{subfigure}{0.8\textwidth}
  \centering
  \includegraphics[width=\textwidth]{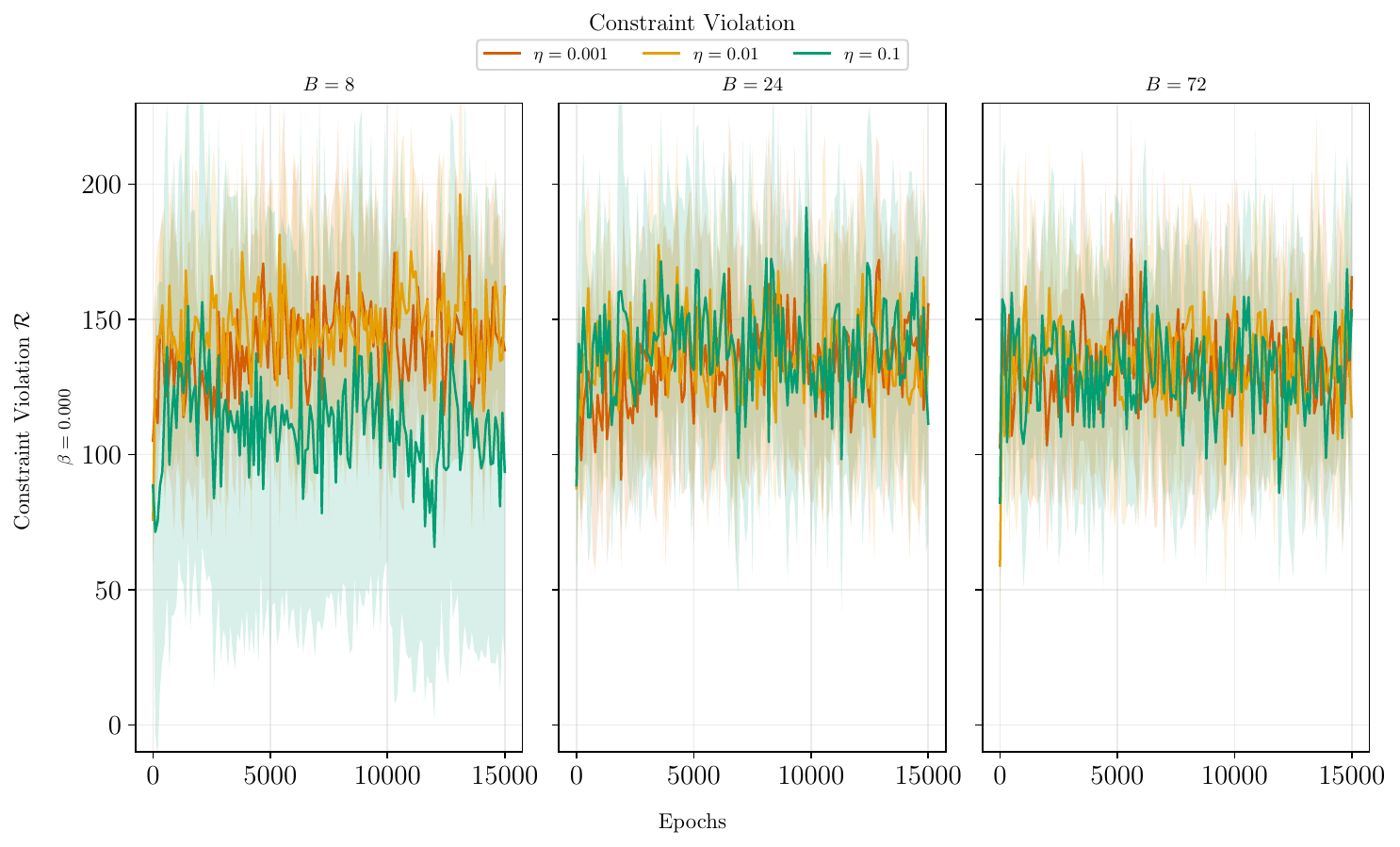}
  \caption{Unconstrained \textbf{constraint violation}.}
  \label{fig:MaxEntropyUnconstrainedAblationConstraint}
 \end{subfigure}
 \caption{Ablation study of \textbf{unconstrained entropy maximization} on the environment \Cref{fig:SmallGridWorldVis}
  with the (not considered) safety constraint \eqref{eq:AppendixCostFuncLinearPerformanceConstraint}.
  We ablate different batch sizes $B \in \{8, 24, 72\}$ and step sizes $\eta \in \{0.001, 0.01, 0.1\}$.
  Each curve shows the mean and one standard deviation, i.e. $\mu \pm \sigma$, for 10 different random seeds.}
 \label{fig:MaxEntropyUnconstrainedAblation}
\end{figure}

\begin{figure}
 \centering
 \includegraphics[width=\textwidth]{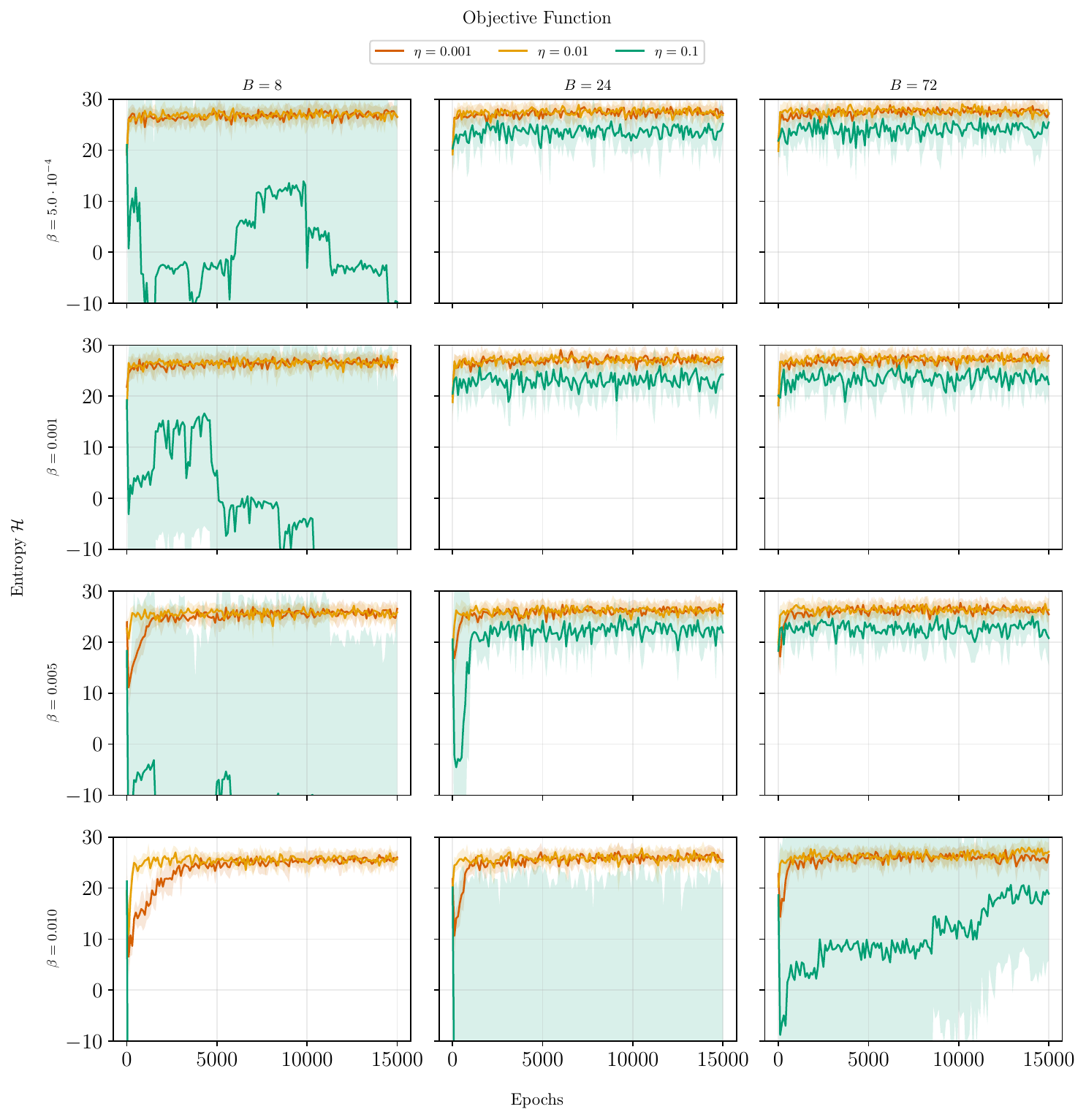}
 \caption{\textbf{Ablation study} of the \textbf{entropy function value} of our
  \Cref{algo:Penalty} under linear constraints \eqref{eq:AppendixCostFuncLinearPerformanceConstraint}
  running on the environment \Cref{fig:SmallGridWorldVis}.
  We ablate different batch sizes $B \in \{8, 24, 72\}$, penalty
  parameters $\beta \in \{5\cdot 10^{-4}, 0.001, 0.005, 0.01\}$
  and different step sizes $\eta \in \{0.001, 0.01, 0.1\}$
  of Stochastic Gradient Descent. Each curve shows the mean and one standard deviation
  of the entropy, i.e.
  $\mu \pm \sigma$, for 10 different random seeds.}
 \label{fig:MaxEntropyLinConstraintAblationStudyObjective}
\end{figure}

\begin{figure}
 \centering
 \includegraphics[width=\textwidth]{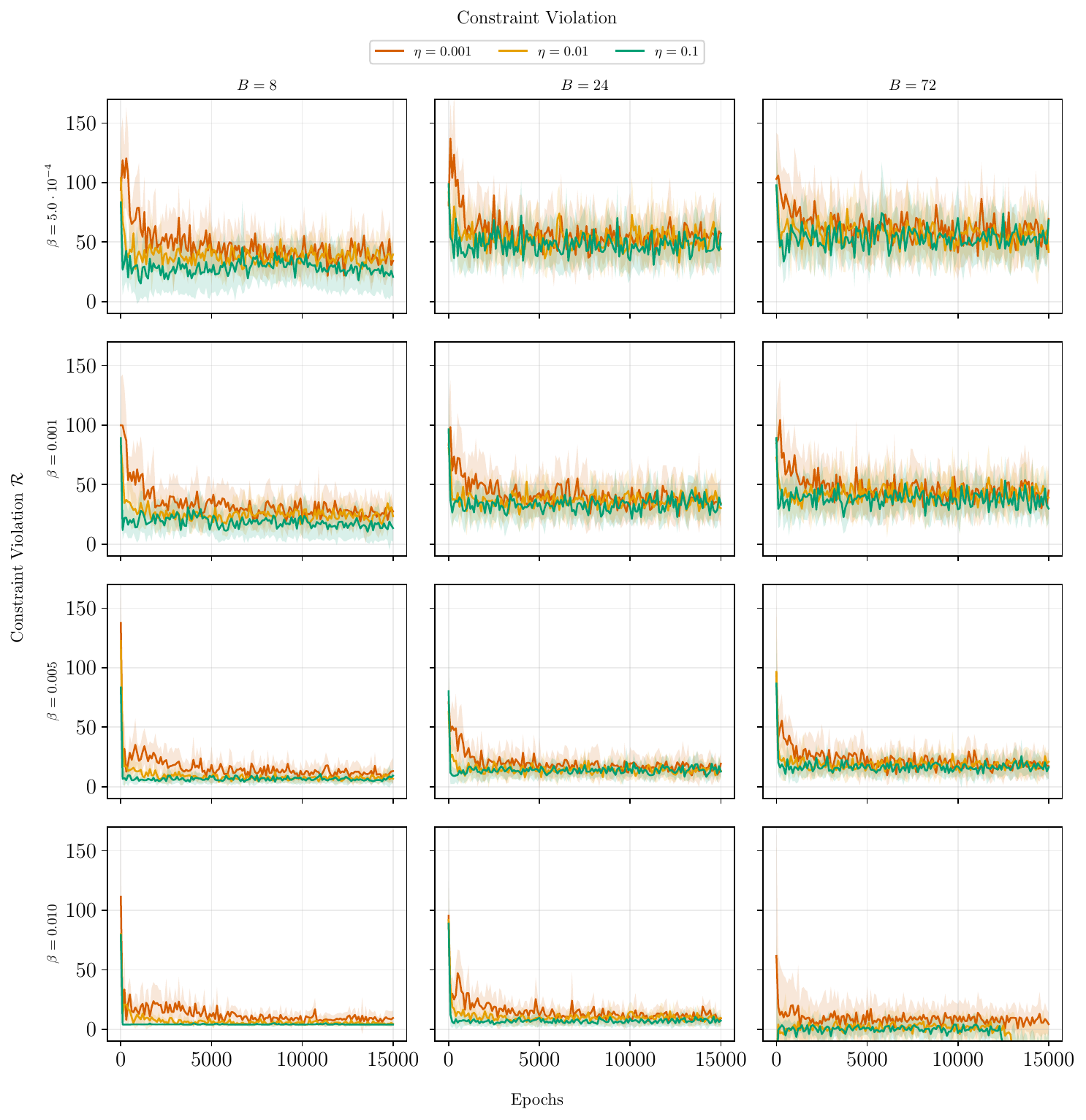}
 \caption{\textbf{Ablation study} of the \textbf{(linear) constraint function value}
  \Cref{eq:AppendixCostFuncLinearPerformanceConstraint} for our \Cref{algo:Penalty}
  running on the environment \Cref{fig:SmallGridWorldVis}.
  We ablate different batch sizes $B \in \{8, 24, 72\}$, penalty
  parameters $\beta \in \{5\cdot 10^{-4}, 0.001, 0.005, 0.01\}$
  and different step sizes $\eta \in \{0.001, 0.01, 0.1\}$
  of Stochastic Gradient Descent. Each curve shows the mean and one standard deviation
  of the linear constraint function, i.e.
  $\mu \pm \sigma$, for 10 different random seeds.}
 \label{fig:MaxEntropyLinConstraintAblationStudyConstraint}
\end{figure}

\subsubsection{Lower Penalty for Holes, Increased $\beta$ required}
\label{subsub:AppendixLinearConstraintLOWER_C}
As already suspected in
\Cref{sec:NumericalExperimentsLinearPerformanceConstraints}, we find that
choosing the penalty parameter $\beta$ does not have such a strong
dependency on $\sim\epsilon^{-1}$ but rather inversely proportionally
depends on the scale of the cost function. To test this hypothesis, we
modify the cost function
\eqref{eq:AppendixCostFuncLinearPerformanceConstraint} to a relaxed
version with a $-2$ penalty for the Holes, i.e.
\begin{align}
 c(\ss) = \begin{cases}
           -2,    & \ss = \texttt{Hole} \\
           0.001, & \text{else}
          \end{cases},
 \tag{$\Reg_{\mathrm{lin}, \mathrm{relax}}$}
 \label{eq:AppendixCostFuncLinearPerformanceConstraintLOWER_C}
\end{align}
as an alternative.
As mentioned in the analysis of \Cref{algo:Penalty} in
\Cref{sec:NumericalExperimentsLinearPerformanceConstraints},
we argue that the choice of $\beta$ in practice does not necessarily
scale with $\sim \epsilon^{-1}$, but should be rather chosen
depending on the scale of the reward, i.e. entropy maximization,
and the order of magnitude of the constraint, in order to
balance the constraint violation and the reward maximization.
In particular, if we adjust the safety formulation to
\eqref{eq:AppendixCostFuncLinearPerformanceConstraintLOWER_C},
i.e. a lower penalty for acting unsafe,
we require the penalty parameter $\beta$ to be larger to
achieve similar results, as shown in
\Cref{fig:MaxEntropyLinConstraintAblationStudyObjectiveLOWER_C}
and
\Cref{fig:MaxEntropyLinConstraintAblationStudyConstraintLOWER_C},
respectively, as we expected. Compared to the previous run, also in
this case the step size $\eta = 0.1$ seems to be too big to either
converge or deliver competitive results.

\begin{figure}
 \centering
 \begin{subfigure}[t]{0.32\textwidth}
  \centering
  \includegraphics[width=\textwidth]{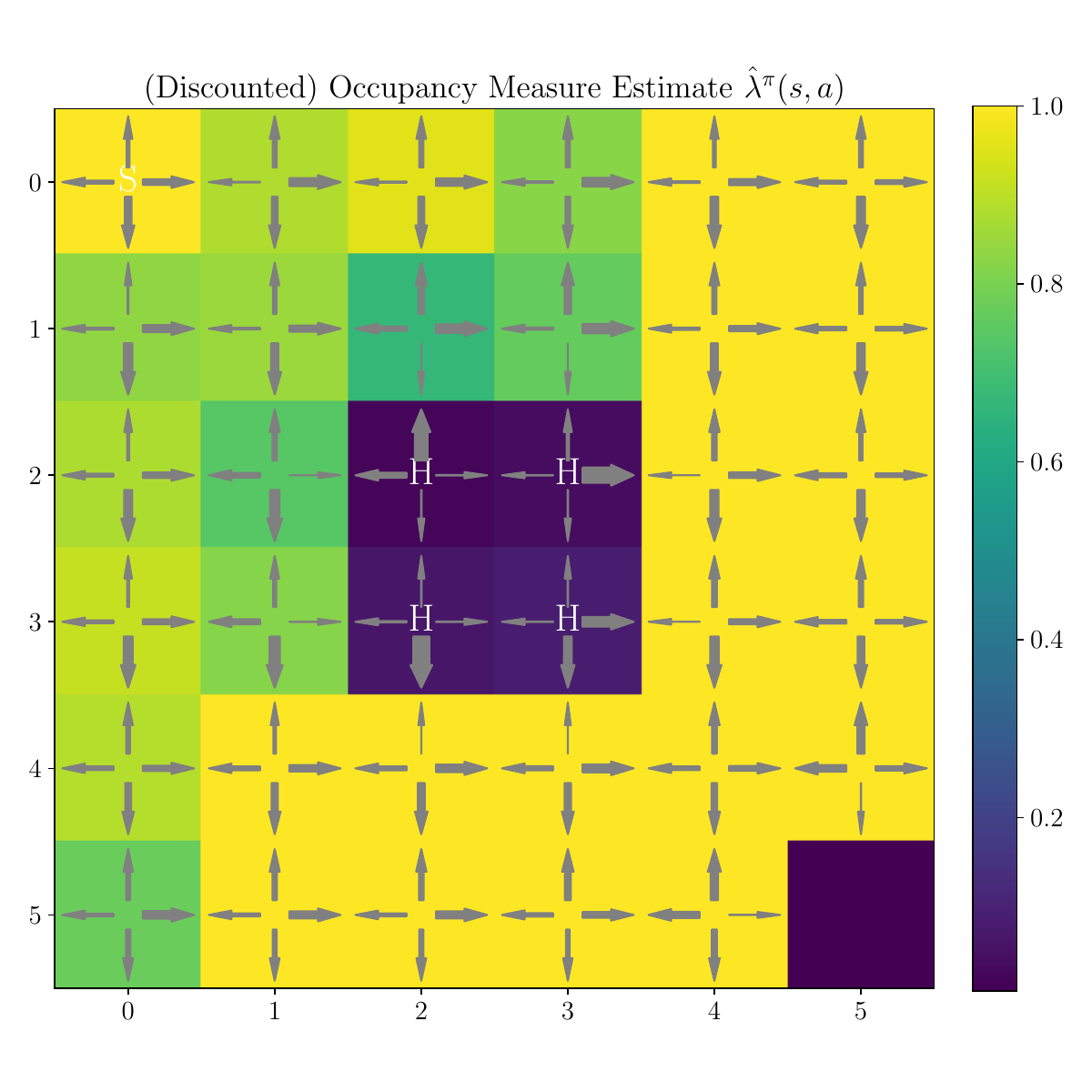}
  \caption{
   Final policy $\pi_{\theta^{(N)}}$ of executing Penalty Policy Gradient Method in \Cref{algo:Penalty}
   with $\beta= 0.005$ and the constraint
   \eqref{eq:AppendixCostFuncLinearPerformanceConstraint}.}
  \label{fig:VisPolicyMaxEntropyLinearConstraint}
 \end{subfigure}
 \hfill
 \begin{subfigure}[t]{0.32\textwidth}
  \centering
  \includegraphics[width=\textwidth]{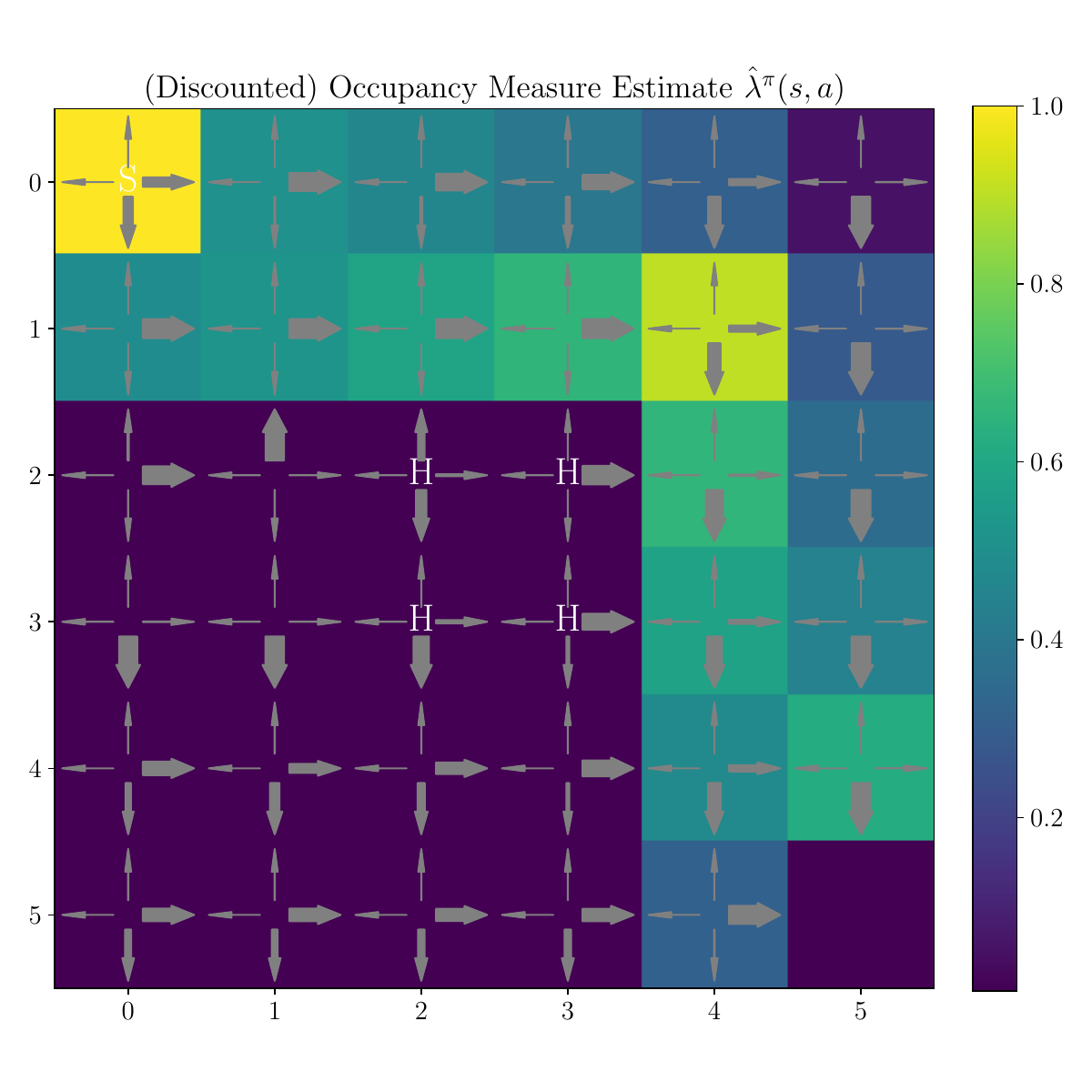}
  \caption{
   Final policy $\pi_{\theta^{(N)}}$ of executing the primal-dual method of \citet{yingPolicybasedPrimalDualMethods2025}
   with $\alpha_\theta= 0.001, \alpha_t \equiv 1$ (notation of their Algorithm 1) and the constraint
   \eqref{eq:AppendixCostFuncLinearPerformanceConstraint}.}
  \label{fig:VisPolicyMaxEntropyLinearConstraintPrimalDual}
 \end{subfigure}
 \hfill
 \begin{subfigure}[t]{0.32\textwidth}
  \centering
  \includegraphics[width=\textwidth]{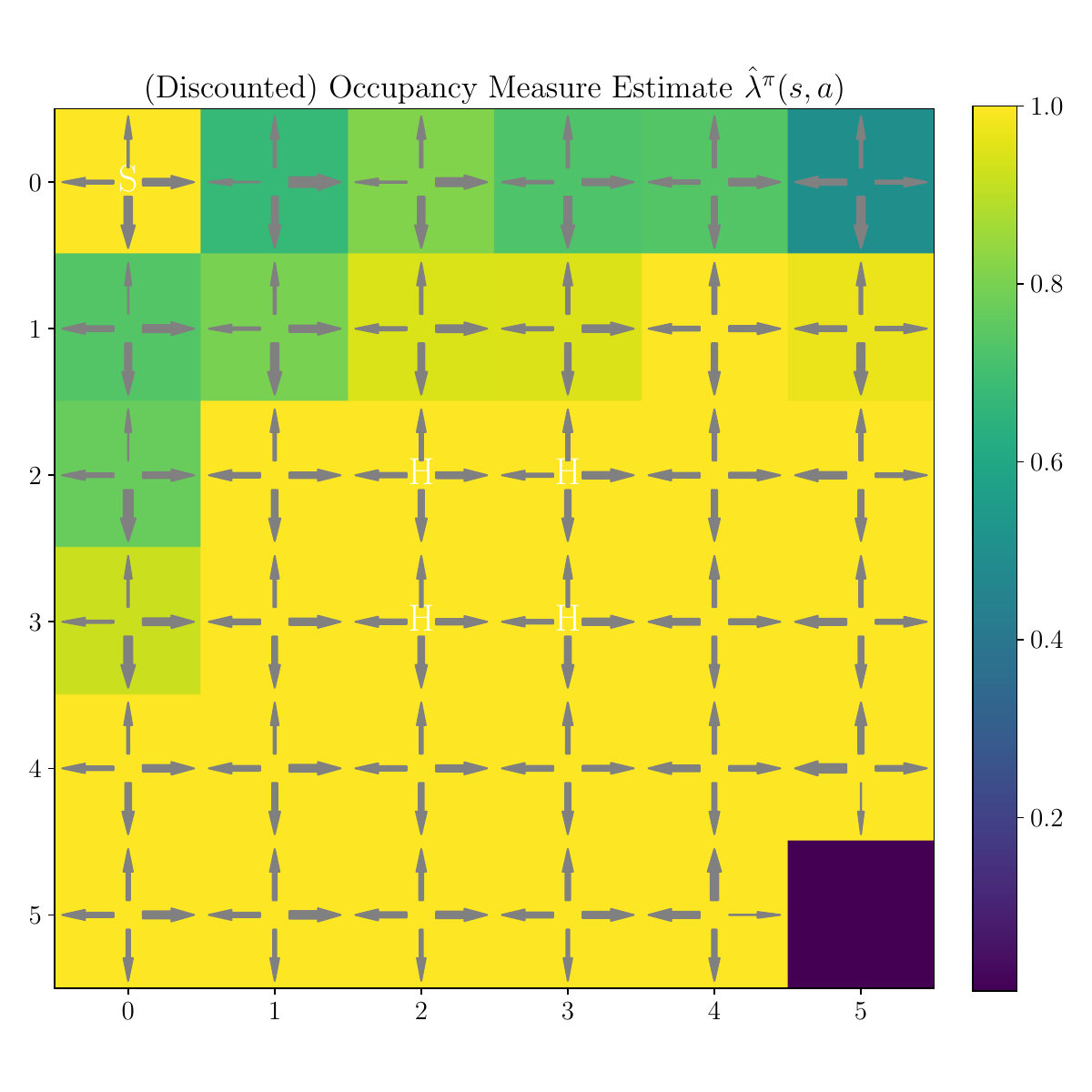}
  \caption{Final policy $\pi_{\theta^{(N)}}$ after running (unconstrained) SGD on \eqref{eq:MainProblemRL},
   i.e. \Cref{algo:Penalty} with $\beta = 0.0$.}
  \label{fig:VisPolicyMaxEntropyUnconstrained}
 \end{subfigure}
 \caption{Policies of \Cref{fig:Figure1MaxEntropyPerformanceConstrainedVsUnconstrained}:
 Comparison of the estimate of the
 state-action occupancy measure $\hat{\lambda}^{\pi^{(N)}}$ of
 the last iterate policies obtained by
 the Penalty Policy Gradient Method
 \Cref{algo:Penalty}, the primal-dual method of \citet{yingPolicybasedPrimalDualMethods2025},
 and running unconstrained SGD to
 maximize the entropy (without a safety constraint). For PGP and unconstrained SGD,
 we use a step-size of $\eta = 0.01$ and in all of the three cases we estimate the state-action
 occupancy measure via
 \eqref{eq:StateOccupancyMeasureMonteCarloEstimate}
 using a batch size of $B=8$.
 One can clearly observe that the penalized version respects the
 unsafe, penalized tiles in the center, while simply running
 SGD to maximize the entropy is not safe. The primal-dual algorithm exhibits
 constraint oscillations and yields a weaker performance, i.e. not fully exploring the
 space.}
 \label{fig:VisPolicyMaxEntropyComparison}
\end{figure}

\begin{figure}
 \centering
 \includegraphics[width=\textwidth]{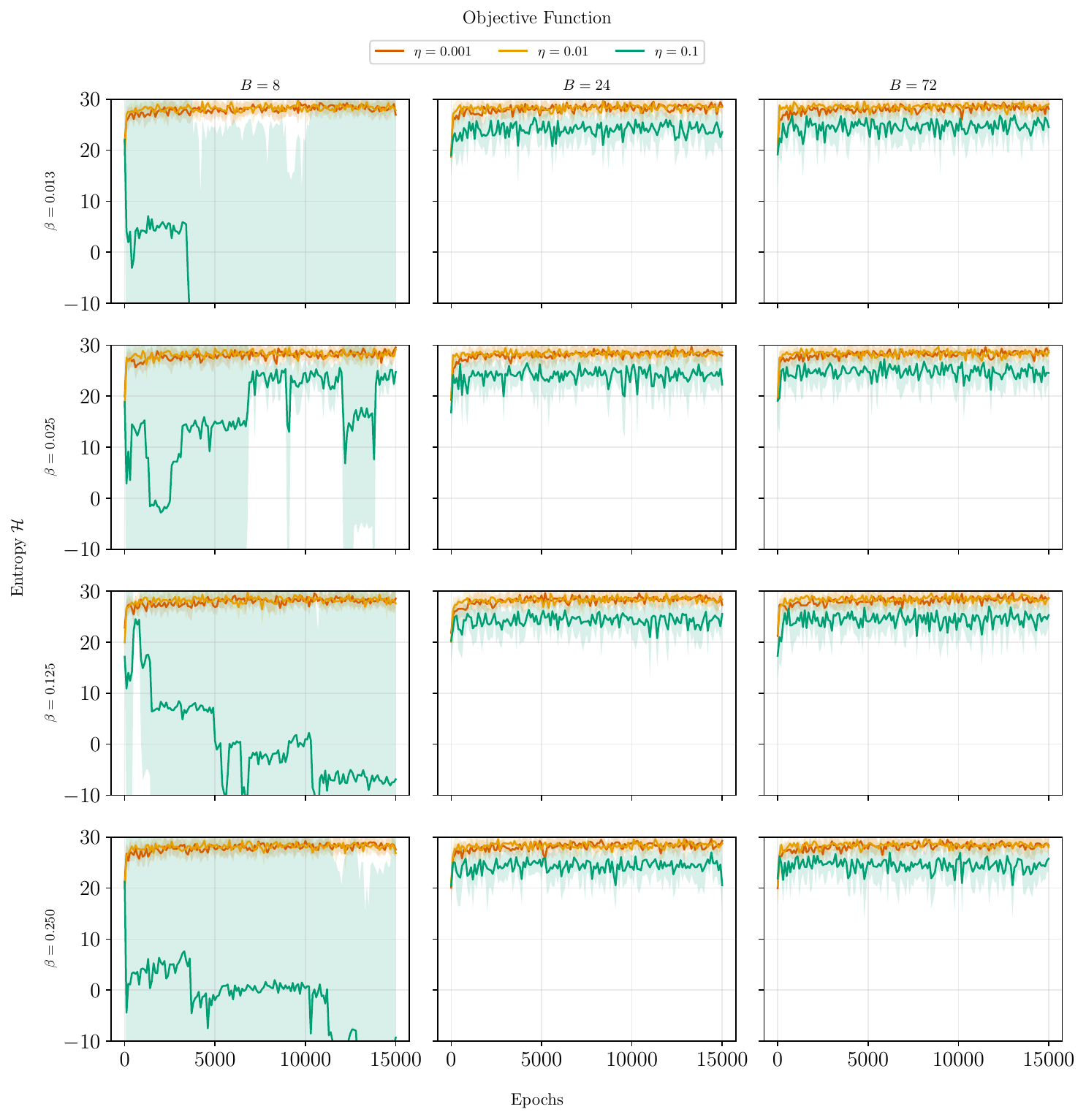}
 \caption{\textbf{Ablation study} of the \textbf{entropy function value} for our
  \Cref{algo:Penalty} under the \textbf{\underline{relaxed} (linear) constraint function value}
  \Cref{eq:AppendixCostFuncLinearPerformanceConstraintLOWER_C} for our \Cref{algo:Penalty}
  running on the environment \Cref{fig:SmallGridWorldVis}.
  We ablate different batch sizes $B \in \{8, 24, 72\}$, \textbf{larger penalty
   parameters} $\beta \in \{0.013, 0.025, 0.125, 0.25\}$
  and different step sizes $\eta \in \{0.001, 0.01, 0.1\}$
  of Stochastic Gradient Descent. Each curve shows the mean and one standard deviation
  of the entropy, i.e.
  $\mu \pm \sigma$, for 10 different random seeds.}
 \label{fig:MaxEntropyLinConstraintAblationStudyObjectiveLOWER_C}
\end{figure}
\begin{figure}

 \centering
 \includegraphics[width=\textwidth]{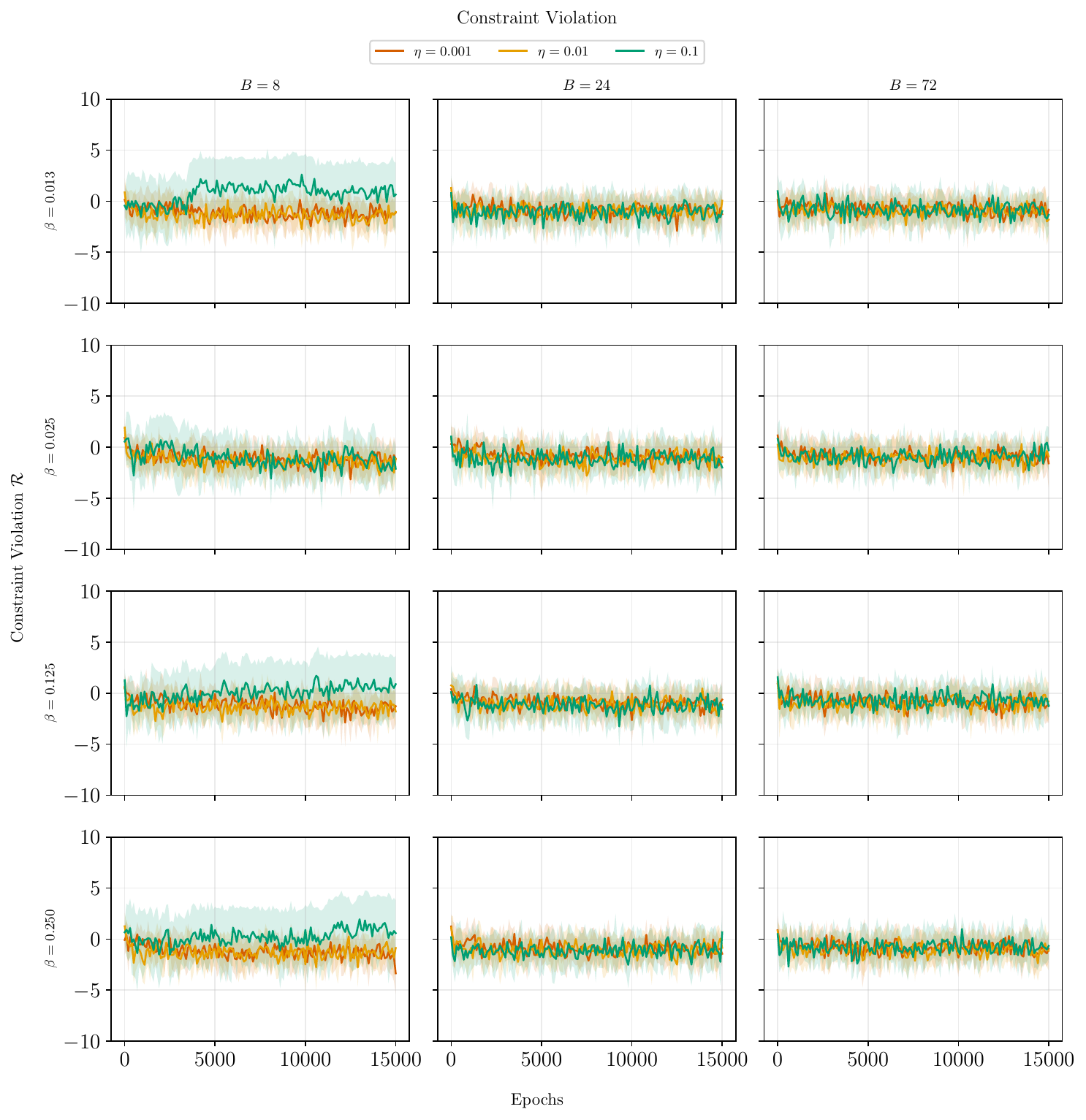}
 \caption{\textbf{Ablation study} of the \textbf{\underline{relaxed} (linear) constraint function value}
  \Cref{eq:AppendixCostFuncLinearPerformanceConstraintLOWER_C} for our \Cref{algo:Penalty}
  running on the environment \Cref{fig:SmallGridWorldVis}.
  We ablate different batch sizes $B \in \{8, 24, 72\}$, \textbf{larger penalty
   parameters} $\beta \in \{0.013, 0.025, 0.125, 0.25\}$
  and different step sizes $\eta \in \{0.001, 0.01, 0.1\}$
  of Stochastic Gradient Descent. Each curve shows the mean and one standard deviation
  of the linear constraint function, i.e.
  $\mu \pm \sigma$, for 10 different random seeds.}
 \label{fig:MaxEntropyLinConstraintAblationStudyConstraintLOWER_C}
\end{figure}

\subsection{Grid World: Reference Trajectory Constraint -- Norm Constraint}
\label{sec:AppendixGridWorldNormConstr}
In order to showcase our method's capability of performing maximum entropy exploration beyond linear
constraints, we focus on a safe exploration task under imitation learning constraints.
The reference policy $\pi_{\mathrm{ref}}$ is, in the notation of
\Cref{fig:SmallGridWorldVis}, first going all the way to the right and then
down until it terminates in the bottom right tile. We encode the
constrained problem with an $\ell_2$-norm constrain of the form
\begin{align}
 \Reg_{\ell_2}(\lambda^\pi) \define \norm{\lambda^\pi- \lambda^{\pi_{\mathrm{ref}}}}_2 \leq b,
 \tag{$\Reg_{\ell_2}$}
 \label{eq:PenaltyNormConstraint}
\end{align}
for a constraint violation budget $b \in \RR_{> 0}$.
We ablate different values of $b \in \{0.0, 0.2, 0.5, 1.0\}$ and run our
penalty method with $\beta = 0.5$ and a batch size of $B=8$.

\textbf{Results.} The final policies for each $b$ are visualized in
\Cref{fig:MaxEntropyNormConstraintPolicy} and the corresponding training
progress in \Cref{fig:MaxEntropyNormConstraintObjectives}. One can clearly
see that as the shift $b$ increases, the constraint becomes looser; thus
the policy can deviate further from the reference policy and obtain a
higher entropy value.

\begin{figure*}[!htbp]
 \centering
 \setlength{\tabcolsep}{3pt}
 \begin{tabular}{cc}
  $b = 0.0$                                                                                             & $b = 0.2$ \\
  \includegraphics[width=0.45\linewidth]{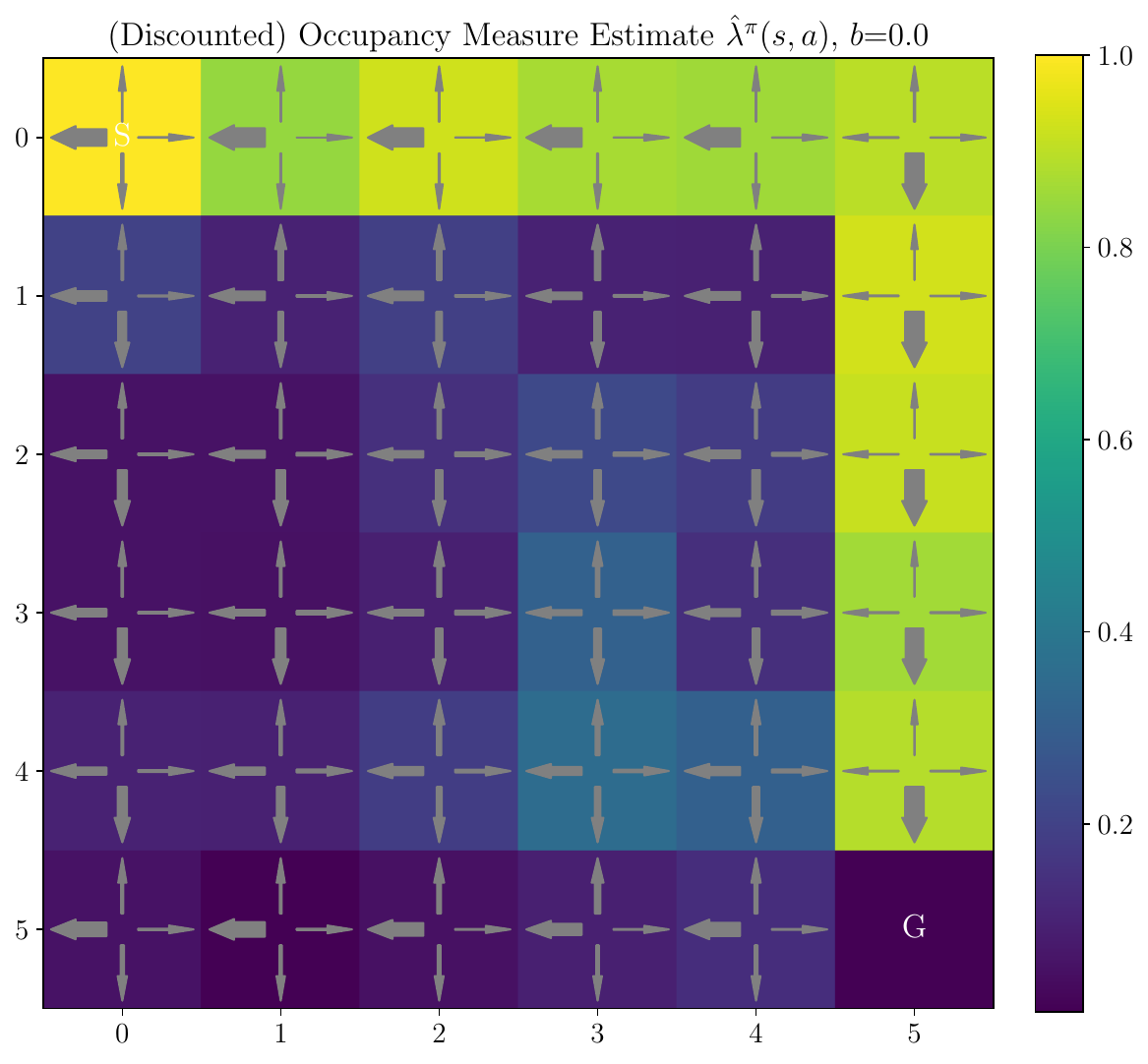} &
  \includegraphics[width=0.45\linewidth]{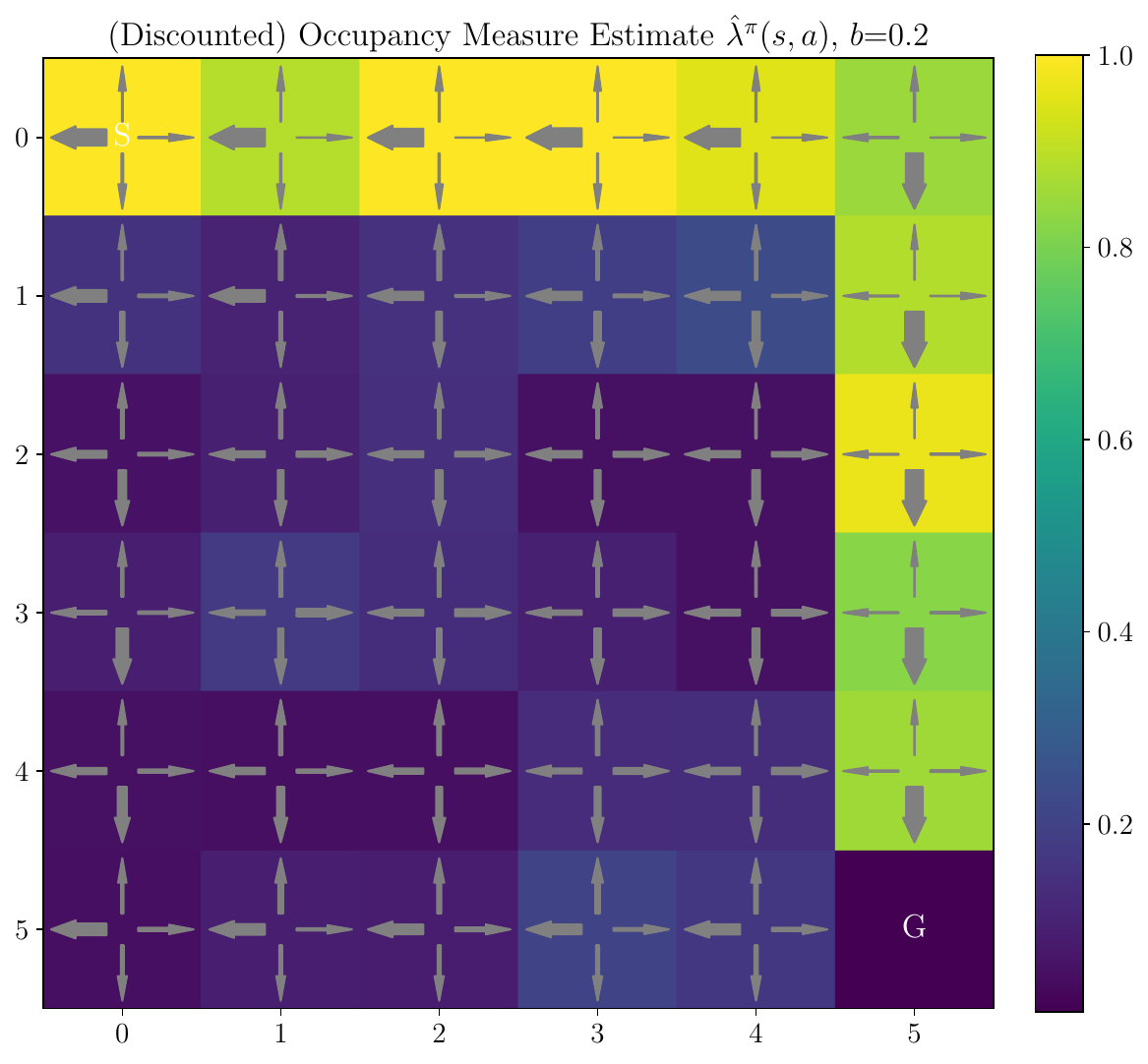}             \\
  $b = 0.5$                                                                                             & $b = 1.0$ \\
  \includegraphics[width=0.45\linewidth]{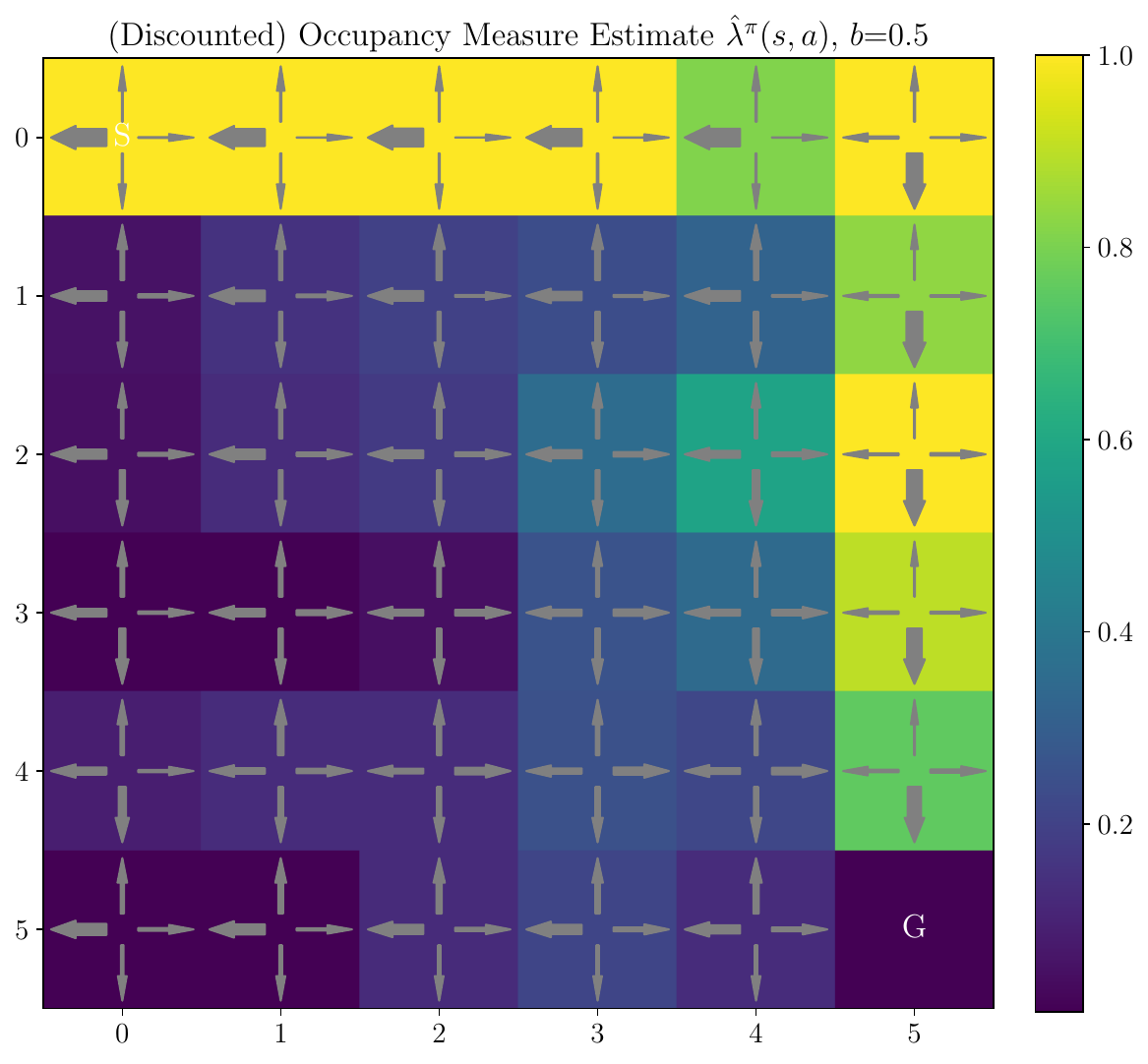} &
  \includegraphics[width=0.45\linewidth]{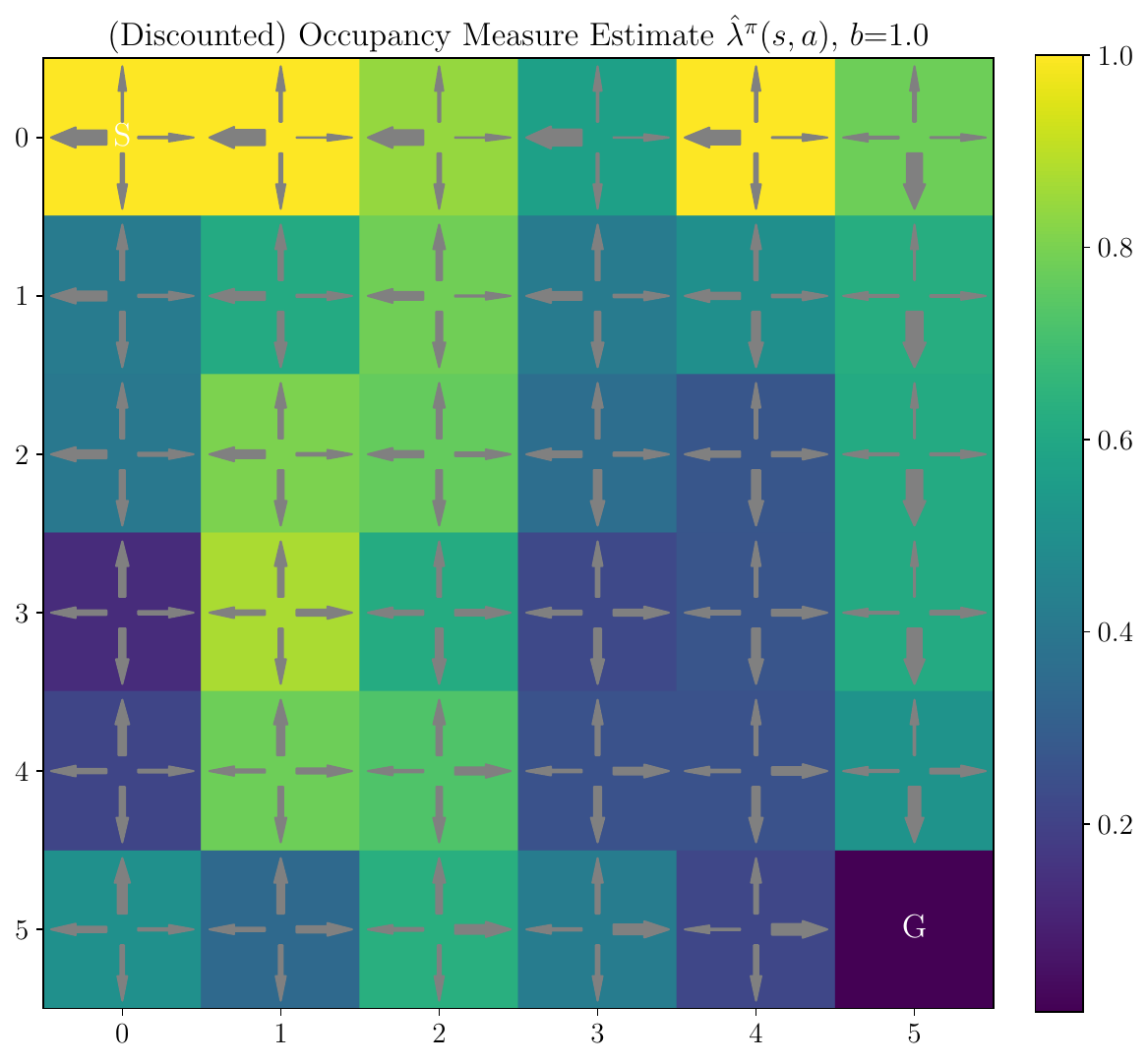}             \\
 \end{tabular}
 \caption{\textbf{Final policies} for maximum entropy with norm constraint \eqref{eq:PenaltyNormConstraint}
  for varying constraint shift parameter $b$.
  Each subfigure shows the learned final policy for different shifts
  $b = 0.0, 0.2, 0.5, 1.0$ (top-left to bottom-right) of the right-hand side of the constraint.
  Clearly, as $b$ increases, the constraint becomes looser, allowing the learned policy to
  deviate further from the reference policy.
 }
 \label{fig:MaxEntropyNormConstraintPolicy}
\end{figure*}

\begin{figure*}[!htbp]
 \centering
 \setlength{\tabcolsep}{3pt}
 \begin{tabular}{cc}
  $b = 0.0$                                                                                                           & $b = 0.2$ \\
  \includegraphics[width=0.45\linewidth]{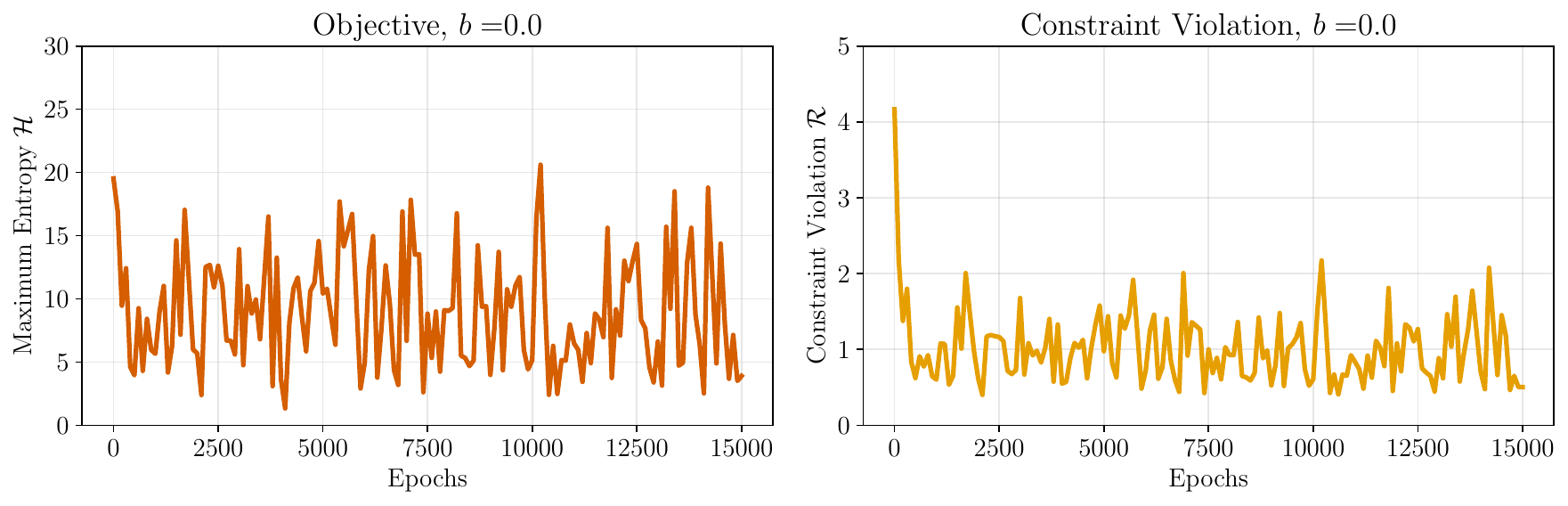} &
  \includegraphics[width=0.45\linewidth]{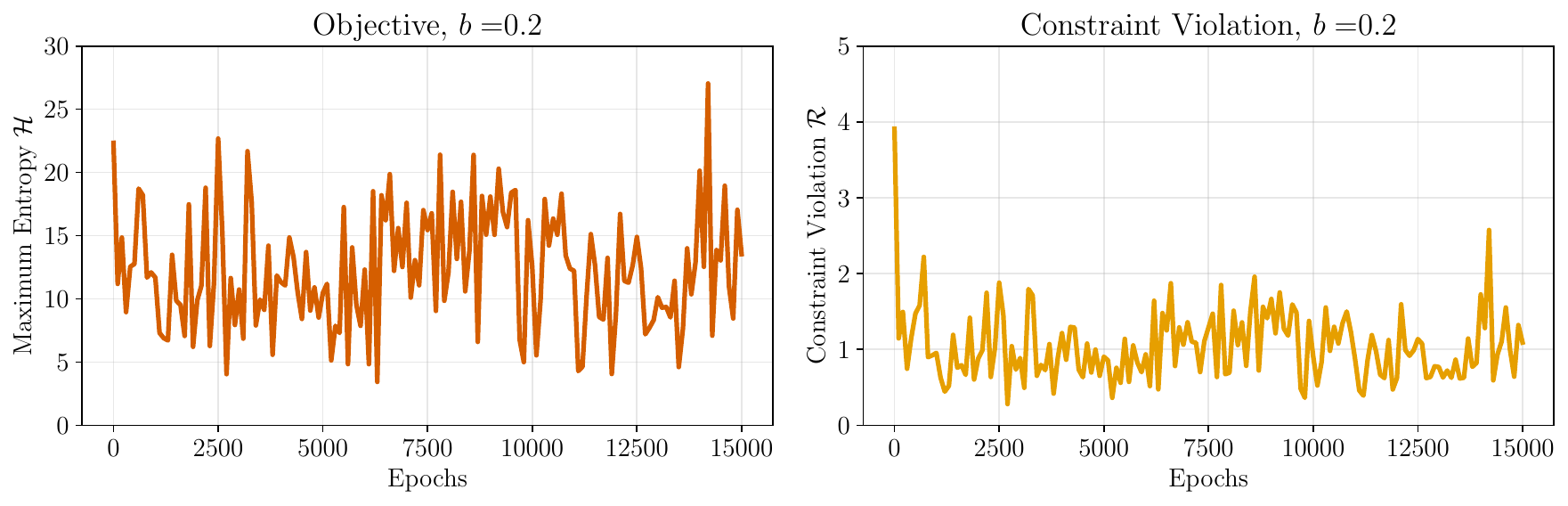}             \\
  $b = 0.5$                                                                                                           & $b = 1.0$ \\
  \includegraphics[width=0.45\linewidth]{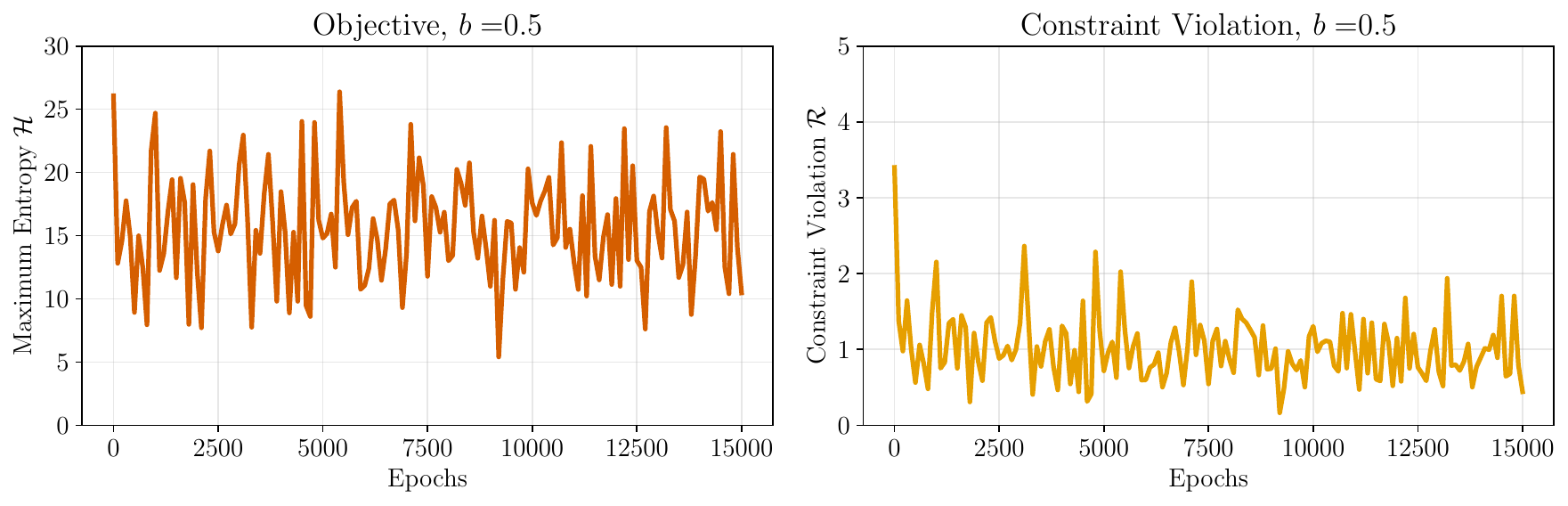} &
  \includegraphics[width=0.45\linewidth]{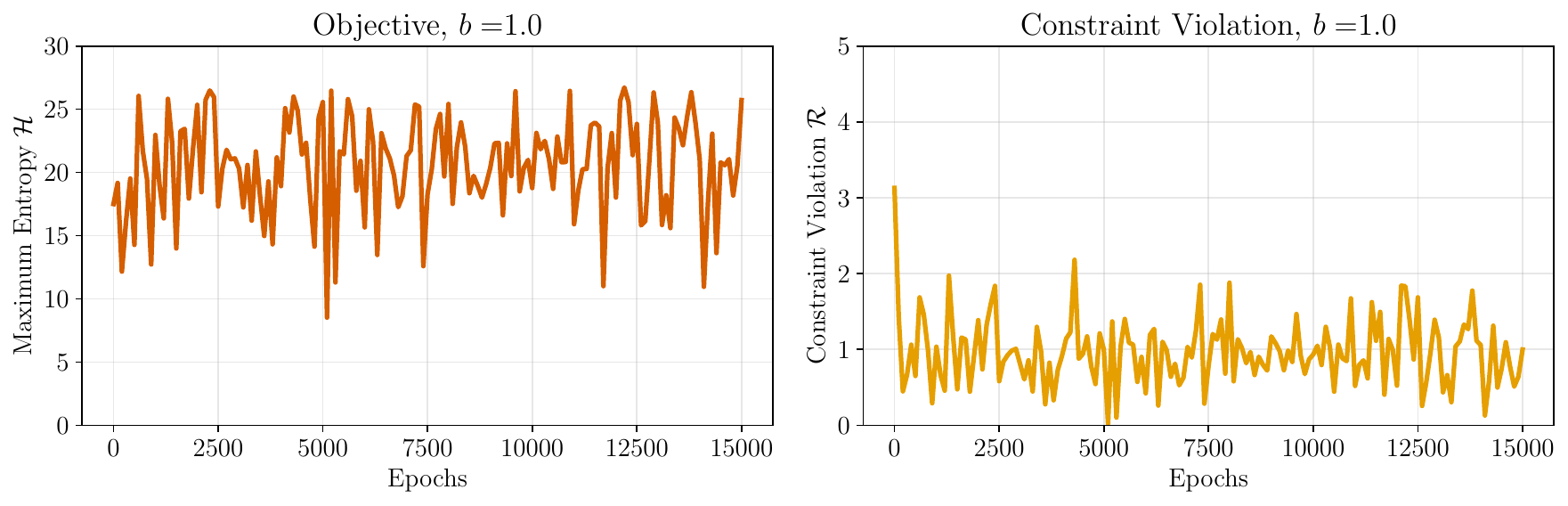}             \\
 \end{tabular}
 \caption{\textbf{Training progress} for maximum entropy with norm constraint \eqref{eq:PenaltyNormConstraint}
  for varying constraint shift parameter $b$.
  Each subfigure shows the training curves for $b = 0.0, 0.2, 0.5, 1.0$ (top-left to bottom-right).
  As $b$ increases, the constraint becomes looser, allowing the learned policy to deviate further from the reference policy.
 }
 \label{fig:MaxEntropyNormConstraintObjectives}
\end{figure*}

\subsection{Continuous State-Action Space}
\label{sec:AppendixContinuousSpaceExperiments}
We first discuss details on the estimation of the state-action occupancy
measure and then provide details on the \emph{PointMass} and the
\emph{SafeCartpole} experiment.

\subsubsection{Occupancy Measure Estimate}
\label{sec:AppendixLambdaEstimateContinuous}
When scaling to continuous state-action spaces in constrained maximum
entropy exploration, or more generally in convex RL, reliably estimating
the occupancy measure \eqref{eq:StateOccupancyMeasureMonteCarloEstimate} is
the main bottleneck. In our case, we use and implement the state-of-the-art
technique proposed by \citet{barakatScalableGeneralUtility2025} relying on
Maximum Likelihood Estimators (MLE) for learning the approximation $\lambdaHat$
compared to the Mean Squared Error, which previous techniques have been
proposing. Compared to our formulation of $\lambda^{\pi}: \SS \times \AA
 \to [0,1]$ as a function of the state and action space,
\citet{barakatScalableGeneralUtility2025} rewrite $\lambda^{\pi}$ of a
policy $\pi$ in the form
\begin{align*}
 \lambda^{\pi}(\ss, \aa) \define d^{\pi}(\ss, \aa) \cdot \pi(\aa\vert\ss),
\end{align*}
and only approximate the state-occupancy measure $d^{\pi}$ with a function approximator.

In particular, at every step of the algorithm, we collect a batch of
$N_{\lambdaHat} \in \NN$ samples $B_{\lambdaHat} \define \{\ss_i,
 \aa_i\}_{i\in \setm{N_{\lambdaHat}}}$, consisting of potentially several
rollouts and then approximate $d^{\pi}$ with a parametric class of
probability distributions
\begin{align*}
 \mathrm{GMM}(K) \define
 \left\{p_\wVec \define  \sum_{k=1}^{K}
 w_k \mathcal{N}(\mathbf{m}_k, \Sigma_k) :
 \wVec = (w_k, \mathbf{m}_k, \Sigma_k)_{k\in \setm{K}} \right. \\
 \left.\;\mathrm{s.t.}\;
 w_k \geq 0, \sum_{k=1}^{K} w_k = 1,
 \mathbf{m}_k \in \mathbb{R}^{d_s},
 \Sigma_k \in \mathbb{S}^{d_s}_{++},
 \right\}
\end{align*}
in our case Gaussian Mixture Models (GMMs), with $K=16$. The approximation
$\hat{d}^{\pi}$ is then learned by maximizing the MLE loss,
or equivalently minimizing the negative log-likelihood, i.e.
\begin{align}
 \hat{d}^{\pi} \define p_{\wVec^*}, \quad
 p_{\wVec^*} \in \argmax_{p_\wVec \in \mathrm{GMM}(K)} \frac{1}{B_{\lambdaHat}}
 \sum_{i\in \setm{B_{\lambdaHat}}} \log p_\wVec(\ss_i).
 \tag{$\hat{d}^{\pi}$-Est}
 \label{eq:CartpoleStateOccupancyMeasEstimate}
\end{align}
Following \citet{barakatScalableGeneralUtility2025}, the estimate $\lambdaHat^{\pi}$
is then given by
\begin{align}
 \lambdaHat^{\pi}(\ss, \aa) \define \hat{d}^{\pi}(\ss) \cdot \pi(\aa \vert \ss).
 \tag{$\lambda^{\mathrm{MLE}}_H$-Est}
 \label{eq:StateOccupancyMeasureMLE}
\end{align}

\subsubsection{Imitation Learning Constraint: PointMass Exploration}
\label{sec:AppDetailsPointMass}
We provide additional implementation and computational details on
the \emph{PointMass} experiment.

\textbf{State, Action \& Observation Space.} In the PointMass environment, the state, action and observation space are
\begin{align*}
 \SS \times \AA \times \Obs = \RR^2 \times \RR^2 \times \RR^2.
\end{align*}
At timestep $t$, each observation $o_t$ consists of the current $x$
and $y$ coordinate of the point mass. The action $\aa$ encodes the gain
along the $x$ and $y$-axis respectively.

\textbf{Objective and Constraint: Definition \& Implementation.} First, we train a reference policy $\pi_{\mathrm{ref}}$ using the
Soft-Actor critic (SAC) algorithm of the \emph{StableBaselines3} package
\cite{raffinStableBaselines3ReliableReinforcement2021}. We then train a GMM
for 3000 epochs with 512 samples per batch to obtain an estimate of the
occupancy measure $\lambdaHat^{\mathrm{ref}}$ of the reference policy. At
every step of the \algo{} training loop, we estimate the occupancy measure
according to \Cref{tab:ConfigurationCartpole} and then estimate a
tensor-based sampled estimate of
$\KL{\lambda^{\pi_{\thetaN}}}{\lambda^{\mathrm{ref}}}$ for which we compute
the gradient using automatic differentiation
\cite{paszkePyTorchImperativeStyle2019a}.

\textbf{Analysis of Actions.}
\Cref{fig:PointMassControlTrajectory} shows that as the constraint budget
$\rMax$ increases, the actions of $\pi_\theta$ deviate progressively further
from those of $\pi_{\mathrm{ref}}$. This directly reflects the difficulty
of the task: the SAC reference policy consistently directs the agent toward
the goal $(0,0)$, so effective exploration requires substantial deviation
from $\pi_{\mathrm{ref}}$.

\begin{figure*}[!h]
 \centering
 \includegraphics[width=\textwidth]{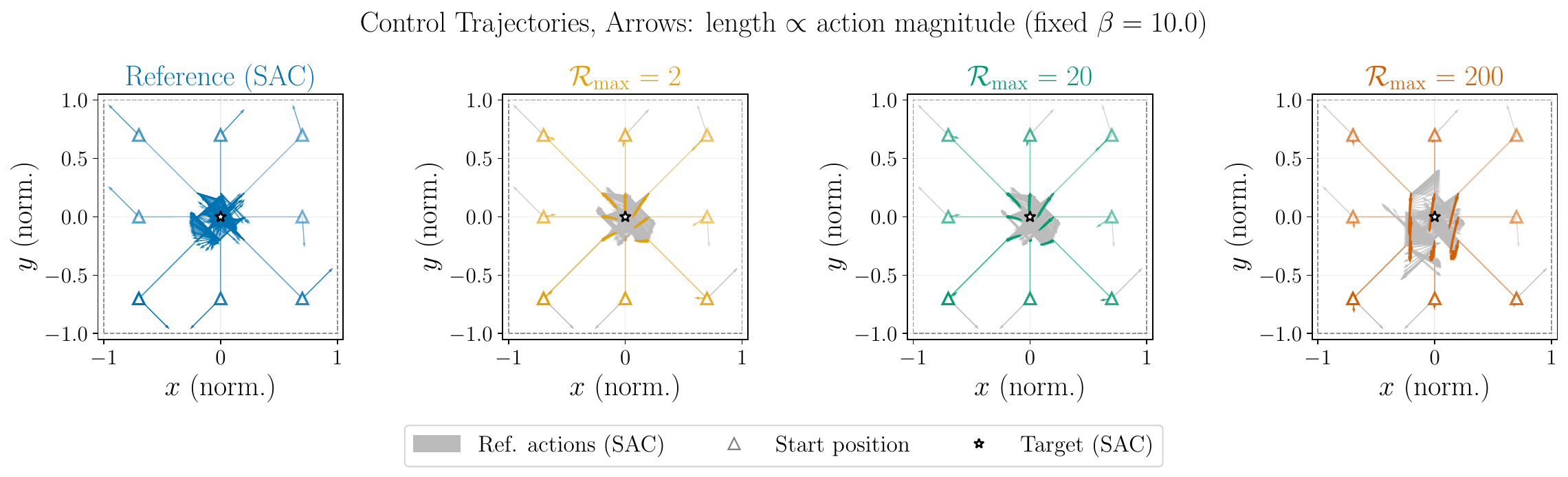}
 \caption{
  \textbf{PointMass: Control Trajectories.}
  Comparison of actions sampled from $\pi_\theta(\innerEmpty{} \mid \ss)$ for the
  reference SAC policy and
  PGP-trained policies with $\beta = 10$ and $\rMax \in \{2, 20, 200\}$,
  evaluated with the same fixed seed across 8 initial positions.
  Colored arrows indicate the actions selected by each policy at the current state.
  For PGP policies, the corresponding reference action $\pi_{\mathrm{ref}}(\innerEmpty{} \mid \ss)$ is
  overlaid in gray, illustrating the increasing deviation from the reference as $\rMax$ grows.
 } \label{fig:PointMassControlTrajectory}
 \vspace{-0.3cm}
\end{figure*}

\subsubsection{Safety Constrained: Cartpole Exploration}
\label{sec:AppDetailsCartpole}
We provide implementation and computational details on the \emph{SafeCartpole}
experiment.

\textbf{State, Action \& Observation Space.}
In the SafeCartpole environment, the state, action and observation space
are
\begin{align}
 \SS \times \AA \times \Obs = \RR^4 \times \RR \times \RR^5,
\end{align}
where each observation $o_t$ at timestep $t \in [0, \Tmax]$ consists of
\begin{align}
 o_t = [x_t,
 \sin(\omega_t),
 \cos(\omega_t),
 \frac{\Dd}{\dt} x_t,
 \frac{\Dd}{\dt} \omega_t] \in \RR^5,
 \tag{$o_t$-Cartpole}
 \label{eq:CartpoleObservationSpace}
\end{align}
and $\omega_t$ is the current angle of the pole, $x_t$ the slider's
position with its corresponding velocity, and the last component
encodes the angular velocity of the
pole via the corresponding time derivatives of the angular observations.
The state consists of
\begin{align*}
 \ss_t = [x_t,
 \frac{\Dd}{\dt} x_t,
 \omega_t,
 \frac{\Dd}{\dt} \omega_t
 ] \in \RR^4,
\end{align*}
i.e. does not only include the position and the angle, but also
their corresponding derivatives.
Thus, effective exploration of the state action space requires
covering the position and velocity of the cart as well as
the angles of the pole, and also entails the angular velocities of the pole,
making the task significantly harder for constrained maximum entropy.
An action $\aa_t \in \RR$ is a real-valued action which is passed through
a motor gear and then acts on the system.

\textbf{Objective and Constraint: Definition \& Implementation Details.}
After the state-action occupancy measure is estimated via the MLE in
\eqref{eq:CartpoleStateOccupancyMeasEstimate}, we can estimate the entropy
objective and the constraint.

The constraint function is given by
\begin{align*}
 c(\ss_t, \aa_t) \define \max(\abs{x_t} - 2.0,0),
\end{align*}
where $x_t$ encodes the slider's position at time $t \in [0, \Tmax]$. For the sake of
presentation, we also log the values $\max(x_t - 2.0, 2.0 - x_t)$ to make the constraint
violation plot more expressive.
The entropy and the cost function can be then computed by point-wise multiplications
of the tensors given by: $\lambdaHat$ evaluated on the state and action pairs,
the logarithm of $\lambdaHat$ for the objective and
the constraint function values $(c(\ss_t, \aa_t))_t$ for each rollout,
respectively. For numerical stability, we perform a log-sum-exp rescaling
for these tensor computations. Afterwards we apply the squared penalty,
sum both terms and obtain the pseudo-reward for which we compute the gradient
using PyTorch's \cite{paszkePyTorchImperativeStyle2019a} automatic differentiation
framework. The resulting gradient is plugged in the \eqref{eq:PolicyGradientTheorem}
to compute the cumulative reward and optimize. We use a critic network as a baseline
for the REINFORCE algorithm to improve stability, i.e. \eqref{eq:PolicyGradientTheorem}
acts on the advantage function.

\textbf{Detailed Analysis of the Results.}
We analyze separately:
\begin{itemize}
 \item \textbf{Entropy and Constraint Violation.}
       In \Cref{fig:CartpolePerformance} we observe a significant jump at $N=1000$
       in the entropy $\HH$ which, combined with
       \Cref{fig:CartpoleTrainingEvolution} and additional logging information,
       corresponds to the policy understanding that symmetric discovery
       drastically improves the exploration in the state-action space. Afterwards,
       we observe a slight drop in the entropy again, as the policy needs to
       relearn, or more precisely combine, to turn the pole upwards and discover
       the space in a symmetric way. As evident in
       \Cref{fig:CartpoleMuSigmaPosAngleOverTime} (bottom left), later episodes
       then learn how to push the penalty function right below the threshold to
       accelerate faster, i.e. get up the pole quicker and then discover even
       more. The final policy has almost perfect symmetry over the rollout horizon
       with varying angular velocities in the lower half of the angular part of
       the state space.
 \item  \textbf{Controls.} \Cref{fig:CartpoleMuSigmaPosAngleOverTime} shows that
       in particular early controls are slightly fuzzy and exhibit high(er) variance.
       In later iterations, i.e. $k \in [1000, 2500] \cap \NN$,
       the policy $\pi^{(k)}$ correctly learns
       a bang-bang control style, which is optimal to turn the pole upward, especially
       since we do not penalize actions in our setting. This behavior is for larger $k$
       refined and the final policy perfectly flips the pole to the upper half of
       the angular's state space and is confirmed as the correct behavior by
       the reduced variance $\sigma^2_{\theta}$, as it also improves on the tiny
       constraint violation at the end of the observation horizon.
\end{itemize}

\begin{figure*}[h!] 
 \centering
 \begin{subfigure}{0.32\textwidth}
  \centering
  \includegraphics[width=\linewidth, keepaspectratio]{
   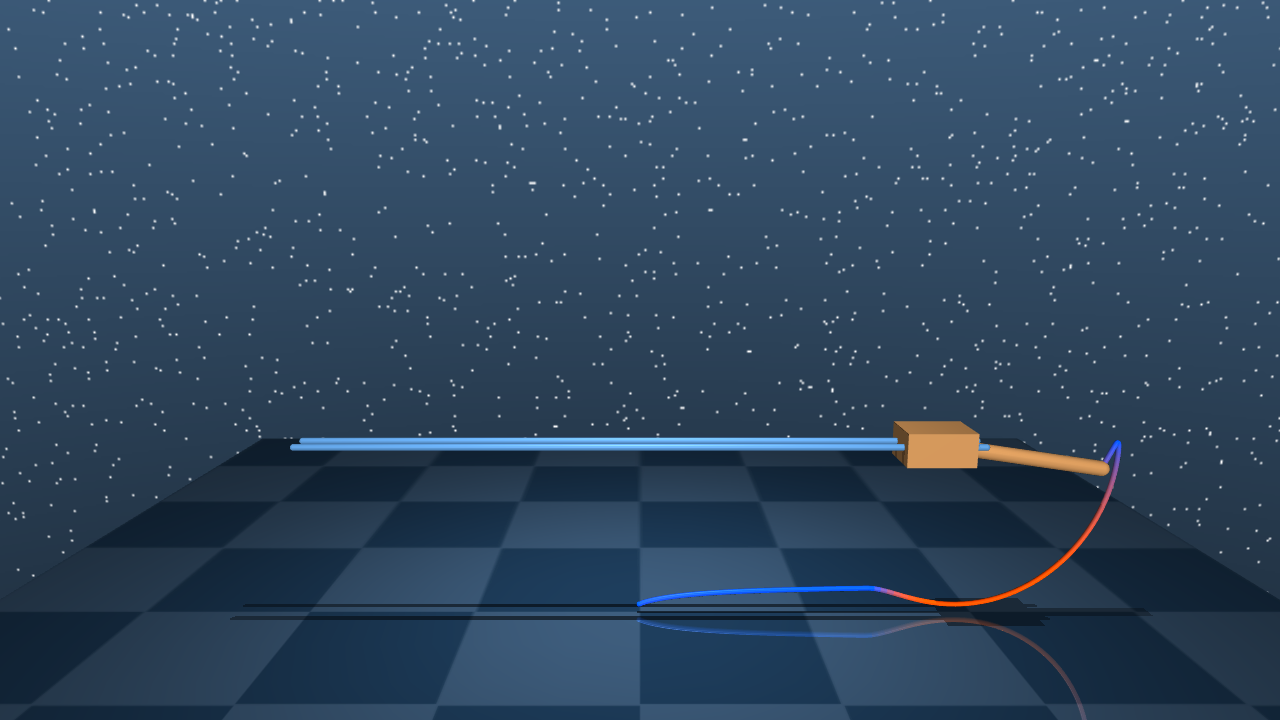
  }
  \caption{$t=4.8\mathrm{s}$}
  \label{fig:CartpoleFrame1}
 \end{subfigure}
 \hfill
 \begin{subfigure}{0.32\textwidth}
  \centering
  \includegraphics[width=\linewidth, keepaspectratio]{
   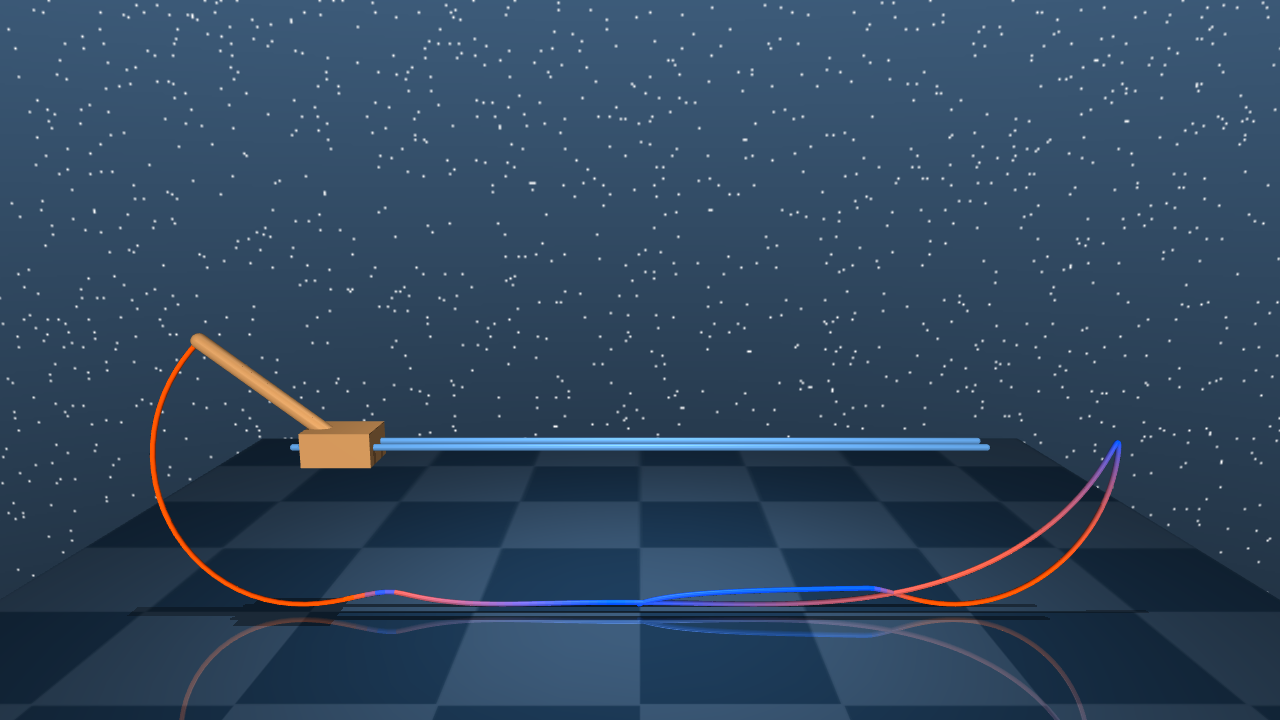
  }
  \caption{$t=7.2\mathrm{s}$}
  \label{fig:CartpoleFrame3}
 \end{subfigure}
 \hfill
 \begin{subfigure}{0.32\textwidth}
  \centering
  \includegraphics[width=\linewidth, keepaspectratio]{
   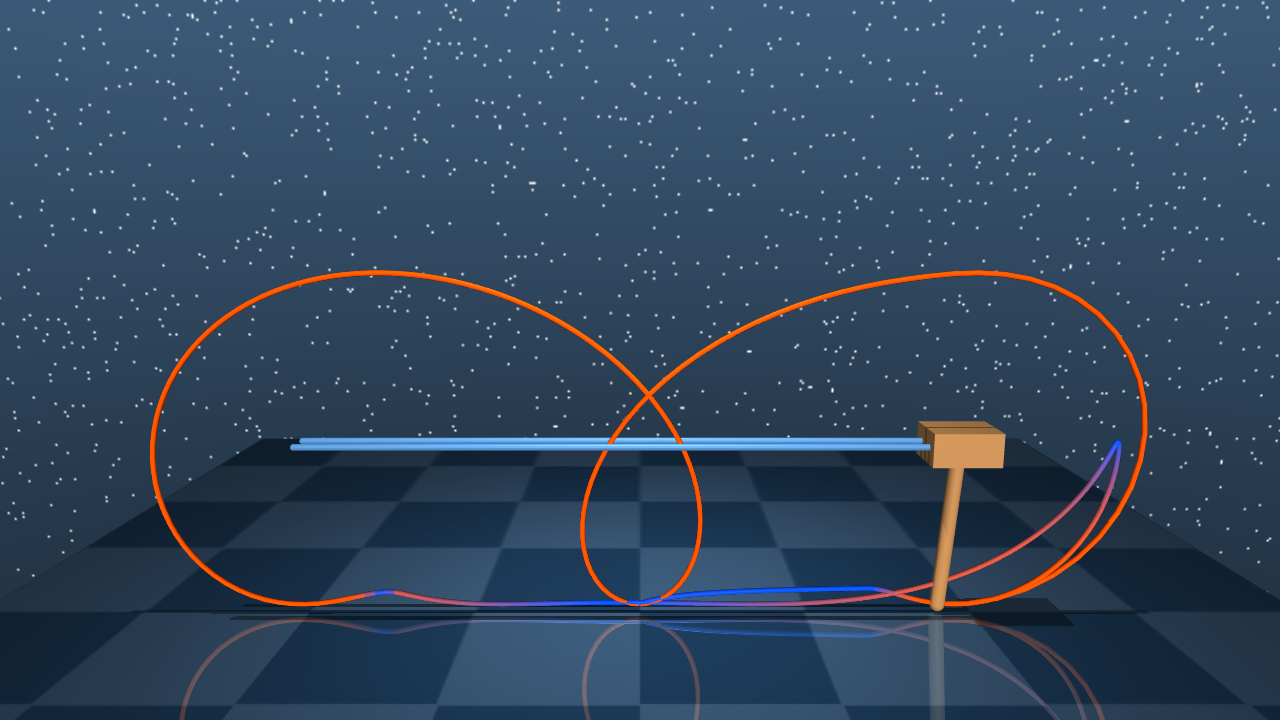
  }
  \caption{$t=9.6\mathrm{s}$}
  \label{fig:CartpoleFrame5}
 \end{subfigure}
 \caption{More fine-grained version of \Cref{fig:Figure1Cartpole}.
  Last iterate policy $\pi_{\theta^{(N)}}$ for
  \textbf{constrained maximum entropy exploration}
  on the SafeCartpole environment. The \textbf{tracer's color} encodes the
  (normalized) \textbf{pole's angular velocity} (dark blue for \markerBlue{lower values} and
  orange for \markerOrange{higher values}) which is part
  of the state space, i.e. has to be covered to maximize $\HH$.
  A more detailed visualization over several training iterations is available
  in \Cref{fig:CartpoleTrainingEvolution}.
  The corresponding \textbf{video} is available in the \textbf{supplementary material}.
 }
 \label{fig:CartpoleFinalPolicy}
\end{figure*}

\begin{figure}[h!] 
 \centering
 \includegraphics[width=\linewidth, keepaspectratio]{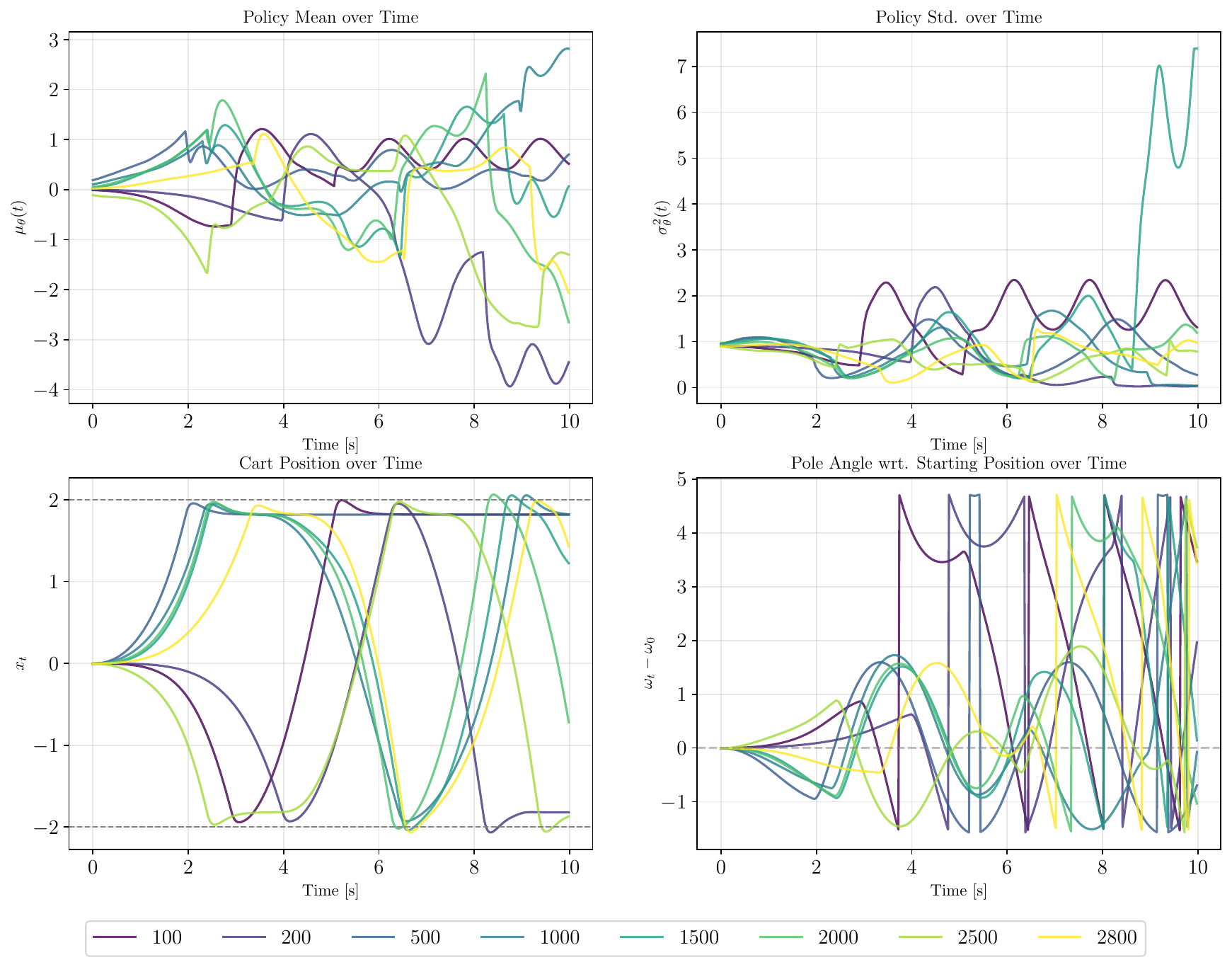}
 \caption{\textbf{Numbers} in the \textbf{legend} indicate the \textbf{iteration numbers}
 used in \Cref{fig:CartpolePerformance} and \Cref{fig:CartpoleTrainingEvolution}
 and correspond to the same policy and the same seed.\\
 \textbf{Top: BoxTanhGaussian-Policy.}
 We visualize the mean $\mu_{\theta}$ (left) and the standard deviation
 $\sigma^2_{\theta}$ (right) of the
 GMM policy $\pi$ over the rollout horizon.
 \\\textbf{Bottom: Cart Position and Pole Angle.} The left-hand side shows the
 cart's position over time with the dashed gray line indicating the threshold
 penalized by the constraint and the right-hand side shows the
 pole's angle relative to the angle of the starting position at $t=0$. Discontinuities
 in the visualization arise since the values are invariant under modulo $2\pi$.
 }
 \label{fig:CartpoleMuSigmaPosAngleOverTime}
\end{figure}

\begin{table}
 \begin{minipage}{.49\linewidth}
  \centering
  \small
  \begin{tabular}{ll}
   \toprule
   Parameter             & Value          \\
   \midrule
   Observation Space     & $[-1.0,1.0]^2$ \\
   Simulation $\Delta_t$ & $0.02$         \\
   Control $\Delta_t$    & $0.02$         \\
   Time limit            & $20\mathrm{s}$ \\
   \bottomrule
  \end{tabular}
  \label{tab:PointMassConfig}
  \subcaption{\textbf{Environment settings} for the \emph{PointMass} task
   of the RWRL \cite{dulac-arnoldChallengesRealworldReinforcement2021} benchmark.}
 \end{minipage}
 \begin{minipage}{.49\linewidth}
  \centering
  \small
  \begin{tabular}{ll}
   \toprule
   Parameter             & Value           \\
   \midrule
   Pole Length           & $[-0.1, 0.1]$   \\
   Gear                  & $[-1.0, 1.0]$   \\
   Knee Gear             & $[-40.0, 40.0]$ \\
   Slider Pos. Limit     & $[-2.0, 2.0]$   \\
   Simulation $\Delta_t$ & $0.01$          \\
   Control $\Delta_t$    & $0.01$          \\
   Time limit            & $10\mathrm{s}$  \\
   \bottomrule
  \end{tabular}
  \label{tab:CartpoleSwingupConfig}
  \subcaption{\textbf{Environment settings} for the \emph{Safe Cartpole} task of the RWRL
   \cite{dulac-arnoldChallengesRealworldReinforcement2021} benchmark.}
 \end{minipage}
 \begin{minipage}{0.99\linewidth}
  \centering
  \small
  \begin{tabular}{ll}
   \toprule
   Parameter                                             & Value           \\\midrule
   $\beta$                                               & 10              \\
   Iterations                                            & 2000            \\
   Rollouts per iteration                                & 4096            \\
   Buffer size                                           & 200k            \\
   $\gamma$                                              & 0.99            \\
   Lr $\pi$, VF, \eqref{eq:StateOccupancyMeasureMLE}     & $3\cdot10^{-4}$ \\
   Inner \# GD steps \eqref{eq:StateOccupancyMeasureMLE} & 200             \\
   Batch size \eqref{eq:StateOccupancyMeasureMLE}        & 512             \\
   Inner \# GD steps VF                                  & 10              \\
   Gradient clipping                                     & 1.0
   \\\bottomrule
  \end{tabular}
  \subcaption{\textbf{Training Parameters.} \enquote{VF} refers to the value function, i.e.
   the critic we use in the REINFORCE estimate, \enquote{Lr} to the learning rate
   used to train the policy $\pi$, the value function (VF), and to minimize the
   MLE loss to obtain \eqref{eq:StateOccupancyMeasureMLE}. \enquote{\# GD steps} refers to
   the number of gradient descent steps of the Adam optimizer
   \cite{kingmaAdamMethodStochastic2017}.}
  \label{tab:CartpoleTrainingParameters}
 \end{minipage}
 \hfill
 \begin{minipage}{.99\linewidth}
  \centering
  \small
  \begin{tabular}{ll}
   \toprule
   Model                                                                        & Architecture                                               \\
   \midrule
   \begin{tabular}[c]{@{}l@{}}Policy $\pi$\\ \;\;(BoxTanhGaussian)\end{tabular} &
   \begin{tabular}[c]{@{}l@{}}$\mu_{\mathrm{Net}}$: [Lin(obs\_dim,
    256),Lin(256,256), Lin(256,act\_dim)]
    \\ $\Sigma_{\mathrm{Net}}$: [Lin(obs\_dim, 256), Lin(256,256), Lin(256,act\_dim)]
   \end{tabular}
   \\ \begin{tabular}[c]{@{}l@{}}Occ. Measure $\lambdaHat$\\ \;\;(GMMStateDensity)\end{tabular}
                                                                                & K = 16, $(\sigma_{k})_{k\in \setm{K}} \subset [-5.0, 2.0]$ \\
   \begin{tabular}[c]{@{}l@{}}Critic / V-Baseline\\ \;\;(MLP)\end{tabular}      & [Lin(obs\_dim, 256),Lin(256,256),Lin(256,1)]
   \\ \bottomrule
  \end{tabular}
  \label{tab:CartpoleModelArchitectures}
  \subcaption{\textbf{Model Architectures}. The policy $\pi$ is a $\tanh$-Gaussian policy, where the mean
   and the (diagonal) covariance matrix
   are parametrized with standard Multilayer Perceptrons (MLPs). The estimate of the occupancy measure
   $\lambdaHat$ is learned using a Gaussian Mixture Model (GMM), and we use another MLP as the critic, or
   the so-called value function baseline, in the REINFORCE algorithm.}
 \end{minipage}
 \caption{Configurations, training details and settings for the continuous state-action space
  experiment in \Cref{sec:NumericalExperimentsCartpole}.}
 \label{tab:ConfigurationCartpole}
\end{table}

\clearpage
\thispagestyle{empty}
\begin{figure*}[p]
 \centering
 \rotatebox{90}{
  \begin{minipage}{\textheight}
   \centering
   \small
   \hspace{-1cm}\textbf{Training Iteration} \quad\quad\quad\quad
   \textbf{t=2.4s} \quad\quad\quad\quad\quad\quad\quad
   \textbf{t=4.8s} \quad\quad\quad\quad\quad\quad\quad
   \textbf{t=6s} \quad\quad\quad\quad\quad\quad\quad\quad
   \textbf{t=7.2s} \quad\quad\quad\quad\quad\quad\quad\quad
   \textbf{t=8.4s} \quad\quad\quad\quad\quad\quad\quad\quad
   \textbf{t=9.6s}

   \vspace{0.3cm}

   \textbf{Iter \phantom{0}100} \quad
   \includegraphics[width=0.14\linewidth]{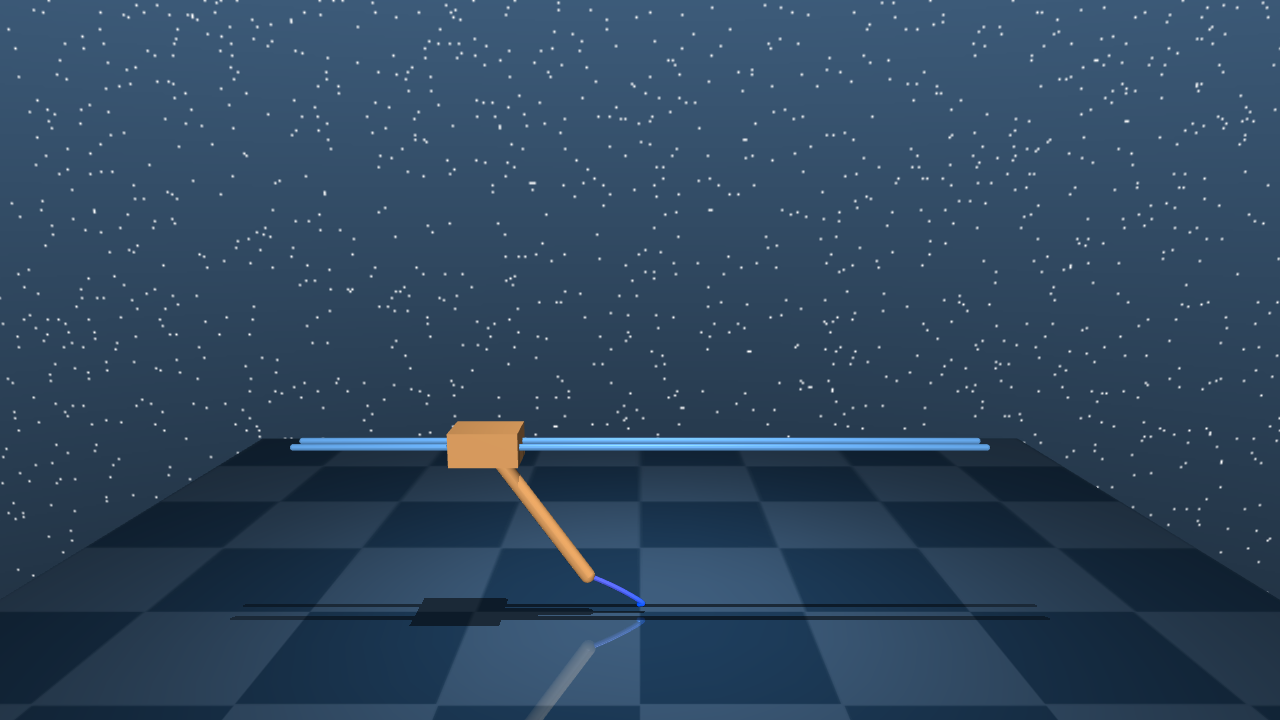}
   \includegraphics[width=0.14\linewidth]{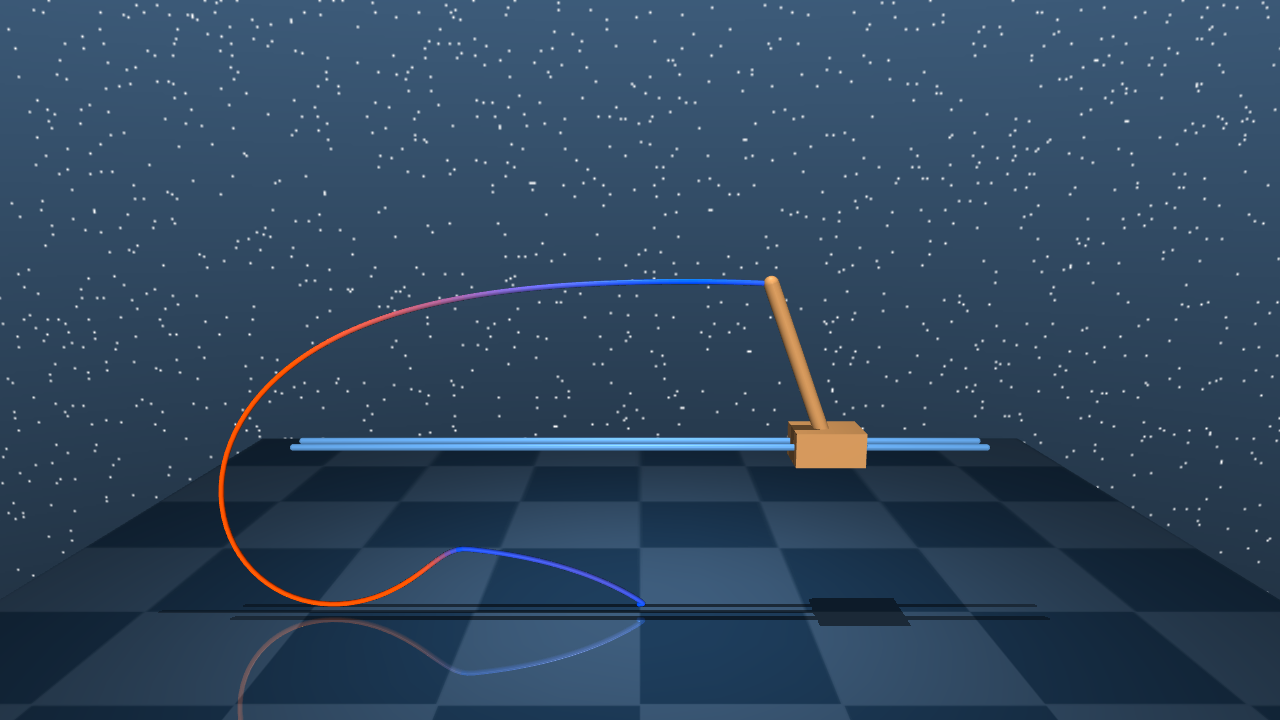}
   \includegraphics[width=0.14\linewidth]{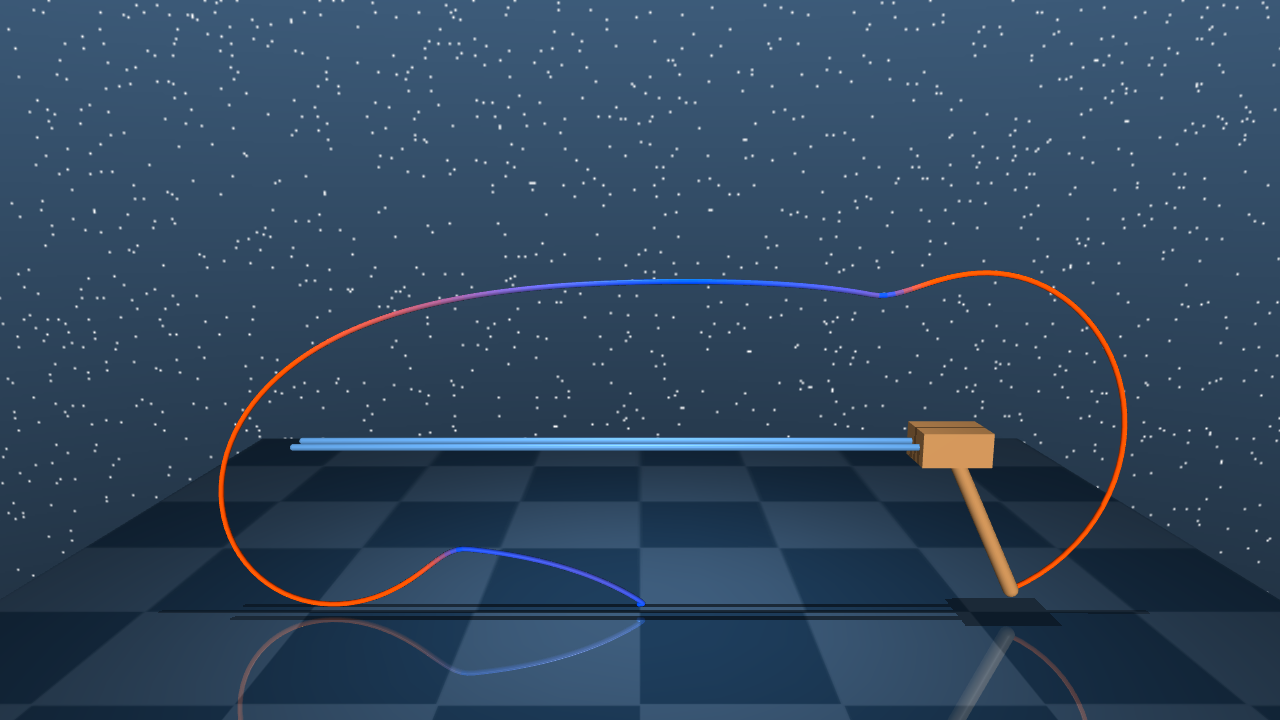}
   \includegraphics[width=0.14\linewidth]{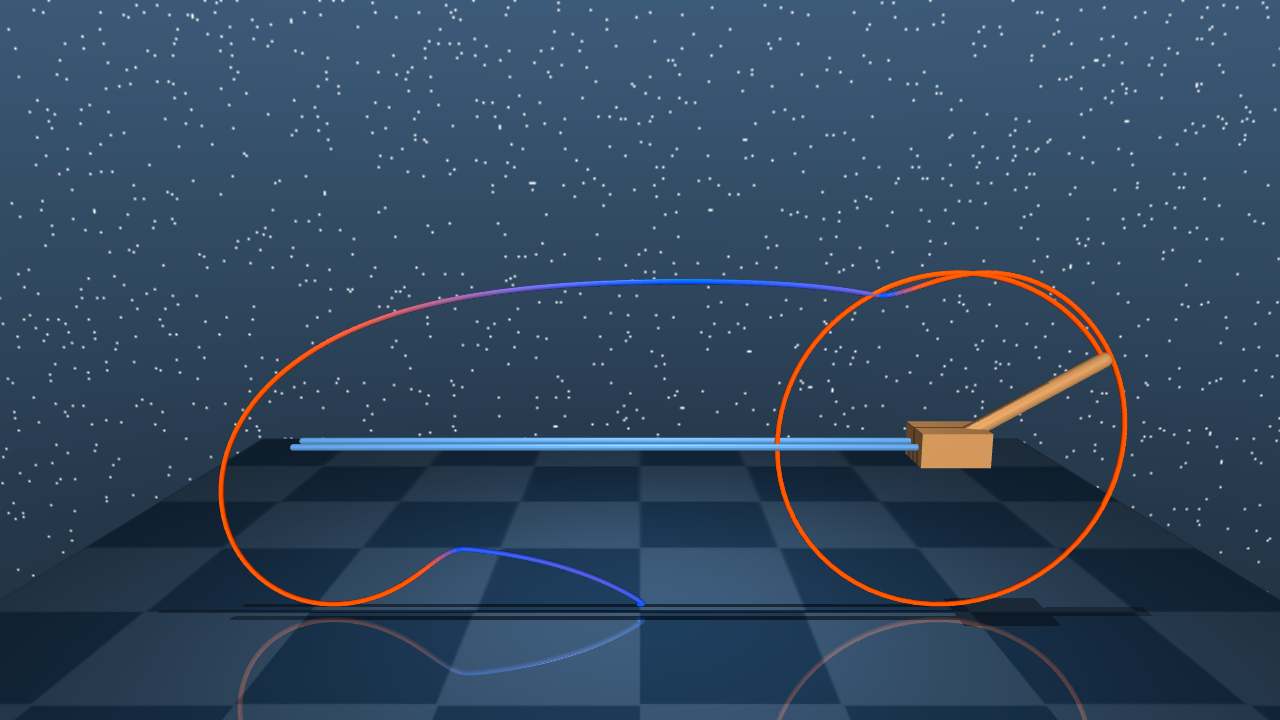}
   \includegraphics[width=0.14\linewidth]{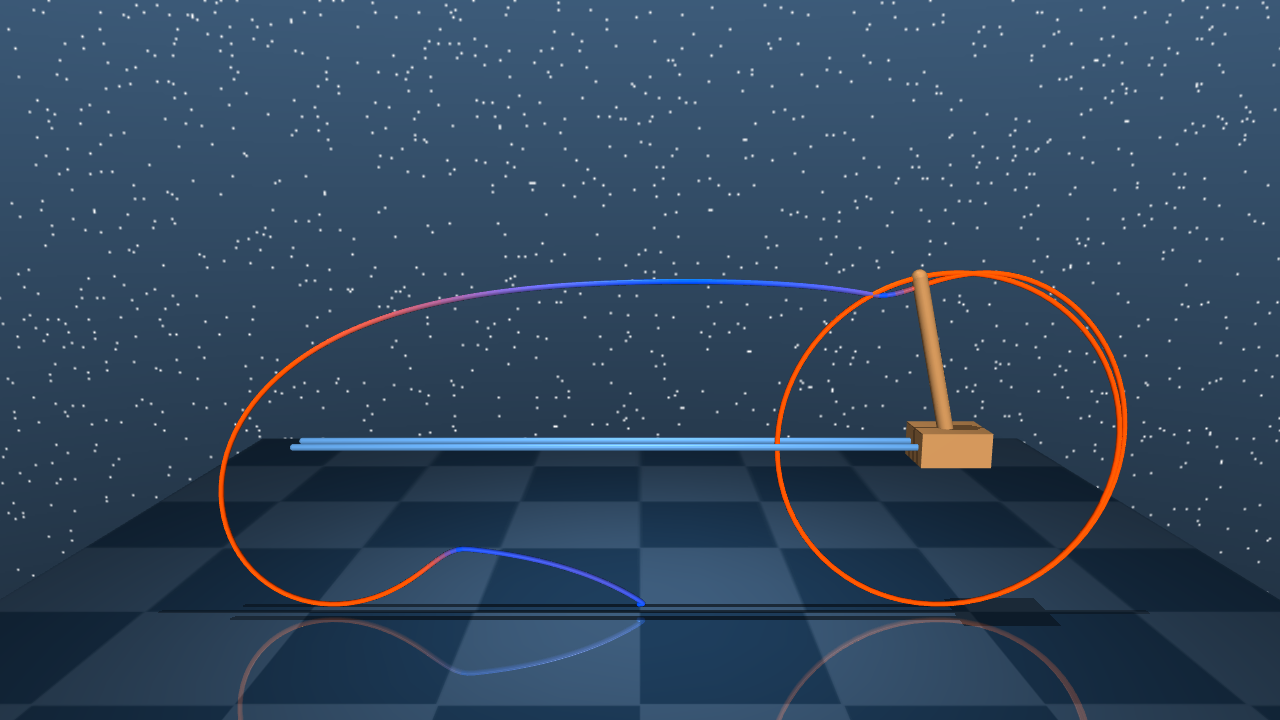}
   \includegraphics[width=0.14\linewidth]{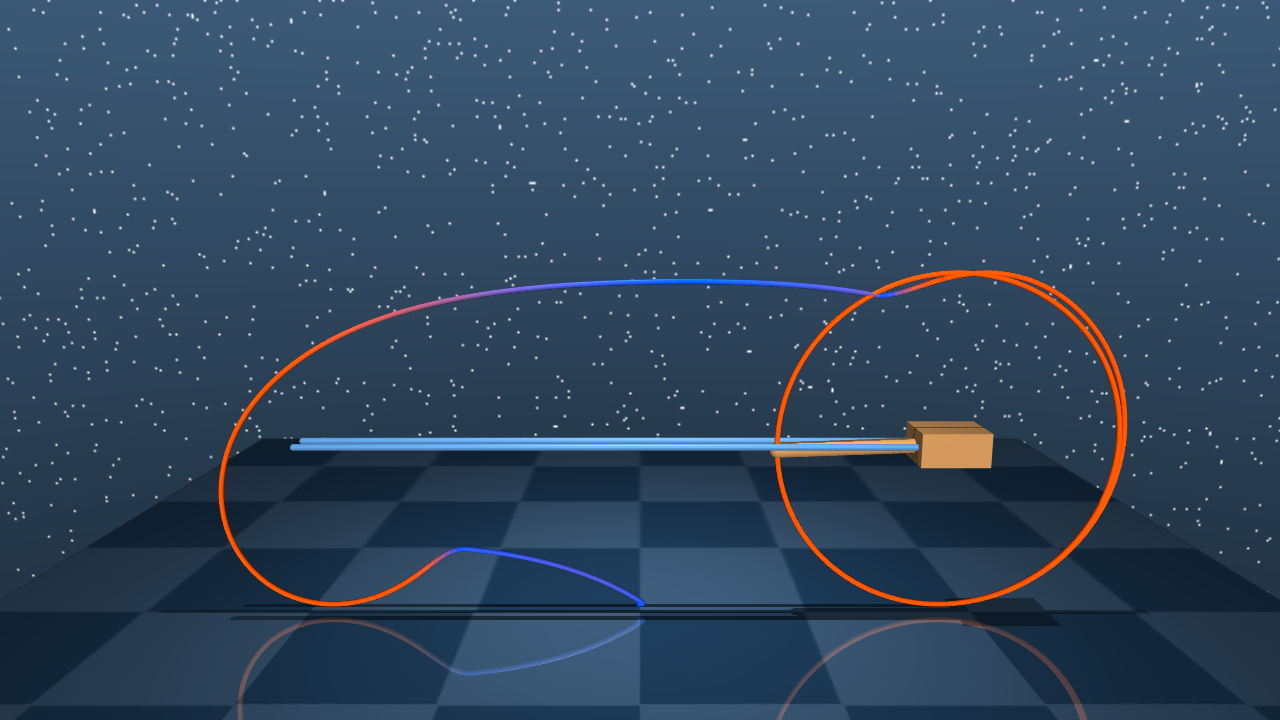}

   \vspace{0.2cm}

   \textbf{Iter \phantom{0}200} \quad
   \includegraphics[width=0.14\linewidth]{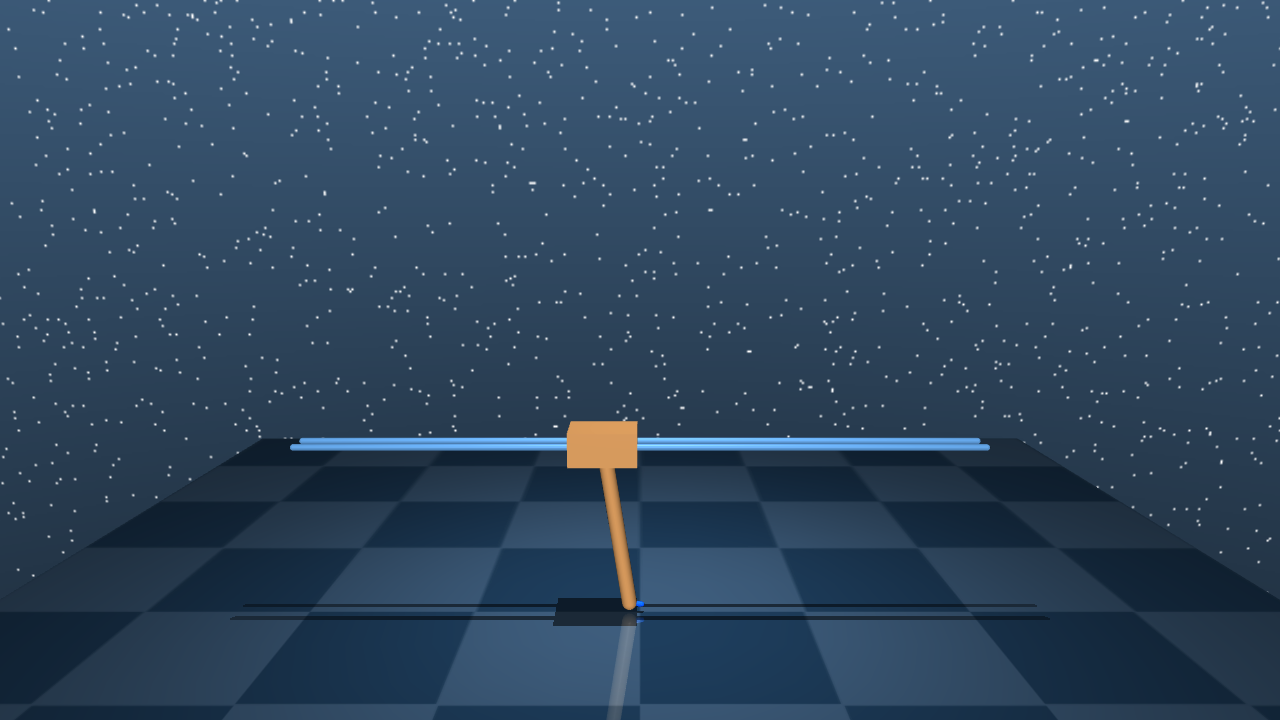}
   \includegraphics[width=0.14\linewidth]{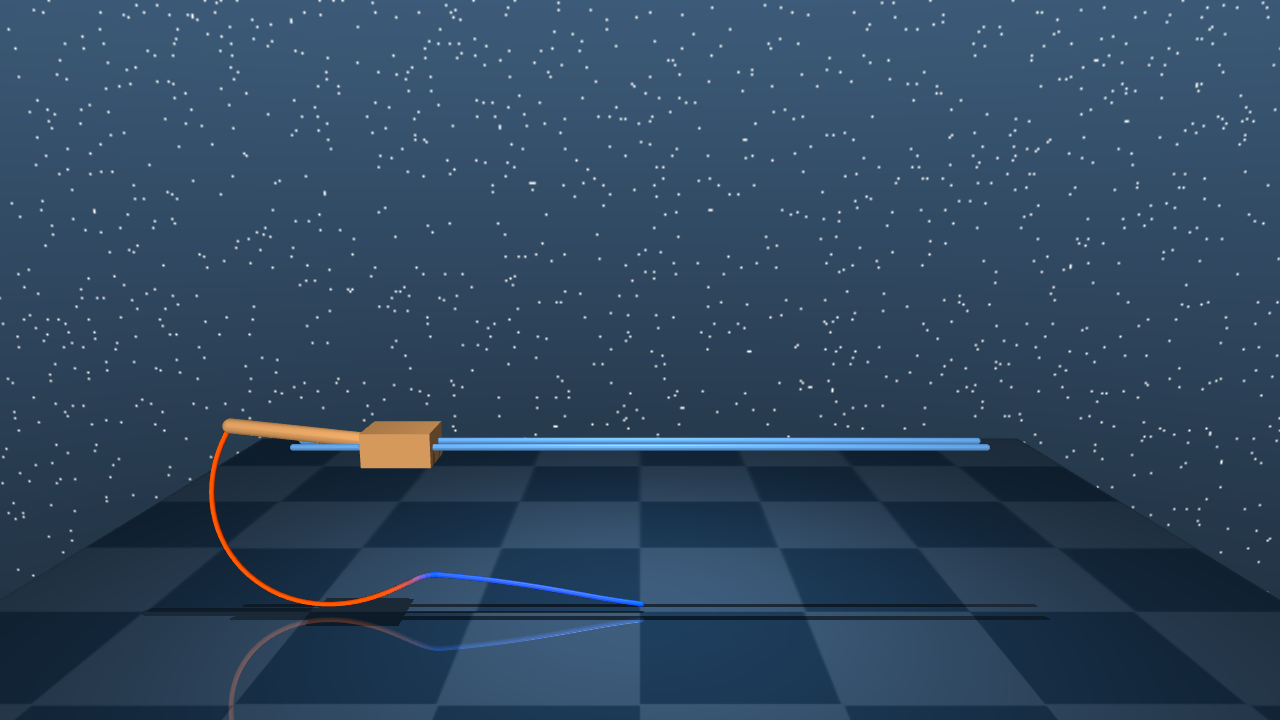}
   \includegraphics[width=0.14\linewidth]{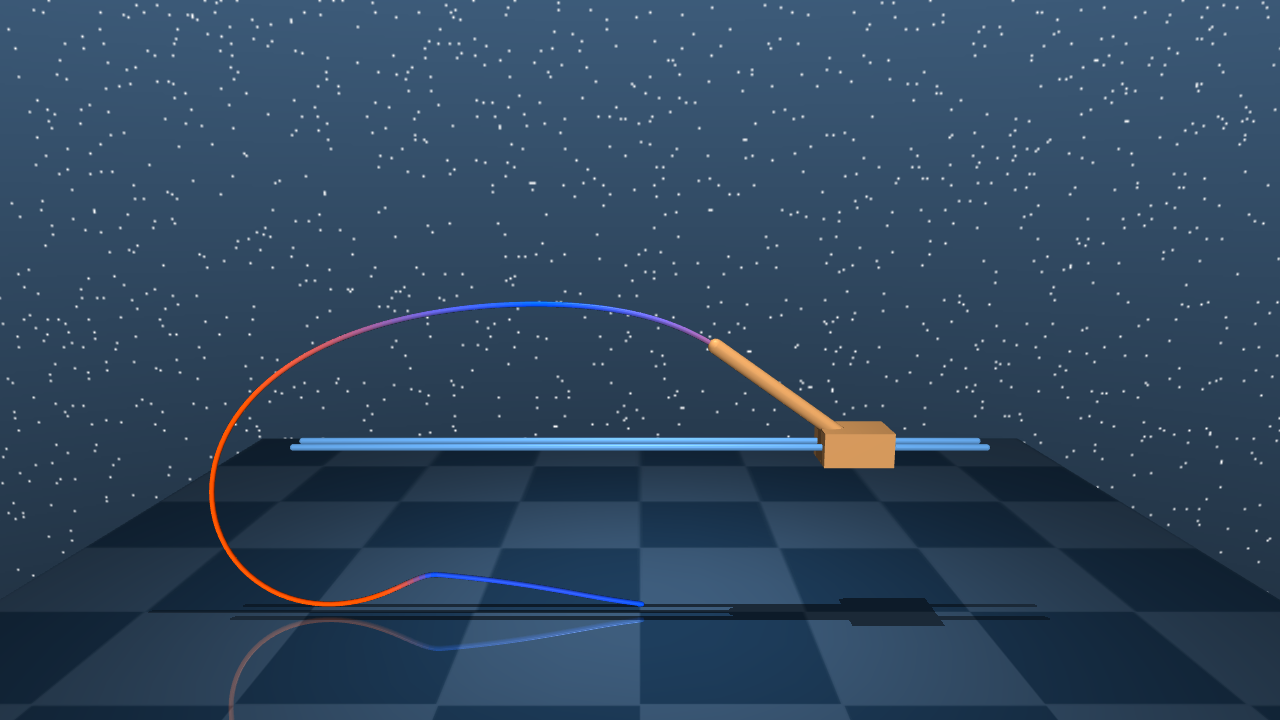}
   \includegraphics[width=0.14\linewidth]{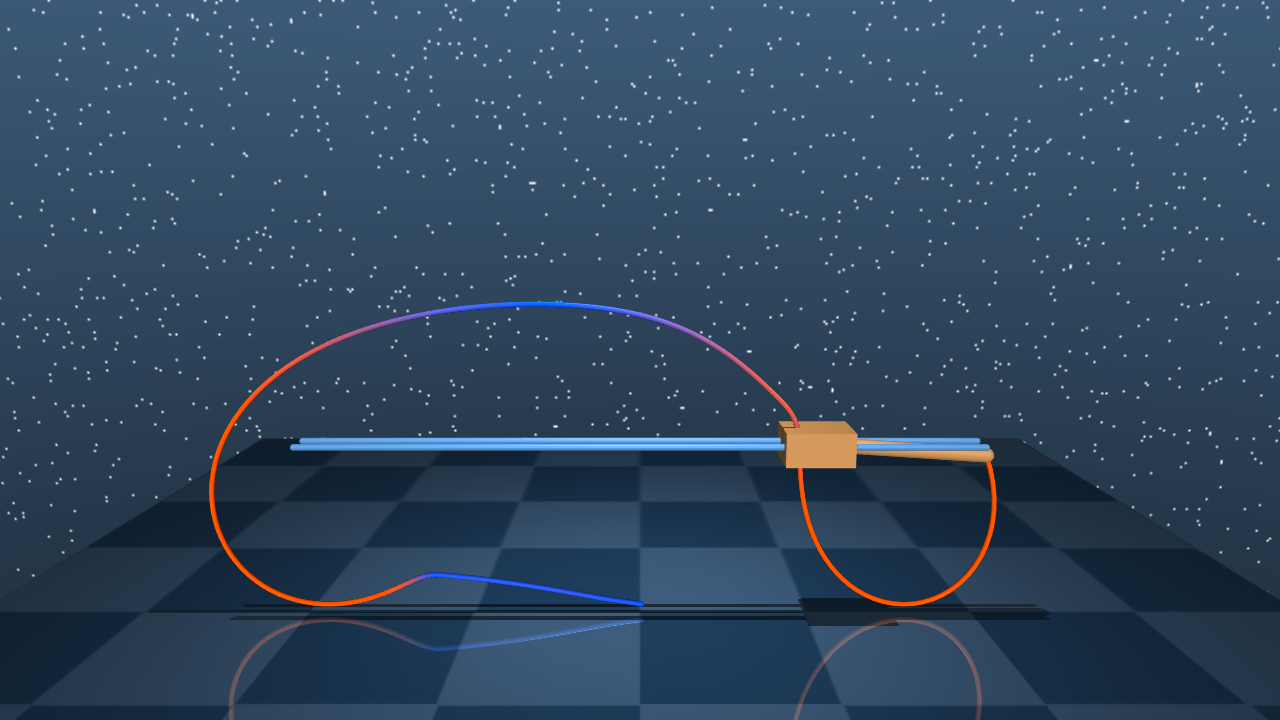}
   \includegraphics[width=0.14\linewidth]{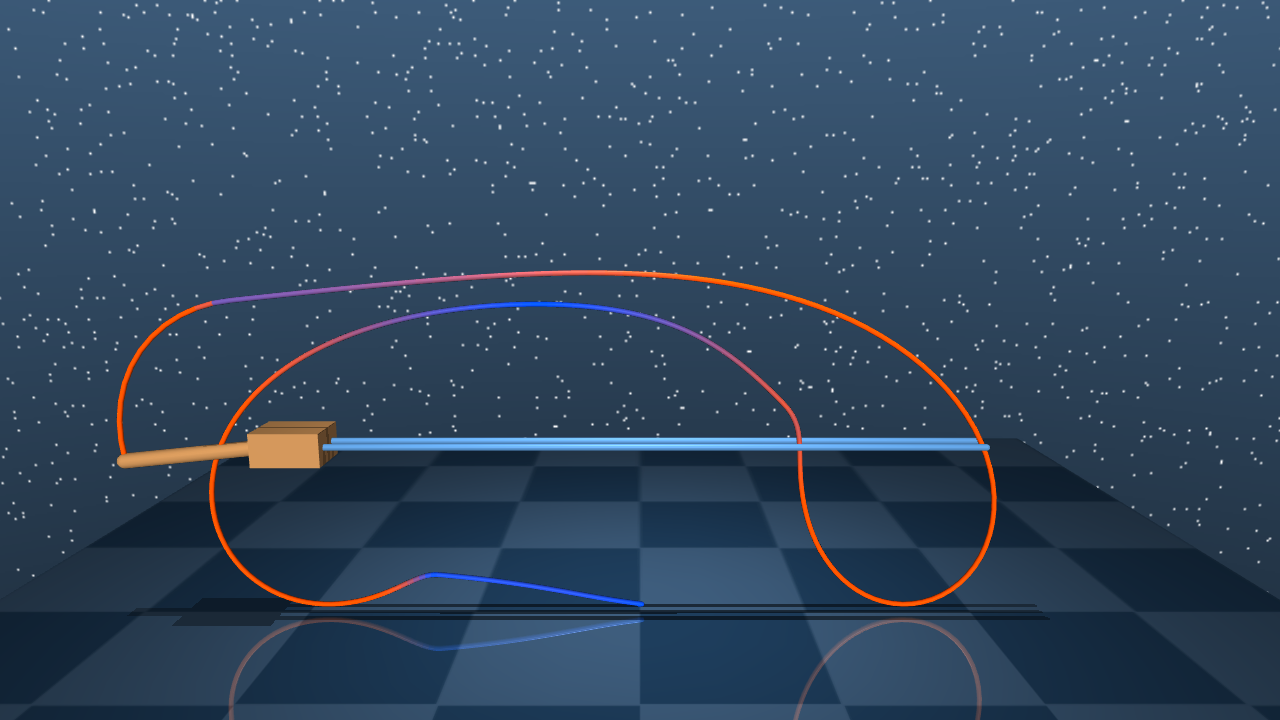}
   \includegraphics[width=0.14\linewidth]{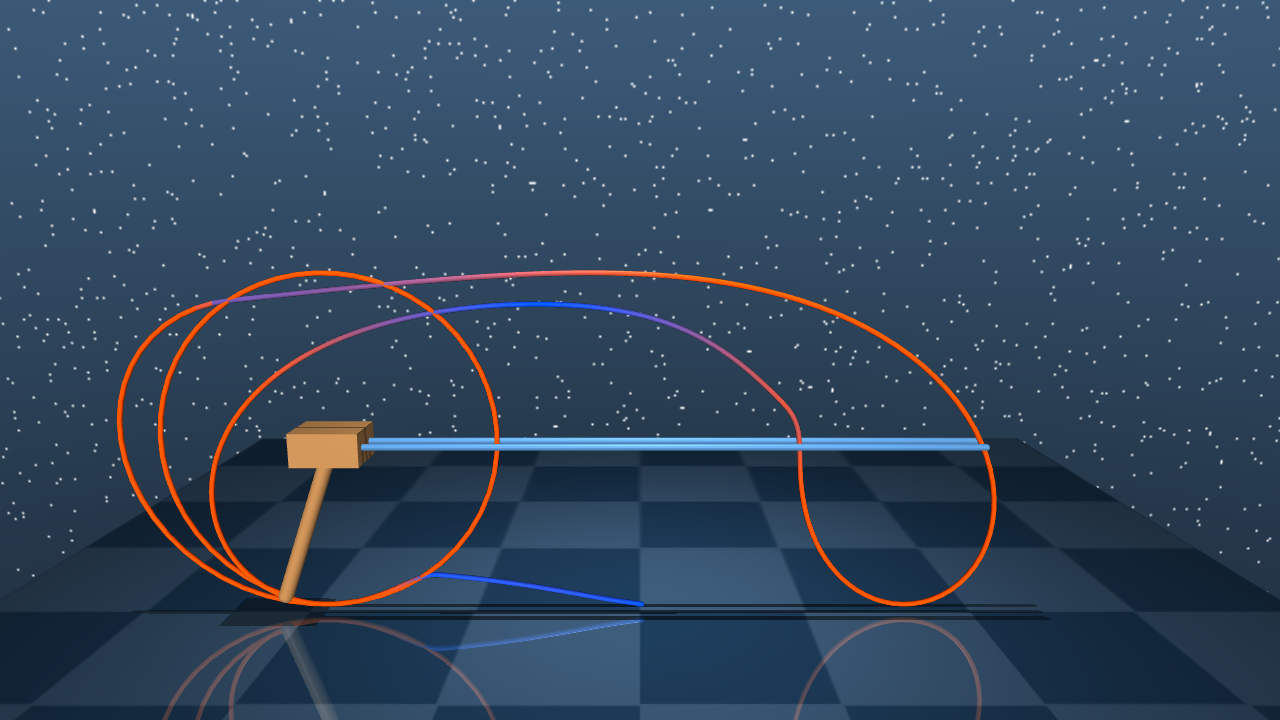}

   \vspace{0.2cm}

   \textbf{Iter \phantom{0}500} \quad
   \includegraphics[width=0.14\linewidth]{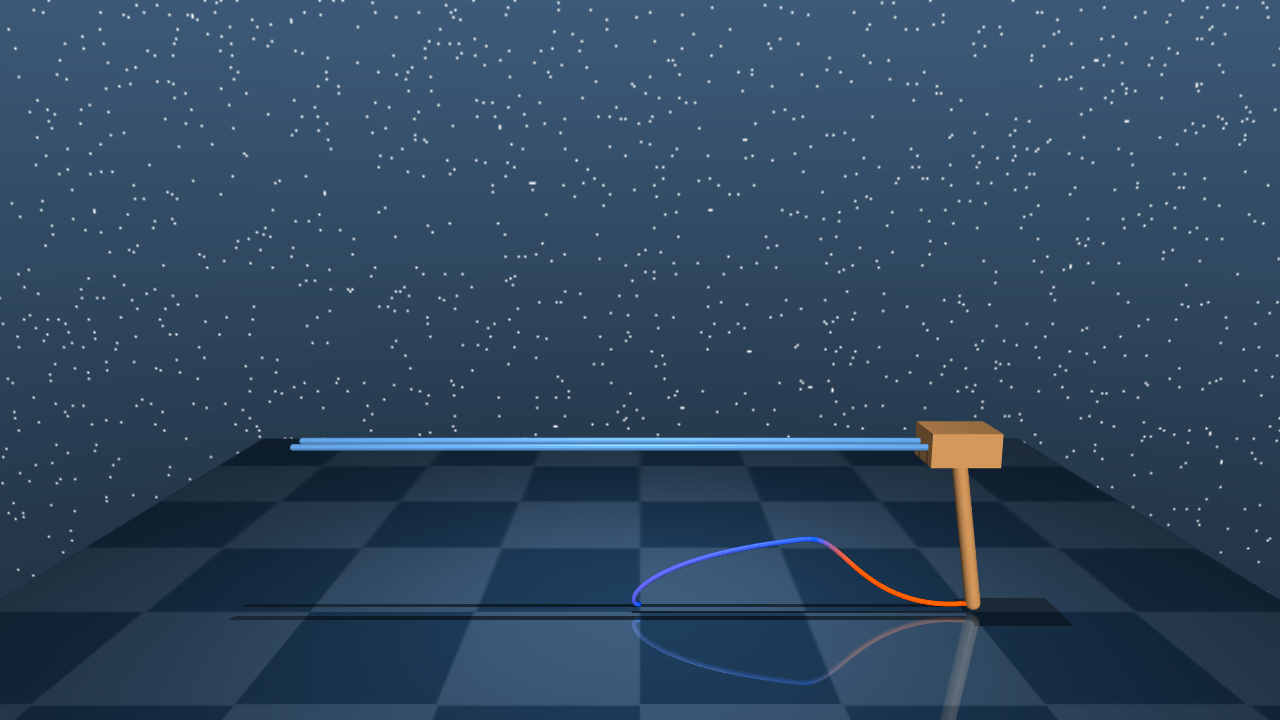}
   \includegraphics[width=0.14\linewidth]{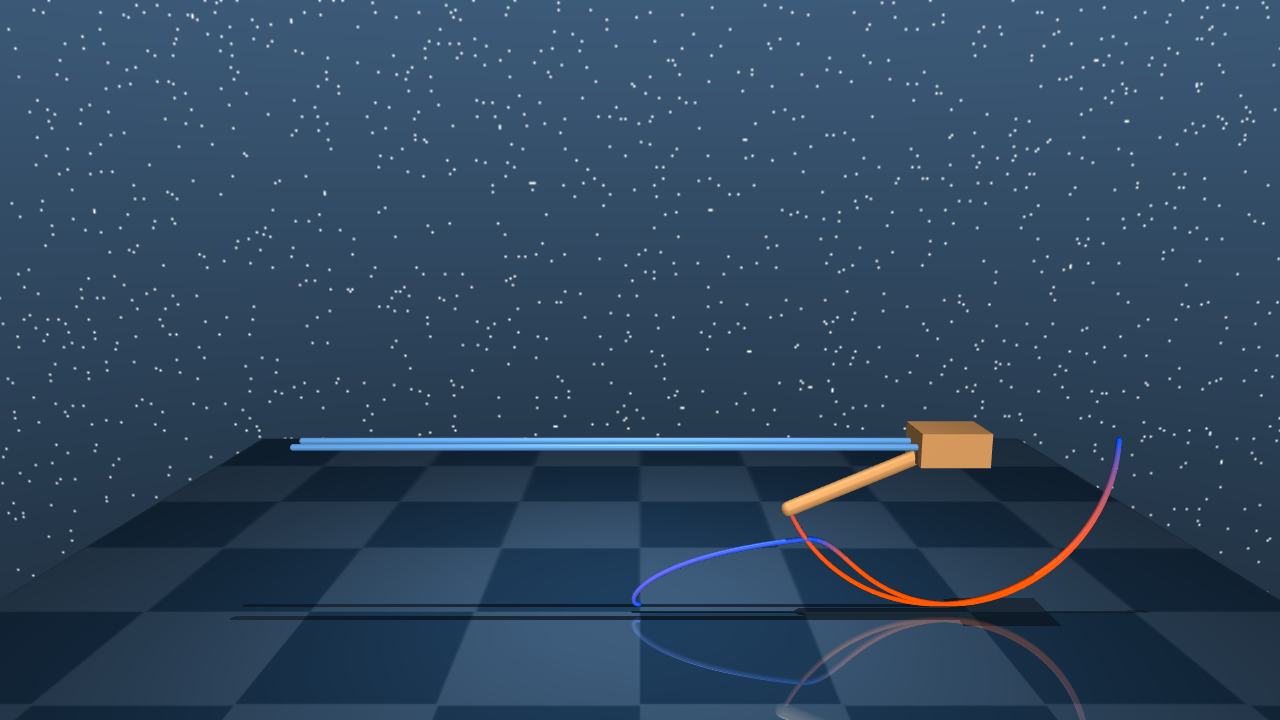}
   \includegraphics[width=0.14\linewidth]{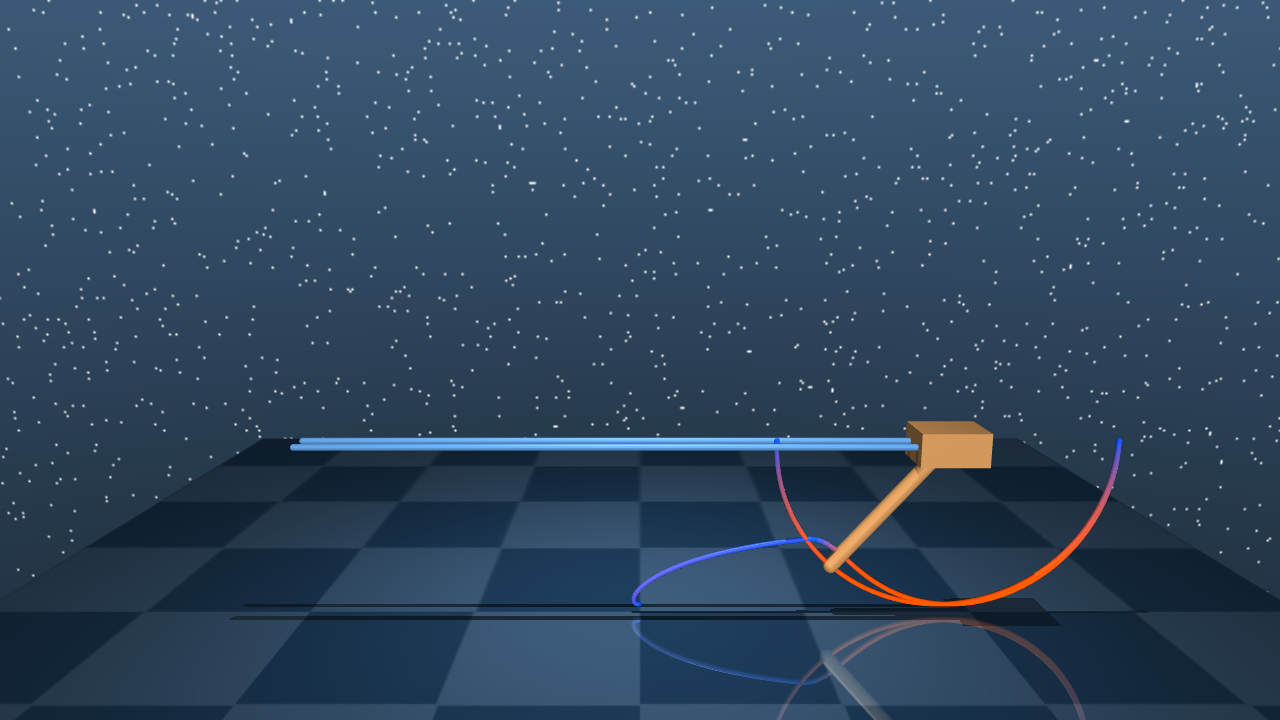}
   \includegraphics[width=0.14\linewidth]{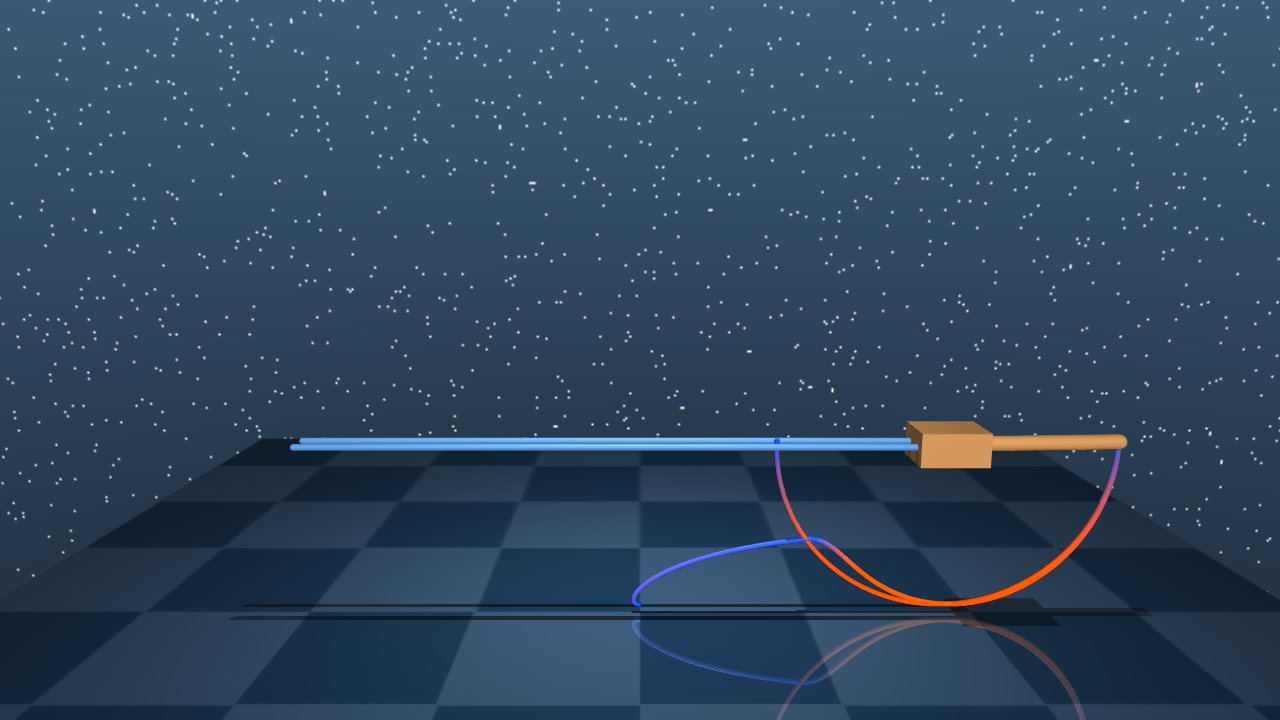}
   \includegraphics[width=0.14\linewidth]{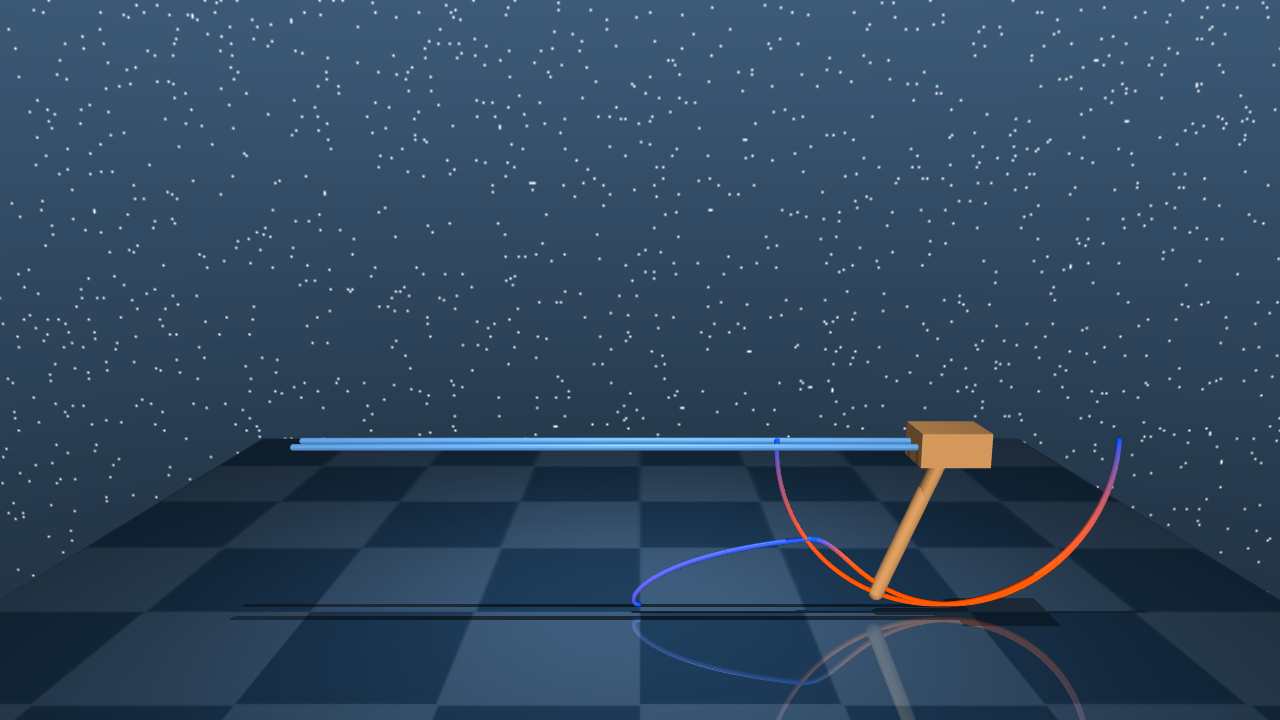}
   \includegraphics[width=0.14\linewidth]{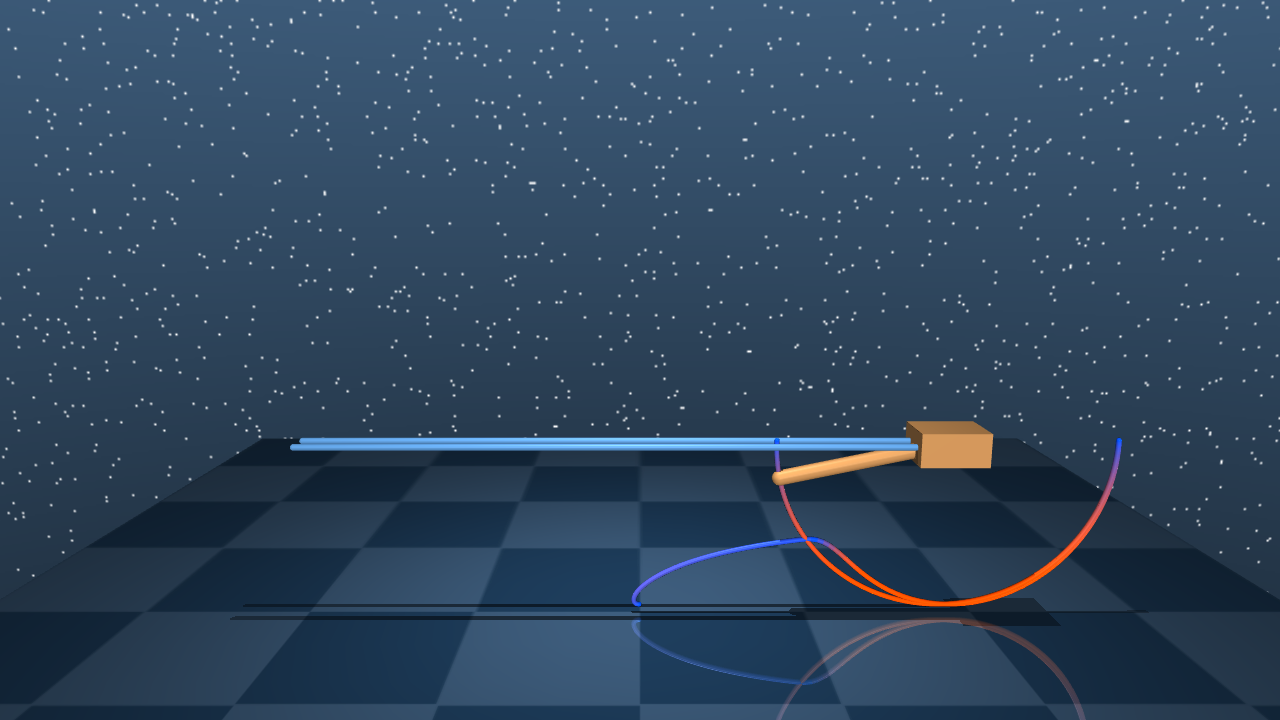}

   \vspace{0.2cm}

   \textbf{Iter 1000} \quad
   \includegraphics[width=0.14\linewidth]{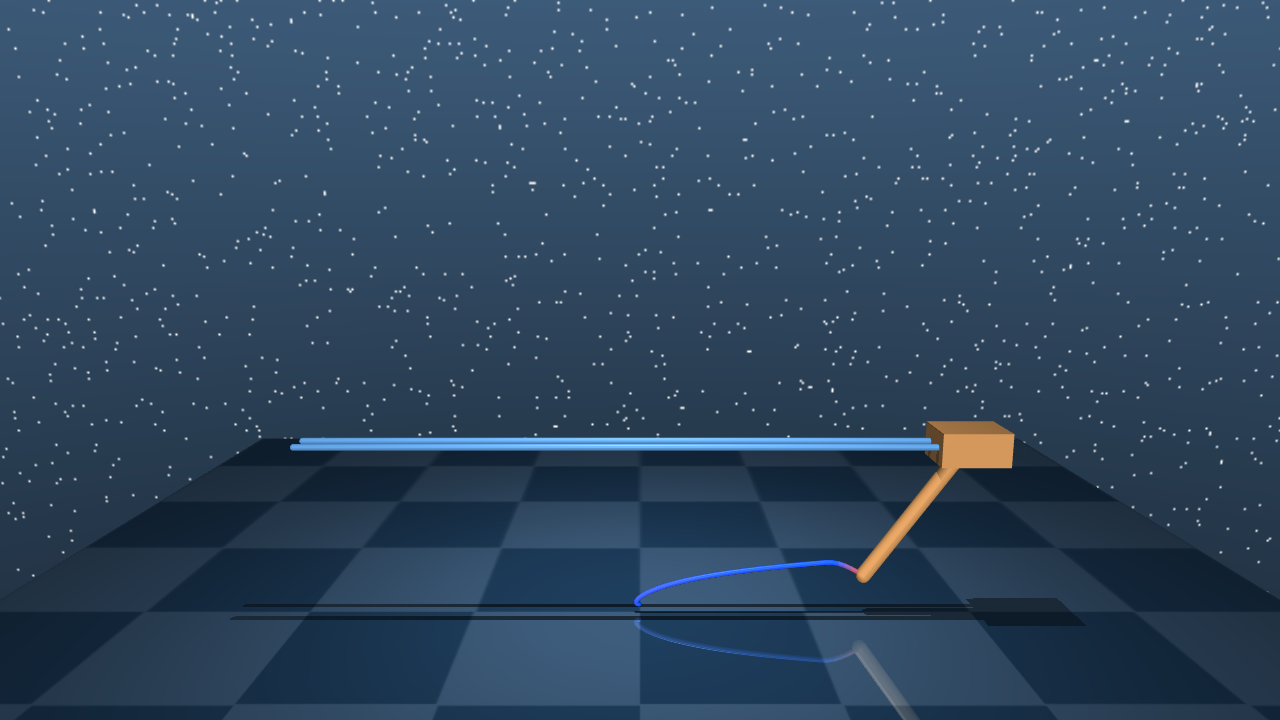}
   \includegraphics[width=0.14\linewidth]{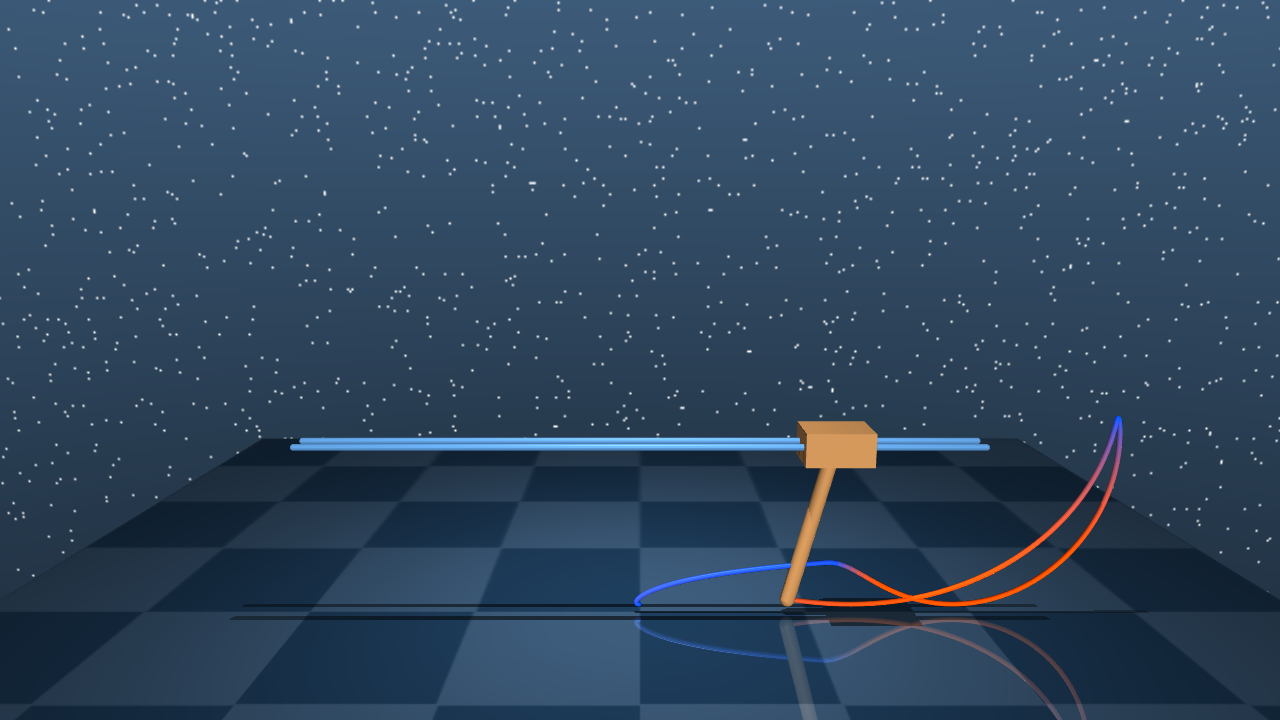}
   \includegraphics[width=0.14\linewidth]{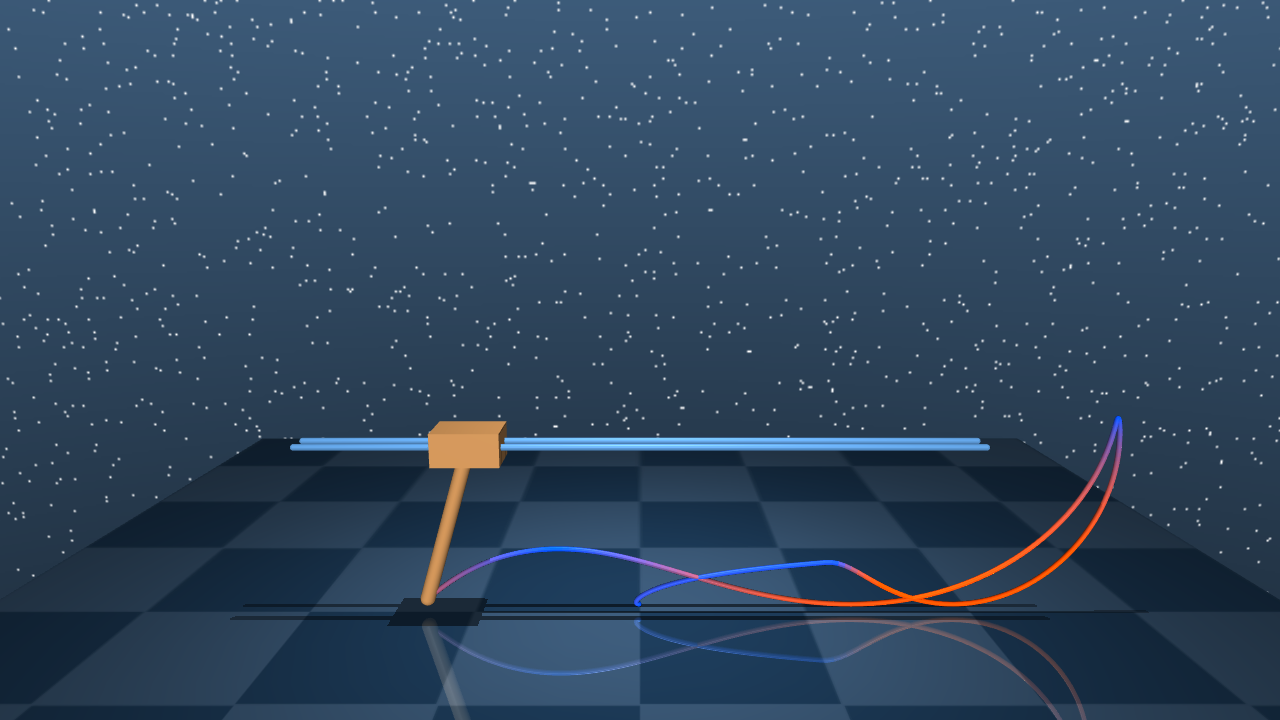}
   \includegraphics[width=0.14\linewidth]{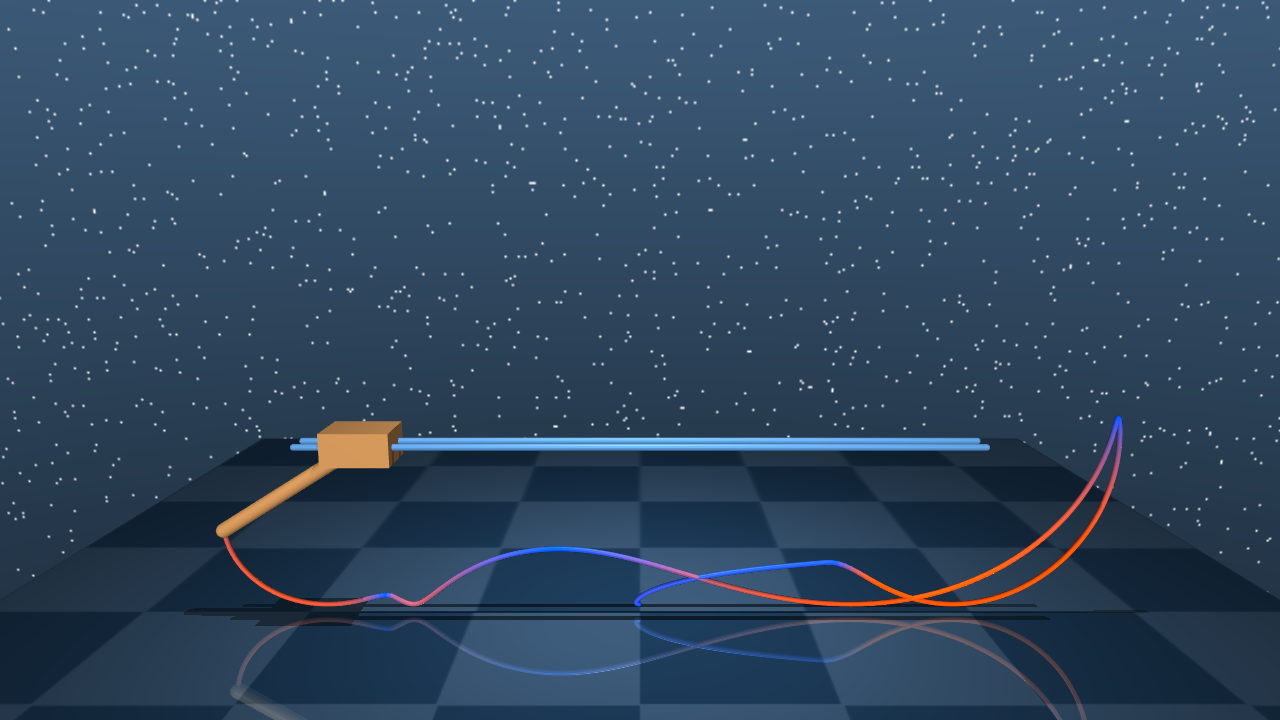}
   \includegraphics[width=0.14\linewidth]{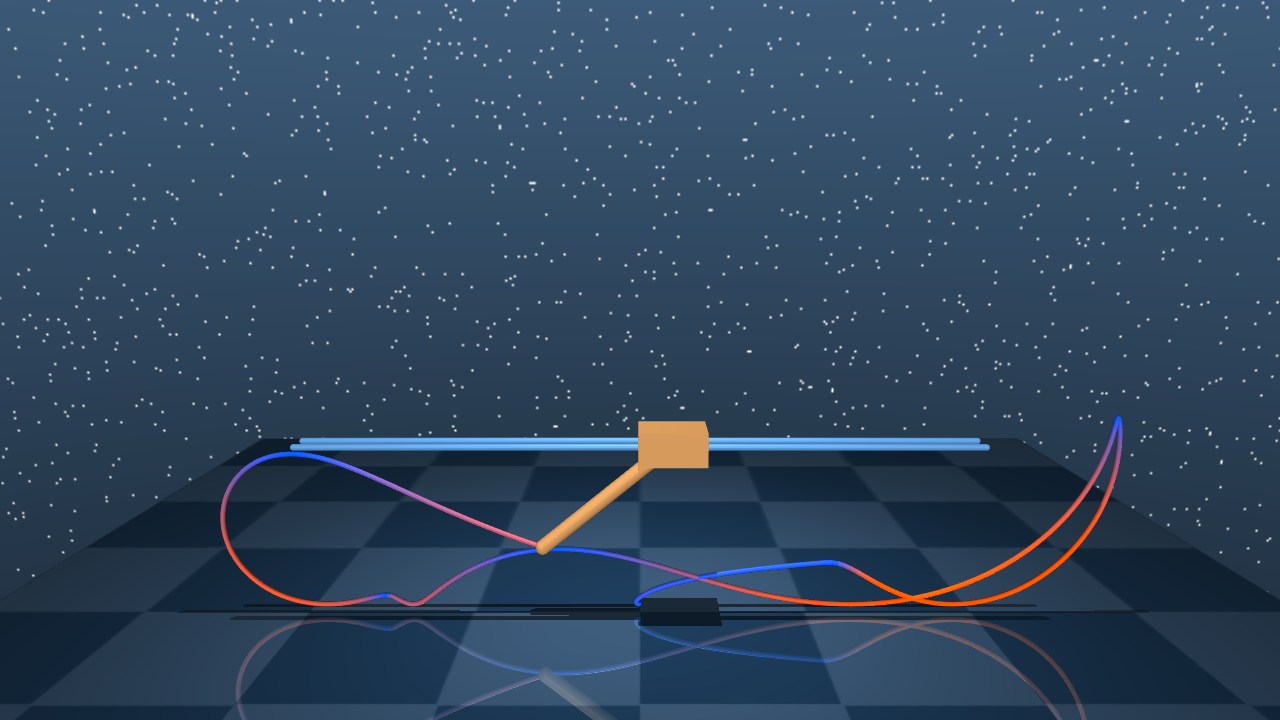}
   \includegraphics[width=0.14\linewidth]{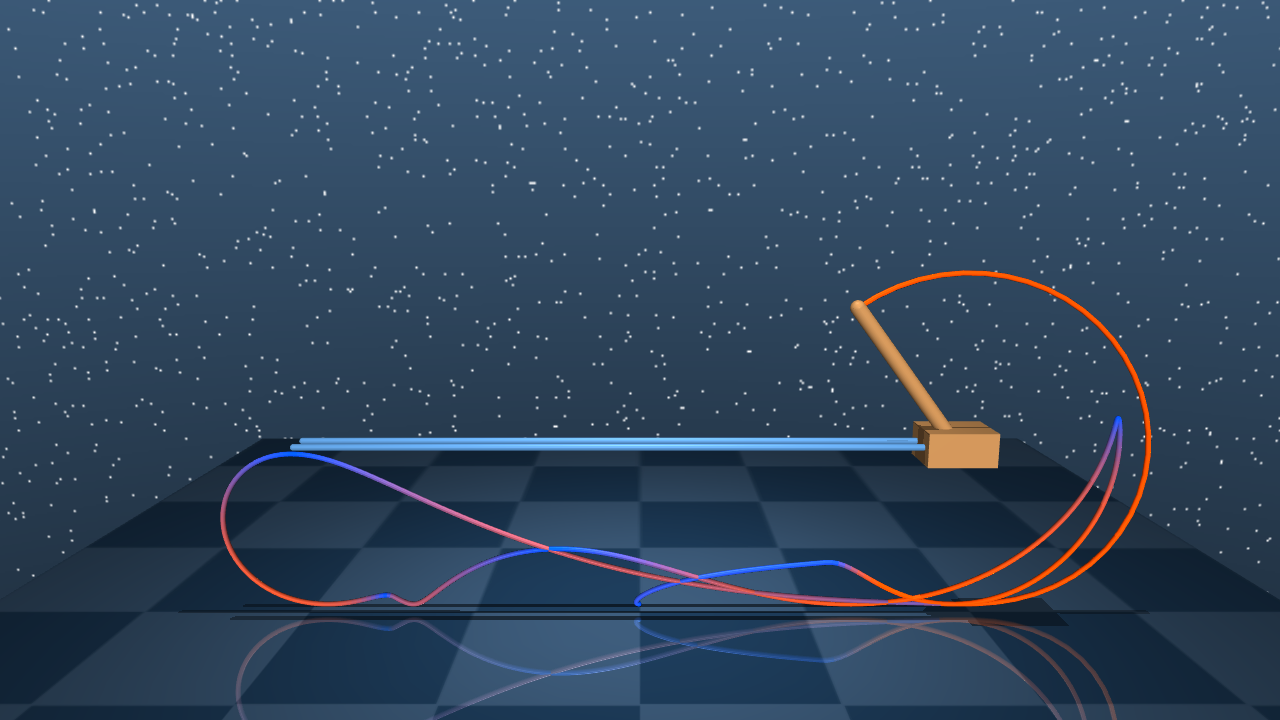}

   \vspace{0.2cm}

   \textbf{Iter 1500} \quad
   \includegraphics[width=0.14\linewidth]{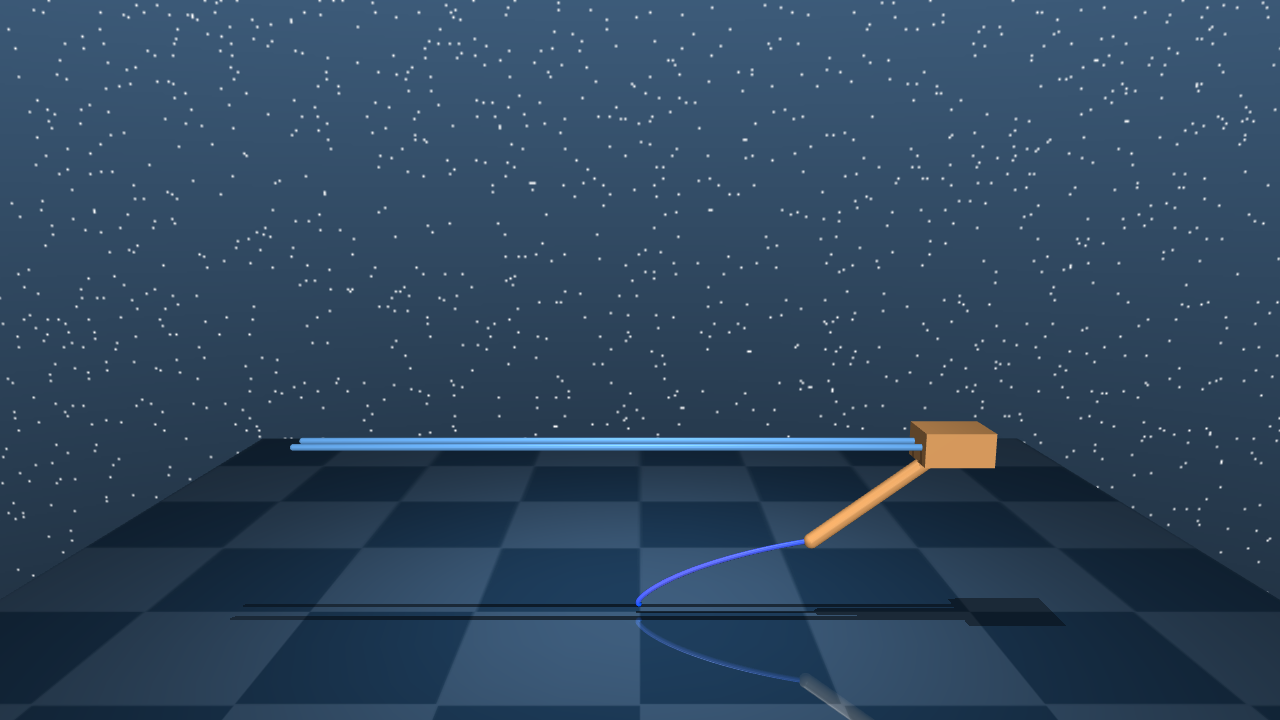}
   \includegraphics[width=0.14\linewidth]{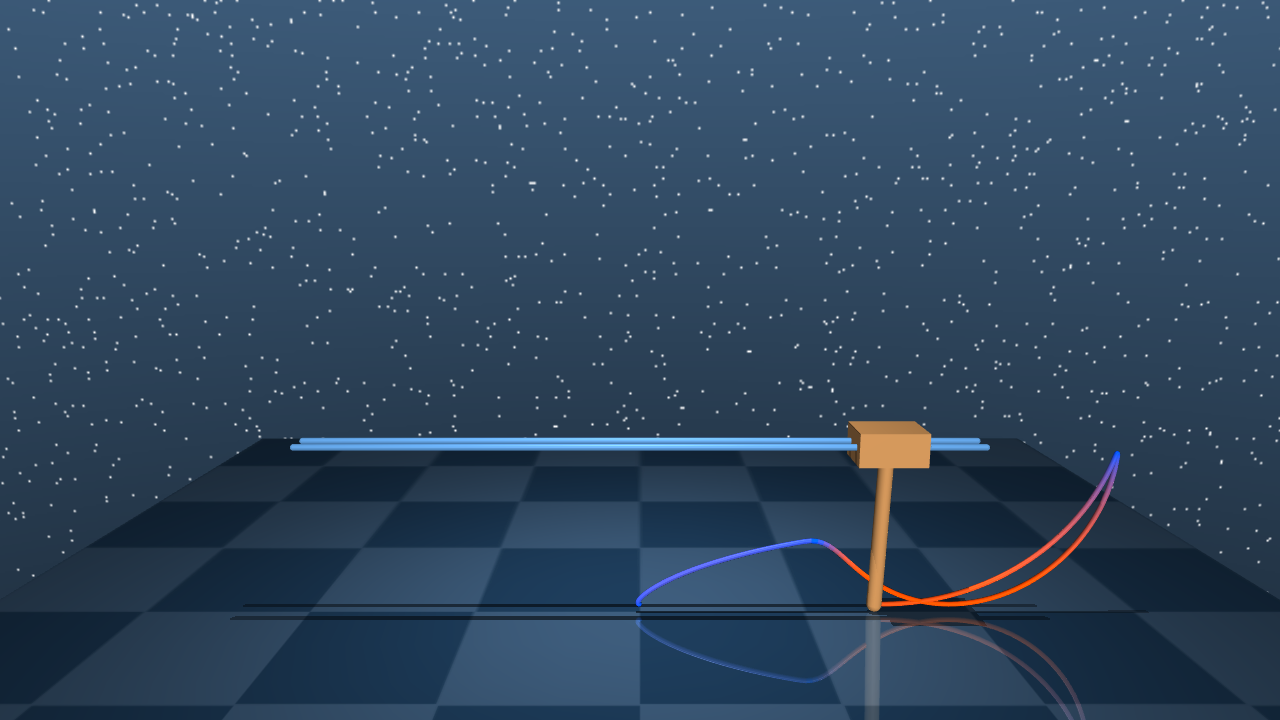}
   \includegraphics[width=0.14\linewidth]{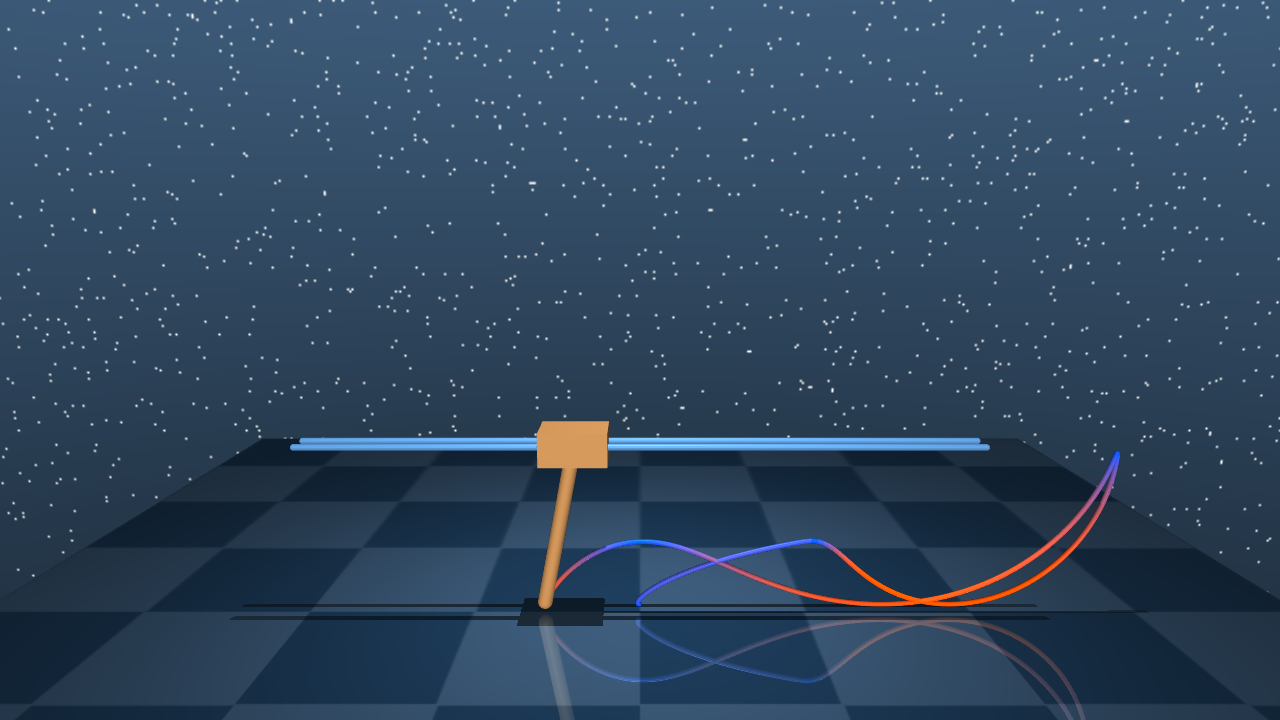}
   \includegraphics[width=0.14\linewidth]{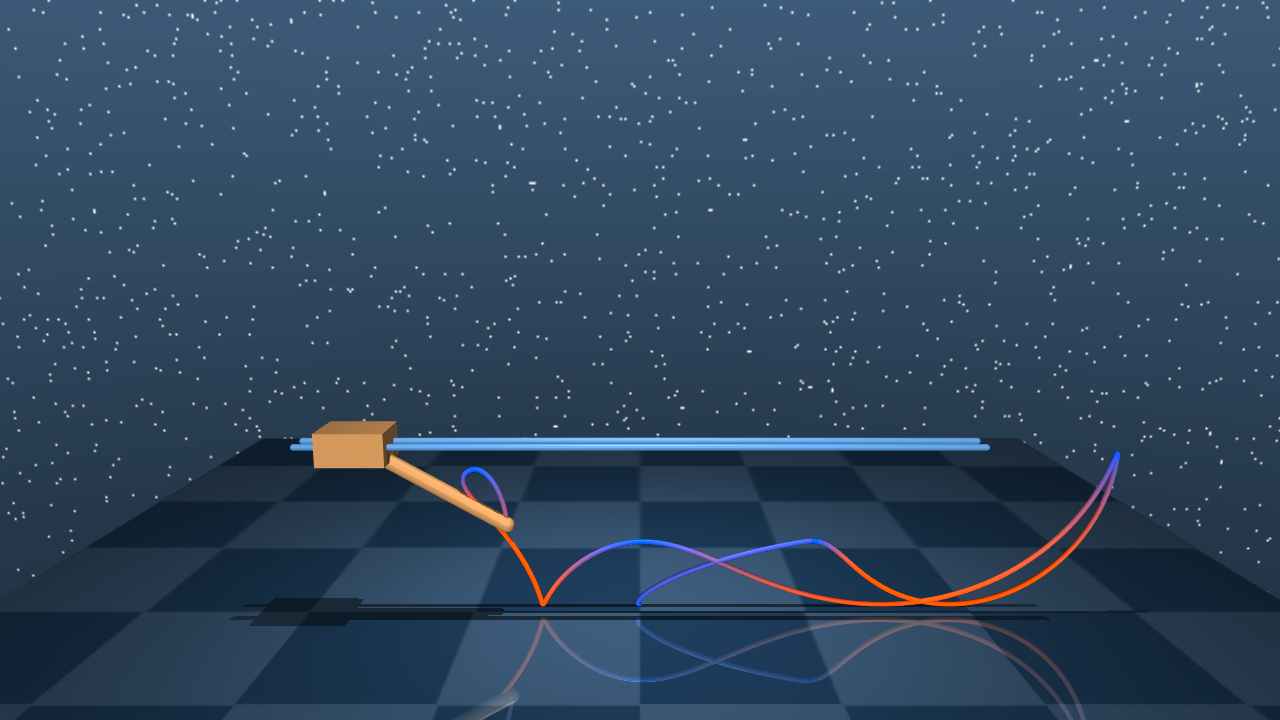}
   \includegraphics[width=0.14\linewidth]{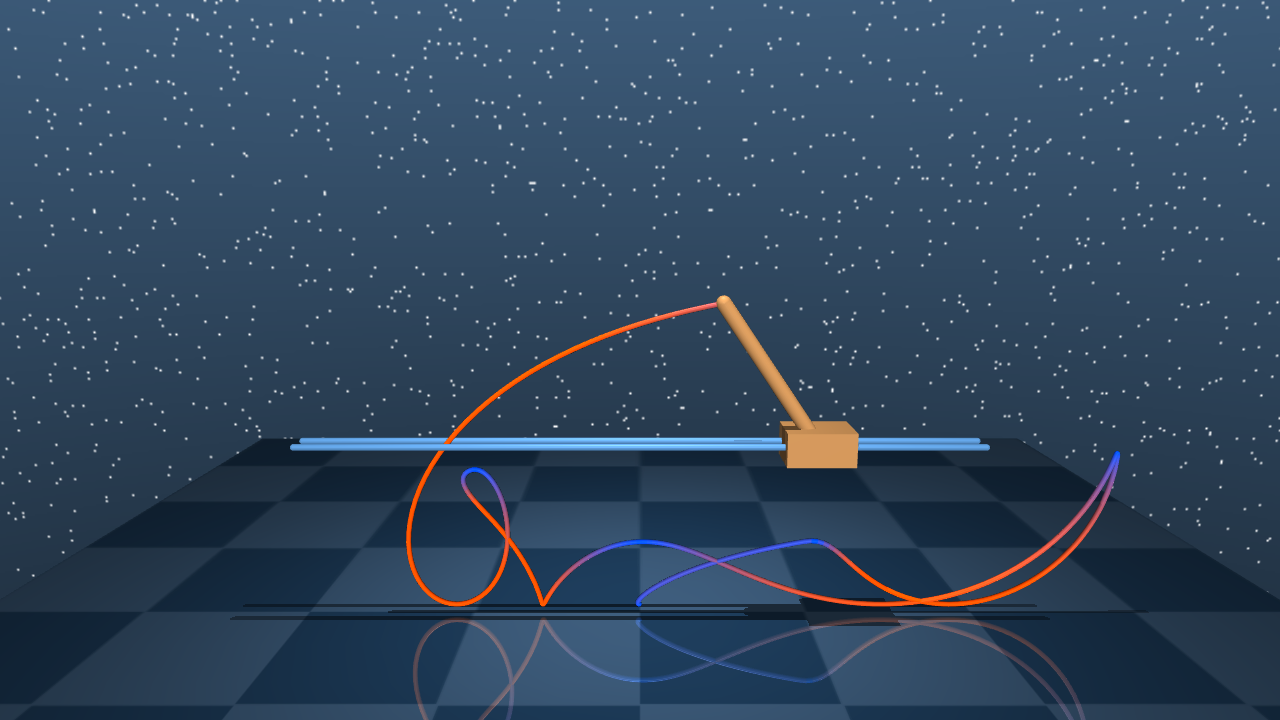}
   \includegraphics[width=0.14\linewidth]{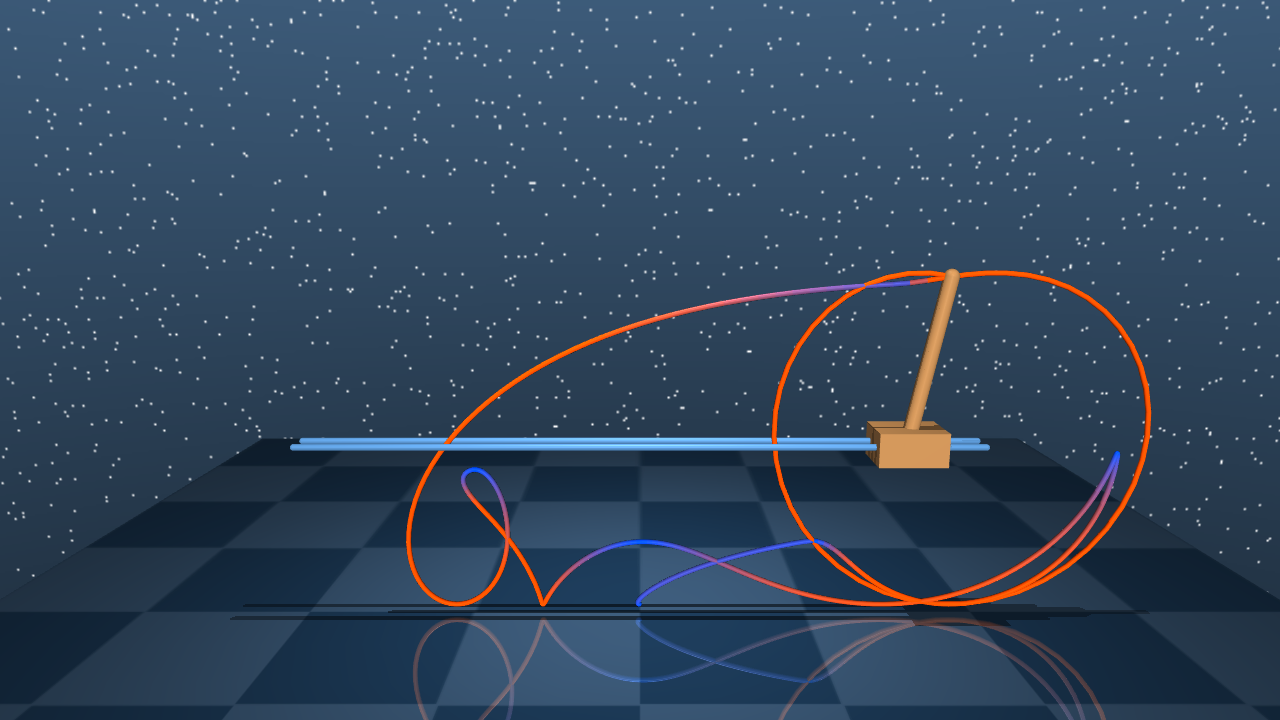}

   \vspace{0.2cm}

   \textbf{Iter 2000} \quad
   \includegraphics[width=0.14\linewidth]{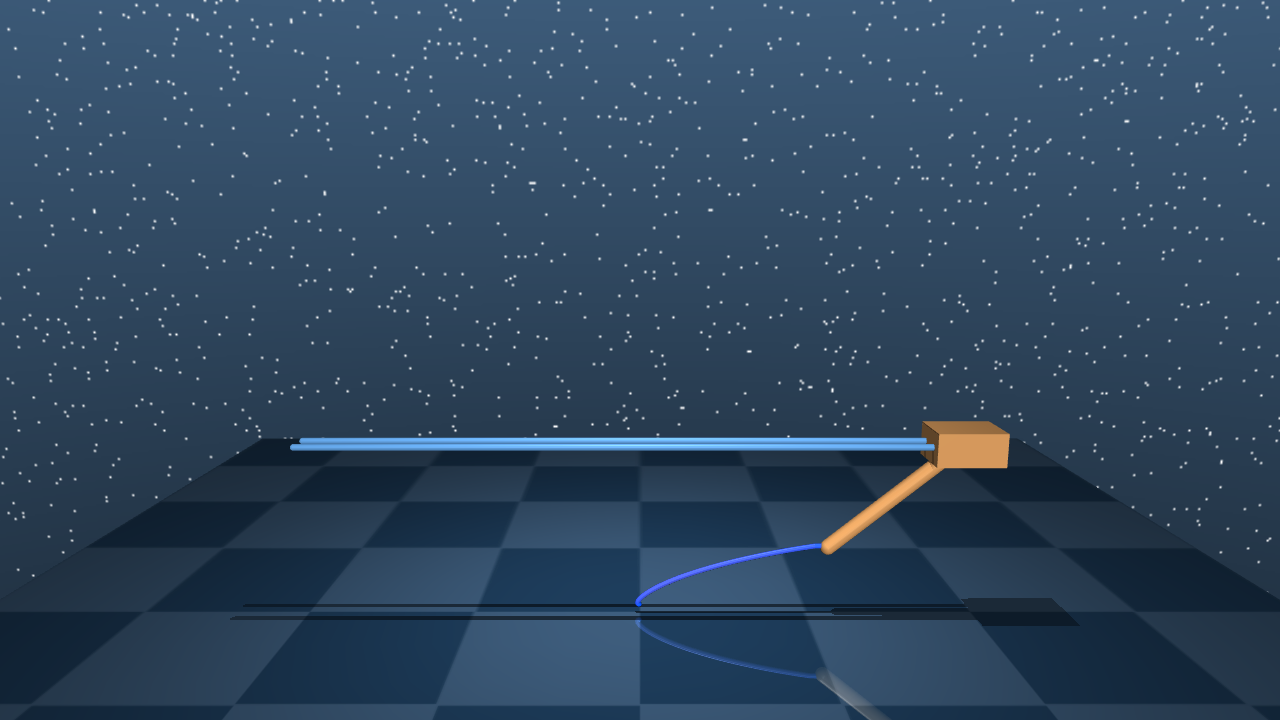}
   \includegraphics[width=0.14\linewidth]{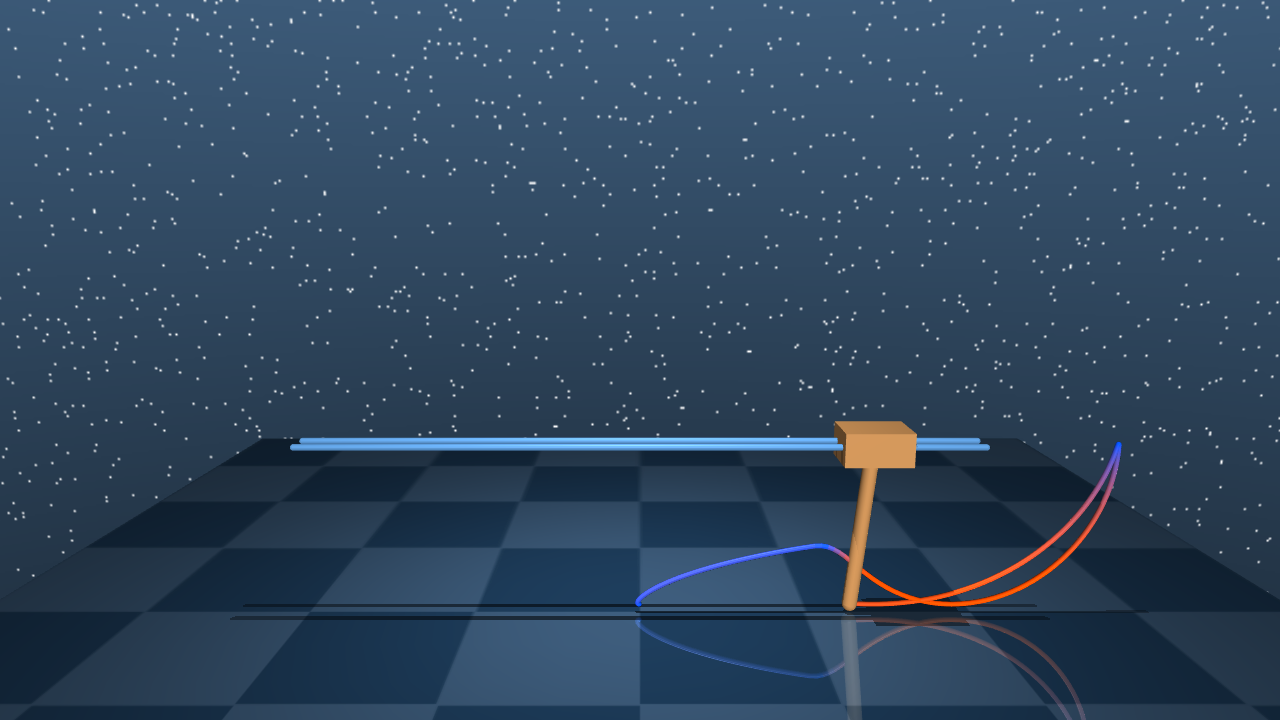}
   \includegraphics[width=0.14\linewidth]{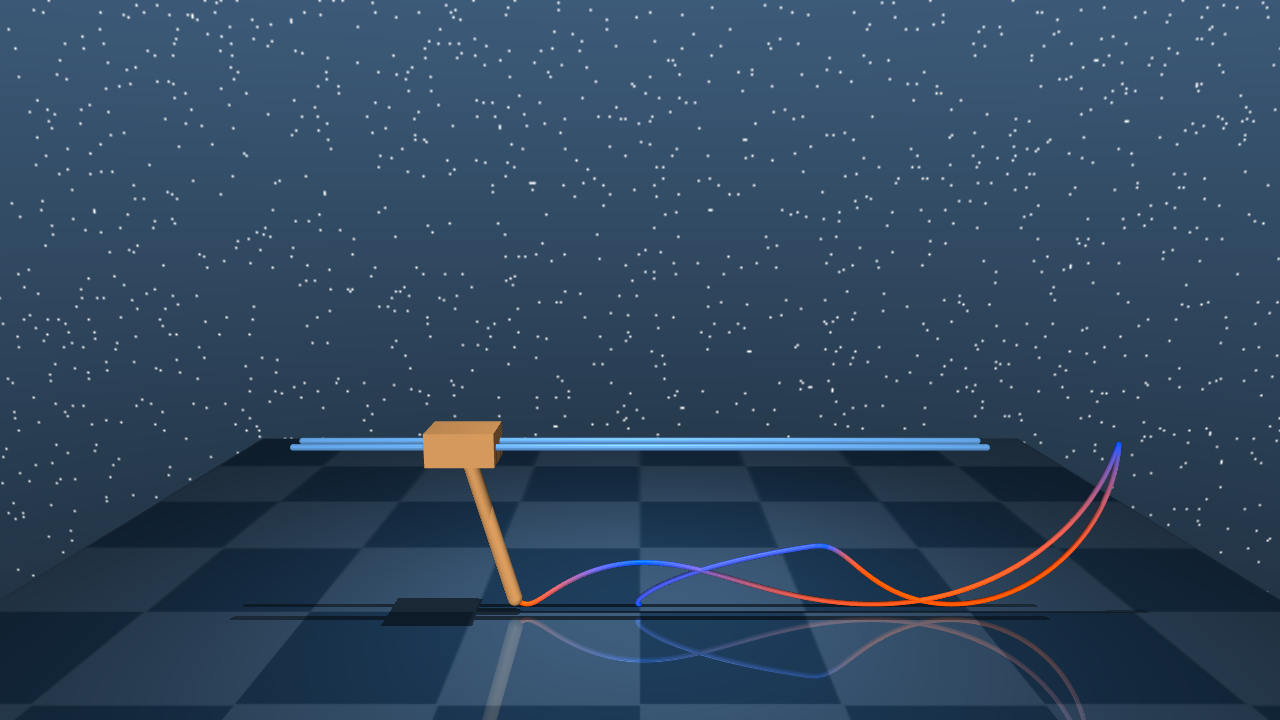}
   \includegraphics[width=0.14\linewidth]{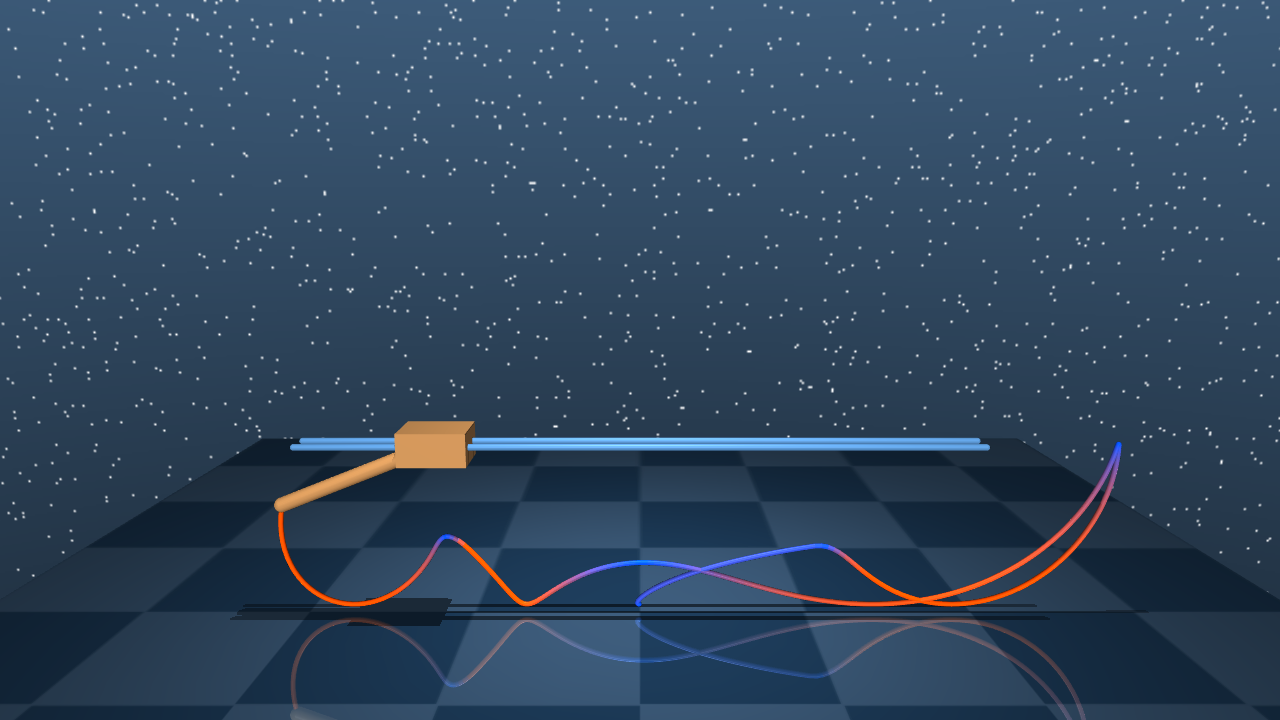}
   \includegraphics[width=0.14\linewidth]{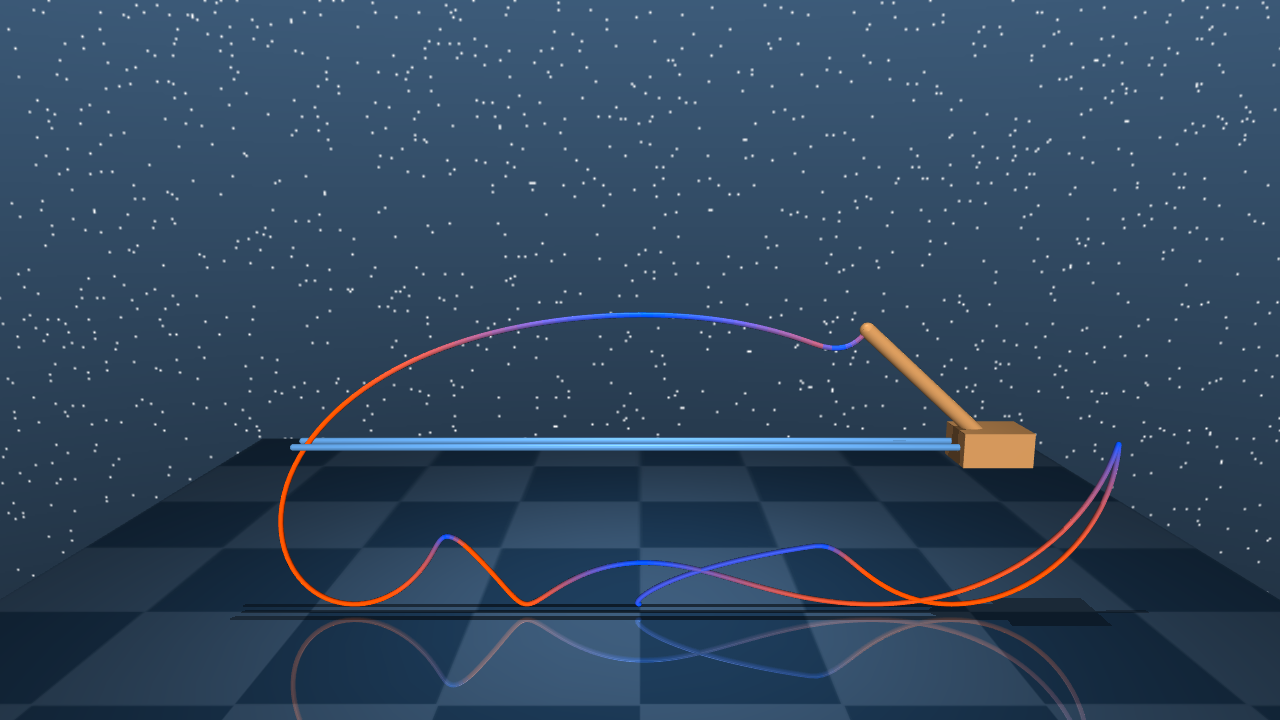}
   \includegraphics[width=0.14\linewidth]{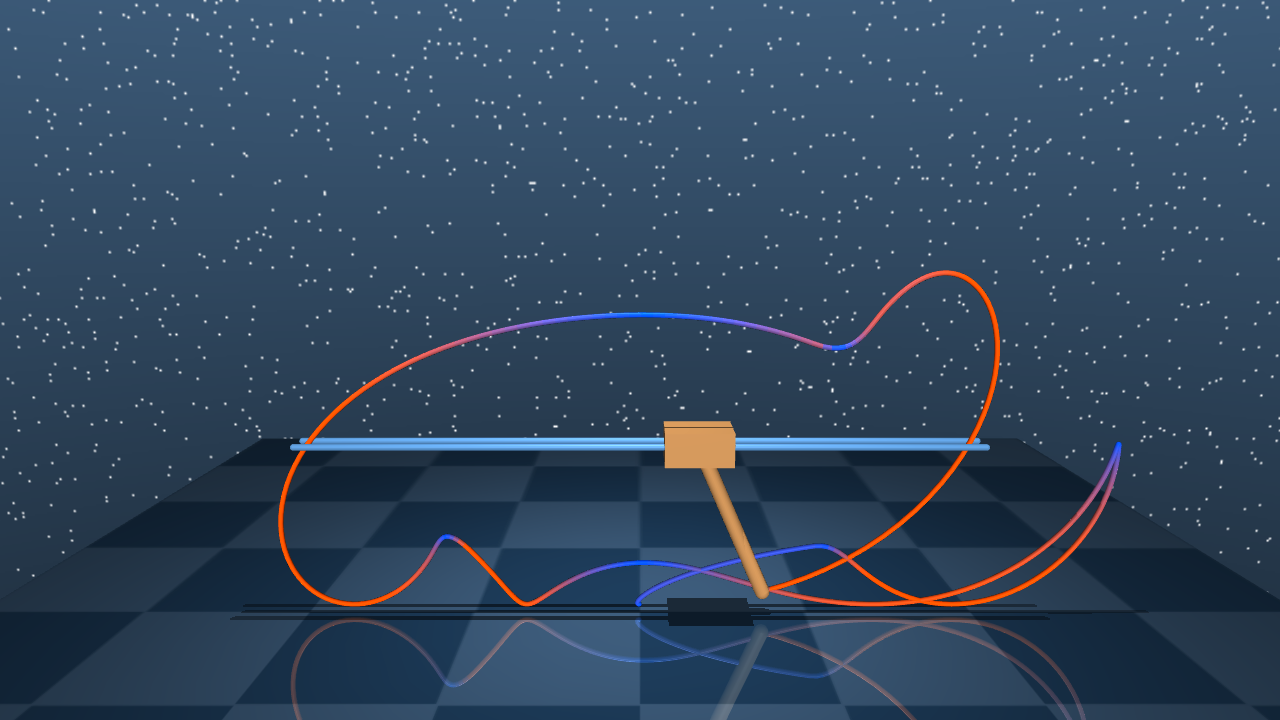}

   \vspace{0.2cm}

   \textbf{Iter 2500} \quad
   \includegraphics[width=0.14\linewidth]{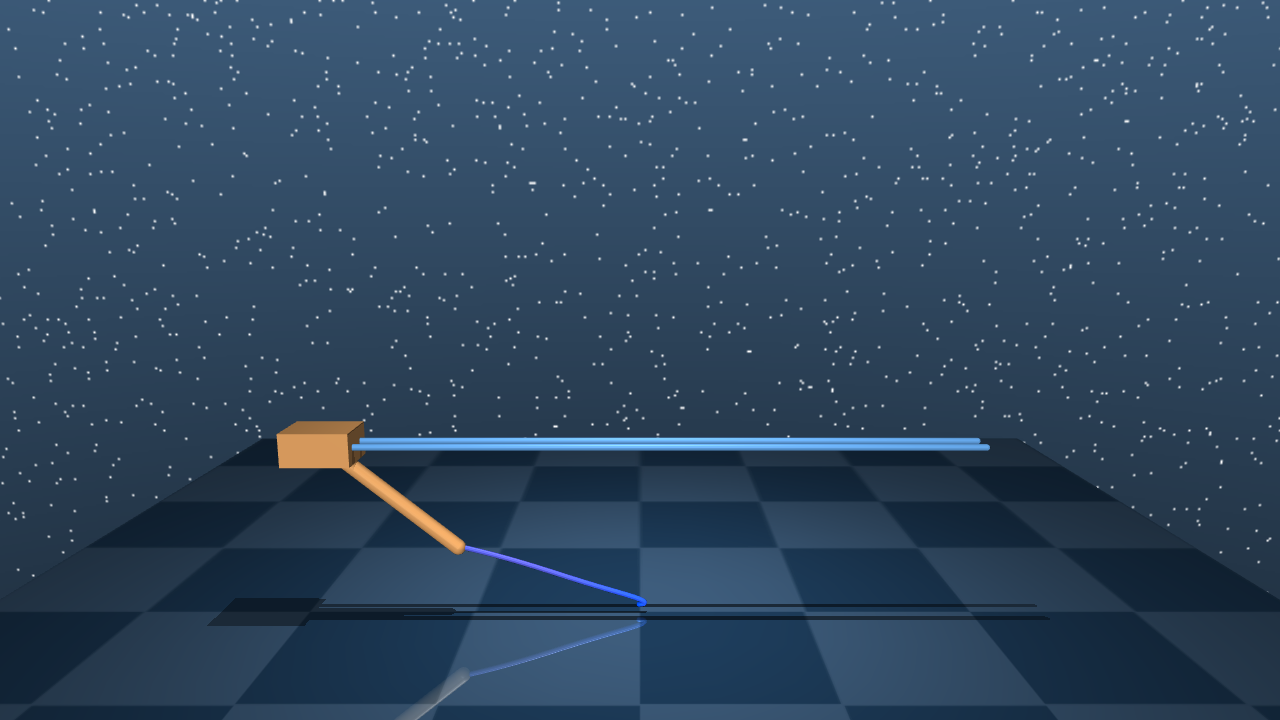}
   \includegraphics[width=0.14\linewidth]{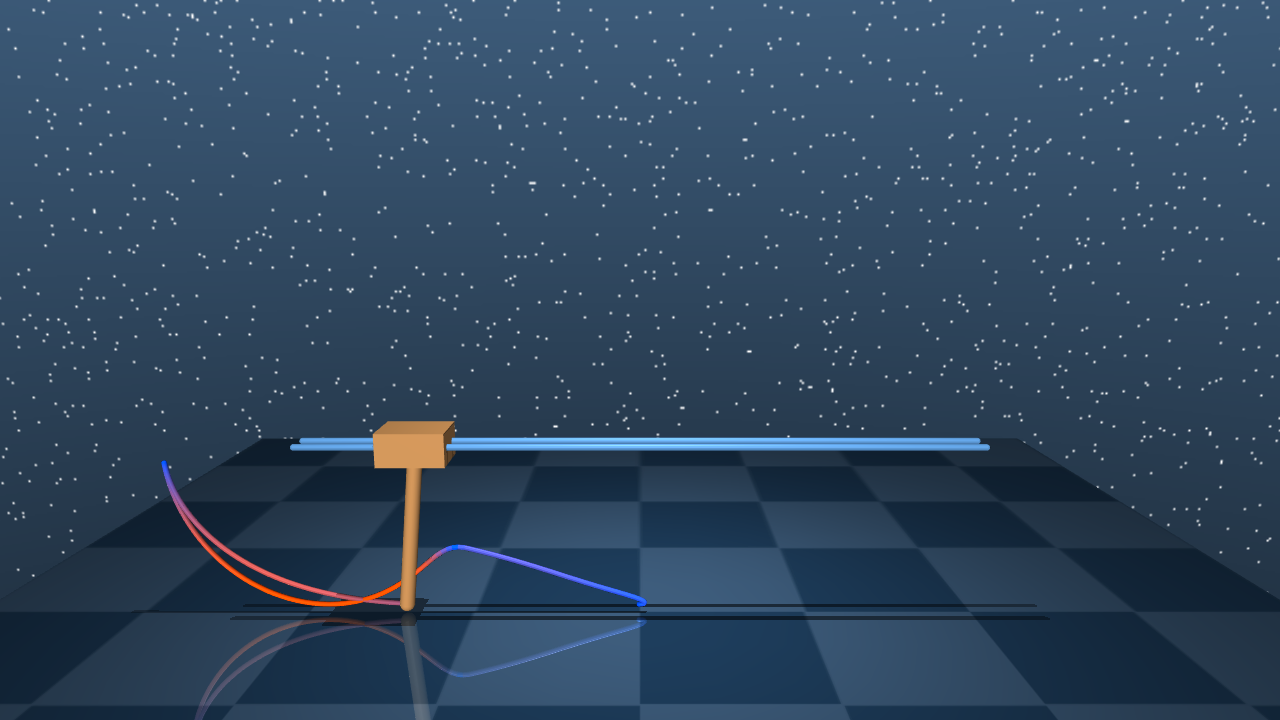}
   \includegraphics[width=0.14\linewidth]{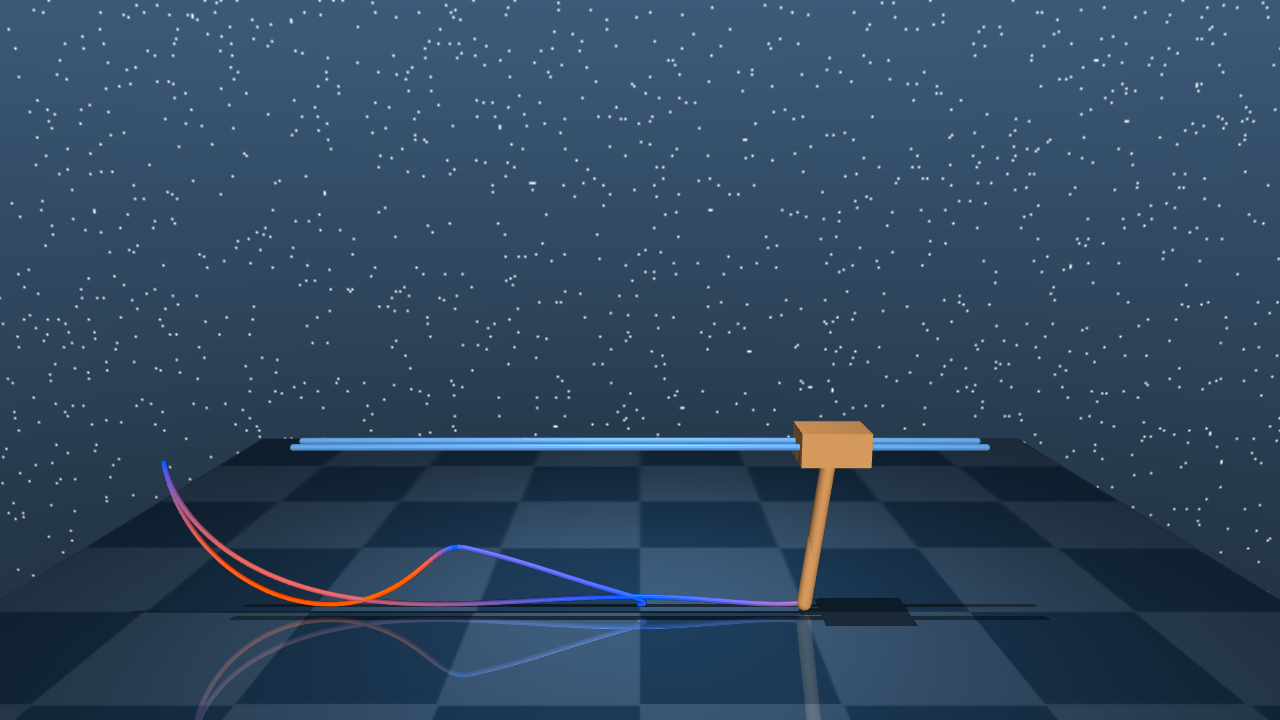}
   \includegraphics[width=0.14\linewidth]{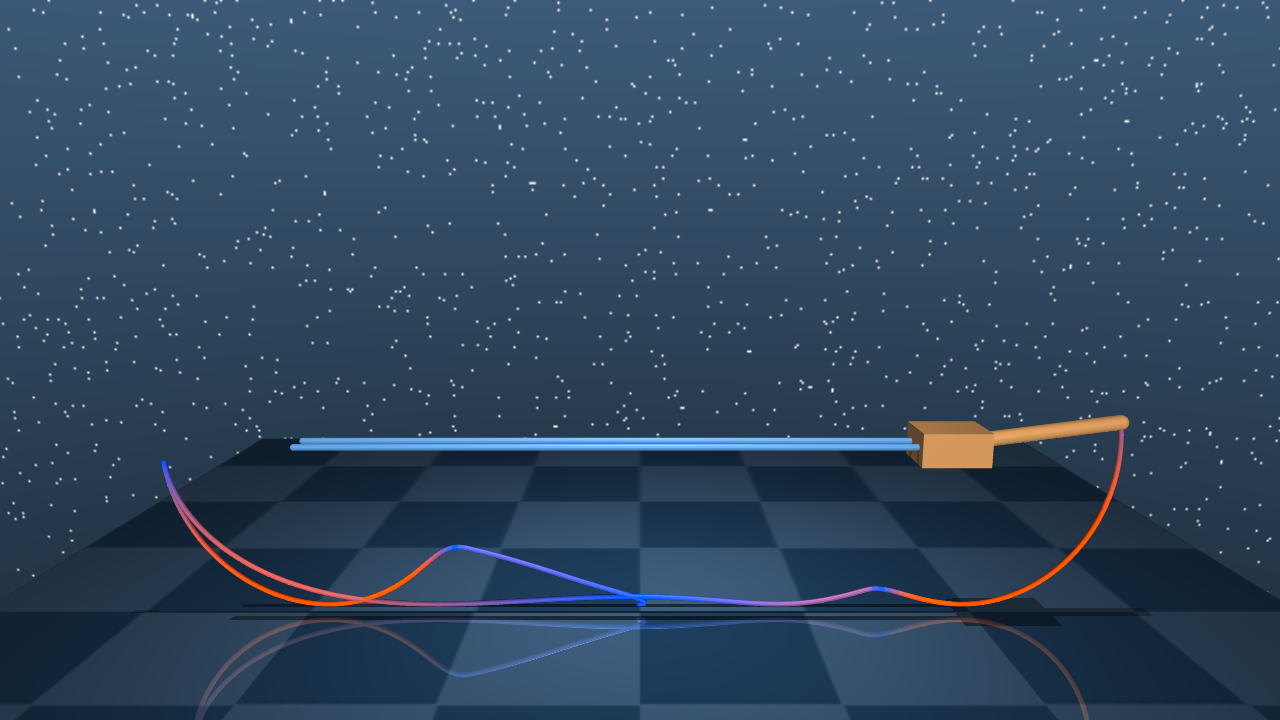}
   \includegraphics[width=0.14\linewidth]{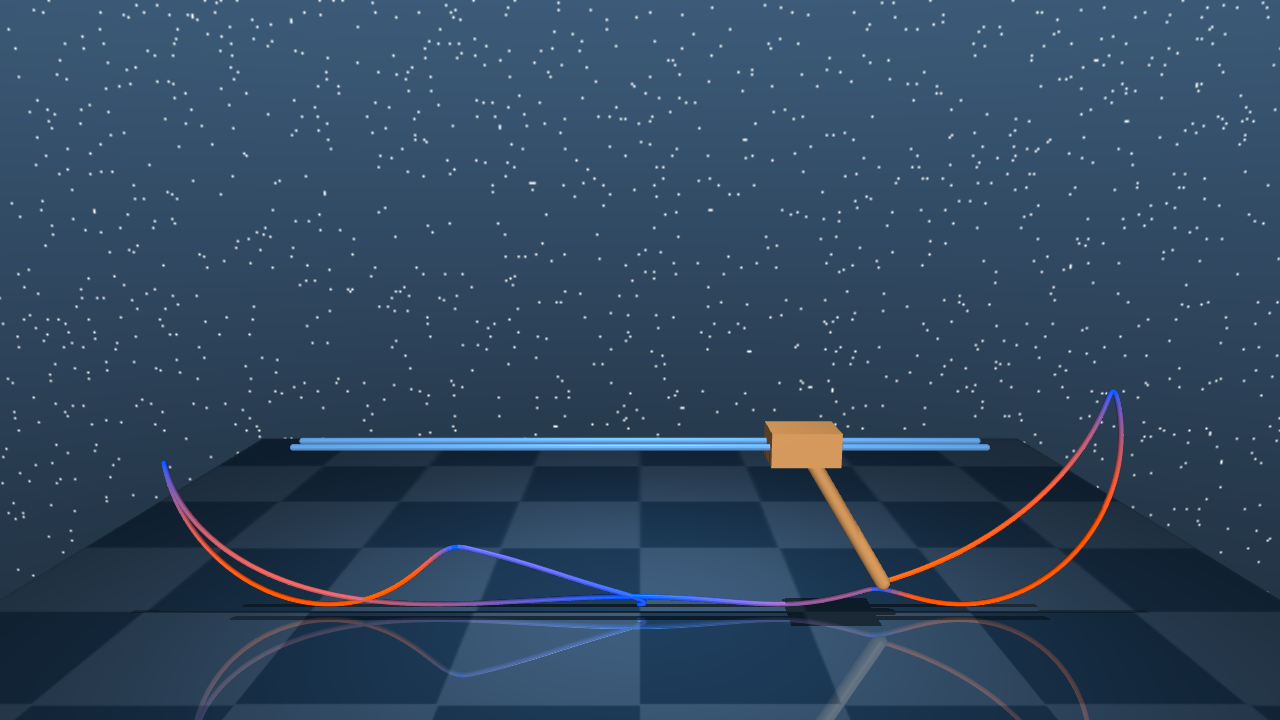}
   \includegraphics[width=0.14\linewidth]{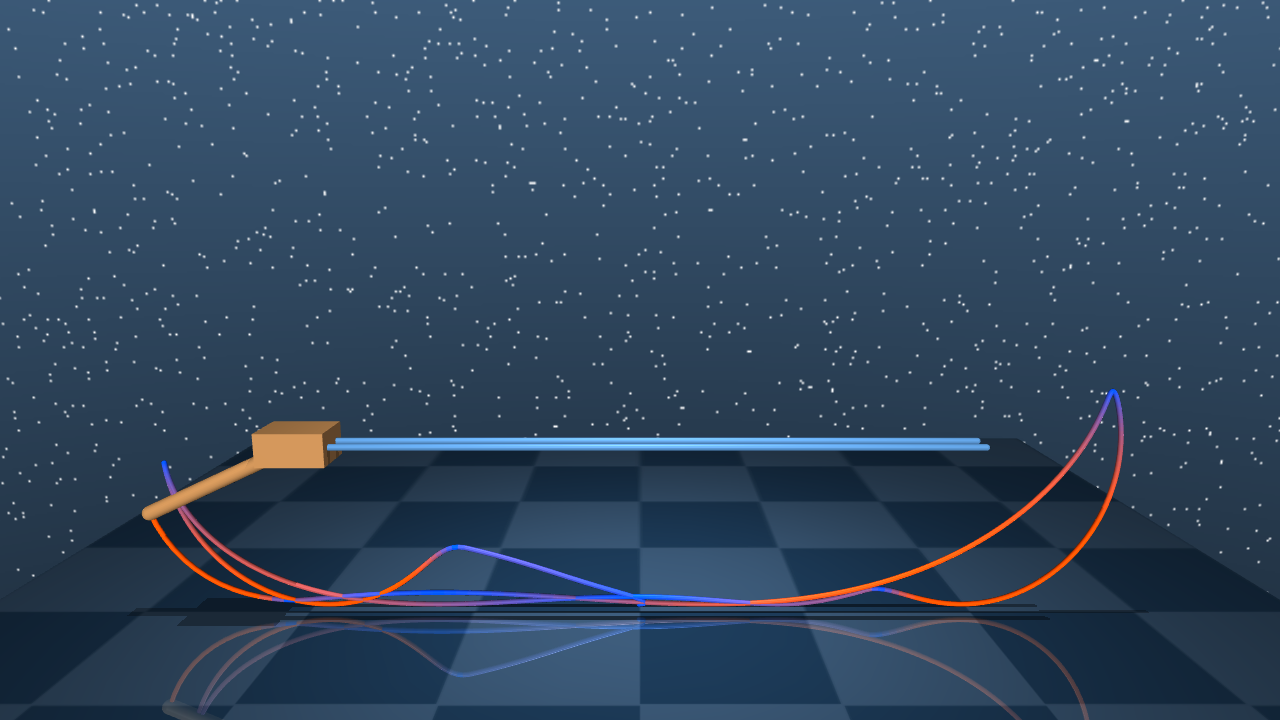}

   \vspace{0.2cm}

   \textbf{Iter 2800} \quad
   \includegraphics[width=0.14\linewidth]{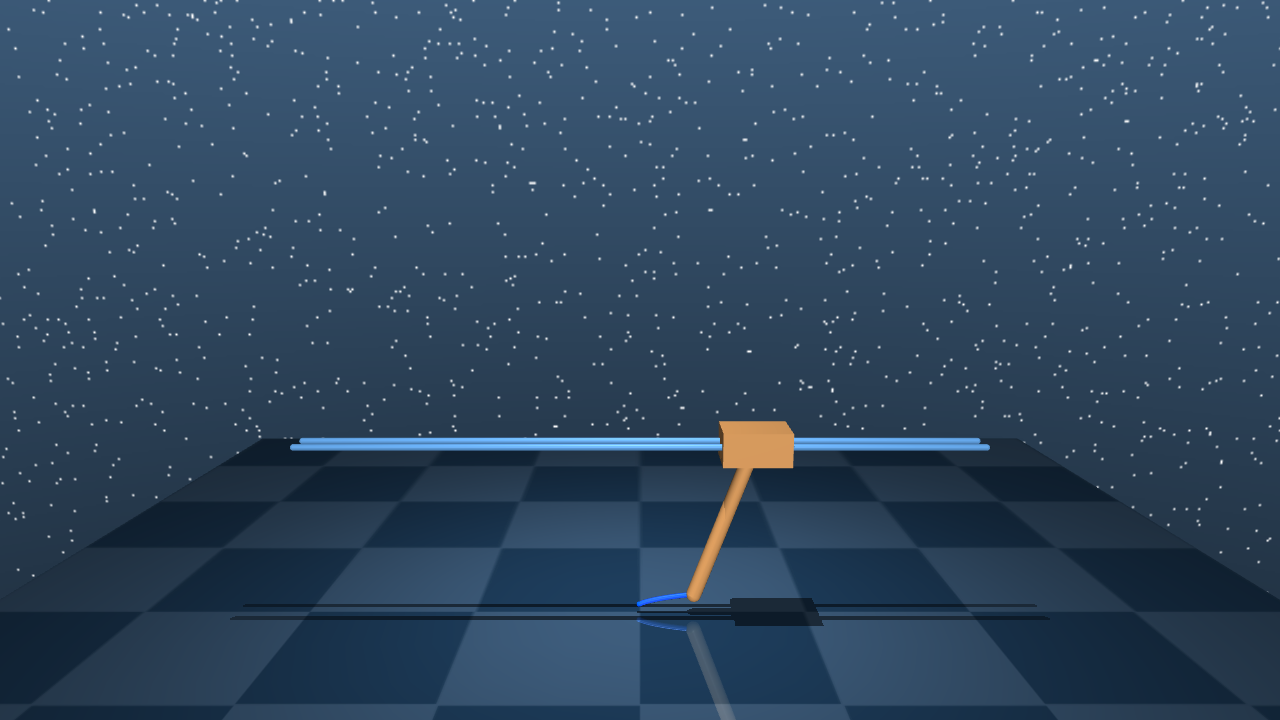}
   \includegraphics[width=0.14\linewidth]{./figs/cartpole_frames_seed2/seed2_iter02800_240.png}
   \includegraphics[width=0.14\linewidth]{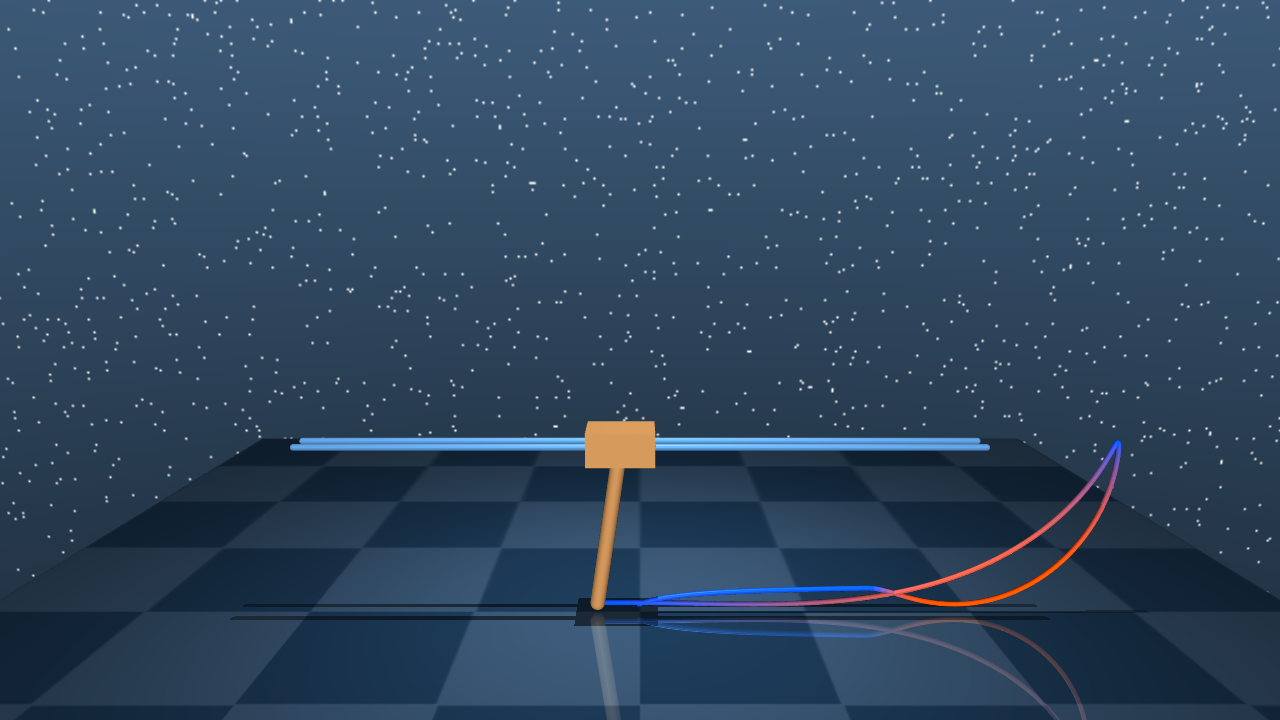}
   \includegraphics[width=0.14\linewidth]{./figs/cartpole_frames_seed2/seed2_iter02800_360.png}
   \includegraphics[width=0.14\linewidth]{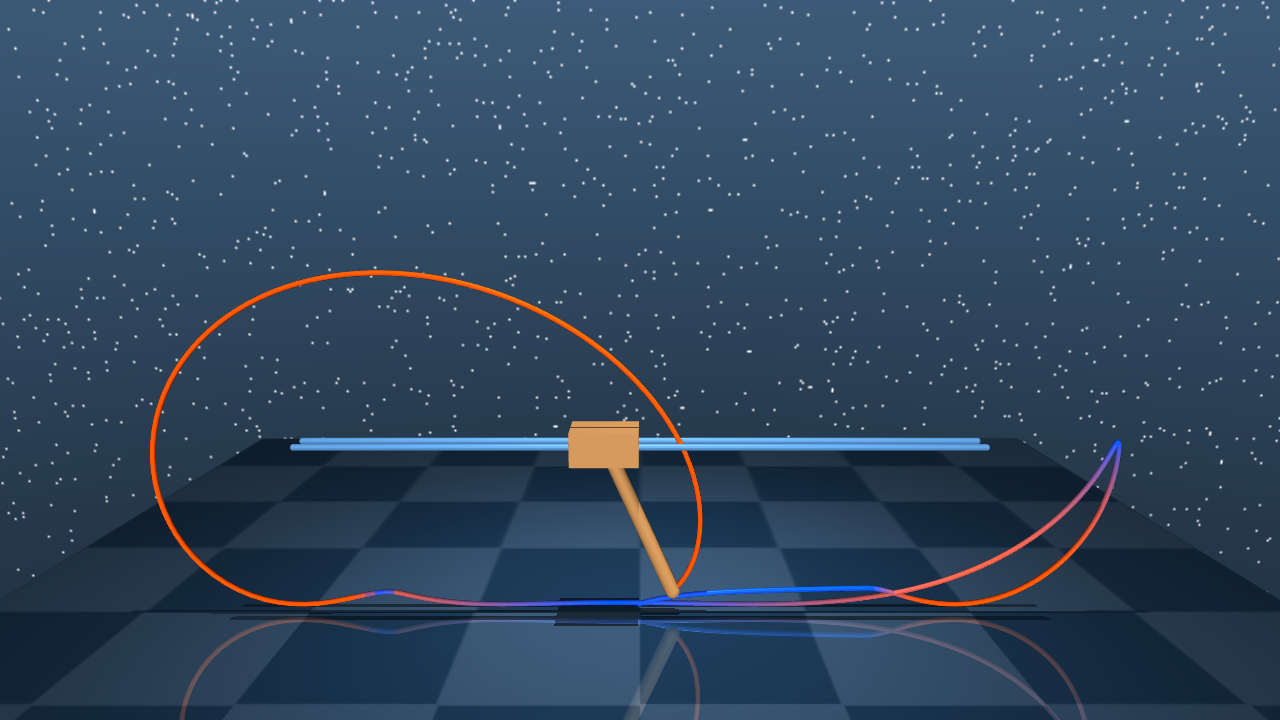}
   \includegraphics[width=0.14\linewidth]{./figs/cartpole_frames_seed2/seed2_iter02800_480.png}

   \vspace{0.5cm}
  \end{minipage}
 }
 \caption{\small{Policy training evolution on the SafeCartpole environment for a fixed seed.
   Each \textbf{column} represents a \textbf{training iteration}
   (shown on the bottom), and each \textbf{row} represents a
   \textbf{timestep} in the trajectory (indicated at the top).
   The frames show how the learned policy evolves to successfully swing up the pendulum
   while respecting safety constraints, cf. \Cref{fig:CartpoleMuSigmaPosAngleOverTime}.
   As in \Cref{fig:CartpoleFinalPolicy}, the \textbf{tracer's color} encodes the
   (normalized) \textbf{pole's angular velocity} (dark blue for \markerBlue{lower values} and
   orange for \markerOrange{higher values}) which is part
   of the state space, i.e. has to be covered to maximize $\HH$.}
 }
 \label{fig:CartpoleTrainingEvolution}
\end{figure*}
\clearpage

\end{document}